\documentclass{article} 
\usepackage{iclr2026_conference,times}

\usepackage{amsmath,amsfonts,bm}

\def\eqref#1{equation~\ref{#1}}

\def\1{\bm{1}}

\DeclareMathAlphabet{\mathsfit}{\encodingdefault}{\sfdefault}{m}{sl}
\SetMathAlphabet{\mathsfit}{bold}{\encodingdefault}{\sfdefault}{bx}{n}

\renewcommand{\eqref}[1]{(\ref{#1})}

\usepackage{hyperref}
\usepackage{url}

\usepackage[american]{babel}

\usepackage{mathtools} 
\usepackage[table]{xcolor}
\usepackage{booktabs} 
\usepackage{tikz} 
\usepackage{multirow} 
\usepackage{algorithm}
\usepackage{algpseudocode}
\usepackage{graphicx}
\usepackage{mathtools}
\usepackage{multirow}
\usepackage{amsthm}
\usepackage{amssymb}
\usepackage{todonotes}
\usepackage{pdflscape}
\usepackage{rotating}
\usepackage{multicol}
\usepackage{float}
\usepackage{xcolor}
\usepackage{subcaption}
\usepackage{wrapfig}
\usepackage[inline]{enumitem} 
\usepackage[many]{tcolorbox}

\usepackage{environ}
\NewEnviron{contributions}{\subsubsection*{Author Contributions}\BODY}
\NewEnviron{acknowledgements}{\subsubsection*{Acknowledgments}\BODY}

\newtheorem{theorem}{Theorem}
\tcolorboxenvironment{theorem}{
  enhanced,
  boxrule=0pt,
  frame hidden,
  colback=gray!12,                       
  sharp corners,
  top=3mm, bottom=3mm, left=3mm, right=3mm
}
\newtheorem{lemma}{Lemma}

\tcolorboxenvironment{lemma}{
  enhanced,
  boxrule=0pt,
  frame hidden,
  colback=gray!12,                        
  sharp corners,
  top=3mm, bottom=3mm, left=3mm, right=3mm
}

\title{Learning  Implicit Causal World Models from Multi-Agent Demonstrations}

\author{Jasorsi Ghosh \\
    Department of Computer Science\\
    Purdue University\\
    West Lafayette, Indiana, USA \\
    \texttt{ghosh117@purdue.edu}
}

\iclrfinalcopy 
  
\begin{document}
\maketitle
\begin{abstract}
In model-based reinforcement learning, world models exist as internal simulators, but their training often conflates statistical correlations with causal mechanisms. This problem is exacerbated in multi-agent systems where physical transitions are intertwined with strategic agent intents, causing world models to fail under distribution shift.  We introduce Implicit Causal World Models to recover environmental dynamics from offline demonstrations without requiring pre-defined causal graphs. By incorporating policy variance, we render world models discoverable via the sequential backdoor condition. Evaluations across coordination tasks (Two-Door, Navigation, and Giveway) demonstrate that these models provide interpretable causal representations under both full and partial observability, with model accuracy scaling directly with interventional strength.

\end{abstract}
\vspace{-1.3 em}
\section{Introduction}\label{sec:intro}
\vspace{-1.0 em}
In the evolution of model-based reinforcement learning \citep{moerland2022modelbasedreinforcementlearningsurvey}, world models have progressed from early predictive systems \citep{10.1145/122344.122377} to sophisticated generative architectures \citep{hafner2024masteringdiversedomainsworld,hafner2019learninglatentdynamicsplanning,lecun2022path} that utilize latent state-space models to simulate future trajectories. We frame these world models as an agent’s "internal simulators," representing its belief about the environment. While world models improve sample efficiency \citep{https://doi.org/10.5281/zenodo.1207631} in controlled settings , they remain fundamentally limited by their reliance on statistical correlations rather than causal underlying mechanisms \citep{kipf2020contrastivelearningstructuredworld}. In high-stakes environments, this leads to the spurious correlation problem, where models incorrectly attribute state transitions to non-causal features present during expert demonstrations. 
\begin{wrapfigure}{l}{0.5\linewidth}
  \centering
  \includegraphics[trim={0pt 0pt 00pt 50pt}, clip,width=\linewidth]{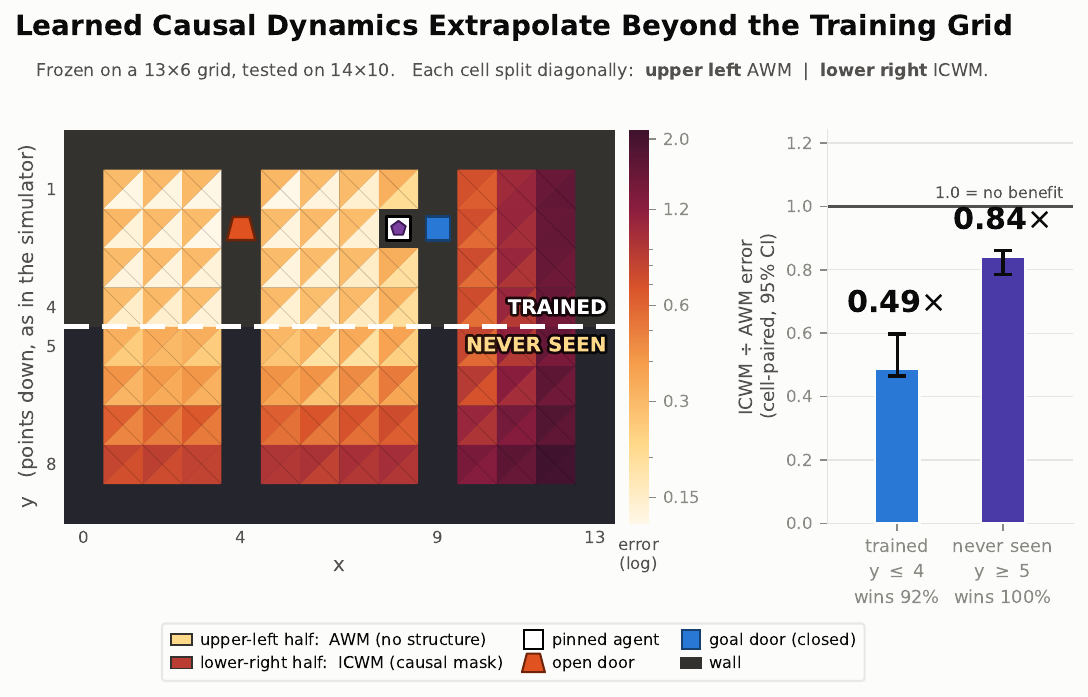}
  \caption{\textbf{A learned causal dynamics transfers to states the model has never
  seen.} The region below the dashed line is unseen in training. Full details in Appendix \ref{app:structure-correlation}}
  \label{fig:hero}
\end{wrapfigure}
This challenge is compounded in multi-agent systems, where environmental physics are intertwined \citep{lowe2020multiagentactorcriticmixedcooperativecompetitive,dasgupta2019causalreasoningmetareinforcementlearning} with the strategic, often confounded, intents of other agents. For example, an autonomous vehicle model might incorrectly learn to accelerate because an adjacent car moves, rather than because the traffic light turned green. In this scenario, the physics of the green light correlates perfectly with another agent’s intent to move, leading to causal confusion.

Causal World Models \citep{ahmed2020causalworldroboticmanipulationbenchmark,nam2026causal} redefine internal simulators by replacing black-box transition functions with a structured network of causal links representing the environment’s governing mechanisms. While prior work \citep{richens2024robustagentslearncausal} highlights the theoretical robustness of agents that learn causal models, this structural approach explicitly resolves distorted dynamics \citep{xia2022neuralcausalmodelscounterfactual} introduced by selection bias or unobserved confounders in offline data. By utilizing structured world models , this grounding in environmental physics ensures that transition dynamics remain invariant \citep{peters2015causalinferenceusinginvariant}, even under distributional shifts \citep{deng2025energybasedtransferreinforcementlearning,arjovsky2020invariantriskminimization} in the observational space.

\begin{figure*}[t]
  \centering
  \includegraphics[trim={0pt 370pt 300pt 0pt}, clip, width=1.0\linewidth]{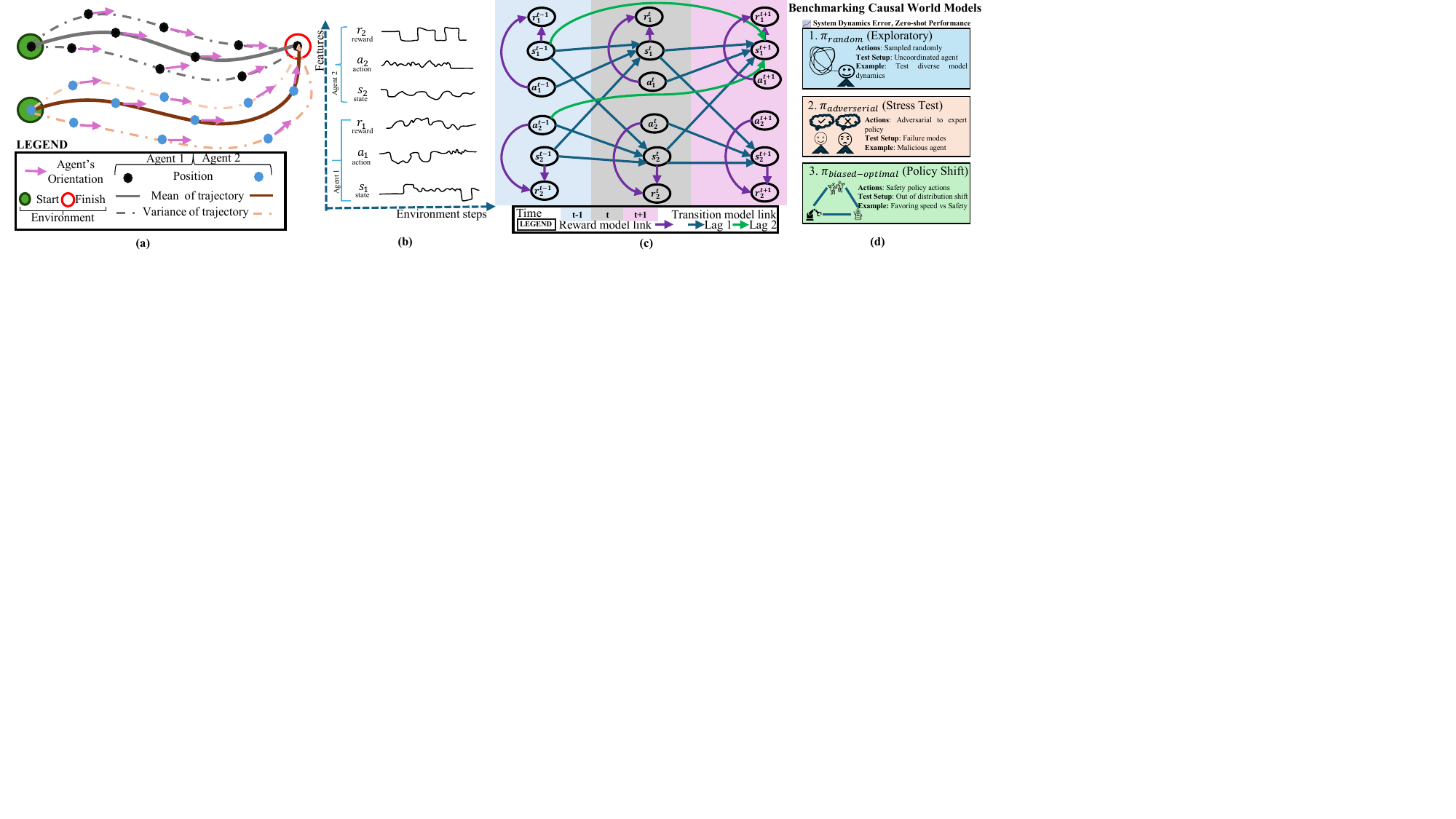}

  \caption{Overview of the ICWM Framework: (a) \textbf{Stochastic Demonstrations}: Expert multi-agent trajectories collected via soft interventions. (b) \textbf{Feature Extraction}: Temporal features $(r, a, s)$ processed for causal discovery. (c) \textbf{Causal Structure Discovery}: Learned world model showing \textcolor{violet}{reward} and transition physics across temporal lags (\textcolor{blue}{Lag 1: $t-1 \rightarrow t$}, \textcolor{green!50!black}{ Lag 2: $t-2 \rightarrow t$}). (d) \textbf{Benchmarking}: Stress testing under exploratory ($\pi_{random}$), adversarial ($\pi_{adversarial}$), and distributional ($\pi_{biased-optimal}$) policy shifts.}
  \label{fig:method}
\end{figure*}

We propose Implicit Causal World Models (ICWM), which leverage neural architectures to approximate structural causal models \citep{schölkopf2021causalrepresentationlearning,pawlowski2020deepstructuralcausalmodels} without requiring the hand-engineered directed acyclic graphs often rendered intractable by the combinatorial complexity of multi-agent systems. Discovering true  enviornment dynamics from policy rollouts requires overcoming temporal dependencies that violate the independent and identically distributed assumptions central to standard causal discovery \citep{Peters2017ElementsOC}. We address this by framing sequential trajectories as a vector autoregressive process \citep{Runge2019InferringCF} and utilizing the PCMCI+ algorithm \citep{runge2022discoveringcontemporaneouslaggedcausal}  to account for both time-lagged and contemporaneous dependencies.

Adhering to the Pearl Causal Hierarchy \citep{10.1145/3241036}, achieving the identifiability of a true causal world model requires Layer 2 (Intervention) understanding. However, active hard interventions in multi-agent systems are often computationally or prohibitively costly \citep{kallus2020confoundingrobustpolicyevaluationinfinitehorizon}. We propose bypassing this constraint by collecting demonstrations with varying degrees of stochastic interventional strength applied exclusively to agent actions via the expert's policy. This method, parallels the implications of soft interventions described in the literature \citep{NEURIPS2020_7b497aa1}, allows the Causal World Model to discover the true causal parents of environment dynamics from the spurious confounders present in the demonstrations.

In the framework of Causal Imitation Learning \citep{kumor2022sequentialcausalimitationlearning}, agent actions in multi-agent systems are treated as sequential interventions upon the environment by the expert policy. While behavioral cloning \citep{NIPS1988_812b4ba2} is the standard approach for imitating expert policies, it fails to discover the underlying environmental physics when demonstrations are influenced by unobserved confounders \citep{dehaan2019causalconfusionimitationlearning}.  For the learned world model to be identifiable, the sequential backdoor condition \citep{Zhang2016MarkovDP} must be satisfied. This requires that the observed state history effectively d-separates agent actions from future states along all non-causal backdoor paths. We present a motivation study on the board game \texttt{Tic-Tac-Toe} in Appendix  \ref{sec::motivation}.

In this paper (Figure \ref{fig:method}), we address the limitations of model-based reinforcement learning in multi-agent settings by introducing a framework for discovering causal mechanisms from biased offline data. Our contributions as follows: 
\begin{enumerate}[leftmargin=*,noitemsep,topsep=0pt]
\item \textbf{Stochastic Interventional Data Collection (C1)\label{c1}}: We introduce a demonstration collection strategy that applies soft interventions through policy variance, satisfying the Pearl Causal Hierarchy requirements for identifiability without costly hard interventions.

\item \textbf{Multi-Agent Sequential Backdoor Condition (C2)\label{c2}}: We formalize and prove the Multi-Agent Sequential Backdoor Condition, establishing that interventional dynamics are recoverable from observational multi-agent data despite unobserved strategic confounding, and use it to decouple environmental physics from agent intent.

\item \textbf{Interventional Strength Recovers Causal Structure (C3)\label{c3}}: Across three coordination domains (Two-Door \citep{gym_multigrid}, Navigation \citep{bettini2022vmasvectorizedmultiagentsimulator}, and Giveway \citep{bettini2022vmasvectorizedmultiagentsimulator}), under both full and partial observability and scaling to $2/3/4$ agents, we show that discovery error falls monotonically as the interventional strength $\sigma$ grows.

\item \textbf{Generalization Error Bound (C4)\label{c4}}: We establish, and confirm empirically, that the out-of-distribution policy evaluation error of the learned world model scales inversely with the interventional strength, falling by up to two orders of magnitude under adversarial policies.

\item \textbf{Intervention, Not Graph Quality, Is the Lever (C5)\label{c5}}: Pooling $1{,}240$ trained models, we show that the association between recovered-graph quality and predictive accuracy is a common-cause artefact of $\sigma$: conditioning on $\sigma$ removes it, and a graph-free model matches the masked model at matched data. The causal mask earns its keep specifically under partial observation and layout extrapolation, and as a structural prior whose corruption is worse than no prior at all.

\item \textbf{Calibrated Out-of-Distribution Uncertainty (C6)\label{c6}}: We show that the learned models localize their predictive uncertainty to a minority of state dimensions under distribution shift, rather than degrading uniformly.
\end{enumerate}
Impact of this work (Figure \ref{fig:impact}) seen as advances Model-Based Reinforcement Learning toward out-of-distribution robustness and structural interpretability, ensuring autonomous systems remain reliable even when surrounding agent behaviors deviate from training distributions. This framework facilitates verifiable safety, fault attribution, agent alignment, and making precise credit assignment to agent actions feasible.
\vspace{-1.3 em}
\section{Related Work}\label{sec:Related_Work}
\vspace{-1. em}
In our literature review, we identify practical bottlenecks in related problems and describe the gaps.

\textbf{Limitations of Associative World Models}: Model-Based Reinforcement Learning often relies on associative world models (DreamerV3, PlaNet, MuZero \citep{Schrittwieser_2020}) that predict future states by matching statistical distributions. Operating at Layer 1 of the Pearl Causal Hierarchy, these models are prone to causal confusion \citep{dehaan2019causalconfusionimitationlearning}, failing to distinguish between functional environment transitions and spurious correlations.

In multi-agent systems world models \citep{zhu2026shareversemultiagentconsistentvideo,zhang2025revisitingmultiagentworldmodeling,zhang2025combocompositionalworldmodels}, this manifest as an objective mismatch where models conflate invariant physical laws with the strategic intents of other agents. When dynamics are "intertwined," the world model develops a policy-dependent bias; consequently, shifting from expert to exploratory or adversarial policies causes catastrophic failure because the model expects environment physics to follow the expert's specific behavior. We address this by treating agent intent as a formal confounder $U$, adopting an interventional framework \citep{schölkopf2021causalrepresentationlearning} to ensure out-of-distribution robustness.

\textbf{FOCUS} \citep{zhu2022offlinereinforcementlearningcausal} prunes transitions using Kernel-based Conditional Independence tests \citep{zhang2012kernelbasedconditionalindependencetest}. These methods assume causal sufficiency. Multi-agent systems violate this assumption. Unobserved agent intents act as persistent confounders. Observational discovery yields fully-connected graphs in dense settings. We address this by treating intent as a formal confounder $U$. We use an interventional framework \citep{schölkopf2021causalrepresentationlearning}.

\textbf{Scalability of Neural Causal Architectures}: GNNs \citep{kipf2020contrastivelearningstructuredworld,10.1109/TNN.2008.2005605}, C-VAEs \citep{yang2023causalvaestructuredcausaldisentanglement}, and Neural SCMs \citep{pawlowski2020deepstructuralcausalmodels} approximate causal structures. Their message-passing requirements scale as $O(N^2)$. This requirement makes them computationally prohibitive for dense multi-agent dynamics. Furthermore, true counterfactual reasoning remains intractable \citep{schölkopf2021causalrepresentationlearning}.

\textbf{Temporal Gaps in Causal Discovery}:
Causal discovery is divided into constraint-based \citep{Spirtes2000} methods, which use statistical independence tests, and score-based \citep{chickering2002optimal} methods, which maximize a fit criterion \citep{10.5555/647233.719736}. Both paradigms are complicated by agent demonstrations that violate i.i.d. assumptions. While PCMCI+ \citep{runge2022discoveringcontemporaneouslaggedcausal} addresses autocorrelated and contemporaneous dependencies in time series, it has not been integrated into reinforcement learning transition models. We bridge this gap by using these techniques to inform the structural constraints of our world model.

{\textbf{The Intervention-Observation Paradox}:
The reliance on costly hard interventions bottlenecks causal discovery \citep{10.5555/3023476.3023496} and causal reinforcement learning algorithms \citep{Zhang2016MarkovDP}. We bypass this by demonstrating that multi-agent policy variance provides sufficient interventional strength for Layer 2 identifiability. This parallels Causal Imitation Learning (CIL) theory, where the sequential backdoor condition \citep{Zhang2016MarkovDP} (SBC) is used to identify robust policies. We shift this focus, applying the SBC directly to the world model to ensure transition dynamics are identifiable even when agent intents act as unobserved confounders.

\textbf{Comparison to Existing MAS Models}:
Current multi-agent system frameworks \citep{ahmed2020causalworldroboticmanipulationbenchmark,Madumal2019ExplainableRL} often focus on emergent representations or communication protocols. Our framework differentiates itself by explicitly discovering environmental physics from agent strategy. Models that treat transitions as deterministic finite automata \citep{10.1145/568438.568455} (DFA) to learn the system transition \citep{li2024emergentworldrepresentationsexploring,vafa2024evaluatingworldmodelimplicit,vafa2025foundationmodelfoundusing}, and our work overlaps with these approaches in how structural constraints are integrated and learned alongside a generative model
Generative 

\textbf{Generative Video World Models}: High-fidelity video models use diffusion and transformer processes to synthesize visually coherent states \citep{ha2018recurrentworldmodelsfacilitate, bruce2024geniegenerativeinteractiveenvironments}. By prioritizing visual plausibility over causal grounding, these architectures are prone to physical hallucinations under policy shifts \citep{li2024emergentworldrepresentationsexploring, zhang2025morpheusbenchmarkingphysicalreasoning}. Over pixel space based world modeling \citep{assran2023selfsupervisedlearningimagesjointembedding,zhou2025dinowmworldmodelspretrained}, our framework instead prioritizes the discovery of invariant structural mechanisms to ensure robustness against spurious correlations inherent in training demonstrations.
\vspace{-1.3 em}
\section{background}
\vspace{-1. em}
\subsection{Structural Causal Model for MAS}
\vspace{-0.7 em}
We formalize multi-agent transition dynamics as an Structural Causal Model \citep{pearl2009causality} (SCM) $\mathcal{M} = \langle \mathbf{U}, \mathbf{V}, \mathcal{F}, P(\mathbf{U}) \rangle$, with exogenous variables $\mathbf{U}$ (environment noise $u_{env}$, latent agent intents $u^i_{intent}$) and endogenous variables $\mathbf{V}$ (state $s_t$, joint actions $\mathbf{a_t}$, reward $r_t$, successor $s_{t+1}$). The structural mappings $\mathcal{F}$ are:

    \vspace{-1.5em} 
    \begin{align}
        \textbf{Environment Physics:} \quad & s_{t+1} \leftarrow f_s(s_t, \mathbf{a}_t, u_{env}) \label{eq::physics} \\
        \textbf{Reward Model:} \quad & r_t \leftarrow f_r(s_t, \mathbf{a}_t, u_{env}) \label{eq::reward} \\
        \textbf{Agent Policies:} \quad & a^i_t \leftarrow f_{a^i}(s_t, u^i_{intent}) \label{eq::policy}
    \end{align}

In this formulation, $f_s$ represents the invariant physical laws we aim to recover. The dependence of actions $a^i_t$ on latent intents $u^i_{intent}$ introduces persistent confounding, as $u^i_{intent}$ typically correlates with $s_t$ in expert demonstrations, necessitating an interventional approach for discovery.  Equation 3 has basis in prior work \citep{lowe2020multiagentactorcriticmixedcooperativecompetitive,dasgupta2019causalreasoningmetareinforcementlearning} (See Appendix \ref{section::scm} for a detailed breakdown of these components.)
\vspace{-0.7 em}
\subsection{Longitudinal Causal Decision Graph for MAS} 
\vspace{-0.7 em}
We represent sequential multi-agent system using a Causal decision graph $\mathcal{G}$ (Figure \ref{fig:cdg}) \citep{Zhang_Bareinboim_2021}, which encodes the structural dependencies of Eqs. \ref{eq::physics}, \ref{eq::reward}, \ref{eq::policy} over time. (See Appendix \ref{section::cdg} for a detailed analysis of these topological structures.)

In multi-agent demonstrations, unobserved strategic intent acts as a confounder $U$ that induces non-causal backdoor paths. Under full observability, introducing $U$ makes it a common cause of both actions and transitions, producing causal confusion; under partial observability the same intent-driven actions are additionally filtered through an observation bottleneck (Fig. \ref{fig:cdg}a–d). Appendix \ref{section::cdg} analyzes all four topological regimes.
\vspace{-0.7 em}
\subsection{Stochastic Soft Interventions}
\vspace{-0.7 em}
Hard interventions, such as $do(\mathbf{a}_t = \hat{\mathbf{a}})$ or $do(\mathbf{s}_t = \hat{\mathbf{s}})$, isolate causal effects by entirely severing incoming edges to a target variable \citep{pearl2009causality}. We execute stochastic soft interventions. We perturb the underlying mechanisms by injecting controlled variance directly into the behavioral data (Appendix \ref{sec::bandit}).
\subsubsection{Stochastic Soft Intervention via Policy}\label{def:si}In observational data, latent intent $u^i_{intent}$ persistently confounds agent actions. We break this confounding by applying a soft intervention to the agent’s policy (\textbf{claim C\ref{c1}}). We replace the original (Eqn. \ref{eq::policy}) conditional distribution $P(a^i_t \mid s_t, u_{intent})$ with an interventional distribution $P^\sigma(a^i_t \mid s_t)$. Let $\pi^i_{exp}$ denote the deterministic expert policy yielding demonstrations $\mathcal{D}$. We enforce a stationary mixture distribution:\begin{equation} \label{eq:mixture}P^\sigma(a^i_t \mid s_t) = (1 - \sigma)\pi^i_{exp}(a^i_t \mid s_t) + \sigma \zeta(a^i_t)\end{equation}where $\sigma \in [0, 1]$ dictates the interventional strength, and $\zeta$ is a strictly independent stochastic process (e.g., a uniform distribution over $\mathcal{A}^i$). Structurally, this operates as a definitive causal soft intervention \citep{10.5555/3023476.3023496} by fundamentally altering the data-generating mechanism (see Appendix \ref{sec::ssi}).
\vspace{-0.7 em}
\subsubsection{Interventional Signal via Policy Entropy}\label{def::pe}
\vspace{-0.7 em}
We quantify this interventional capacity via the Shannon Entropy of $P^\sigma$ at state $s$:\begin{equation} \label{eq:pe}H(P^\sigma \mid s) = -\int_{\mathcal{A}^i} P^\sigma(a \mid s) \log P^\sigma(a \mid s) da\end{equation}The identifiability of the transition function $f_s$ demands aggregate entropy in $\mathcal{D}$. Deterministic demonstrations ($\sigma=0$, $H=0$) permanently conflate invariant physics with hidden intent. High entropy forces natural experiments. By varying actions in identical states, we break backdoor paths and isolate the environmental physics.
\vspace{-0.7 em}
\subsection{Causal Identifiability and the Sequential Backdoor Condition}
\vspace{-0.7 em}
\label{sec::sbc_defined}
A causal query $P(y \mid do(x))$ is identifiable if it is computable from the observed joint distribution given the causal graph \citep{pearl2009causality}. We apply this to sequential decision-making to extract invariant dynamics from biased demonstrations.

Under the manipulated causal decision graph, when the Sequential Backdoor Criterion (SBC) \citep{Zhang2016MarkovDP} is satisfied, we estimate the causal effect of the action history $\mathbf{a}_{1:t}$ on the subsequent state $s_{t+1}$ as $P(s_{t+1} \mid do(\mathbf{a}_{1:t}))$. is reducible to an observational expression. Therefore, the causal identifiability of the world model is asserted directly from the demonstrations.While this theorem guarantees identifiability for an isolated agent, the introduction of contemporaneous agents structurally violates the standard SBC. We detail the foundational criteria, the sequential graph surgery, and the strict necessity for a multi-agent extension in Appendix \ref{sec::sbc}.
\vspace{-0.7 em}
\subsection{Recovering a Causal World model}
\vspace{-0.7 em}
Following \citet{vafa2024evaluatingworldmodelimplicit}, we formalize environmental physics as a DFA $\mathcal{W} = (Q, \Sigma, \delta, q_0, F)$, where $Q$ is the state space, $\Sigma$ is the action alphabet, and $\delta: Q \times \Sigma \to Q$ is the transition function. We define $q_{\text{reject}}$ as a sink state for impossible transitions, with $F = Q \setminus \{q_{\text{reject}}\}$ as the set of valid states. A generative model (proxy for world model) $m(\cdot): \Sigma^* \to \Delta(\Sigma)$ recovers $\mathcal{W}$ if it satisfies exact next-token prediction for any sequence $s \in S(q)$ leading to state $q$:
\begin{equation} \label{eq::dfa}
   \forall q \in F, \forall s \in S(q), \forall a \in \Sigma : m(a \mid s) > 0 \iff \delta(q, a) \neq q_{\text{reject}} 
\end{equation}
While DFA recovery ensures a model honors environmental rules via their support, it does not guarantee recovery of the invariant causal mechanism $f_s$ in MAS, where  $u_{intent}$ acts as a confounder. 

To bridge this, we assert that robust recovery requires satisfying the SBC. By conditioning on a history $H_k$ that d-separates the current action $\mathbf{a}_k$ from the intent $u_{intent}$ within the manipulated graph $\mathcal{G}_{\underline{\mathbf{a}_{1:k-1}} \overline{\mathbf{a}_{k+1:t}}}$, the model identifies the transition physics $f_s$ independently of the expert policy or demonstrations.
\vspace{-1.3 em}
\section{Framework and Methodology}
\vspace{-1. em}
\subsection{The Multi-Agent Sequential Backdoor Condition}
\vspace{-0.7 em}
To establish the structural identifiability of our Implicit Causal World Model (\textbf{claim C\ref{c2}}), we must demonstrate that the interventional distribution of the environmental transition can be uniquely computed from observational multi-agent data. Let $H_k = (\mathbf{s}_{0:k}, \mathbf{a}_{0:k-1})$ represent the observed history, $U$ the unobserved latent intent, $A^i_t$ the action of the target agent, and $A^{-i}_t$ the joint actions of all other agents.

We formalize the Multi-Agent Sequential Backdoor Condition (MAS-SBC) under the premise that agents operate via decentralized policies and transitions are Markovian with respect to the state-action space. 

\begin{theorem}[MAS-SBC]\label{theorem:mas_sbc}
Given the multi-agent causal decision graph $\mathcal{G}$, the interventional transition dynamics $P(s_{t+1} \mid do(a^i_t), H_k)$ can be resolved strictly through observational distributions via the sequential backdoor adjustment:
    \begin{equation}   
    P(s_{t+1} \mid do(a^i_t), H_k) = \sum_{a^{-i}_t} P(s_{t+1} \mid a^i_t, a^{-i}_t, H_k) P(a^{-i}_t \mid H_k)
    \end{equation}
\end{theorem}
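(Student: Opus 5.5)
The plan is to derive the adjustment formula with the three rules of do-calculus applied to the multi-agent causal decision graph $\mathcal{G}$ induced by Eqs.~\ref{eq::physics}--\ref{eq::policy}. I will treat $H_k$ as the current history, so that $s_t$ is its last state. I also use the premise stated before the theorem: policies are decentralized, and transitions are Markovian in $(s_t,\mathbf{a}_t)$.

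\textbf{Step 1 (expand over the other agents).} Start from $P(s_{t+1}\mid do(a^i_t),H_k)$ and marginalize over $a^{-i}_t$:
\begin{equation*}
P(s_{t+1}\mid do(a^i_t),H_k)=\sum_{a^{-i}_t}P(s_{t+1}\mid do(a^i_t),a^{-i}_t,H_k)\,P(a^{-i}_t\mid do(a^i_t),H_k).
\end{equation*}
This is just the law of total probability in the post-intervention model $\mathcal{G}_{\overline{a^i_t}}$.

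\textbf{Step 2 (the second factor).} Use Rule~3 to show that $P(a^{-i}_t\mid do(a^i_t),H_k)=P(a^{-i}_t\mid H_k)$. This requires $(a^{-i}_t \perp a^i_t \mid H_k)$ in $\mathcal{G}_{\overline{a^i_t}}$. Cutting the incoming edges of $a^i_t$ removes $s_t\to a^i_t$ and $u^i_{intent}\to a^i_t$. Since $a^i_t$ is contemporaneous with $a^{-i}_t$, there is no directed path $a^i_t\to a^{-i}_t$; this is where the decentralization premise enters. So $a^i_t$ is d-separated from $a^{-i}_t$.

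\textbf{Step 3 (the first factor).} Use Rule~2 to replace $do(a^i_t)$ by ordinary conditioning. This requires $(s_{t+1}\perp a^i_t\mid a^{-i}_t,H_k)$ in $\mathcal{G}_{\underline{a^i_t}}$, i.e., that every backdoor path from $a^i_t$ to $s_{t+1}$ is blocked. There are three candidate paths:
\begin{itemize}
\item $a^i_t\leftarrow s_t\to s_{t+1}$ is blocked by $s_t\in H_k$.
\item $a^i_t\leftarrow u^i_{intent}\to \cdots$ is the dangerous one. By Eq.~\ref{eq::physics}, $u^i_{intent}$ is not a parent of $s_{t+1}$; its only other children are past actions $a^i_{t'}$, $t'<t$. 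Those actions lie in $H_k$ and are non-colliders on such a path, so conditioning on them blocks it. Paths reaching $s_{t+1}$ through later states must pass through $s_t$ or $\mathbf{a}_{t-1}$, both in $H_k$.
\item $a^i_t\leftarrow s_t\to a^{-i}_t\to s_{t+1}$ is blocked by $s_t$, and also by conditioning on $a^{-i}_t$.
\end{itemize}
Combining Steps 1--3 gives the displayed formula.

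\textbf{Main obstacle.} The hard step is Step 3. Conditioning on past actions in $H_k$ can open collider paths of the form $a^i_{t'}\leftarrow s_{t'}\ldots$ together with $u^i_{intent}\to a^i_{t'}$. More seriously, if intents are correlated across agents, a path $a^i_t\leftarrow u^i\leftrightarrow u^j\to a^{-i}_t$ appears, and conditioning on $a^{-i}_t$ could open it via descendants. I would handle this by enumerating the four topological regimes of Appendix~\ref{section::cdg}. In the fully observed regimes, I would verify that every path leaving $u^i_{intent}$ toward $s_{t+1}$ must enter through an $\mathbf{a}$ or $s$ node that is a non-collider in $H_k\cup\{a^{-i}_t\}$. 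I would also state the assumption needed for this: $u_{intent}$ does not enter $f_s$, and it affects $s_{t+1}$ only through actions. If this fails under partial observability, I would restrict the theorem to the regimes where $H_k$ contains the true state. I would also note that positivity of $P(a^i_t\mid H_k)$ is what makes the conditional in Step 3 well defined, and that it is guaranteed for $\sigma>0$ by Eq.~\ref{eq:mixture}.
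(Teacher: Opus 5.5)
Your derivation is correct, but it reaches the formula by a different route from the paper's. The paper expands over both $a^{-i}_t$ and the latent intent $U$. In the factor $P(s_{t+1}\mid do(a^i_t),a^{-i}_t,H_k,U)$, every parent of $a^i_t$ ($s_t\in H_k$ and $U$) is conditioned on, so the $do$ drops without any path analysis. $U$ is then removed from that factor by the paper's Axiom~1, $S_{t+1}\perp U\mid H_k,\mathbf{a}_t$, and $\sum_U P(a^{-i}_t\mid H_k,U)P(U\mid H_k)$ collapses to $P(a^{-i}_t\mid H_k)$ by total probability.

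You never condition on $U$. Rule~3 handles the peer factor, and Rule~2 handles the transition factor with the observable set $H_k\cup\{a^{-i}_t\}$. You also test Rule~2 in the correct graph, $\mathcal{G}_{\underline{a^i_t}}$. The paper's Step~1 instead states its independence in $\mathcal{G}_{\overline{a^i_k}}$, where the edge $a^i_k\to s_{k+1}$ survives and the claim cannot hold.

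The paper's route is shorter: conditioning on the latent makes the $do$-removal trivial and leaves only a Markov property. That property is assumed there, though it follows from $u_{intent}$ not entering $f_s$. Your route works with an observable adjustment set throughout and makes the backdoor structure explicit, at the price of a path analysis.

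That path analysis needs tightening, and doing so also disposes of your ``main obstacle'' without a case split over regimes. Two claims in your bullets are off. First, $u^i_{intent}$ also has future children $a^i_{t'}$ with $t'>t$. Second, past actions can be colliders, as in $u^i_{intent}\to a^i_{t'}\leftarrow s_{t'}$.

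The repair goes as follows.
\begin{enumerate}
\item Any path with an interior node later than time $t$ is blocked at its latest interior node. That node is an unconditioned collider with no conditioned descendants.
\item Under full observability, every node of $H_k\cup\{a^{-i}_t\}$ has all its endogenous parents in $H_k$. So such a node can be an open collider only if both of its path-neighbours are exogenous; an endogenous parent there would be a conditioned non-collider.
\item Hence an open path leaving $a^i_t$ through $u^i_{intent}$ stays among intent variables and action colliders, even when intents are correlated. It never enters $s_{t+1}$: the parents $s_t$ and $a^{-i}_t$ would be conditioned non-colliders, and the noise $u_{env}$ is independent of intent.
\end{enumerate}

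The same principle sharpens your partial-observability caveat. $H_k$ must contain every endogenous input of the policies, not merely the state. For example, if $a^i_t$ reads $o_t\leftarrow f_o(s_t,u_{env})$ with $o_t\notin H_k$, and the same $u_{env}$ enters $f_s$, then the path $a^i_t\leftarrow o_t\leftarrow u_{env}\to s_{t+1}$ is open.

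Finally, Rule~2 needs $P(a^i_t,a^{-i}_t\mid H_k)>0$, not only $P(a^i_t\mid H_k)>0$. Independence of the $\zeta$'s across agents supplies this, since it gives $P(a^i_t\mid a^{-i}_t,H_k)\ge\sigma\,\zeta(a^i_t)$.
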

The complete formal proof ({\textbf{claim C5}) using Pearl’s do-calculus on the mutilated graph $\mathcal{G}_{\overline{A_{1:k-1}^i}\underline{A_{k+1:t}^i}}$ is provided in Appendix \ref{sec::mas_sbc_proof}.
\vspace{-0.7 em}
\subsection{Identifiability via the MAS-SBC}
\vspace{-0.7 em}
Decoupling $f_s$ from $U$ requires two things. \textbf{Structural identifiability}: Theorem \ref{theorem:mas_sbc} extends sequential adjustment by marginalizing over contemporaneous peer actions, blocking the multi-agent backdoor path $A^i_k \leftarrow U \rightarrow A^{-i}_k \rightarrow s_{k+1}$. \textbf{Causal support}: the adjustment cannot be evaluated on deterministic expert data, where $a^i_t$ is a deterministic function of $U$ and the two are collinear. Stochastic soft interventions ($\sigma > 0$) break this collinearity and guarantee the positivity needed to compute interventional queries. In practice we apply these interventions independently and simultaneously to all agents, with $\zeta(a^i_t) \perp \zeta(a^j_t)$.
\vspace{-0.7 em}
\subsection{Generalization Error Bound}
\vspace{-0.7 em}
To formalize the necessity of stochastic soft interventions, we establish a theoretical bound on the out-of-distribution policy evaluation error. We mathematically prove that the generalization error of the learned causal world model scales inversely with the square root of the interventional strength ($\sigma$) applied during data collection. The formal theorem (\textbf{claim C4}), the step-by-step derivation using the Simulation Lemma, and the discussion of its theoretical implications are provided in Appendix \ref{sec:generalization_bound_appendix}.
\vspace{-0.7 em}
\subsection{Causal Structure Discovery}
\vspace{-0.7 em}
\label{sec::csd}
\textbf{Assumptions.} Our MAS formulation violates causal sufficiency by construction, since $u_{intent}$ is unobserved. We therefore assume: \textbf{(i) Strict temporal lag} \label{assum:temporal_lag}: the physical mechanism $f_s$ acts with lag $\tau \ge 1$, so no true causal interaction is instantaneous (Figure \ref{fig:cdg}); and \textbf{(ii) Contemporaneous confounding isolation} \label{assum:contemp_confounding}: $u_{intent}$ coordinates actions strictly at time $t$ and has no direct lagged effect on $s_{t+1}$ outside $\mathbf{a}_t$.

\textbf{Structure Discovery.} Under above assumptions, we treat the MAS trajectories in $\mathcal{D}_\sigma$ as a Vector Autoregressive (VAR) process. Since the sampled histories $\mathcal{H}_{t}^{\tau}$ are almost IID across the dataset, we define the conditioning set as the state-action-reward history $(\mathcal{H}_{t}^{\tau})$ up to a maximum lag $\tau$:
\begin{equation}
    \mathcal{H}_{t}^{\tau} = \{ (s_{t-k}, \mathbf{a}_{t-k}, r_{t-k}) \mid 1 \le k \le \tau \}
\end{equation}
We utilize the PCMCI+ algorithm to evaluate the Conditional Mutual Information (CMI) \citep{WYNER197860} and identify the directed edge set $E$ of the skeleton $\mathcal{G}_{full}$. The discovery yields two sub-graphs: Lagged discovery ($E_{lag}$: edges $V_{t-k} \to V_{t}$ spanning time steps) and contemporaneous discovery ($E_{contemp}$: edges $V_{i,t} \to V_{j,t}$ within the same step). Under Assumption \textbf{(i)}, invariant physical transitions lie in $E_{lag}$. Assumption \textbf{(ii)} restricts spurious intent correlations to $E_{contemp}$. We recover the unconfounded causal skeleton $\mathcal{G}_{skeleton} = (V, E_{skeleton})$ by discarding contemporaneous dependencies:$$E_{skeleton} = \{ (V_{t-k} \to V_{t}) \in E_{lag} \mid 1 \le k \le \tau \}$$
Conditional independence tests in PCMCI+ assign a $p$-value to each candidate edge. This yields an adjacency tensor $\mathcal{C} \in \mathbb{R}^{|V| \times |V| \times \tau}$ encoding these significance values (see Appendix \ref{sec:appendix_causal_discovery}):
$$\mathcal{D}_{\sigma} \xrightarrow{\text{PCMCI}^{+}} \mathcal{G}_{skeleton} \implies \mathcal{C} \in \mathbb{R}^{|V| \times |V| \times \tau}$$

\vspace{-0.7 em}
\subsection{Causal Masked World Model Architecture}
\vspace{-0.7 em}
We instantiate the Implicit Causal World Model (ICWM) by translating $\mathcal{G}_{skeleton}$ edges into a tensor mask $\mathbf{C}$ that structurally enforces the learned SCM. 

\textbf{Neural Masking.} Let transition function $f_s$ be parameterized by neural network $\Phi$ with parameters $\theta$. Given input tensor $\mathbf{x}_t = \mathcal{H}^\tau_t$ (history up to lag $\tau$), we satisfy the \texttt{SBC/MAS-SBC }by applying mask $\mathbf{C}$ via the Hadamard product:
$\hat{s}_{t+1} = \Phi( \mathbf{x}_t \odot \mathbf{C}; \theta )$. 
This restricts the receptive field so each prediction $\hat{s}_{i, t+1}$ depends strictly on its unconfounded parents $Pa(s_{i, t+1}) \in \mathcal{G}_{skeleton}$, isolating invariant physics.
\vspace{-1.3 em}
\section{Experimentation}
\vspace{-1. em}
\label{sec::experimentation}
\begin{figure*}[t]
  \centering
  \includegraphics[width=\linewidth]{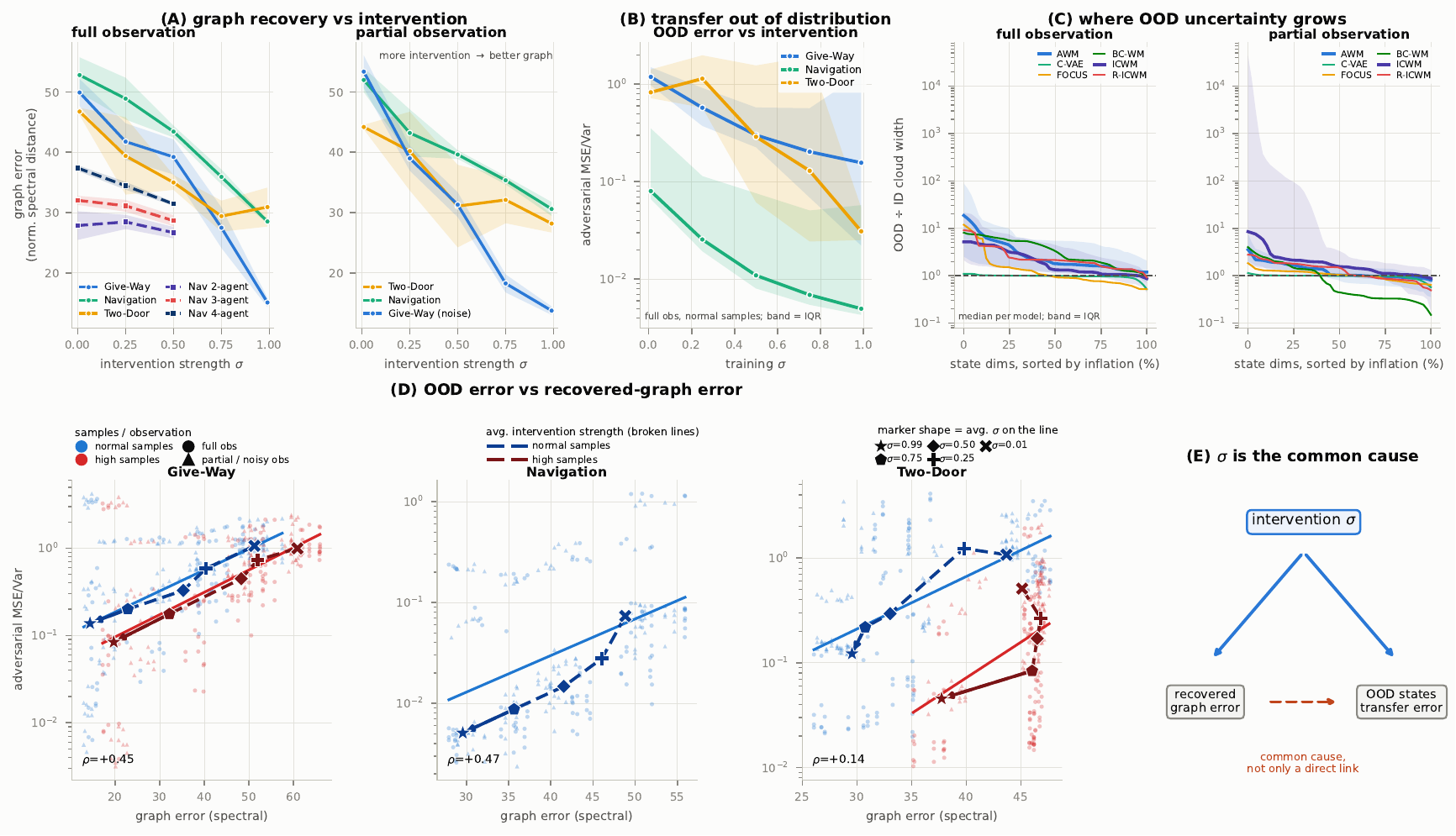}
    \caption{\textbf{Interventional strength drives both causal-structure recovery and out-of-distribution transfer.}
    \textbf{(A)} Graph error vs $\sigma$ (dashed: navigation $2/3/4$ agents).
    \textbf{(B)} Adversarial error (MSE/Var) vs $\sigma$.
    \textbf{(C)} Per-axis OOD/ID density-width inflation, dimensions sorted.
    \textbf{(D)} Adversarial error vs graph error; broken lines trace the $\sigma\!=\!0.99{\to}0.01$ trajectory.
    \textbf{(E)} $\sigma$ lowers both errors together (common cause, not a direct link).
    Details in Appendix \ref{app:structure-correlation}, \ref{app:mask-ablation}, Figure \ref{fig:app-nd-metrics}.}
    \label{fig:results}
\end{figure*}

\begin{figure*}[t]
  \centering
  \includegraphics[width=\linewidth]{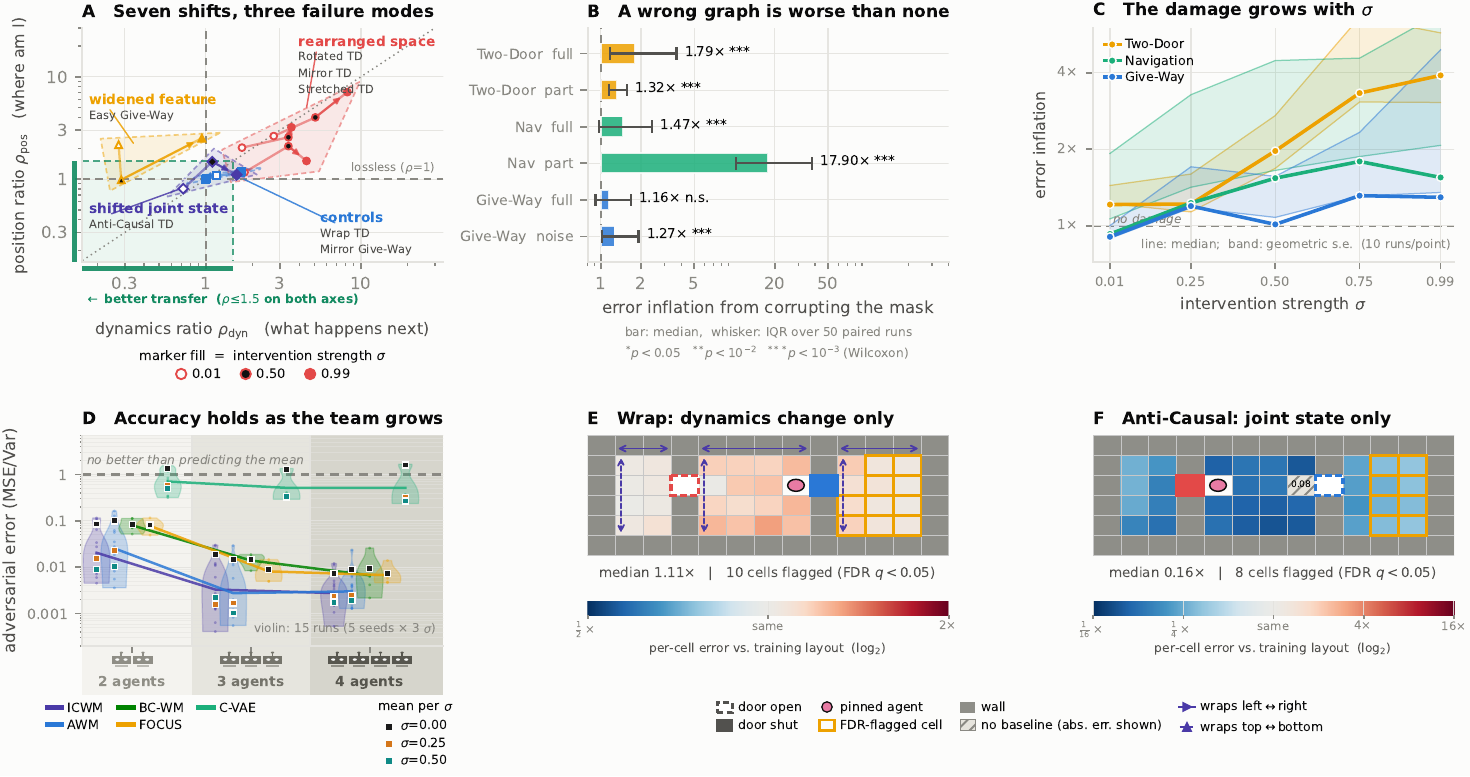}
    \caption{\textbf{What actually breaks a world model under shift, and what the mask is worth.}
    \textbf{(A)} All seven shifted layouts on the dynamics/position plane, scored against
    each frozen model's own error on the layout it trained on, so smaller is better.
    $\rho\!=\!1$ is lossless and the green region is $\rho\!\leq\!1.5$ on both axes.
    Marker shape and colour give the kind of change, the tinted band groups layouts that
    probe the same question, and marker fill goes hollow, black, then family-colour as
    $\sigma$ rises from $0.01$ to $0.99$.
    \textbf{(B)} Corrupting the causal mask inflates error in all six environment groups
    (bar: median, whisker: IQR over $50$ paired runs).
    \textbf{(C)} That damage grows with $\sigma$ (line: median, band: geometric s.e.).
    \textbf{(D)} Adversarial error on $2/3/4$-agent navigation (log axis). Coloured
    squares mark the mean per intervention level; the line follows each model's geometric
    mean across team sizes. ICWM and AWM stay near $10^{-3}$, well below C-VAE.
    \textbf{(E,F)} Per-cell error of the best ICWM against the training layout, for the
    two shifts that hold geometry fixed, drawn in the environment's own frame. Each map
    has its own log$_2$ scale. Wrap stays near parity ($1.11\times$); Anti-Causal is
    easier than the source ($0.16\times$) because shutting the red door confines the
    agent. Appendix \ref{app:shifted-generalization} tables every condition,
    Appendix \ref{app:mask-ablation} the ablation, Appendix \ref{app:shifted-envs} the layouts.}
    \label{fig:shift-taxonomy}
\end{figure*}


\textbf{Setup.} We evaluate ICWM against five baselines (Appendix \ref{sec::baselines_appendix}): the associative world model AWM, the causal-pruning method FOCUS, a behavior-cloning world model BC-WM, a recurrent variant R-ICWM, and a generative C-VAE. AWM is the only \emph{graph-free} architecture; every other model, ICWM included, receives the same soft PCMCI mask, a differentiable gate that weighs each input by the significance of its discovered causal links (Appendix \ref{sec::soft_masking}). Throughout, $\sigma$ is the \emph{interventional strength} of a dataset: $\sigma\!=\!0$ is purely observational expert data, $\sigma\!\to\!1$ is near-random exploration. We use two errors (Appendix \ref{sec::metricWM}): \emph{prediction error}, the next-state MSE, divided by the next-state variance (MSE/Var) whenever we compare across environments; and \emph{graph error}, the normalized spectral distance between the discovered and ground-truth causal graphs.

\textbf{Domains.} We use three multi-agent coordination domains (Figure \ref{fig:env}) with high inter-agent confounding:
\begin{itemize}[leftmargin=*,noitemsep,topsep=0pt]
    \item \textbf{Two-Door}: a room with a red and a blue door; agents must open the red door then the blue door, in order, and are rewarded for fast joint completion.
    \item \textbf{Navigation}: multi-agent pathfinding where collision avoidance and reaching shared goals couple the agents strategically.
    \item \textbf{Giveway}: two agents on opposite ends of a corridor too narrow to pass; they must negotiate a recessed notch to reach opposing goals without colliding.
\end{itemize}
Models train on datasets $\mathcal{D}^N_\sigma$ of $N$ episodes at interventional strength $\sigma$ (Appendix \ref{sec:appendix_cardinality}, \ref{sec::expert_training}), and are evaluated on held-out biased-optimal ($\sigma\!=\!0.5$), random ($\sigma\!=\!0.9$), and adversarial policies.


We organize the results around five questions. Figure \ref{fig:results} answers the first three: \emph{(i)} do soft interventions recover the true causal structure (\textbf{claim C\ref{c3}}), \emph{(ii)} does the learnt model transfer out of distribution (\textbf{claim C\ref{c6}}), and \emph{(iii)} what is responsible for that transfer (\textbf{claim C\ref{c5}})? Figure \ref{fig:shift-taxonomy} and Figure \ref{fig:ricwm-edge} answers the remaining two: \emph{(iv)} where does the lever stop working and what exactly breaks?, and \emph{(v)} does memory help, and where?

\textbf{Q$(i)$.} Figure \ref{fig:results}A scores PC and PCMCI discovery against the ground-truth graph as $\sigma$ grows. Graph error falls steadily in \emph{every} environment, under both full and partial observation, and the trend survives scaling: with $2$, $3$, and $4$ navigation agents the recovered graphs improve in the same way. In simple terms, adding randomness to the demonstrations is what turns observational data into an identifiable causal graph; without it ($\sigma\!\approx\!0$) discovery is at its worst everywhere. Detailed per-metric results are in Appendix \ref{app:structure-correlation}.

\textbf{Q$(ii)$.} We evaluate every trained model on an \emph{adversarial} policy far outside the demonstration distribution (Figure \ref{fig:results}B). On all three environments the variance-normalized error falls monotonically as training $\sigma$ grows, by up to two orders of magnitude on navigation, matching our bound that OOD error scales inversely with intervention. The causal world models (ICWM, AWM) beat the generative C-VAE by a wide margin throughout.

The models also \emph{reason} sensibly under this shift. Fitting a predictive density around each frozen model and comparing its per-axis widths OOD against in-distribution (Figure \ref{fig:results}C), uncertainty inflates on only a minority of state dimensions while most stay at their in-distribution width: the models localize what they no longer know instead of becoming uniformly uncertain. The full cloud-width study is in Figure \ref{fig:app-nd-metrics}.

\textbf{Q$(iii)$.} Across every environment and both sample regimes, models handed better graphs also predict better out of distribution (Figure \ref{fig:results}D; pooled within-cell Spearman $\rho\!=\!+0.62$). It is tempting to read this as ``better graph $\Rightarrow$ better model.'' The data say otherwise (Figure \ref{fig:results}E): $\sigma$ drives graph error and OOD error down \emph{together}, and once we condition on $\sigma$ the correlation vanishes ($\rho\!\approx\!0$), with the graph-free AWM matching ICWM at matched data. The average-intervention trajectories in Figure \ref{fig:results}D make this concrete: moving from $\sigma\!=\!0.01$ to $\sigma\!=\!0.99$ slides each model down and to the left, improving graph error and OOD error jointly. The interventional richness of the demonstrations is the lever; the recovered graph is its by-product. The graph still earns its keep where information is missing, under partial observation and layout extrapolation (Appendix \ref{app:structure-correlation}), and as a prior that must not be wrong: corrupting it inflates error in every environment group (Appendix \ref{app:mask-ablation}).

\textbf{Q$(iv)$.} Where does the lever stop? Here we move the \emph{environment} instead of the policy. We run the frozen models, without retraining, on seven altered layouts (Appendix \ref{app:shifted-envs}) and score each against its own error on the training layout, where $\rho\!=\!1$ means lossless transfer (Figure \ref{fig:shift-taxonomy}A). The headline now inverts. On the three layouts that rearrange space, $\rho_{\mathrm{dyn}}$ \emph{rises} with intervention, from a median of $1.9\times$ at $\sigma\!=\!0.01$ to $5.3\times$ at $\sigma\!=\!0.99$ ($n\!=\!96$ paired conditions, $93\%$ worse, Wilcoxon $p\!<\!10^{-14}$), and this holds for every architecture, ICWM and graph-free AWM alike. The reason is simple: exploratory data widens coverage only \emph{within} the training geometry, so it buys nothing once the geometry itself moves. Three shifts isolate what breaks, each holding the geometry fixed and changing one factor. \texttt{Wrap} changes only the transition rule and stays at $\approx\!1.0\times$ for every $\sigma$ (Mann--Whitney $p\!<\!10^{-6}$ against the space family). This is the decisive control: it shows the large ratios come from unfamiliar \emph{coordinates}, not from a fragile dynamics model. \texttt{Easy Give-Way} lifts position error while its dynamics error \emph{falls} ($\rho_{\mathrm{pos}}/\rho_{\mathrm{dyn}}\!=\!4.2$), and \texttt{Anti-Causal} leaves both near $1.0$ ($p\!<\!10^{-24}$ between the two); it is in fact \emph{easier} than the source, because shutting the red door confines the agent (Figure \ref{fig:shift-taxonomy}E,F). 
\begin{wrapfigure}{l}{0.50\textwidth}
  \centering
  \includegraphics[width=0.48\textwidth]{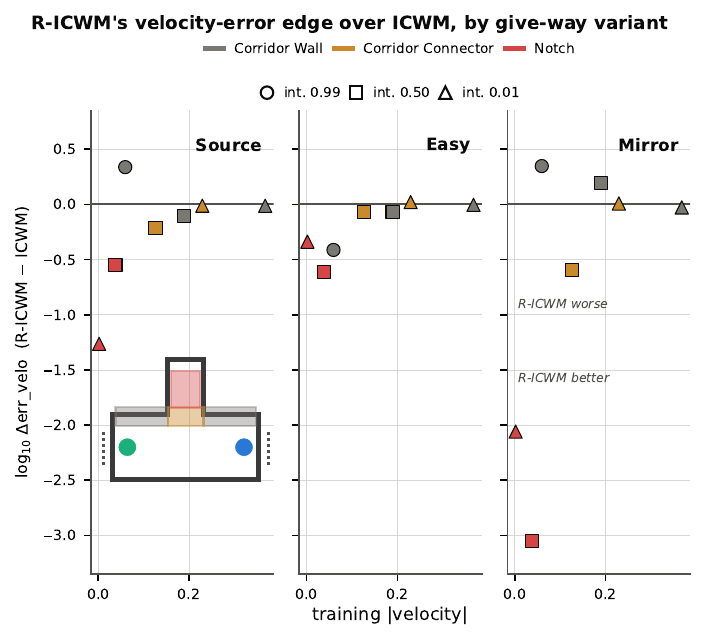}
  \caption{Velocity error, R-ICWM minus ICWM (log$_{10}$). Below $0$ means R-ICWM is
  better. Its edge grows as the trained speed of a region drops, in all three layouts (Source, Easy, Mirror).}
  \label{fig:ricwm-edge}
\end{wrapfigure}
Being off-manifold is not automatically harder. What matters is whether the changed mechanism is one the model has to predict. The mask still earns its keep here: corrupting it inflates error in all six groups (median $1.16$ to $17.9\times$, worst as $\sigma\!\to\!1$; Figure \ref{fig:shift-taxonomy}B,C). Scale is not the obstacle either. On $2/3/4$-agent navigation, ICWM and AWM hold a $\sim\!10^{-3}$ adversarial error as agents are added (Figure \ref{fig:shift-taxonomy}D), orders below C-VAE. The honest boundary is this: soft intervention identifies dynamics \emph{within} a geometry, but not spatial extrapolation.

\textbf{Q$(v)$.} R-ICWM is ICWM plus a recurrent state, so it
should pay off where a single frame says least. We split the give-way corridor into three
regions and measure how fast agents actually moved there in training: the \texttt{Wall} is
fastest, the \texttt{Connector} at the bay's mouth slower, and the \texttt{Notch} inside the
bay slowest, since agents only pause there to let the other pass. Comparing the two models
on the velocity part of the error alone (Figure \ref{fig:ricwm-edge}), the gap tracks speed,
not the layout: in the fast \texttt{Wall} the models are even, and R-ICWM's edge grows
through the \texttt{Connector} to the slow \texttt{Notch}, where it is one to three orders
of magnitude better. The pattern survives both give-way shifts, and is largest under
\texttt{Mirror}. Memory buys little when motion is fast and smooth, and a lot where it is
slow and stop-start. This is visible only in the velocity channel; the mixed dynamics error
averages it away (Appendix \ref{app:velocity-companion}).
\vspace{-1. em}
\section{CONCLUSION AND LIMITATIONS}
\vspace{-1.0 em}
We introduce ICWM, which learns world models from multi-agent demonstrations collected under soft interventions. Our central finding is that \emph{interventional strength} $\sigma$, small purposeful randomness in agent actions, is the lever on both causal-structure recovery and out-of-distribution accuracy: as it grows, discovered graphs sharpen and adversarial error falls by orders of magnitude, persisting as agents are added. The graph pays off where information is missing: on the stretched two-door layout (partial observation), ICWM reaches $0.53\times$ the graph-free AWM error, BC-WM $0.76\times$, and FOCUS $0.80\times$, while a corrupted graph is worse than none (Appendices \ref{app:structure-correlation}, \ref{app:mask-ablation}). The lever has a precise boundary: rearranging the layout makes transfer error \emph{grow} with $\sigma$ ($1.9\times$ to $5.3\times$), while changing only the transition rule transfers at $\approx\!1.0\times$, so $\sigma$ identifies dynamics within a geometry, not spatial extrapolation (Figure \ref{fig:shift-taxonomy}, Appendix \ref{app:shifted-generalization}). Two limitations follow. First, the requirement for random actions can break the very cooperation the model needs to observe in tightly coordinated settings (most visibly in Giveway), motivating targeted rather than uniform interventions (Appendix \ref{sec::bandit}). Second, our guarantees do not extend to layout changes, and the model alone cannot tell a practitioner which regime a new deployment falls into.

\textbf{Future work.} Connecting these justified world models to causal imitation and causal Q-learning \citep{MndezMolina2020CausalBQ}, and to the steerability gap of associative models \citep{vafa2025whatsproduciblereachablemeasuring}, is a natural next step. Beyond this, causally grounded world models are a prerequisite for \emph{runtime verification} and \emph{self-correcting agentic systems}: a model correct about which variables drive a transition can both screen proposed actions before execution, in the spirit of shielding \citep{alshiekh2017safereinforcementlearningshielding}, and supply the error signal for gradient-free self-revision \citep{shinn2023reflexionlanguageagentsverbal}.






\section*{Reproducibility Statement}
All of our results come from simulated environments with seeded pipelines. We
describe the environments and their state and action representations in
Appendices~\ref{app:environment}, \ref{app:state_rep},
and~\ref{app:action_space}, and the expert policies used to generate
demonstrations in Appendix~\ref{sec::expert_training}. We give the causal
discovery procedure and its theoretical grounding in
Appendix~\ref{sec:appendix_causal_discovery}, and the soft-mask rule in
Appendix~\ref{sec::soft_masking}. We describe the architectures and all
baselines in Appendix~\ref{sec::baselines_appendix}, and our evaluation metrics
in Appendix~\ref{sec::metricWM}. Every reported number is averaged over multiple
seeds. For the cell-paired comparisons behind our main claims, we hold the
model, environment, observability, $\sigma$, and seed fixed and vary only the
factor we are studying. We specify the shifted layouts in
Appendix~\ref{app:shifted-envs} and the corresponding transfer results in
Appendix~\ref{app:shifted-generalization}, and the mask-corruption procedure in
Appendix~\ref{app:mask-ablation}.
\section*{Ethics Statement}
This work studies learned
world models in simulated multi-agent environments. It involves no human
subjects, no personally identifiable information, and no real-world data, since
all demonstrations are generated programmatically in open-source simulators.
One point is worth stating clearly. Our method collects data under soft
interventions, which means deliberately adding randomness to an agent's actions.
This is harmless in simulation, but in a robotic or safety-critical setting it
would mean perturbing a system whose failures carry real cost. We therefore do
not recommend applying uniform interventions outside simulation without a
separate safety layer, and we treat reducing the required intervention budget as
a safety concern and not only a question of efficiency. We declare no conflicts
of interest or sponsorship that influenced this work.

\section*{LLM Usage}
We used large language models only as assistive tools, for editing prose that we
wrote ourselves and for help with implementation and plotting code. We did not
use them to generate research ideas, design experiments, produce or analyse
results, or find citations. We reviewed all text and code written with LLM
assistance. Every reported number comes from the executed pipeline, and we
checked every citation against its primary source. The authors take full
responsibility for the contents of this paper.

\bibliographystyle{iclr2026_conference}
\bibliography{uai2026-template}

\newpage


\begin{center}
{\large\bf Learning Implicit Causal World Models from Multi-Agent Demonstrations\\(Supplementary Material)}
\end{center}
\appendix
\section*{Index of Contents (Supplementary Material)}
This appendix provides extended formalisms, mathematical proofs, and detailed experimental configurations supporting the findings in the main text.
\begin{itemize}[leftmargin=1.5em, noitemsep]

    \item \textbf{Appendix \ref{sec::impact}: Impact of the Paper}

    \item \textbf{Appendix \ref{section::scm}: Structural Causal Model Formalization}
    \begin{itemize}
        \item Exogenous and Endogenous Partitioning
        \item Structural Functions and Dependencies
        \item The Role of $P(U)$
    \end{itemize}

    \item \textbf{Appendix \ref{section::cdg}: Causal Decision Graph}
    \begin{itemize}
        \item MDP Regimes (Full Observability)
        \item POMDP Regimes (Partial Observability)
        \item The Observation Space and Information Bottleneck
    \end{itemize}

    \item \textbf{Appendix \ref{sec::ssi}: Formalizing Policy Mixtures as Causal Interventions}
    \begin{itemize}
        \item Mapping the Mixture to a Structural Mechanism Shift
        \item The Role of Entropy in Structural Discovery
    \end{itemize}

    \item \textbf{Appendix \ref{sec::sbc}: Theoretical Foundations of the Sequential Backdoor Condition}
    \begin{itemize}
        \item The Manipulated Causal Decision Graph
        \item Formal Definition of the Sequential Backdoor Criterion
        \item The Structural Violation in Multi-Agent Environments
    \end{itemize}

    \item \textbf{Appendix \ref{sec::mas_sbc_proof}: Formal Proof of the MAS Sequential Backdoor Condition}
    \begin{itemize}
        \item Structural Axioms of the Multi-Agent System
        \item Step 1: Graphical d-Separation via Induction
        \item Step 2: Algebraic Reduction via do-Calculus
    \end{itemize}

    \item \textbf{Appendix \ref{sec::motivation}: Motivation Study in Tic-Tac-Toe}

    \item \textbf{Appendix \ref{sec:generalization_bound_appendix}: Generalization Error Bound of the ICWM}
    \begin{itemize}
        \item Discussion of Proven Implications
    \end{itemize}

    \item \textbf{Appendix \ref{sec:appendix_causal_discovery}: Theoretical Guarantees and Algorithmic Selection for Causal Discovery}
    \begin{itemize}
        \item Completeness of the Lagged Causal Skeleton in PCMCI
        \item Algorithmic Selection: PCMCI vs.\ Latent Discovery (LPCMCI)
    \end{itemize}

    \item \textbf{Appendix \ref{sec::bandit}: Contextual Bandit Formulation of Decision-Making}
    \begin{itemize}
        \item The Intractability of Expected Information Gain
        \item Surrogate Objectives
    \end{itemize}

    \item \textbf{Appendix \ref{app:environment}: MAS Environment Visualization}

    \item \textbf{Appendix \ref{app:state_rep}: State Representation of Different Environments}
    \begin{itemize}
        \item Two-Door Coordination
        \item Giveway Corridor
        \item Multi-Agent Navigation
    \end{itemize}

    \item \textbf{Appendix \ref{app:action_space}: Action Space Representation}
    \begin{itemize}
        \item Continuous Force-Based Action Space (VMAS)
        \item Discrete Categorical Action Space (MiniGrid)
        \item Comparative Summary
        \item Shifted Environments
    \end{itemize}

    \item \textbf{Appendix \ref{app:gt_adjacency}: Ground Truth Causal Adjacency Matrices}

    \item \textbf{Appendix \ref{sec::baselines_appendix}: Neural Architecture Design and Baselines}
    \begin{itemize}
        \item Summary of Model Properties
        \item Causal Structural Masking and AMLS
        \item Baseline Architectures
        \item Optimization and Loss Functions
    \end{itemize}

    \item \textbf{Appendix \ref{sec::metricWM}: Evaluation Metrics}
    \begin{itemize}
        \item Toy Study: Validating $\Delta_{\text{variance}}$
        \item Toy Evaluation: Density Metrics vs.\ the Variance Metric
        \item Causal Metric Stress Tests
    \end{itemize}

    \item \textbf{Appendix \ref{sec::expert_training}: MAPPO Training and Evaluation Performance of Expert Policy}

    \item \textbf{Appendix \ref{sec:appendix_cardinality}: Dataset Cardinality and Evaluation}
    \begin{itemize}
        \item Scaling Dynamics and Cardinality Bounds
        \item Evaluation Dataset Configuration
        \item Quality of the Collected Datasets ($\mathcal{D}^N_\sigma$)
        \item Evaluation Dataset Configuration: Two, Three, Four Agent Navigation
    \end{itemize}

    \item \textbf{Appendix \ref{sec::discovery}: Causal Discovery Setup and Results}
    \begin{itemize}
        \item Causal Discovery Experimental Parameters
        \item Evaluation Results: Three Primary Environments
        \item Evaluation Results: High Sample Complexity Experiments
        \item Evaluation Results: Two, Three, Four Agent Navigation
    \end{itemize}

    \item \textbf{Appendix \ref{app:wm_results}: World Model Setup and Results}
    \begin{itemize}
        \item Evaluation Results: Three Primary Environments (\ref{app:normal3-wm-sweep})
        \item Evaluation Results: High Sample Complexity (\ref{app:highsample-wm-sweep})
        \item Evaluation Results: Two, Three, Four Agent Navigation (\ref{app:multiagent-lidar-wm})
    \end{itemize}

    \item \textbf{Appendix \ref{app:mask-ablation}: Causal-Mask Ablation Study}

    \item \textbf{Appendix \ref{app:neural-density}: Uncertainty and Neural Density Measurement}
    \begin{itemize}
        \item Grid World: Two-Door
        \item Continuous Control: Give-Way
        \item Two-Agent Navigation
        \item Multi-Agent Navigation without LIDAR
        \item Quantitative Uncertainty Metrics (\ref{app:nd-metrics})
    \end{itemize}

    \item \textbf{Appendix \ref{app:error-density}: World-Model Spatial Error Density}
    \begin{itemize}
        \item Setup, Figures, Quantitative Summary, Discussion
    \end{itemize}

    \item \textbf{Appendix \ref{app:shifted-generalization}: Generalization under Layout Shift}
    \begin{itemize}
        \item Setup, Figures, Quantitative Summary, Discussion
        \item Where the Shift Actually Bites: Subtracting the Source Error Surface (\ref{app:notch-distortion})
        \item Splitting the Velocity Channel: Where Recurrence Pays Off (\ref{app:velocity-companion})
        \item Overall Takeaway
    \end{itemize}

    \item \textbf{Appendix \ref{app:structure-correlation}: Towards Pearl's Framework for Counterfactual Queries}
    \begin{itemize}
        \item The Pooled Correlation is Confounded
        \item Intervention Strength is the Common Cause
        \item The Null Replicates under Layout Shift
        \item A Localized, Conditional Exception
        \item Why Stretch Benefits and Rotation Does Not
    \end{itemize}

\end{itemize}
\section*{Notation Glossary}
\label{app:notation}

The following tables summarize the mathematical notation used throughout the paper, categorized by their respective methodological sections (see Tables~\ref{tab:eval_metrics}, \ref{tab:causal_discovery}, \ref{tab:soft_interventions}, \ref{tab:uncertainty_shift}, and \ref{tab:scm_decision_graph}).

\definecolor{lightblue}{RGB}{220, 230, 255}
\definecolor{lightgreen}{RGB}{220, 255, 220}
\definecolor{lightyellow}{RGB}{255, 255, 220}
\definecolor{lightorange}{RGB}{255, 235, 200}
\definecolor{lightgray}{RGB}{230, 230, 230}
\definecolor{lightpurple}{RGB}{235, 225, 255}

\begin{table*}[h]
\centering
\begin{tabular}{@{}ll@{}}
\toprule
\textbf{Symbol} & \textbf{Definition} \\ \midrule
\rowcolor{lightgray} \multicolumn{2}{l}{\textbf{Evaluation Metrics and Generalization Bounds}} \\
\rowcolor{lightorange} $D_{dim}$ & Dimensionality of state vector \\
\rowcolor{lightorange} $\mathcal{D}_\sigma$ & Demonstration dataset \\
\rowcolor{lightorange} $\text{MSE}$ & Mean Squared Error \\
\rowcolor{lightorange} $\text{RSE}$ & Relative System Error \\
\rowcolor{lightorange} $\text{SHD}$ & Structural Hamming Distance \\
\rowcolor{lightorange} $\Delta_{inv}$ & Policy Invariance Gap \\
\rowcolor{lightorange} $\Delta_{spec}$ & Spectral Distance \\
\rowcolor{lightorange} $\Delta_{\text{variance}}$ & Variance-normalized relative error, $\text{MSE}/\sigma^2_{\mathcal{D}}$ \\
\rowcolor{lightorange} $\sigma^2_{\mathcal{D}}$ & Target-variance floor \\
\rowcolor{lightorange} $\bar{s}$ & Input-agnostic mean successor state \\
\rowcolor{lightorange} $R^2$ & Coefficient of determination \\
\rowcolor{lightorange} $\mathcal{C}_{pred}, \mathcal{C}_{true}$ & Predicted and ground-truth causal tensors \\
\rowcolor{lightorange} $\lambda_i(\cdot)$ & $i$-th eigenvalue of an adjacency representation \\
\rowcolor{lightorange} $\widehat{\mathrm{MMD}}^2$ & Unbiased squared MMD (RBF kernel) \\
\rowcolor{lightorange} $k(u,v)$ & RBF kernel, $\exp(-\gamma\lVert u-v\rVert^2)$ \\
\rowcolor{lightorange} $\gamma^{-1}$ & MMD kernel bandwidth (median heuristic) \\
\rowcolor{lightorange} $\hat{p}(z)$ & Kernel density estimate at query point $z$ \\
\rowcolor{lightorange} $h$ & KDE bandwidth \\
\rowcolor{lightorange} $\mathcal{G}$ & Query grid for density evaluation \\
\rowcolor{lightorange} $\hat{p}_{\text{norm}}, \hat{p}_{\text{adv}}$ & Normal and adversarial next-state densities \\
\rowcolor{lightorange} $\text{Dist}_{TV}$ & Total Variation distance \\
\rowcolor{lightorange} $J$ & Complexity constant \\
\rowcolor{lightorange} $\epsilon_{env}$ & Irreducible stochasticity / error \\
\bottomrule
\end{tabular}
\caption{Evaluation Metrics and Generalization Bounds}
\label{tab:eval_metrics}
\end{table*}

\begin{table*}[h]
\centering
\begin{tabular}{@{}ll@{}}
\toprule
\textbf{Symbol} & \textbf{Definition} \\ \midrule
\rowcolor{lightgray} \multicolumn{2}{l}{\textbf{Causal Discovery and Neural Masking}} \\
\rowcolor{lightyellow} $\tau$ & Temporal lag limit \\
\rowcolor{lightyellow} $E_{lag}, E_{contemp}$ & Discovered edge sets \\
\rowcolor{lightyellow} $\mathcal{G}_{skeleton}$ & Unconfounded causal skeleton \\
\rowcolor{lightyellow} $\mathcal{C}$ & Adjacency Matrix with Lagged Structure \\
\rowcolor{lightyellow} $p_{ij}$ & Causal discovery $p$-value \\
\rowcolor{lightyellow} $\alpha$ & Significance threshold \\
\rowcolor{lightyellow} $\psi_{ij}$ & Log-Significance Transform value \\
\rowcolor{lightyellow} $\kappa$ & Temperature parameter \\
\rowcolor{lightyellow} $\Gamma$ & Differentiable soft mask tensor \\
\rowcolor{lightyellow} $M$ & Binary causal mask tensor \\
\rowcolor{lightyellow} $\Phi, \theta$ & Neural transition model parameters \\
\bottomrule
\end{tabular}
\caption{Causal Discovery and Neural Masking}
\label{tab:causal_discovery}
\end{table*}

\begin{table*}[h]
\centering
\begin{tabular}{@{}ll@{}}
\toprule
\textbf{Symbol} & \textbf{Definition} \\ \midrule
\rowcolor{lightgray} \multicolumn{2}{l}{\textbf{Soft Interventions and Sequential Backdoor}} \\
\rowcolor{lightgreen} $\sigma$ & Interventional strength ($\sigma \in [0,1]$) \\
\rowcolor{lightgreen} $\pi^i_{exp}$ & Deterministic expert policy for agent $i$ \\
\rowcolor{lightgreen} $\zeta$ & Independent stochastic process \\
\rowcolor{lightgreen} $P^\sigma$ & Interventional stationary mixture policy \\
\rowcolor{lightgreen} $H_k$ & Observed historical trajectory up to step $k$ \\
\rowcolor{lightgreen} $do(\cdot)$ & Pearl's intervention operator \\
\rowcolor{lightgreen} $A^{-i}_t, a^{-i}_t$ & Joint actions excluding agent $i$ \\
\rowcolor{lightgreen} $\mathcal{G}_{\overline{A} \underline{A}}$ & Mutilated causal graph \\
\bottomrule
\end{tabular}
\caption{Soft Interventions and Sequential Backdoor}
\label{tab:soft_interventions}
\end{table*}

\begin{table*}[h]
\centering
\begin{tabular}{@{}ll@{}}
\toprule
\textbf{Symbol} & \textbf{Definition} \\ \midrule
\rowcolor{lightgray} \multicolumn{2}{l}{\textbf{Uncertainty, Error Density, and Shift}} \\
\rowcolor{lightpurple} $\sigma_{\text{ID}}, \sigma_{\text{OOD}}$ & Predictive cloud width per dimension \\
\rowcolor{lightpurple} $\sigma_{\text{OOD}}/\sigma_{\text{ID}}$ & Dimensionless OOD width-inflation ratio \\
\rowcolor{lightpurple} $\overline{\mathrm{err}}_{\mathrm{dyn}}$ & Mean squared residual on dynamic dimensions \\
\rowcolor{lightpurple} $\overline{\mathrm{err}}_{\mathrm{pos}}$ & Mean squared residual on position dimensions \\
\rowcolor{lightpurple} $\rho$ & Shift-transfer ratio \\
\rowcolor{lightpurple} $\mathcal{D}$ & Set of dynamic state dimensions \\
\rowcolor{lightpurple} $\Delta_{\text{spec}}^{\text{norm}}$ & Normalized binary spectral distance \\
\rowcolor{lightpurple} $\mathcal{D}^N_\sigma$ & Training dataset of $N$ episodes \\
\rowcolor{lightpurple} $\mathcal{D}^\ddagger_{\sigma}$ & Held-out evaluation dataset \\
\bottomrule
\end{tabular}
\caption{Uncertainty, Error Density, and Shift}
\label{tab:uncertainty_shift}
\end{table*}

\begin{table*}[h]
\centering
\begin{tabular}{@{}ll@{}}
\toprule
\textbf{Symbol} & \textbf{Definition} \\ \midrule
\rowcolor{lightgray} \multicolumn{2}{l}{\textbf{SCM and Decision Graph}} \\
\rowcolor{lightblue} $\mathcal{M}$ & SCM tuple $\langle \mathbf{U}, \mathbf{V}, \mathcal{F}, P(\mathbf{U}) \rangle$ \\
\rowcolor{lightblue} $\mathbf{U}, \mathbf{V}$ & Sets of exogenous/endogenous variables \\
\rowcolor{lightblue} $u_{env}$ & Unobserved environmental noise \\
\rowcolor{lightblue} $u^i_{intent}$ & Unobserved latent intent of agent $i$ \\
\rowcolor{lightblue} $s_t, S_t$ & Environment state at time $t$ \\
\rowcolor{lightblue} $\mathbf{a}_t, \mathbf{A}_t$ & Joint action of all agents at time $t$ \\
\rowcolor{lightblue} $a^i_t, A^i_t$ & Specific action of agent $i$ at time $t$ \\
\rowcolor{lightblue} $r_t$ & Reward received at time $t$ \\
\rowcolor{lightblue} $o_t, O_t$ & Partial observation at time $t$ \\
\rowcolor{lightblue} $f_s, f_r, f_{a^i}$ & Deterministic structural functions \\
\bottomrule
\end{tabular}
\caption{SCM and Decision Graph}
\label{tab:scm_decision_graph}
\end{table*}
\newpage
\section{Impact of the paper}
\label{sec::impact}
\begin{figure}[!htb]
  \centering
  \includegraphics[trim={0pt 150pt 150pt 0pt}, clip, width=0.9\linewidth]{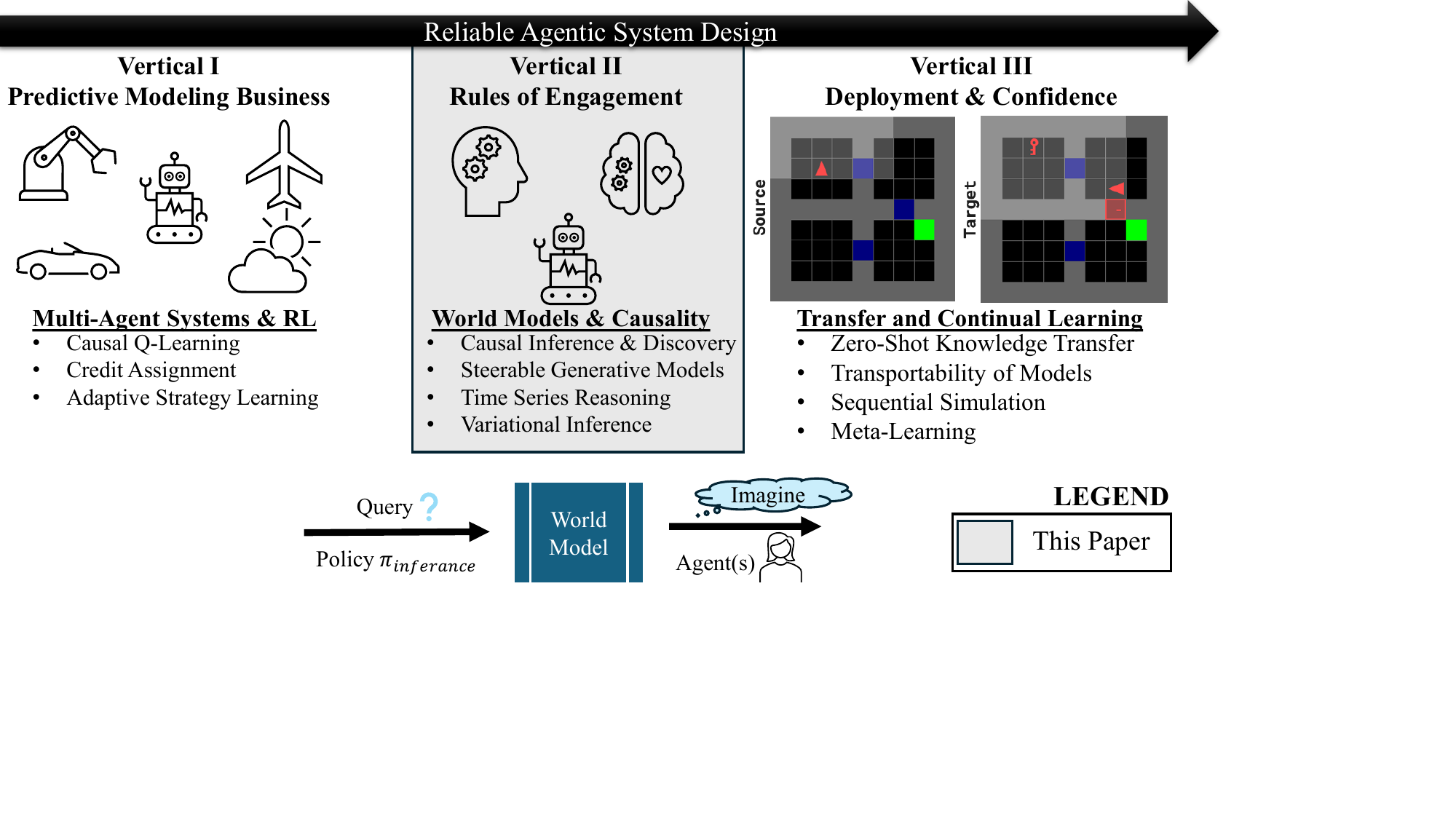}
  \caption{\textbf{The Synergistic Vertical Framework for Reliable Agentic Systems}. The architecture partitions system design into three critical pillars: behavioral coordination (Vertical I), structural understanding (Vertical II), and cross-domain scalability (Vertical III). This paper addresses Vertical II (Rules of Engagement), establishing the deconfounded causal world model as the essential physical foundation. By resolving invariant environment mechanisms from strategic agent intent, our work provides the identifiable internal simulator required for precise credit assignment in Vertical I and zero-shot knowledge transportability in Vertical III.}\label{fig:impact}
\end{figure}

\section{Structural Causal Model  Formalization}
\label{section::scm}
The transition dynamics of a Multi-Agent System are governed by the interplay between invariant environment rules and adaptive agent strategies. We formalize this via an SCM $\mathcal{M} = \langle \mathbf{U}, \mathbf{V}, \mathcal{F}, P(\mathbf{U}) \rangle$:
\subsection{Exogenous and Endogenous Partitioning}
The variables in our system are partitioned into two sets:
\begin{enumerate}
    \item \textbf{Exogenous Variables ($\mathbf{U}$)}: These are variables determined outside the model. In our framework, we distinguish between Environmental Stochasticity ($u_{env} \sim U_{env}$), which captures irreducible noise in physics (e.g., sensor noise or friction or stochastic in environment properties), and Unobserved Strategic Intents for agent $i$, ($u^i_{intent} \sim U_{intent}$), which represent the high-level goals or coordination signals that drive agent behaviors but are not explicitly captured in the state space.
    \item \textbf{Endogenous Variables ($\mathbf{V}$)}: These are the observable variables determined by the structural functions. For a single transition step $(t)$, these include the current environment state $s_t \in \mathcal{S}= \mathcal{S}^1 \times \dots \times \mathcal{S}^n$, the joint action  $a_t \in \mathcal{A} = \mathcal{A}^1 \times \dots \times \mathcal{A}^n$, the reward  $r_t \in \mathcal{R} = \mathcal{R}^1 \times \dots \times \mathcal{R}^n$, and the resulting next state $s_{t+1} \in \mathcal{S}$. Where $\mathcal{S}$ is the state space, $\mathcal{S}^i$ is the state space for agent $i$, $\mathcal{A}$ is the action space, $\mathcal{A}^i$ is the action space for agent $i$, $\mathcal{R}$ is the reward space, $\mathcal{R}^i$ is the reward space for agent $i$, and $n$ number of agents in the multi agent system.
\end{enumerate}
\subsection{Structural Functions and Dependencies}
The set $\mathcal{F}$ $ = \{f_s, f_r f_{a^1}, \dots, f_{a^n}\}$ defines the causal mechanisms of the world. By expressing these as deterministic functions of their parents and noise, we can model the system as a directed graph where edges represent functional dependencies:
\begin{enumerate}
    \item \textbf{Environment Physics ($f_s$)}: This function is the primary target of our world model. It maps the current state and joint actions to the next state. Critically, $f_s$ is policy-invariant; it represents the underlying "rules of the game" that remain constant regardless of whether the agents are acting optimally, randomly, or adversarially.
    \begin{itemize}
        \item Quantity: $P(s_{t+1} \mid s_t, \mathbf{a}_t) $ 
        \item Relationship: $s_{t+1} \leftarrow f_s(s_t, \mathbf{a}_t, u_{env})$
    \end{itemize}
    \item \textbf{Reward Model ($f_r$)}: This maps the state-action pair to a feedback signal. In MAS, rewards are often highly confounded because expert agents only receive high rewards by following specific, intent-driven trajectories.
        \begin{itemize}
        \item Quantity: $P(r_{t+1} \mid s_t, \mathbf{a}_t) $ 
        \item Relationship: $r_{t+1} \leftarrow f_r(s_t, \mathbf{a}_t, u_{env})$
    \end{itemize}
    \item \textbf{Agent Policies ($f_{a^i}$)}: Unlike $f_s$, the policy functions are agent-dependent. They are conditioned on the state $s_t$ and the latent intent $u^i_{intent}$. This structure is the root of causal confusion: in expert data, the correlation between $s_t$, $u^i_{intent}$, and $s_{t+1}$ is often so strong that standard associative models cannot determine if $s_{t+1}$ was caused by the physics of $s_t$ or the specific intent-driven choice of $a^i_t$.
        \begin{itemize}
        \item Quantity: $\pi^i(a^i | s_t)$, where $\pi^i$ is $i^{th}$ agents policy.  
        \item Relationship: $a^i_t \leftarrow f_{a^i}(s_t, u^i_{intent})$, the following relation is estabished from prior work \citep{lowe2020multiagentactorcriticmixedcooperativecompetitive,dasgupta2019causalreasoningmetareinforcementlearning}
    \end{itemize}
\end{enumerate}
\subsection{The Role of $P(U)$}
The joint probability distribution $P(\mathbf{U})$ induces the statistical distributions in the demonstration dataset. We distinguish between these exogenous variables as follows:
\begin{enumerate}
    \item Environmental Stochasticity ($u_{env}$): Represents irreducible noise (e.g., friction, jitter, weather conditions). We assume $u_{env} \perp \! \! \perp u_{intent}$. This term dictates predictive variance (kernel thickness or uncertainty in the model's output), which we do not explicitly deconfound. We assume $u_{env}$ is captured by dataset variance and offer no solution for shifts in its distribution, focusing solely on structural invariance. Our position is that under a significant distribution shift of $u_{env}$, the world dynamics themselves have likely shifted so fundamentally that knowledge reuse from the previous model would yield diminishing returns.
    \item Strategic Confounding ($u^i_{intent}$): Represents unobserved coordination signals that correlate with $s_t$ in expert data, creating backdoor paths ($s_t \leftarrow u^i_{intent} \rightarrow a^i_t \rightarrow s_{t+1}$) that cause associative models to mistake agent choice for physical law. We address this by leveraging policy variance (soft interventions) to satisfy the Sequential Backdoor Condition discovering true transition rules $f_s$. 
\end{enumerate}

\section{Causal decision graph}
\label{section::cdg}
\begin{figure}[!htb]
  \centering
  \includegraphics[trim={0pt 0pt 375pt 0pt}, clip, width=0.7\linewidth]{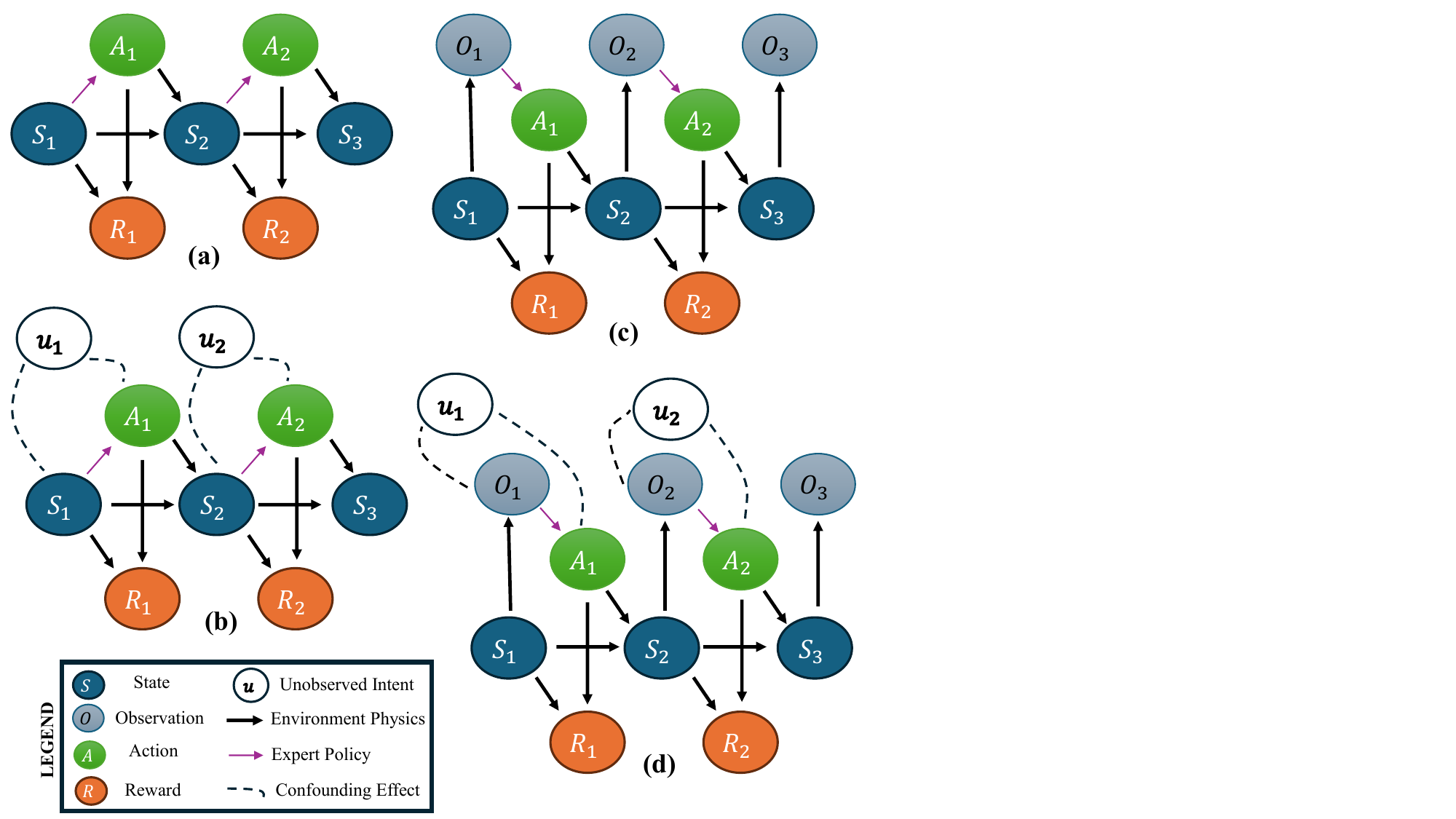}
  \caption{Causal Decision Graph for MAS (a) MDP (b) MDP with Confounding (c) Partially Observable MDP (d) Partially Observable MDP with Confounding}\label{fig:cdg}
\end{figure}

The Causal Decision Graph (CDG) allows us to visualize how structural dependencies evolve and where unobserved variables interfere with transition discernibility.

\subsection{MDP Regimes (Full Observability)}
\begin{itemize}
    \item \textbf{Standard MDP} (Fig. \ref{fig:cdg}a):  \citep{sutton2018reinforcement} : Actions are a direct functional child of the observed state ($S_t \to A_t$). The transition $S_{t+1} \leftarrow f_s(S_t, A_t, u_{env})$ is identifiable because no unobserved common causes exist between $A_t$ and $S_{t+1}$.
    \item \textbf{Confounded MDP} (Fig. \ref{fig:cdg}b): We introduce unobserved intent $U$. As shown by the dashed lines, $U$ influences $A_t$ while being correlated with the state sequence. This creates a backdoor path ($S_t \leftarrow U \to A_t \to S_{t+1}$). Associative models fail here because they cannot distinguish if $S_{t+1}$ resulted from the physical action $A_t$ or the latent intent $U$.
\end{itemize}

\subsection{POMDP Regimes (Partial Observability)}
\begin{itemize}
    \item \textbf{Standard POMDP} (Fig. \ref{fig:cdg}c): \citep{KAELBLING199899} : The state $S_t$ is latent. Agents perceive $O_t$, creating an information bottleneck ($S_t \to O_t \to A_t$). While the physics $f_s$ still depend on $S_t$, the policy is constrained by the observation.
    \item \textbf{Confounded POMDP} (Fig. \ref{fig:cdg}d): This represents the most realistic MAS scenario. The unobserved intent $U$ influences the action $A_t$, which is already limited by the partial observation $O_t$. Recovering the invariant physics $f_s$ in this regime requires ICWM to utilize the Sequential Backdoor Condition to d-separate the interventional influence of actions from the latent confounding of $U$.
\end{itemize}

\subsection{The Observation Space and Information Bottleneck}
In partially observable settings, we formally introduce the Observation Space $\mathcal{O}$. This space acts as a non-invertible, lossy mapping from the latent state:
\begin{itemize}
    \item Formal Mapping: The observation $o_t \in \mathcal{O}$ is generated by an observation function $f_o$:
\begin{equation} \label{eq:obs}
o_t \leftarrow f_o(s_t, u_{env})
\end{equation}
    \item The Information Bottleneck: This regime imposes a critical constraint: while the environment physics $f_s$ remain dependent on the full physical configuration $s_t$, the agent's policy is restricted to the information contained in $o_t$. This creates a bottleneck where the agent lacks the state-transparency needed to determine its own causal influence.
    \item Structural Challenge: Because $f_o$ is often a many-to-one mapping, multiple physical states $s_t$ can map to the same $o_t$. This aliasing exacerbates causal confusion, as the world model must not only deconfound strategic intent but also infer the underlying state $s_t$ from the observation history to correctly isolate the physical laws of the transition.
\end{itemize}

\section{Formalizing Policy Mixtures as Causal Interventions}
\label{sec::ssi}

\subsection{Mapping the Mixture to a Structural Mechanism Shift}
In a Structural Causal Model ($f$ as process equation), variables ($A$) are deterministic functions of causal parents ($PA_A$) and exogenous noise ($U_A$): $A \leftarrow f(PA_A, U_A)$. A hard intervention replaces $f$ with a constant. A soft intervention modifies the equation to a new mechanism $f'$ without fully severing parental dependence. 

Our stochastic policy mixture operates strictly as a soft intervention. Probabilistically, we define this mixture as the interventional distribution:
\begin{equation} \label{eq:mixture_appendix}
P^\sigma(a^i_t \mid s_t) = (1 - \sigma)\pi^i_{exp}(a^i_t \mid s_t) + \sigma \zeta(a^i_t)
\end{equation}
where $\pi^i_{exp}$ is the deterministic expert policy, $\sigma \in [0, 1]$ dictates the interventional strength, and $\zeta$ is an independent stochastic process defined over the agent's discrete or continuous action space $\mathcal{A}^i$.

Structurally, in purely observational data, actions follow the expert mechanism: $a^i_t \leftarrow f_{a^i}(s_t, u^i_{intent})$. The mixture replaces $f_{a^i}$ with a modified function $f'_{a^i}$ by introducing an independent exogenous noise variable $u_{rand} \sim \zeta$:
\begin{equation} \label{eq:soft_intervention}
a^i_t \leftarrow f'_{a^i}(s_t, u^i_{intent}, u_{rand}) = 
\begin{cases} 
f_{a^i}(s_t, u^i_{intent}) & \text{with probability } 1 - \sigma \\
u_{rand} & \text{with probability } \sigma 
\end{cases}
\end{equation}
Because $u_{rand} \perp \!\!\! \perp u_{intent}$, this piecewise assignment redefines the data-generating mechanism. The target variable $a^i_t$ retains its functional dependence on $s_t$, fulfilling the soft intervention criteria, while the structural influence of the confounding parent $u^i_{intent}$ degrades proportionally to $\sigma$.

\subsection{The Role of Entropy in Structural Discovery}

\begin{figure}[!htb]
\centering
\includegraphics[width=0.95\linewidth]{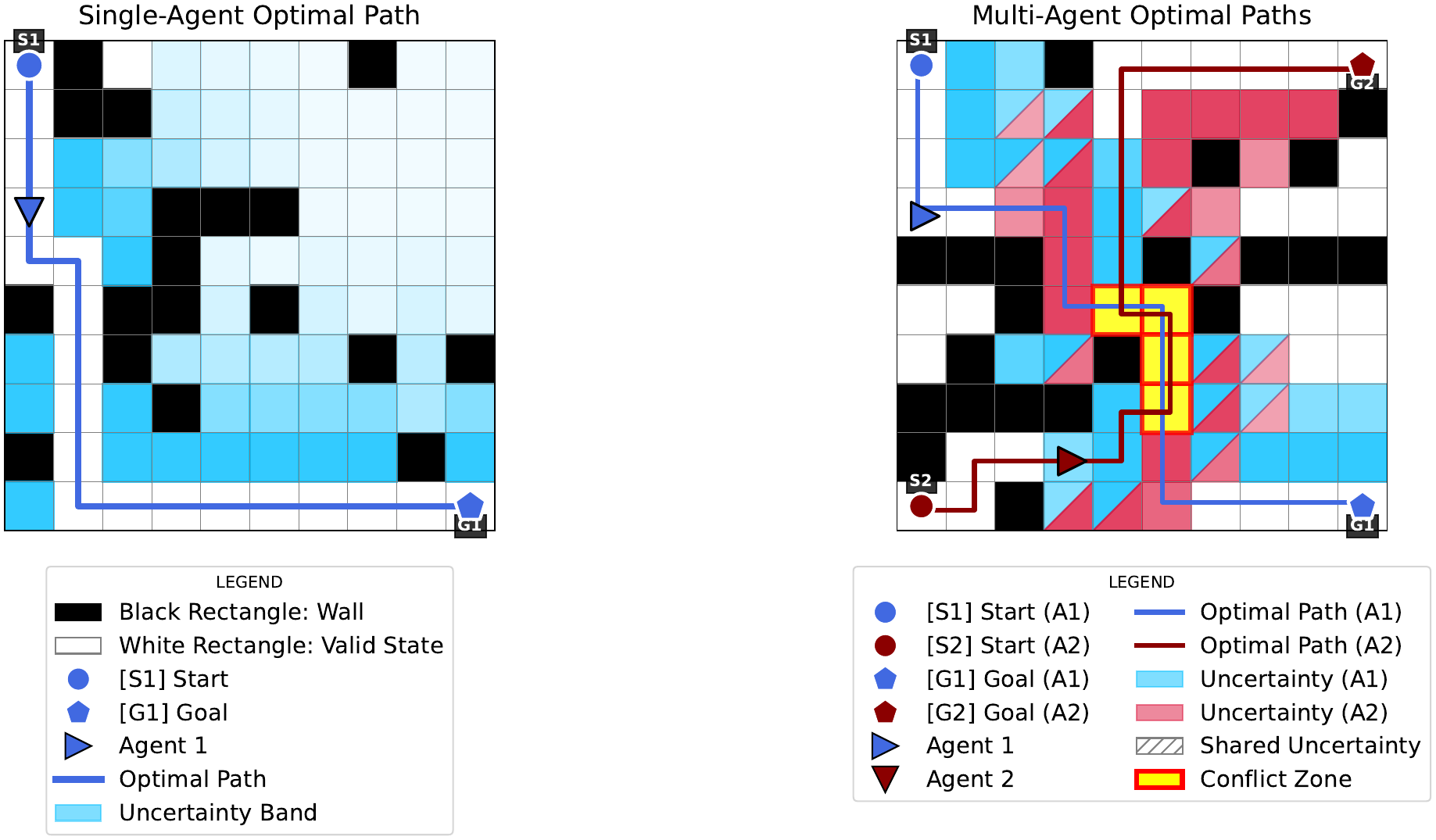}
\caption{Navigation trajectories from source to goal for a single agent (left) and two agents (right). Optimal path and  uncertainty bands are highlighted separately. Only deterministic optimal actions create spurious correlations between agent trajectories, and this confounding effect is strictly higher in multi-agent settings.}\label{fig:maze}
\end{figure}

Figure \ref{fig:maze} illustrates the formal research problem of structural discovery in maze navigation games \citep{stern2019multiagentpathfindingdefinitionsvariants,pacman1980,gotcha1973,mouseinthemaze1959}. The Shannon entropy $H$ of the policy quantifies the interventional strength introduced by the mixture parameter $\sigma$, defining the identifiability of the transition function $f_s$. In a purely deterministic expert dataset ($\sigma = 0$, $H = 0$), the absolute lack of action diversity perfectly aligns the unobserved intent $u^i_{intent}$ with the selected action $a^i_t$. This zero-entropy regime creates a persistent backdoor path ($s_t \leftarrow u^i_{intent} \rightarrow a^i_t \rightarrow s_{t+1}$) where environment dynamics remain entirely conflated with agent strategy, rendering $f_s$ structurally unidentifiable. As $\sigma$ increases toward 1, the injected independent variance raises the dataset entropy. This variance guarantees causal positivity by ensuring all legally valid actions retain a non-zero probability of selection, which inherently generates the natural experiments required to observe alternative state transitions.

The primary motivation for resolving this zero-entropy confounding is to enable the extraction of invariant environmental laws from biased observational data. While a partial policy mixture ($0 < \sigma < 1$) does not completely sever the structural backdoor path, the induced entropy provides the unconfounded state and action support strictly necessary to compute unbiased sequential backdoor adjustments. By achieving causal positivity, the model can structurally decouple the agents' strategic intents from the physical environment. Isolating the true physical gradient of the transition function $f_s$ ensures that the learned internal simulator remains accurate and robust even when surrounding agents deviate from their optimal training distributions.

\section{Theoretical Foundations of the Sequential Backdoor Condition}
\label{sec::sbc}
To establish the formal requirement for our Multi-Agent Sequential Backdoor Condition (MAS-SBC) in Theorem \ref{theorem:mas_sbc}, we first detail the structural mechanisms of the baseline single-agent SBC framework.
\begin{figure}[!htb]
  \centering
  \includegraphics[trim={0pt 0pt 375pt 0pt}, clip, width=0.8\linewidth]{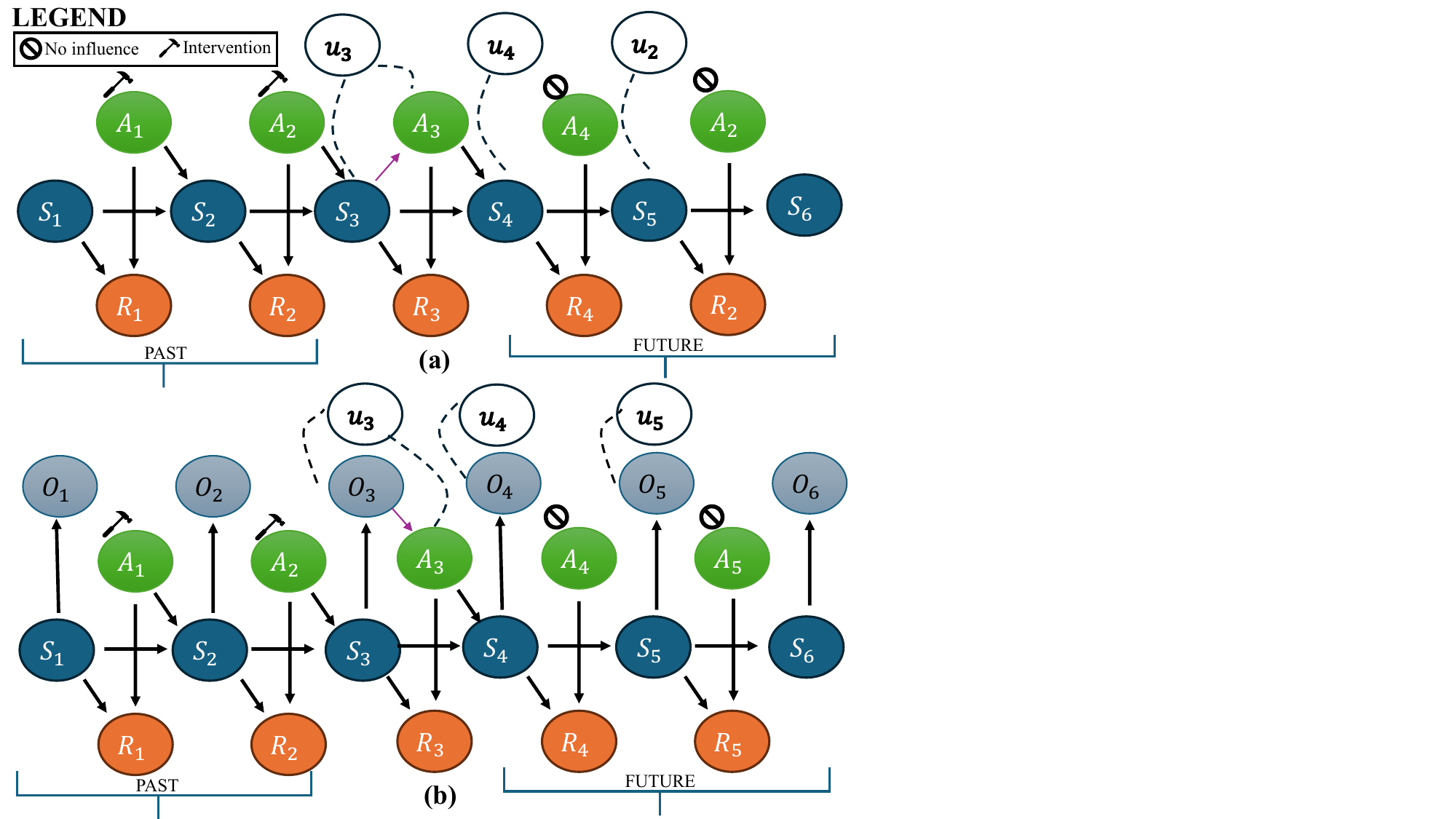}
  \caption{Manipulated Causal Decision Graph for a single agent (a) MDP with SBC at $k=3$ (b) Partially Observable MDP with SBC at $k=3$}\label{fig:sbc}
\end{figure}
\subsection{The Manipulated Causal Decision Graph}
To evaluate the causal impact of actions over a trajectory, we perform graph surgery of the causal decision graph $\mathcal{G}$ to create a manipulated graph $\mathcal{G}_{\overline{\mathbf{a}}_{1:k-1}, \underline{\mathbf{a}}_{k+1:t}}$ (Figure \ref{fig:sbc}). For a specific time step $k$, the graph is modified as follows:
\begin{itemize}
    \item \textbf{Past Actions ($\overline{\mathbf{a}}_{1:k-1}$)}: The incoming edges to previous actions are removed. This isolates past actions from their historical confounders (such as latent intent $U$), simulating a strict sequence of prior hard interventions.
    \item \textbf{Future Actions ($\underline{\mathbf{a}}_{k+1:t}$)}: The outgoing edges from future actions are removed. This renders future decisions independent of their causal descendants in the graph, preventing downstream planned actions from statistically polluting the current transition gradient.
\end{itemize}
This graph surgery isolates the forward-directed causal effect of the current action on the environment from backward-flowing statistical correlations.
\subsection{Formal Definition of the Sequential Backdoor Criterion} \label{sec::sbc_criteria}
The Sequential Backdoor Criterion (SBC) provides the exact graph-theoretic test to determine if an interventional query can be resolved using purely observational data. The interventional distribution for an outcome $Y \in \{s_{t+1}, r_t\}$ given a sequence of actions $do(\mathbf{a}_1, \dots, \mathbf{a}_t)$ is identifiable if, for every $k \in \{1, \dots, t\}$, there exists a set of covariates $\mathbf{Z}_k \subseteq \mathbf{V}$ satisfying:
\begin{itemize}
    \item $\mathbf{Z}_k$ are non-descendants of any future action in $\{\mathbf{a}_k, \dots, \mathbf{a}_t\}$.
    \item  $(Y \perp \mathbf{a}_k \mid \mathbf{a}_1, \dots, \mathbf{a}_{k-1}, \mathbf{Z}_1, \dots, \mathbf{Z}_k)$ in the manipulated graph $\mathcal{G}_{\overline{\mathbf{a}}_{1:k-1}, \underline{\mathbf{a}}_{k+1:t}}$.
\end{itemize}
To bridge this foundational theory to our specific world model formulation, the outcome $Y$ represents our fundamental learning targets: the physical state transition ($s_{t+1}$) and the environment reward ($r_t$). The theoretical sequence of covariates $\mathbf{Z}_{1:k}$ maps directly to our observable trajectory history $H_k = (\mathbf{s}_{0:k}, \mathbf{a}_{0:k-1})$.When these structural conditions hold, the interventional distribution for the outcome mechanism is mathematically identified by:

\begin{equation} \label{eq:sbc_adjustment}
P(Y \mid do(\mathbf{a}_{1:t})) = \sum_{\mathbf{z}_{1:t}} P(Y \mid \mathbf{z}_{1:t}, \mathbf{a}_{1:t}) \prod_{k=1}^t P(\mathbf{z}_k \mid \mathbf{z}_{1:k-1}, \mathbf{a}_{1:k-1})
\end{equation}
This formulation confirms that estimating the causal effect of the action history $\mathbf{a}_{1:t}$ on the transition to state $s_{t+1}$ or reward $r_t$ is possible. The history $H_k$ acts as a sufficient statistic to block confounding paths from the unobserved intent $U$, rendering the physical mechanisms structurally identifiable for a single agent.

Intuitively, the Sequential Backdoor Condition ensures we have observed enough history ($\mathbf{Z}_k$) to decouple an agent's actions from their hidden strategic intent. Equation \ref{eq:sbc_adjustment} then acts as a re-weighting formula that uses this history to simulate a ``fair experiment'' \citep{schlegel2019importanceresamplingoffpolicyprediction}. This allows us to calculate the environment's true response as if we were directly controlling the agents, rather than simply observing their coordinated behavior. The conditions of the SBC are illustrated in Figure \ref{fig:sbc}, which demonstrates that past actions are fixed and future actions cannot be influenced by the current decision.
\subsection{The Structural Violation in Multi-Agent Environments}
\begin{figure}[!htb]
  \centering
  \includegraphics[trim={0pt 250pt 450pt 0pt}, clip, width=0.7\linewidth]{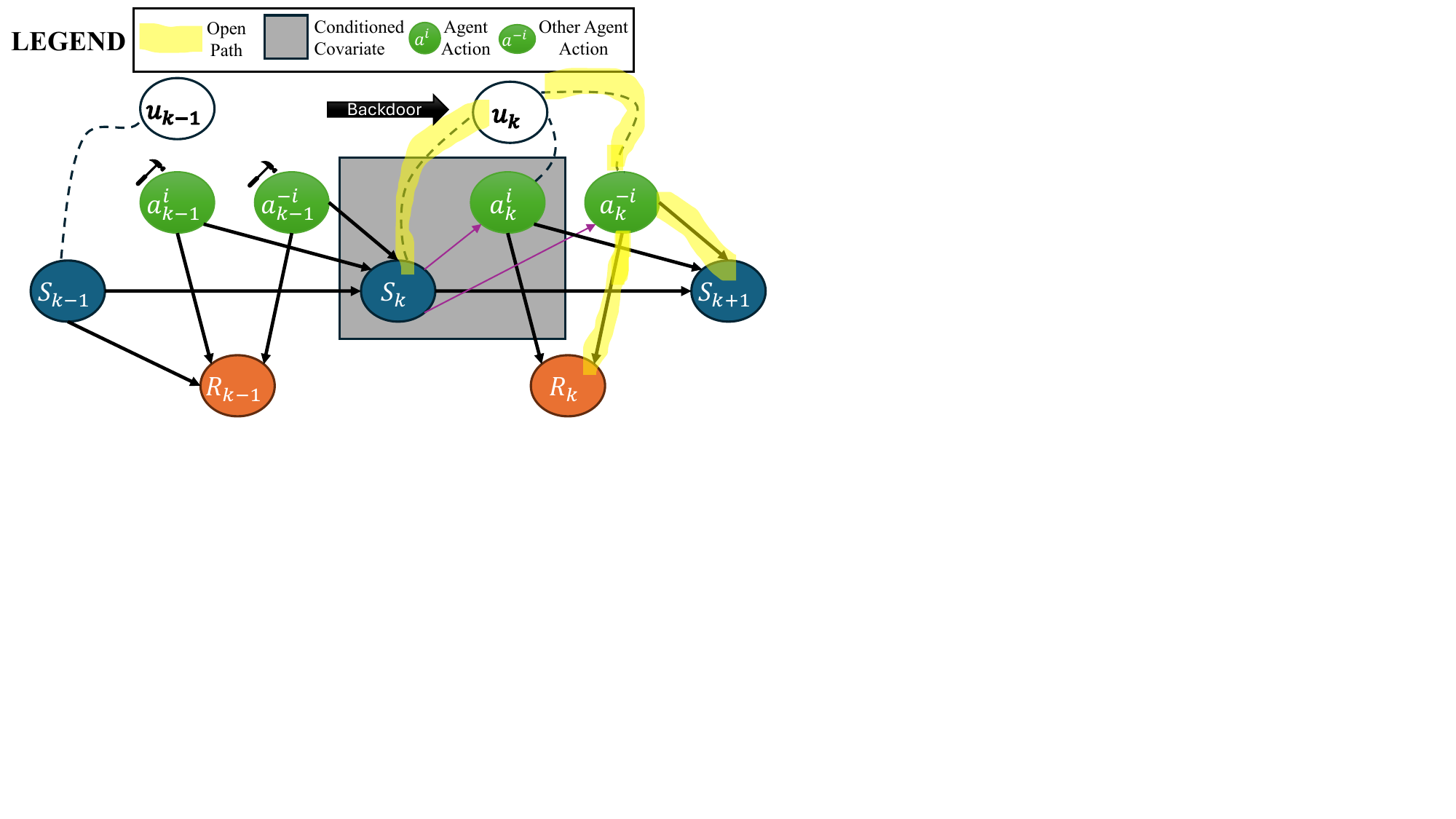}
  \caption{The dynamic confounding path in multi-agent systems. Contemporaneous actions ($a^{-i}_k$) driven by shared intent ($U$) create an unblocked backdoor path ($a^i_k \leftarrow U \rightarrow a^{-i}_k \rightarrow Y$). The standard historical conditioning set ($H_k$) fails to d-separate the target action from the latent intent, necessitating the MAS-SBC.}\label{fig:fail_sbc}
\end{figure}
The baseline SBC fundamentally assumes an isolated decision process. When applied to a multi-agent system, the structural conditions formally fail (Figure \ref{fig:fail_sbc}). The actions of other agents $a^{-i}_k$ exist contemporaneously with the target agent's action $a^i_k$. Both sets of actions are driven by a shared or correlated latent intent $U$. This topology creates a secondary, dynamic backdoor path: $a^i_k \leftarrow U \rightarrow a^{-i}_k \rightarrow Y$. Because the contemporaneous actions $a^{-i}_k$ occur exactly at time $k$, they cannot be included in the historical covariate set $\mathbf{Z}_k$ (our history $H_k$) prior to the action $a^i_k$. Consequently, the observed history fails to d-separate $a^i_k$ from $U$, directly violating Condition 2 of the SBC definition within the manipulated graph $\mathcal{G}_{\overline{\mathbf{a}}_{1:k-1}, \underline{\mathbf{a}}_{k+1:t}}$.

The path through $U$ and $a^{-i}_k$ remains open, meaning the interventional distribution fails to reduce to the observational expression. The environment physics remain permanently entangled with the intent of the surrounding agents. This mathematical failure establishes the strict necessity for extending the framework to the Multi-Agent Sequential Backdoor Condition (MAS-SBC), which executes the backdoor adjustment over the complete contemporaneous action profile.

\label{sec::discovery}

%

\section{Formal Proof of the MAS Sequential Backdoor Condition}
\label{sec::mas_sbc_proof}

\begin{figure}[!htb]
  \centering
  \includegraphics[trim={0pt 250pt 440pt 0pt}, clip, width=0.7\linewidth]{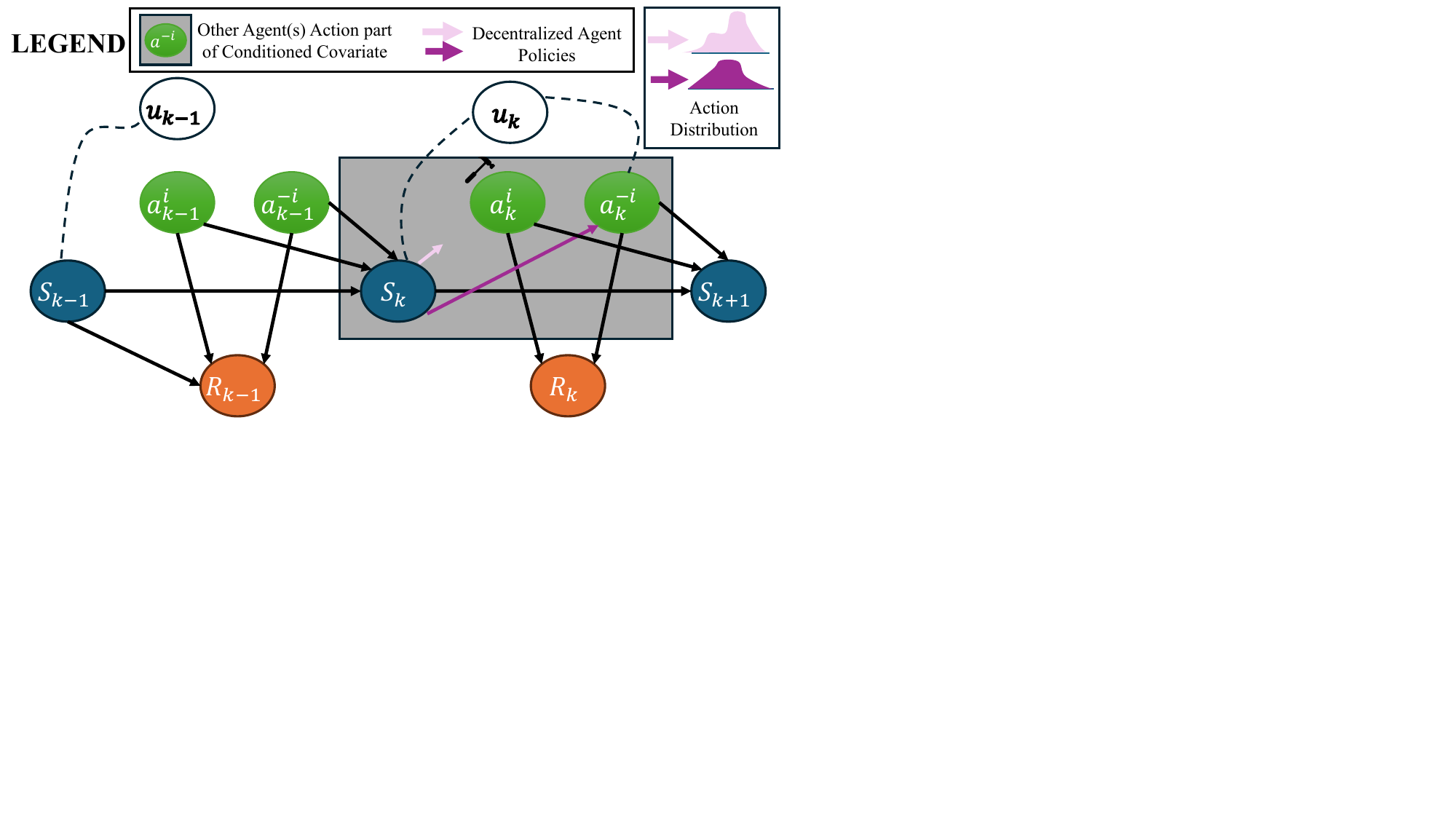}
  \caption{\textbf{Mutilated Causal Decision Graph $\mathcal{G}_{\overline{A^i}}$ for MAS-SBC Identifiability}. The hammer indicates the intervention $do(a^i_k)$, which severs incoming edges from $H_k$ and $U$. Conditioning on the intact peer actions ($a^{-i}_k$) blocks the contemporaneous backdoor path.}
    \label{fig::mas_sbc_surgery}
\end{figure}

To establish the structural identifiability of the world model, we utilize Pearl’s do-calculus to evaluate the causal dynamics within the mutilated graph $\mathcal{G}_{\overline{A_{1:k-1}^i}\underline{A_{k+1:t}^i}}$, isolating the specific intervention on $A^i_t$ by severing its relevant sequential dependencies. 

\subsection{Structural Axioms of the Multi-Agent System}
We establish the true Data Generating Process using three strict structural axioms:
\begin{itemize}[leftmargin=*,noitemsep,topsep=0pt]
    \item \textbf{Axiom 1: Transition Markovianity:} $S_{t+1} \perp \!\!\! \perp U \mid (H_k, A^i_t, A^{-i}_t)$. The physical transition mechanism $f_s$ depends exclusively on the observed history and realized joint actions. 
    \item \textbf{Axiom 2: Decentralized Policy:} $A^i_t \perp \!\!\! \perp A^{-i}_t \mid (H_k, U)$. Agents act simultaneously without direct causal influence on one another ($A^i_t \nrightarrow A^{-i}_t$). The only connections between contemporaneous actions are backdoor paths through shared history $H_k$ and overarching strategic intent $U$.
    \item \textbf{Axiom 3: Confounding Structure:} $U \rightarrow A^i_t$ and $U \rightarrow A^{-i}_t$. The latent intent acts as an unobserved contemporaneous confounder, coordinating all agent actions.
\end{itemize}

\subsection{Step 1: Graphical d-Separation via Induction}
To apply do-calculus, we must first prove that our chosen covariate set $Z_k = \{H_k, a^{-i}_k\}$ successfully blocks all non-causal paths pointing into $a^i_k$. We require the assumption of \textbf{Sufficiency of History}: The observed history $H_k$ acts as a sufficient statistic for the latent intent $U$, meaning $s_{k+1} \perp \!\!\! \perp U \mid (H_k, a^i_k, a^{-i}_k)$.

\begin{proof}[Inductive Proof of MAS d-Separation]
\textbf{Base Case (Single-Agent):} Assume peer agents $a^{-i}_k$ are removed. The only backdoor path is historical: $a^i_k \leftarrow U \rightarrow s_{k+1}$. Under the sufficiency of history, conditioning on $Z_k = \{H_k\}$ screens off $U$ and blocks this path, satisfying the single-agent Sequential Backdoor Criterion.

\textbf{Inductive Step (Multi-Agent Extension):} Reintroducing peer agents $a^{-i}_k$ creates a new, contemporaneous backdoor path: $a^i_k \leftarrow U \rightarrow a^{-i}_k \rightarrow s_{k+1}$. The base case set $\{H_k\}$ is historical and cannot block this instantaneous path. We expand the set to $Z_k = \{H_k, a^{-i}_k\}$. By Axiom 2, peer action $a^{-i}_k$ acts as a chain node on the confounding path, not a collider. Therefore, explicitly conditioning on $a^{-i}_k$ mathematically blocks the new multi-agent backdoor path. 

\textbf{Conclusion:} Conditioning on $a^{-i}_k$ reduces the unblocked topology back to the single-agent base case. Conditioning on $H_k$ subsequently clears the remaining historical path. Because no unblocked paths remain, the conditional independence $(s_{k+1} \perp \!\!\! \perp a^i_k \mid H_k, a^{-i}_k)$ holds in the mutilated graph $\mathcal{G}_{\overline{a^i_k}}$.
\end{proof}

\subsection{Step 2: Algebraic Reduction via do-Calculus}
With graphical d-separation proven, we resolve the interventional query to purely observational data. 

\begin{proof}[Proof of Theorem \ref{theorem:mas_sbc}]
We expand the target interventional distribution $P(s_{t+1} \mid do(a^i_t), H_k)$ by marginalizing over the peer actions $A^{-i}_t$ and the unobserved confounder $U$:
$$= \sum_{a^{-i}_t} \sum_{U} P(s_{t+1} \mid do(a^i_t), a^{-i}_t, H_k, U) P(a^{-i}_t \mid do(a^i_t), H_k, U) P(U \mid do(a^i_t), H_k)$$

Applying the rules of do-calculus based on our established axioms and d-separation:
\begin{enumerate}[leftmargin=*,noitemsep,topsep=0pt]
    \item \textbf{Isolating the Intervention:} $do(a^i_t)$ removes edges into $A^i_t$. It cannot backward-influence $U$ or $H_k$. Thus, $P(U \mid do(a^i_t), H_k) = P(U \mid H_k)$.
    \item \textbf{Applying Axiom 2:} $A^{-i}_t$ is not a descendant of $A^i_t$. Intervening on $A^i_t$ does not trigger a simultaneous reaction. Thus, $P(a^{-i}_t \mid do(a^i_t), H_k, U) = P(a^{-i}_t \mid H_k, U)$.
    \item \textbf{Applying Axiom 1:} The transition is fully determined by observed variables. Because $A^i_t$ is now a fixed parent, we drop the do-operator. $S_{t+1}$ is independent of $U$ given the actions. Thus, $P(s_{t+1} \mid do(a^i_t), a^{-i}_t, H_k, U) = P(s_{t+1} \mid a^i_t, a^{-i}_t, H_k)$.
\end{enumerate}

Substituting these reductions back yields:
$$= \sum_{a^{-i}_t} P(s_{t+1} \mid a^i_t, a^{-i}_t, H_k) \sum_{U} P(a^{-i}_t \mid H_k, U) P(U \mid H_k)$$

By the Law of Total Probability, the inner summation over $U$ reduces exactly to the marginal observational probability of the other agents' actions:
$$P(s_{t+1} \mid do(a^i_t), H_k) = \sum_{a^{-i}_t} P(s_{t+1} \mid a^i_t, a^{-i}_t, H_k) P(a^{-i}_t \mid H_k)$$
This concludes the proof, demonstrating structural identifiability.
\end{proof}

\section{Motivation Study in Tic-Tac-Toe}
\label{sec::motivation}
This study establishes the necessity of causal world models by analyzing failure modes within a controlled Tic-Tac-Toe environment, demonstrating that standard architectures collapse under interventional uncertainty. We show that integrating causal discovery and normalizing flows provides the structural alignment required for resilient state recovery beyond the optimal training distribution.
\begin{figure}[h]
  \centering
  \includegraphics[trim={0pt 260pt 270pt 20pt}, clip, width=\linewidth]{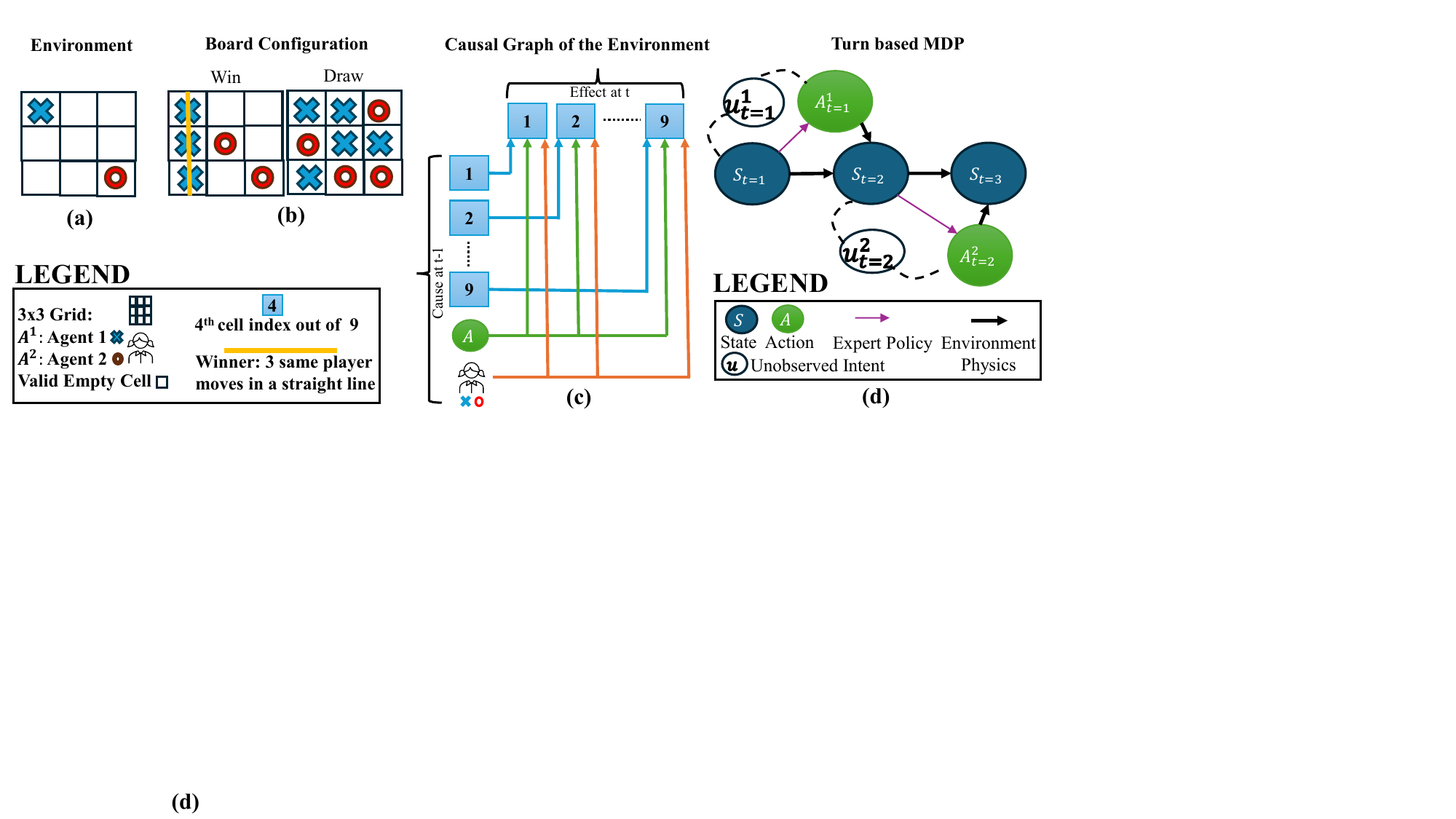}
  \caption{Motivation study in Tic-Tac-Toe. (a) Environment representation with initial agent moves. (b) A winning configuration for Agent 1 and a Draw. (c) The ground-truth causal graph defining state transitions and dependencies between moves. (d) The environment modeled as a turn-based Markov Decision Process, illustrating state transitions, expert policy actions, and unobserved agent intent.  }\label{fig:motivation_env}
\end{figure}
Tic-Tac-Toe \citep{terry2021pettingzoo} serves as our motivation study environment. This domain provides a controlled setting where we can formally define the causal relationships governing game transitions. The environment dynamics are fully observable and computationally tractable. Figure \ref{fig:motivation_env} illustrates the core components of this study. Panel (a) and (b) show the grid configuration and a terminal winning state for Agent 1 including another draw configuration by both agents. Panel (c) presents the ground-truth causal graph, detailing the dependencies between board cells, agent actions, and state effects at time $t$. A cell from the $3\times3$ grid depends on its value from the previous time step or can be altered by a player's action. Panel (d) formalizes this as a turn-based Markov Decision Process. This MDP formulation explicitly captures state transitions, agent policies, and latent intent, allowing us to isolate causal influence from environmental noise. Because the game is a turn-based MDP, it satisfies the Markov property and the structural requirements for causal inference (including SBC from Section \ref{sec::sbc_defined}).

\begin{figure}[htbp]
    \centering
    \begin{minipage}[c]{0.45\textwidth}
        \centering
        \includegraphics[width=\linewidth, keepaspectratio]{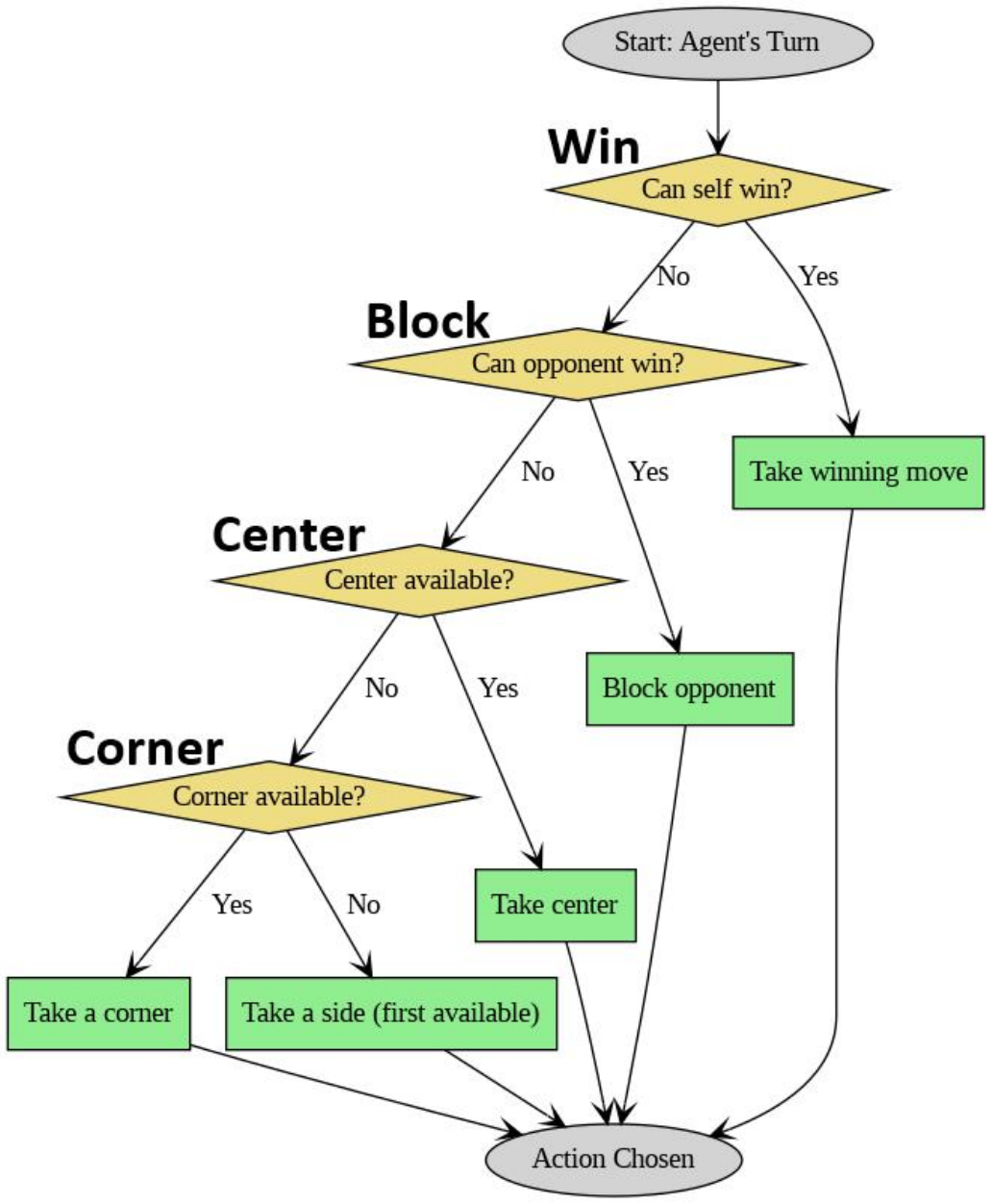}
        \caption{Logic flowchart for Tic-Tac-Toe agent decision-making.}
        \label{fig:decision_flowchart}
    \end{minipage}
    \hfill
    \begin{minipage}[c]{0.50\textwidth}
        \centering
        \hrule
        \vspace{0.3em}
        \captionof{algorithm}{Optimal Tic-Tac-Toe Move Selection}
        \label{algo::tic_tac_toe}
        \vspace{-0.5em}
        \hrule
        \vspace{0.4em}
        \begin{algorithmic}[1]
            \State \textbf{Input:} Board state $S$, Player ID $P$
            \State \textbf{Output:} Best action $a \in \text{LegalActions}(S)$
            \If{$\exists a \in \text{LegalActions}(S)$ completes line for $P$}
                \State \Return $a$ \Comment{Win immediately}
            \EndIf
            \If{$\exists a$ completes line for opponent}
                \State \Return $a$ \Comment{Block opponent}
            \EndIf
            \If{Center cell available}
                \State \Return CenterCell
            \EndIf
            \If{Corner cell available}
                \State \Return CornerCell
            \EndIf
            \State \Return Remaining cell
        \end{algorithmic}
        \vspace{0.3em}
        \hrule
    \end{minipage}
\end{figure}

The optimal policy for Tic-Tac-Toe from Algorithm \ref{algo::tic_tac_toe} relies on a strict hierarchy of heuristics that ensures a non-losing outcome (inspiration Game Theory \citep{vonneumann1944theory}). The strategy prioritizes immediate tactical gains, beginning with a winning move if a line can be completed. If no win is possible, the agent shifts to a defensive stance by blocking any move that would complete a line for the opponent. Lacking immediate threats or opportunities, the agent selects strategic board positions based on their geometric utility, starting with the center cell, which offers the highest number of potential lines. Corner cells serve as the next priority due to their involvement in multiple potential configurations. Consequently, figure \ref{fig:decision_flowchart} highlights this hierarchical decision making. 

\begin{figure}[htbp]
  \centering
  \includegraphics[trim={0pt 00pt 100pt 0pt}, clip, width=\linewidth]{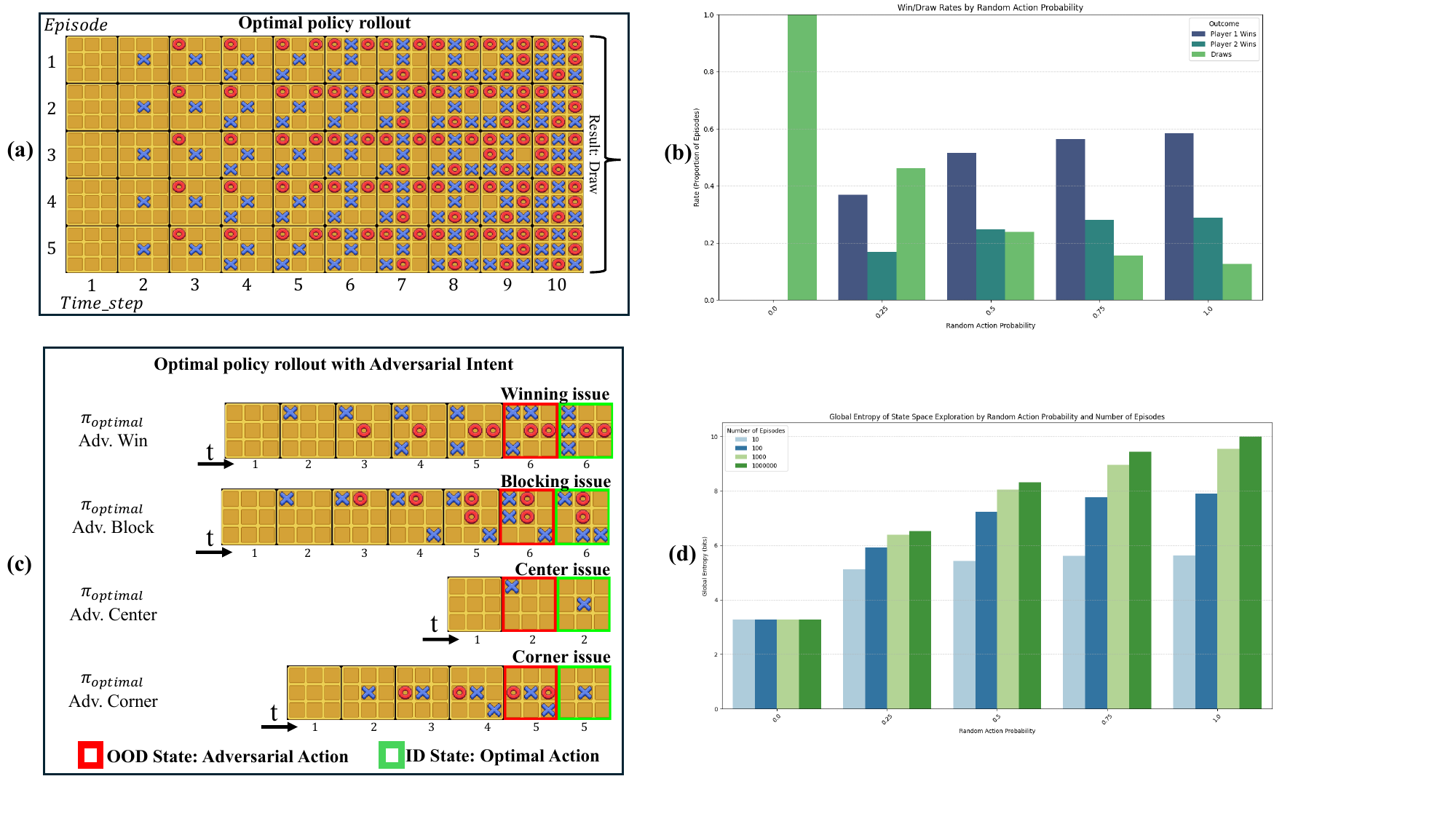}
  \caption{Analysis of Optimal Policy Performance and Exploration Entropy. (a) Optimal policy rollout showing consistent draw results across five episodes. (b) Win/Draw rates measured against varying random action probabilities (c) Optimal policy rollouts under conditions of adversarial intent, highlighting "OOD State" (adversarial action) and "ID State" (optimal action) transitions across specific game issues. (d) Global entropy of state space exploration categorized by random action probability and the total number of episodes.  }\label{fig:motivation_env2}
\end{figure}

We demonstrate in Figure \ref{fig:motivation_env2} panel (a) that when both agents in Tic-Tac-Toe play according to the deterministic policy in Algorithm \ref{algo::tic_tac_toe}, the final result is consistently a draw with no diversity in agent outcomes. However, introducing a probability for each agent to take a random action versus an action from Algorithm \ref{algo::tic_tac_toe} results in diversity in win and loss rates. Panel (b) shows simulated results over 1 million game episodes, where the introduction of exploration causes agent win rates to shift. The results indicate that Player 1 maintains a winning edge at a random action probability of 1.0, likely due to the first-mover advantage inherent in Tic-Tac-Toe. This plot acts as a strong motivation for an agent to perform adversarial actions to win over their counterpart. Panel (c) highlights four separate episode rollouts under the optimal policy from Algorithm \ref{algo::tic_tac_toe}, specifically identifying instances marked by red squares where an agent deviates from the optimal policy hierarchy, while green squares highlight the correct optimal action. A closer introspection of the red squares reveals these states are unreachable if both agents strictly follow the optimal policy; therefore, these are identified as Out-of-Distribution (OOD) states with respect to the optimal policy's state visitation probability. Drawing reference from panel (b), OOD states appear to be important pathways in adversarial or random play. A world model trained exclusively on optimal trajectory datasets cannot handle these states due to insufficient coverage. Panel (d) displays simulated rollouts of a mixture of optimal and random policies over 10, 100, 1000, and 1000000 episodes, where the y-axis represents the global entropy of the explored state space. Theoretically, the maximum bits required to accurately capture the variation of these discrete states is $\log_2(5,478) \approx 12.419$ bits, practically rounded to 13 bits, as each of the 9 cells can be colored in 3 ways under the constraint that both players can have equal number of moves on the board or player 1 may have exactly one move more than the  player 2. Careful introspection reveals that even with 1 million environment episodes, the agent does not explore the entire state space unless total random actions are employed. This plot motivates the argument that noise injected into agent actions is required for diversity and soft interventions, rather than achieving full state space coverage, which is fundamentally limited in offline algorithmic settings.
\begin{figure}[htbp]
  \centering
  \includegraphics[trim={0pt 270pt 150pt 0pt}, clip, width=\linewidth]{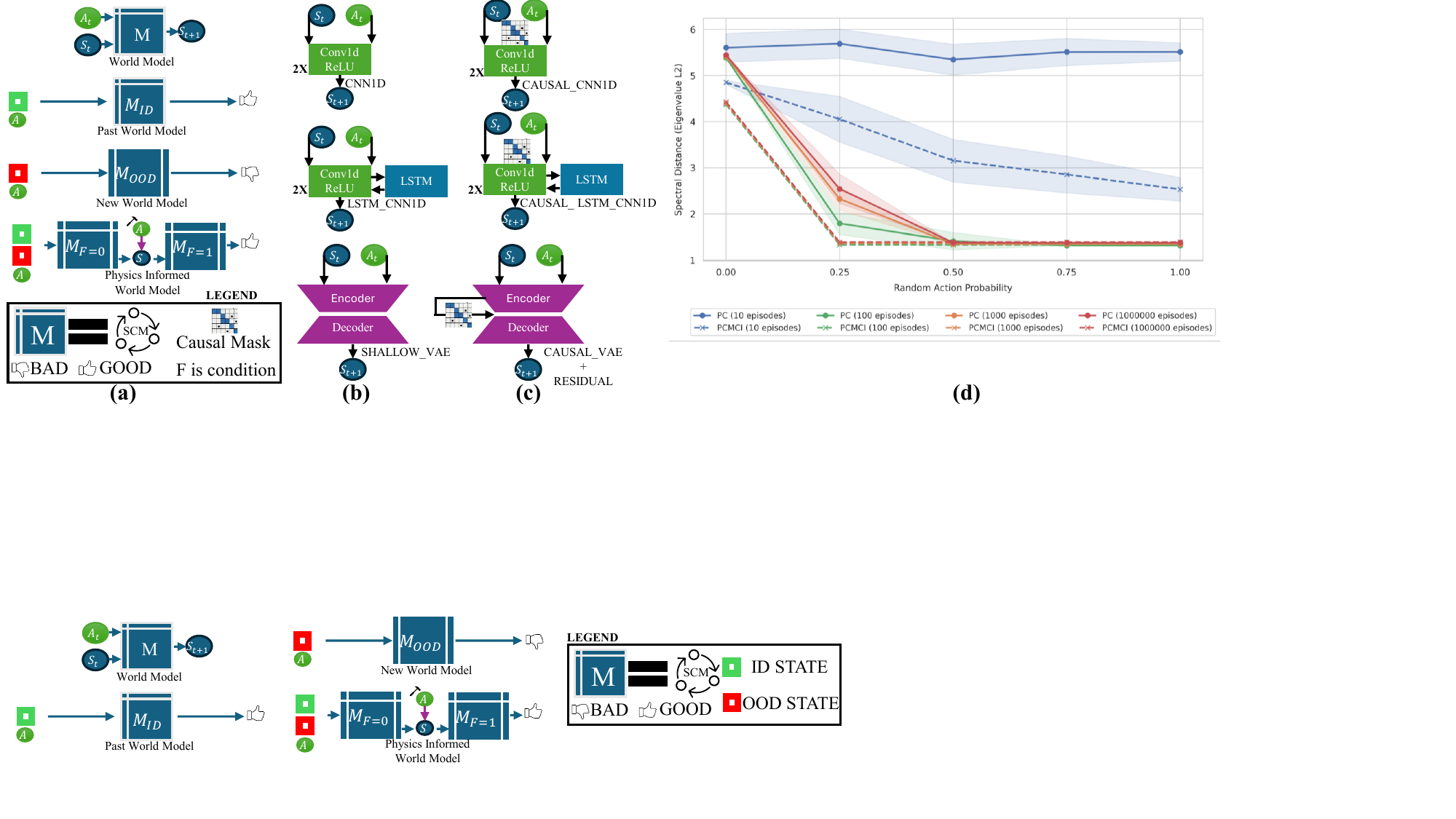}
 \caption{Model architectures and evaluation. (a) Schematic representation of various world models and causal conditions. (b) Architecture of standard models (CNN1D, LSTM\_CNN1D, and Shallow VAE). (c) Architecture of causal models incorporating causal masks (CAUSAL\_CNN1D, CAUSAL\_LSTM\_CNN1D, and CAUSAL\_VAE + Residual). (d) Spectral distance (Eigenvalue L2) performance comparison between standard PC and causal PCMCI models across varying random action probabilities and dataset size in-terms of episodes over 5 seeds.}\label{fig:motivation_env3}
\end{figure}
We evaluate the performance of various world model architectures in recovering transition dynamics, specifically targeting the OOD failure modes identified (specially curated 52 transitions over 4 scenarios) in Figure \ref{fig:motivation_env2}. As demonstrated in Figure \ref{fig:motivation_env2} (c), models trained solely on optimal trajectories fail to generalize to states where agents deviate from the optimal policy (red squares). To mitigate this, we introduce a physics-informed world model framework that consumes current states and actions to predict the transition function. The reliability of these models depends on their ability to recover accurate transition dynamics, even following interventional state changes.

As shown in Figure \ref{fig:motivation_env3} (a), the physics-informed model allows for reliable transition prediction under intervention modeling them as condition covariate. Panels (b) and (c) contrast standard baseline architectures with those utilizing causal masking, which filters non-causal dependencies. Figure \ref{fig:motivation_env3}(d) presents the quantitative results for running causal discovery on the said dataset trajectories: the PCMCI approach is superior, with error converging rapidly even with 100 episodes at a 25\% random action probability. In contrast, the standard PC algorithm, which relies on the i.i.d. assumption, exhibits poor convergence in low-data regimes. For 10 episodes, the PC algorithm fails to converge, and even with 1M episodes at 100\% random action probability, the residual error remains non-zero.

We evaluate the structural accuracy of recovered causal graphs using the PCMCI soft mask against the baseline PC algorithm. As shown in Figure \ref{fig:motivation_env4}, the PCMCI approach consistently recovers a more rank-sufficient representation of the feature space. While the PC algorithm is highly sensitive to the sample size and environmental entropy, the PCMCI method utilizes probabilistic weighting to regularize graph reconstruction in our implementation. This robustness is critical for maintaining structural integrity in regimes where the baseline PC algorithm produces sparse or noisy projections due to the collapse of conditional independence tests.

The performance discrepancy highlights the sensitivity of causal discovery to the interplay between data complexity and random action probability. As environmental exploration decreases toward a random action probability of 0.0, the data becomes increasingly deterministic, posing significant challenges for traditional i.i.d.-based discovery methods. PCMCI, by contrast, effectively mitigates the uncertainty associated with these low-exploration regimes, allowing for more stable graph identification. The ground truth causal graph can be found by the reader in figure \ref{fig:motivation_env} (c) for Tic-Tac-Toe environment.

These findings have direct implications for modeling real-world complex systems, where data collection is often bounded by the high cost of exploration. We identify two representative data-scarce regimes: a Targeted-Efficiency Regime consisting of 100 episodes with 25\% random action probability, reflecting scenarios with limited, directed interaction opportunities; and a Sparse-Exploration Regime consisting of 10 episodes with 100\% random action probability, reflecting high-cost environments requiring maximum state-space coverage from minimal interactions.

The PCMCI soft mask proves superior in these regimes, ensuring that even under restricted data budgets, the model recovers reliable causal pathways necessary for downstream decision-making. Where the PC algorithm’s reliance on large-sample assumptions leads to incomplete or unreliable graph recovery, the PCMCI method maintains structural fidelity, confirming its utility for complex environments where exhaustive data collection is prohibitive.

\begin{figure}[htbp]
  \centering
  \includegraphics[trim={0pt 0pt 0pt 30pt}, clip, width=\linewidth]{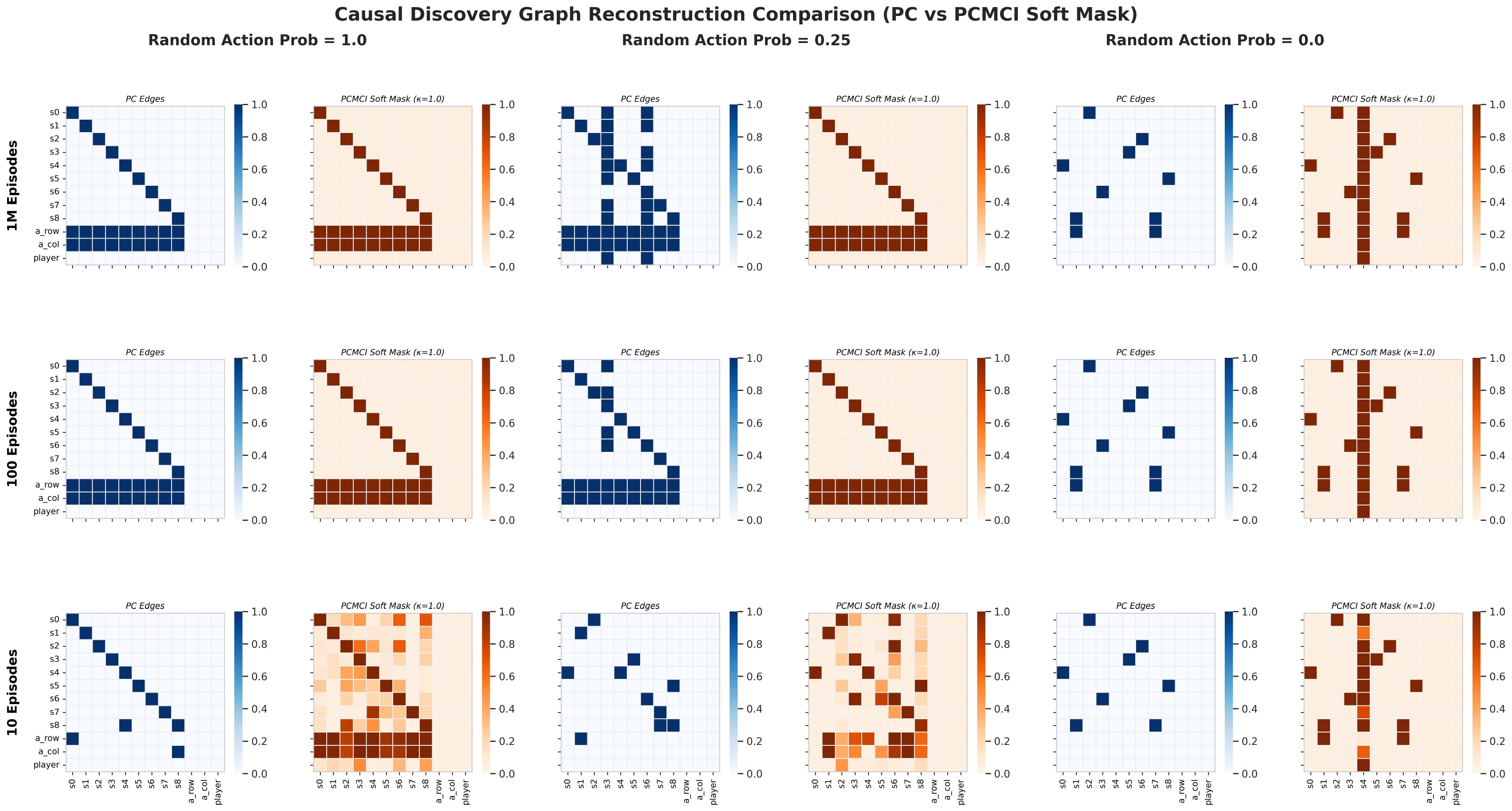}
  \caption{Causal discovery graph reconstruction comparison between the PC algorithm (blue, left column in each block) and PCMCI soft mask (orange, right column in each block). Rows represent varying amounts of training data (1M, 100, and 10 episodes), while columns represent different levels of exploration (Random Action Probability 1.0, 0.25, and 0.0). The plots demonstrate that the PCMCI soft mask maintains more robust and accurate graph structure recovery, particularly in low-data (10 episodes) and low-exploration (0.25 Random Action Probability) regimes compared to the baseline PC algorithm.}\label{fig:motivation_env4}
\end{figure}

We evaluate the out-of-distribution robustness of world models by incorporating ground-truth causal masks and applying a novel "Spine and Cloud" architecture. Figure \ref{fig:motivation_env5} presents the OOD test MSE for various architectures alongside the schematic of our uncertainty modeling pipeline.

As shown in Figure \ref{fig:motivation_env5}(a), constraining models with ground-truth causal masks significantly enhances performance in high-data regimes (1M episodes). Causal architectures, particularly \texttt{Causal\_CNN1D} and \texttt{Causal\_LSTM\_CNN1D}, demonstrate a consistent reduction in error as random action probability increases. This indicates that these architectures effectively leverage exploratory data to refine their transition functions, avoiding the spurious correlations that plague baseline models. The reduced variance, evidenced by the tighter error bands, confirms that causal guidance stabilizes learning even under interventional shifts. We also acknowledge the optimization for \texttt{VAE} and \texttt{Causal\_residual\_VAE} remains an open question under varying discrete data distributions.

To explicitly model residual uncertainty \citep{lu2023characterizing}, we utilize the "Spine and Cloud" normalizing flow \citep{papamakarios2021normalizing,elgazzar2024generative,margossian2023amortized} architecture shown in Figure \ref{fig:motivation_env5}(b). This framework is applied exclusively to regressor networks (world models), separating the deterministic transition dynamics from the stochastic residuals. The Spine Network is frozen, serving as the base model, while the Cloud Network is trained as a normalizing flow to map residual errors ($r = y_{gt} - \text{Spine}(x_{context})$) to a distribution characterized by shift in mean and variance $(\Delta\mu$ and $\Delta\sigma)$ of the neural density. The Spine and Cloud normalizing flow models the transition uncertainty as a push-forward measure by mapping the residual error distribution through an injective flow, which simultaneously enables the modeling of inverse dynamics due to the flow's volume-preserving and invertible properties.  

A high-fidelity model is identified by the Cloud Network's ability to minimize the negative log-likelihood (NLL) of these residuals under a Gaussian assumption. The pipeline proceeds in three stages: 
\begin{enumerate}
    \item \textbf{Input:} The system consumes a context tuple ($x_{context}$) consisting of the current state and action.
    \item \textbf{Training:} With the Spine Network fixed, the Cloud Network learns to characterize the distribution of residual errors.
    \item \textbf{Inference:} The model samples from the learned distribution to produce a predictive output ($y_{sample}$), allowing the agent to quantify its uncertainty when faced with interventional OOD states.
\end{enumerate}
This distribution-based approach allows for robust decision-making, as the agent can explicitly recognize and adapt to scenarios where its transition predictions are inherently uncertain.

\begin{figure}[htbp]
  \centering
  \includegraphics[trim={0pt 260pt 270pt 0pt}, clip, width=\linewidth]{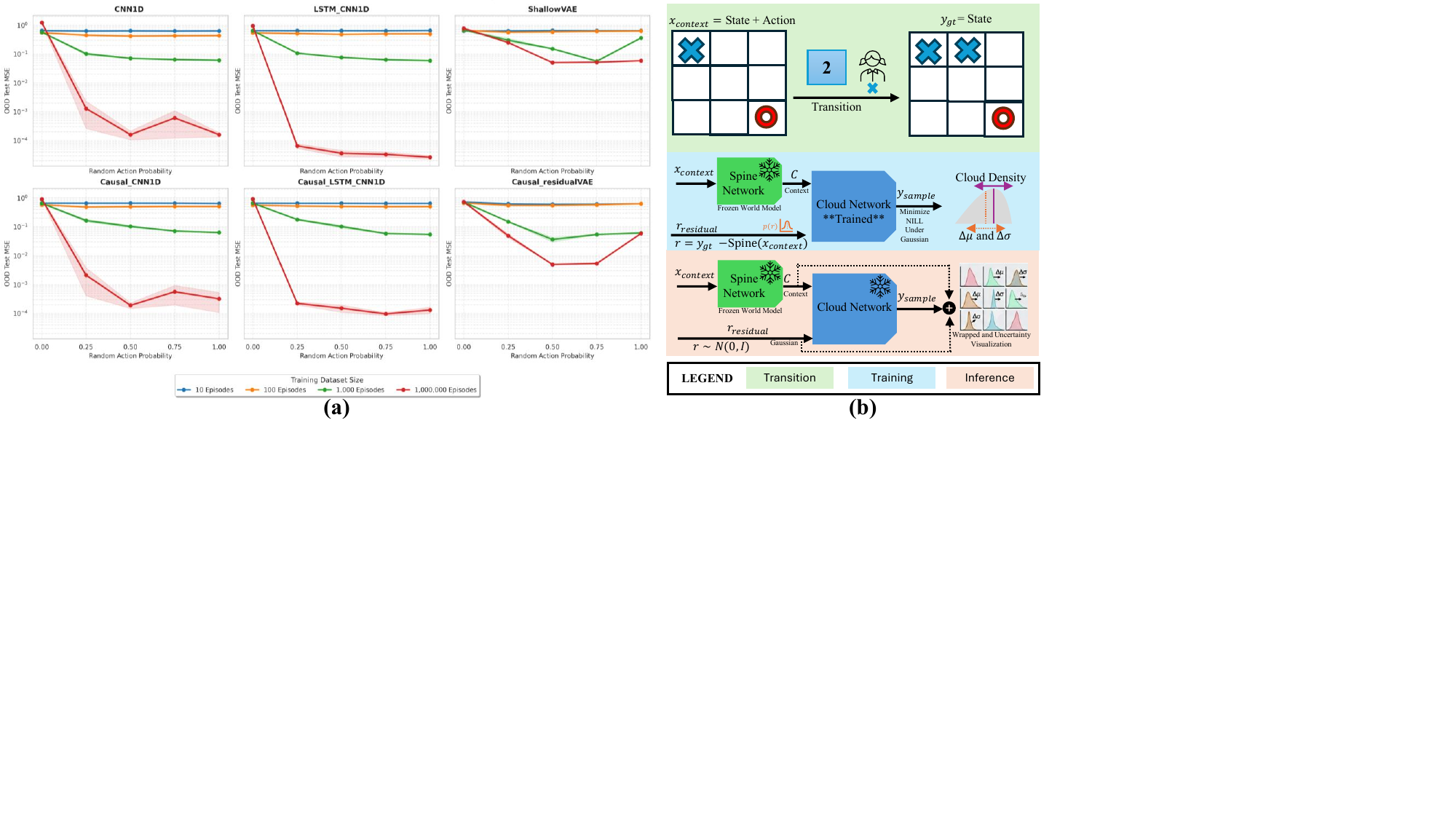}
  \caption{Out-of-distribution (OOD) performance analysis and model architecture. (a) OOD Test Mean Squared Error (MSE) comparison across various standard and causal model architectures (Ground-Truth environment dynamics causal mask has been used), evaluated against random action probability and training dataset size over 5 seeds. (b) Spine and Cloud normalizing flow architecture designed to evaluate world models under interventional uncertainty. The pipeline integrates a frozen Spine Network with a trainable Cloud Network to estimate residual transition uncertainty via conditional distribution modeling.}\label{fig:motivation_env5}
\end{figure}

To quantify the efficacy of our causal discovery pipeline, Figure \ref{fig:motivation_env6} provides a distributional comparison of OOD Test MSE between models guided by ground-truth causal masks and those utilizing masks discovered by our custom PC and PCMCI algorithms. The split violin plots visualize both the mean performance and the variance of predictive error, offering a granular view of how discovered masks influence model stability under interventional shifts. A key observation is the convergence of performance distributions in high-data regimes (1,000,000 episodes). In these cases, the custom-discovered PCMCI masks exhibit a performance profile nearly indistinguishable from the ground-truth masks, suggesting that our causal discovery process successfully recovers the essential graph structure required for accurate transition modeling. 

Conversely, the divergence observed in low-data regimes (10–100 episodes) serves as an empirical diagnostic for the causal discovery process. The increased variance and mean MSE in the custom halves of the violin plots highlight regimes where the algorithm struggles to resolve structural causal dependencies. This alignment gap indicates that while the PCMCI approach is robust, its structural accuracy remains data-dependent, particularly in environments where exploration is highly restricted. Consequently, these results validate the PCMCI soft mask as a high-fidelity proxy for ground-truth dynamics, while simultaneously identifying the data-efficiency bounds within which the discovered graph maintains predictive structural integrity. Architectural Sensitivity: Note that some architectures (e.g., \texttt{residualVAE}) exhibit different sensitivity to causal mask inaccuracies compared to \texttt{CNN1D}. This sensitivity implies that the choice of backbone architecture fundamentally influences the utility of the discovered causal graph, as different models may either amplify or attenuate errors within the mask structure during the transition prediction process.

\begin{figure}[htbp]
  \centering
  \includegraphics[trim={0pt 0pt 0pt 50pt}, clip, width=0.9\textwidth]{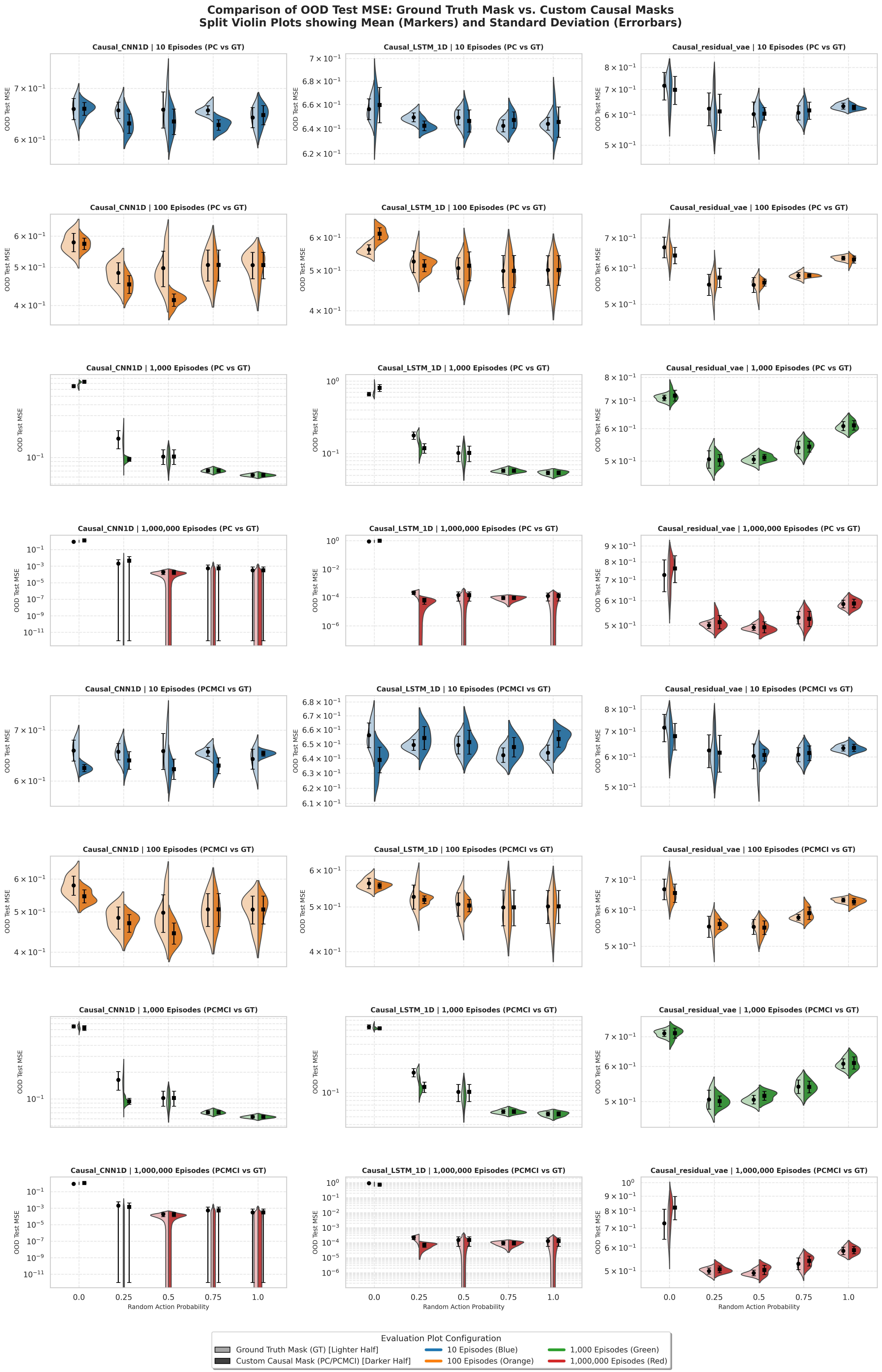}
  \caption{Split violin plots comparing OOD Test MSE between ground truth causal masks (lighter half) and custom causal masks (darker half, including PC and PCMCI). The rows organize results by model architecture and training data size (10 to 1,000,000 episodes), while the x-axis represents varying random action probabilities over 5 seeds. This comparison highlights the distribution, mean, and variance in predictive error, demonstrating the alignment or divergence of custom-discovered masks from the ground truth under different regimes of training data and environmental exploration.}\label{fig:motivation_env6}
\end{figure}

\begin{figure}[htbp]
  \centering
  \includegraphics[trim={00pt 00pt 0pt 80pt}, clip, width=\linewidth, keepaspectratio]{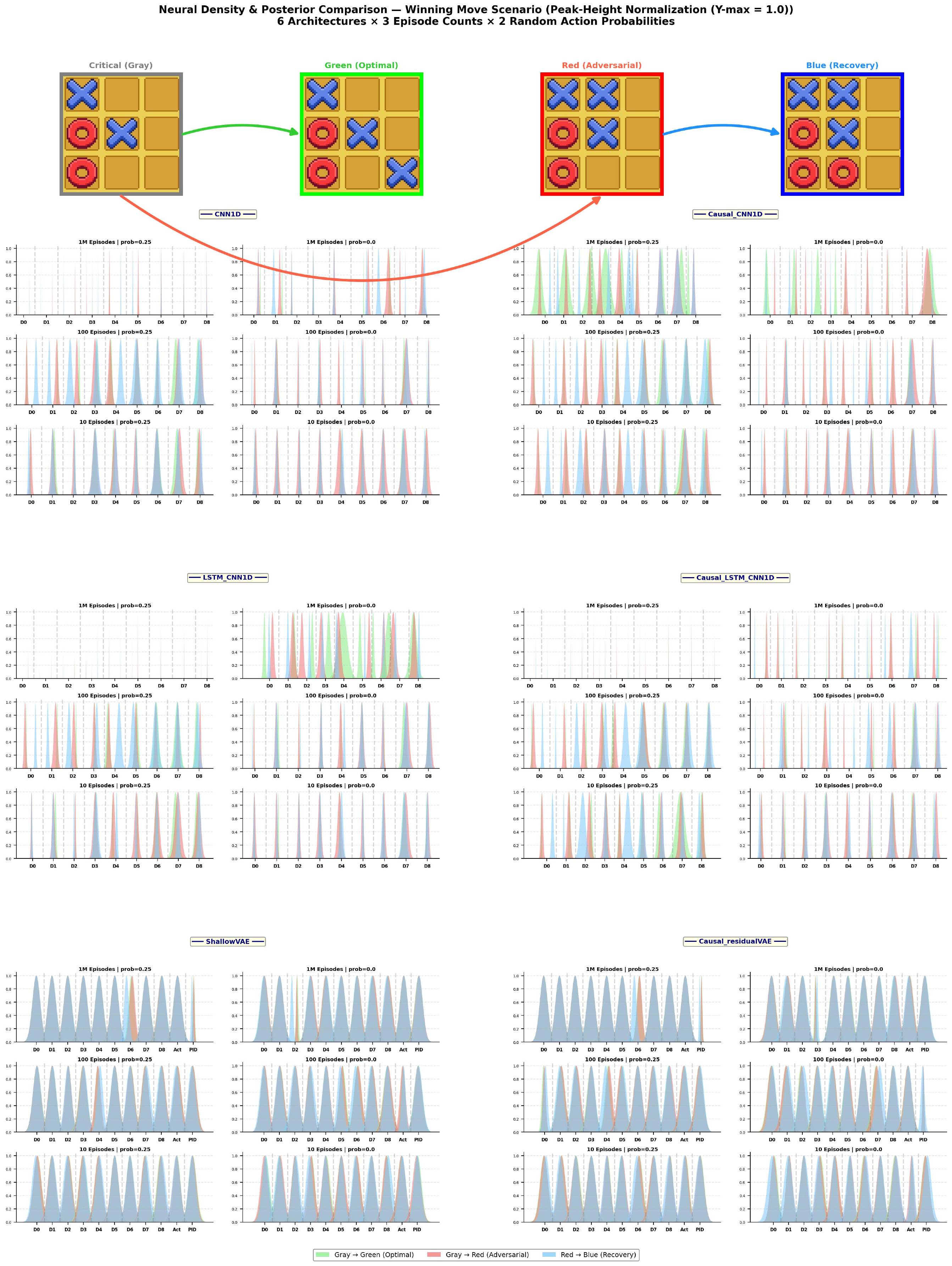}
  \caption{Failure mode: Winning and recovery. Density peaks under an adversarial intervention that denies an immediate win, together with the recovery dynamics back to the ground-truth transition. Colour coding follows the convention used throughout this section: \textcolor{green}{green} for optimal \textcolor{green}{ID} transitions, \textcolor{red}{red} for \textcolor{red}{OOD} adversarial interventions, and \textcolor{blue}{blue} for \textcolor{blue}{recovery}.}
  \label{fig:motivation_env_win}
\end{figure}

\begin{figure}[htbp]
  \centering
  \includegraphics[trim={00pt 00pt 0pt 80pt}, clip, width=\linewidth, keepaspectratio]{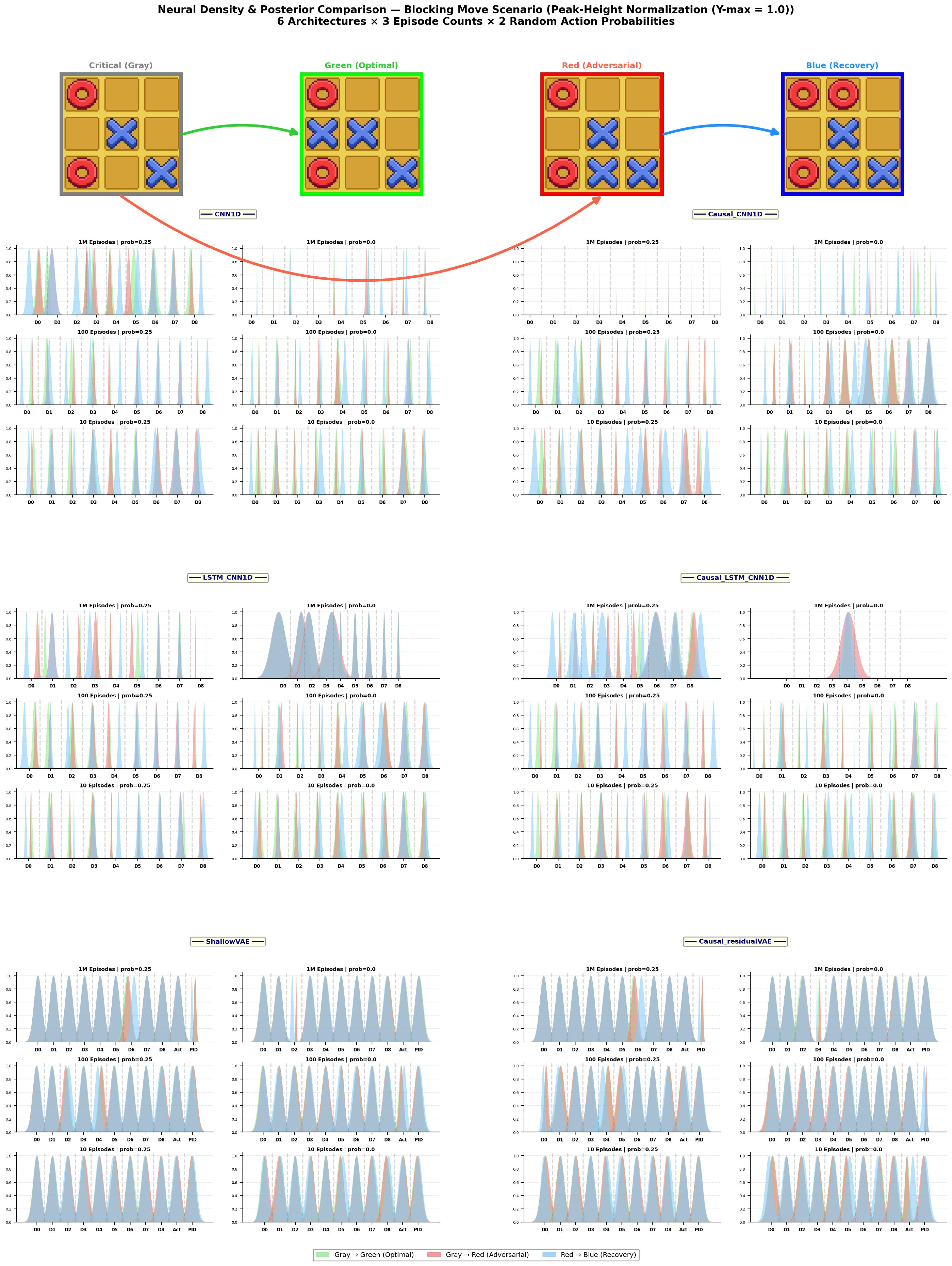}
  \caption{Failure mode: Blocking and recovery. Density peaks when the agent is forced away from a necessary blocking move, with the subsequent recovery trajectory. Colour coding as in Figure~\ref{fig:motivation_env_win}.}
  \label{fig:motivation_env_block}
\end{figure}

\begin{figure}[htbp]
  \centering
  \includegraphics[trim={00pt 00pt 0pt 80pt}, clip, width=\linewidth, keepaspectratio]{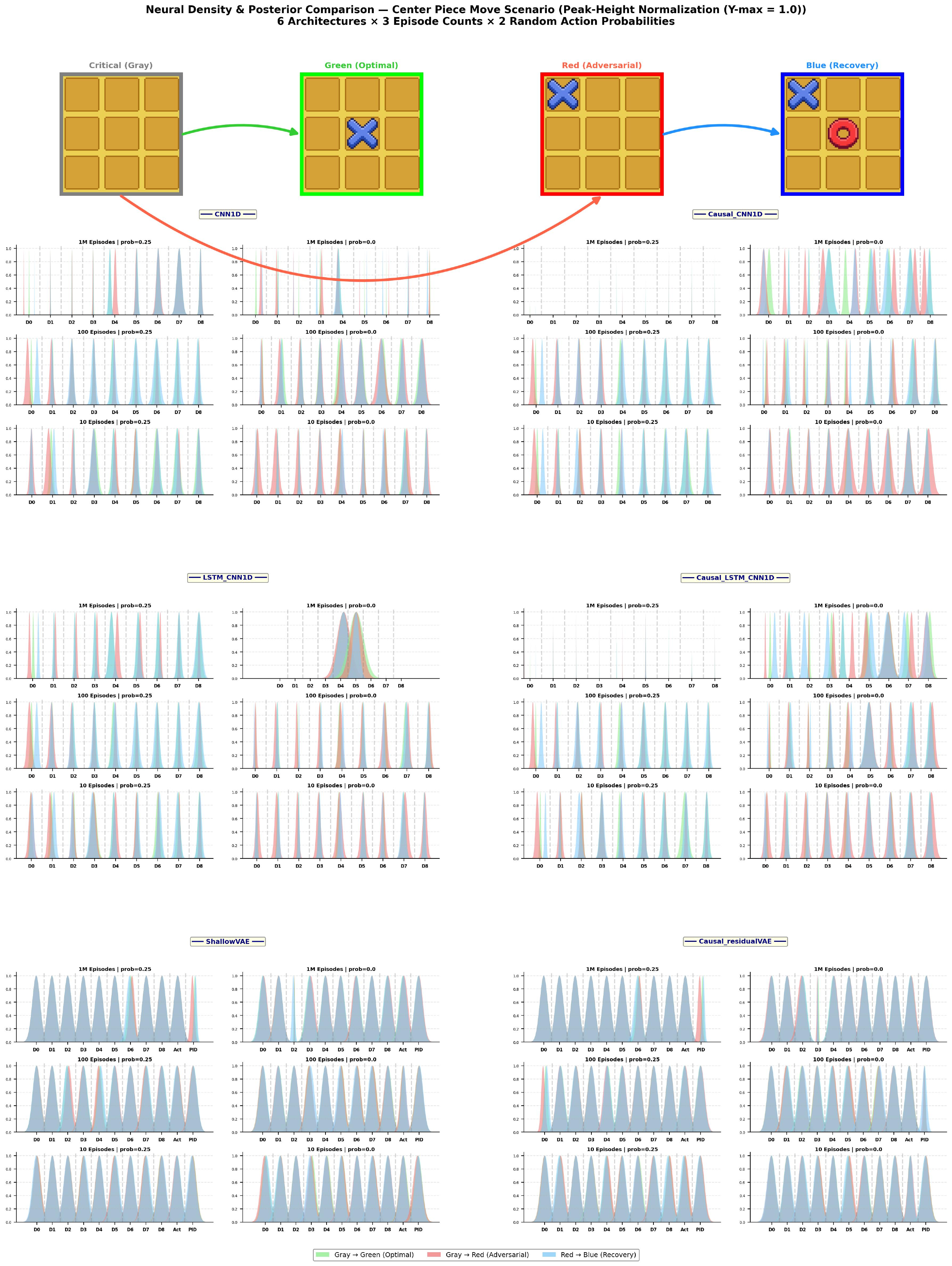}
  \caption{Failure mode: Center control and recovery. Density peaks under an intervention on central-cell occupancy, a structurally sensitive region of the causal mask. Colour coding as in Figure~\ref{fig:motivation_env_win}.}
  \label{fig:motivation_env_center}
\end{figure}

\begin{figure}[htbp]
  \centering
  \includegraphics[trim={00pt 00pt 0pt 80pt}, clip, width=\linewidth, keepaspectratio]{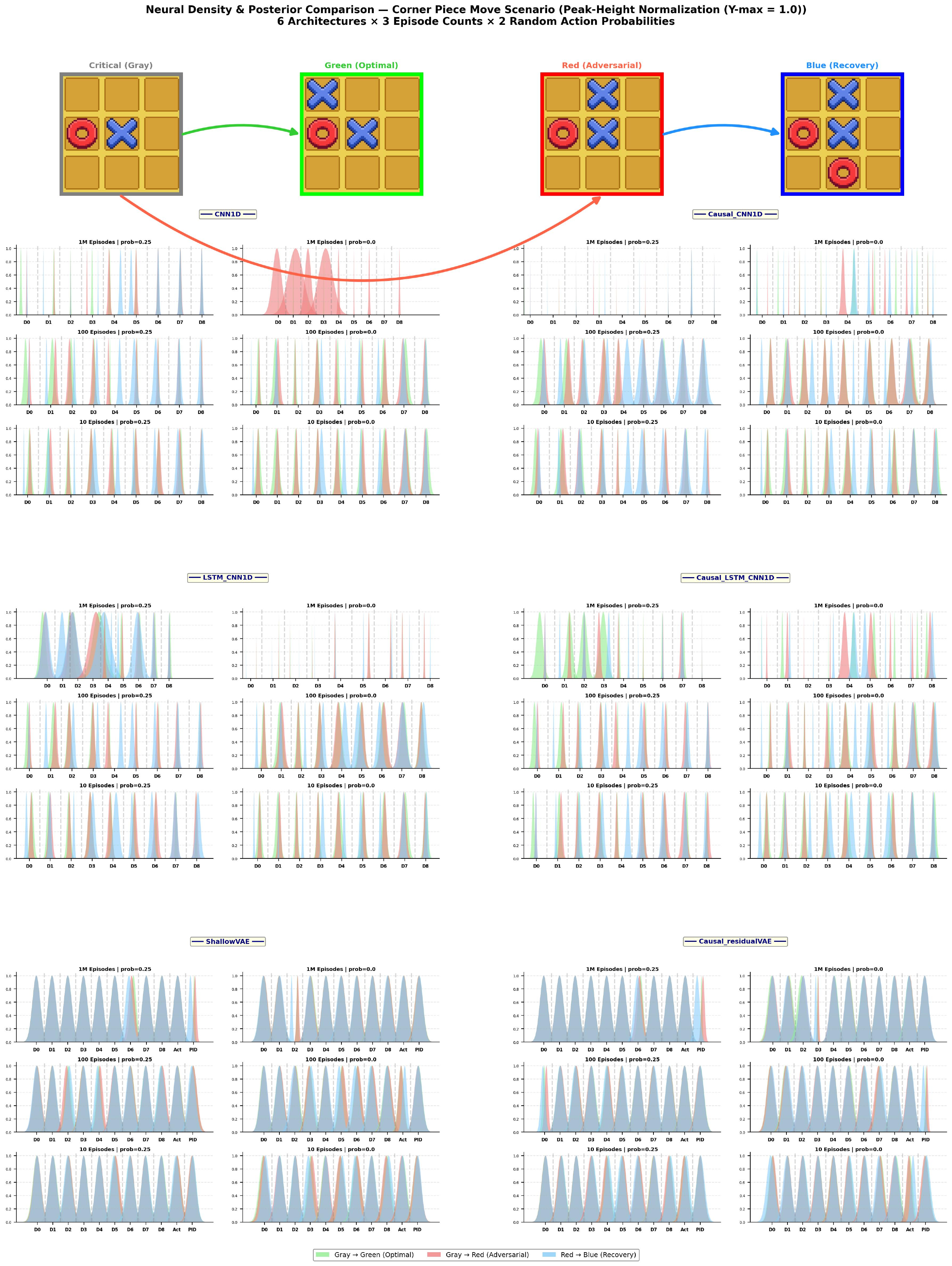}
  \caption{Failure mode: Corner occupancy and recovery. Density peaks under an intervention on corner-cell occupancy, together with the recovery dynamics. Colour coding as in Figure~\ref{fig:motivation_env_win}. Across Figures~\ref{fig:motivation_env_win}--\ref{fig:motivation_env_corner}, the normalizing flows capture uncertainty at the output layer of regressors and at the latent layer of VAEs, visualizing how density distributions shift with strategic intent and recovery capability within the game environment.}
  \label{fig:motivation_env_corner}
\end{figure}

Leveraging the Spine and Cloud normalizing flow framework introduced previously, we investigate the predictive uncertainty of our world models when subjected to varied transition dynamics. This analysis allows us to characterize the model's reliability not merely through aggregate performance metrics, but through the lens of distributional density estimation under duress. By systematically imposing interventional failure modes, we rigorously evaluate how different architectures internalize epistemic uncertainty and recovery dynamics when pushed beyond their training distribution.

We derive four distinct tactical failure modes: Winning (Figure~\ref{fig:motivation_env_win}), Blocking (Figure~\ref{fig:motivation_env_block}), Center control (Figure~\ref{fig:motivation_env_center}), and Corner occupancy (Figure~\ref{fig:motivation_env_corner}) by introducing adversarial interventions (\textcolor{red}{OOD} states) that force the agent to deviate from optimal play. Each failure mode is presented as a separate figure so that the per-architecture density panels remain legible. As visualized in Figures~\ref{fig:motivation_env_win}--\ref{fig:motivation_env_corner}, we employ a consistent color-coding scheme: \textcolor{green}{green} represents optimal \textcolor{green}{ID} state transitions, \textcolor{red}{red} denotes \textcolor{red}{OOD} adversarial interventions, and \textcolor{blue}{blue} illustrates the \textcolor{blue}{recovery} dynamics back to the ground truth. We operationalize uncertainty modeling based on the underlying architecture: for regressor-based world models (\texttt{CNN1D}, \texttt{LSTM\_CNN1D}), we visualize the final output layer as a permutation-invariant proxy for transition uncertainty; conversely, for VAE-based models (e.g., \texttt{Causal\_Residual\_VAE}), we analyze the latent layer, allowing for a more granular interpretation of how structural causal features are encoded.

A robust uncertainty model is evaluated against a specific rubric: a \textit{High-Fidelity Model} exhibits a sharp, singular density peak for \textcolor{green}{ID} transitions, with uncertainty variance expanding selectively on dimensions that underwent intervention, ensuring optimal disentanglement; in contrast, a \textit{Model Collapse} manifests as diffuse, multimodal uncertainty, or sharp, impulse-like spikes signaling overfitting. Empirically, \textcolor{green}{green} bands must be smooth and well-calibrated; \textcolor{red}{red} bands should exhibit variance along action dimensions and the intervention grid ID; and \textcolor{blue}{recovery} bands should demonstrate wider uncertainty specifically on intervened cells that deviate from the optimal policy's induced distribution.

Under reasonable data complexities (100--1000 episodes), our evaluation identifies two distinct winner architectures, each excelling in a specific diagnostic domain. The \texttt{Causal\_CNN1D} emerges as the winner for \textit{global diagnostic stability}. Its permutation-invariant architecture provides a robust, low-latency estimate of transition uncertainty across the entire grid, proving exceptionally reliable for rapid detection of \textcolor{red}{OOD} deviations where broad situational awareness is required. Conversely, the \texttt{Causal\_Residual\_VAE} is the winner for \textit{structural diagnostic precision}. While the regressor architectures provide consistent global peaks, the VAE’s latent space excels in complex failure modes like \textit{Center} and \textit{Corner} occupancy. It localizes uncertainty to the specific causal mask inaccuracies, providing a structural map of the model's internal causal alignment that global regressors aggregate away. 

These results demonstrate that the choice of architecture defines the diagnostic utility: the \texttt{Causal\_CNN1D} provides the robust view of transition uncertainty, while the \texttt{Causal\_Residual\_VAE} serves as the surgical diagnostic tool for isolating local causal failures. Together, they form a comprehensive uncertainty modeling pipeline, confirming that neither architecture is universally superior, but rather that their integration provides the most resilient defense against interventional uncertainty in world models.

The following table \ref{tab:train_config}, table  \ref{tab:causal_config} are the experimental details of the motivation section: 
\begin{table}[htbp]
\centering
\begin{minipage}[t]{0.48\textwidth}
    \centering
    \small
    \caption{Training Hyperparameters}
    \label{tab:train_config}
    \begin{tabular}{l l}
    \toprule
    \textbf{Parameter} & \textbf{Value} \\
    \midrule
    Exp. Rate ($p$) & $\{0.0, 0.25, 0.5, 0.75, 1.0\}$ \\
    Dataset Size ($N$) & $\{10, 100, 1000, 10^6\}$ \\
    Batch Size & 128 \\
    Optimizer & Adam ($1 \times 10^{-3}$) \\
    Epochs & 20 (Det), 50 (Var) \\
    \bottomrule
    \end{tabular}
\end{minipage}
\hfill
\begin{minipage}[t]{0.48\textwidth}
    \centering
    \small
    \caption{PC/PCMCI Parameters}
    \label{tab:causal_config}
    \begin{tabular}{l l}
    \toprule
    \textbf{Parameter} & \textbf{Value} \\
    \midrule
    Lag Range ($\tau$) & $\tau \in \{1\}$ \\
    Significance ($\alpha$) & 0.05 \\
    Sigmoid ($\kappa$) & 1.0 \\
    Epsilon ($\epsilon$) & $10^{-8}$ \\
    Cond. Limit & Py: 3, Px: 0 \\
    \bottomrule
    \end{tabular}
\end{minipage}
\end{table}

Input vector $\mathbf{x} \in \mathbb{R}^{11}$ (board state, action, player ID) mapped to target $\mathbf{y} \in \mathbb{R}^{9}$.

\begin{itemize}
    \item \textbf{CNN1D / LSTM\_CNN1D:} Feature extraction via Conv1D ($32 \rightarrow 16$ channels). LSTM variants apply recurrent processing prior to feature extraction.
    \item \textbf{ShallowVAE:} Linear Encoder ($11 \rightarrow 64 \rightarrow 22$) $\rightarrow$ Latent ($11$) $\rightarrow$ Decoder ($11 \rightarrow 64 \rightarrow 9$). Loss: MSE + KLD.
    \item \textbf{Causal Variants:} \texttt{Causal\_CNN1D} and \texttt{Causal\_LSTM\_CNN1D} utilize a ground-truth transpose adjacency mask $\mathbf{A}$ for input modulation.
    \item \textbf{CausalResidualVAE:} VAE integrated with MLP residual blocks; utilizes a linear warm-start scheduler ($0.25$ ratio) to impose the causal prior.
\end{itemize}

\section{Generalization Error Bound of the ICWM}
\label{sec:generalization_bound_appendix}

To formalize the necessity of stochastic soft interventions, we establish a theoretical bound on the out-of-distribution policy evaluation error. Generalization error is driven by the divergence between the true transition model $\mathcal{M}^*$ and the learned model $\mathcal{M}_\theta$. Because $\mathcal{M}_\theta$ relies on the MAS-SBC adjustment to decouple invariant physics from strategic intent, its accuracy is bottlenecked by the causal support in the demonstration dataset $\mathcal{D}_\sigma$, quantified by the interventional strength $\sigma \in (0, 1]$.

\begin{lemma}[Transition Estimation Error]
\label{lemma:transition_error}
Let $\mathcal{D}_\sigma$ be a dataset of $N$ transitions collected under the soft interventional policy mixture $P^\sigma$. The worst-case total variation (TV) distance ($\text{Distance}_{TV}$) between the learned causal transition $\mathcal{M}_\theta$ and the true mechanism $\mathcal{M}^*$ is bounded by:
\begin{equation}
\max_{s, \mathbf{a}} \text{Distance}_{TV}(\mathcal{M}_\theta(\cdot \mid s, \mathbf{a}) \parallel \mathcal{M}^*(\cdot \mid s, \mathbf{a})) \le \frac{J}{\sqrt{N \sigma}} + \epsilon_{env}
\end{equation}
where $J$ is a constant dependent on the state-action space complexity, and $\epsilon_{env}$ represents irreducible environmental stochasticity and neural network approximation error.
\end{lemma}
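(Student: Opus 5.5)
The bound splits into a statistical estimation term and a coverage term. The coverage term is what produces the $\sqrt{\sigma}$ dependence: the soft intervention in Eq.~\ref{eq:mixture} guarantees a minimum visitation probability for every joint action.

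\textbf{Step 1: split the error.} I first write $\mathcal{M}_\theta = \widehat{\mathcal{M}} + (\mathcal{M}_\theta - \widehat{\mathcal{M}})$, where $\widehat{\mathcal{M}}(\cdot\mid s,\mathbf{a})$ is the empirical (maximum-likelihood) conditional estimate built from the transitions in $\mathcal{D}_\sigma$ that land on the cell $(s,\mathbf{a})$. By the triangle inequality for total variation, the worst-case distance is at most $\max_{s,\mathbf{a}}\mathrm{Distance}_{TV}(\widehat{\mathcal{M}}\parallel\mathcal{M}^*)$ plus the function-approximation gap. I will fold that gap, together with irreducible noise from $u_{env}$, into $\epsilon_{env}$.

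\textbf{Step 2: why $\widehat{\mathcal{M}}$ targets the right quantity.} The MAS-SBC (Theorem~\ref{theorem:mas_sbc}, via Axiom~1) makes $P(s_{t+1}\mid s,\mathbf{a})$ observationally equal to the interventional mechanism $f_s$. So no confounding bias enters, and only sampling error remains.

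\textbf{Step 3: bound the sampling error.} For a fixed cell with $n(s,\mathbf{a})$ samples, a Weissman-type $L_1$ concentration inequality gives
\[
\mathrm{Distance}_{TV} \le \sqrt{\frac{2\left(|\mathcal{S}|\log 2+\log(1/\delta)\right)}{n(s,\mathbf{a})}}
\]
with probability at least $1-\delta$. A union bound over $|\mathcal{S}||\mathcal{A}|$ cells turns this into a uniform bound, and the logarithmic and cardinality factors are absorbed into $J$.

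\textbf{Step 4: the coverage lower bound (main obstacle).} I need $n(s,\mathbf{a}) \gtrsim N\sigma$ for every cell. From Eq.~\ref{eq:mixture}, every action satisfies $P^\sigma(a^i\mid s)\ge \sigma\,\zeta(a^i)\ge \sigma/|\mathcal{A}^i|$. Since the noise processes $\zeta$ are independent across agents, the joint action also has a positive floor at every visited state.

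This step is harder than it looks, for two reasons:
\begin{itemize}
\item The naive joint floor is $\prod_i \sigma/|\mathcal{A}^i|$, which scales like $\sigma^n$, not $\sigma$.
\item The state visitation $d^\sigma(s)$ must also be bounded below.
\end{itemize}

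My first attempt is to apply the soft intervention to a single agent $i$ (matching the query $do(a^i_t)$ in Theorem~\ref{theorem:mas_sbc}). Peer actions are then marginalized by the adjustment formula, so only a $\sigma/|\mathcal{A}^i|$ floor is needed on $a^i$. I then impose a concentrability assumption $d^\sigma(s)\ge \mu_{\min}>0$, with $\mu_{\min}$ absorbed into $J$. If instead all agents must be perturbed, I would state the result with $J$ depending on $|\mathcal{A}|$ and treat $\sigma^{n-1}$ as bounded below by a constant, i.e., for fixed $n$ with $\sigma$ not vanishing.

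Given the floor, a multiplicative Chernoff bound yields $n(s,\mathbf{a})\ge \tfrac12 N\sigma\,\mu_{\min}/|\mathcal{A}^i|$ with high probability once $N\sigma$ is large enough. Substituting this into Step 3 gives $\max_{s,\mathbf{a}}\mathrm{Distance}_{TV}\le J/\sqrt{N\sigma}+\epsilon_{env}$, which is the claimed bound.
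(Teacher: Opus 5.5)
There is a genuine gap in Step~4. You correctly notice that applying Eq.~\ref{eq:mixture} independently per agent gives a joint floor of only $\prod_i \sigma/|\mathcal{A}^i| = \sigma^n/|\mathcal{A}|$. Neither of your two fixes recovers the lemma as stated.

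\emph{Perturbing a single agent.} If only agent $i$ is perturbed, $a^{-i}$ is still generated by the deterministic expert. Every joint cell $(s,a^i,a^{-i})$ with a non-expert $a^{-i}$ then has $n(s,\mathbf{a})=0$. But the lemma's maximum ranges over the full joint action, and Theorem~\ref{theorem:generalization} needs exactly that uniform control to handle an arbitrary $\pi_{test}$. Appealing to the MAS-SBC adjustment to marginalize peers bounds a different object, $P(s_{t+1}\mid do(a^i_t),H_k)$, not the joint-conditioned kernel $\mathcal{M}_\theta(\cdot\mid s,\mathbf{a})$.

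\emph{Bounding $\sigma^{n-1}$ below.} Treating $\sigma^{n-1}$ as bounded below by $\sigma_0^{n-1}$ hides a factor $\sigma_0^{-(n-1)/2}$ inside $J$ and restricts the result to $\sigma\in[\sigma_0,1]$. The $1/\sqrt{\sigma}$ dependence, which is the whole content of the lemma and of the $\sigma\to 0$ discussion after it, is then not established. Indeed, under per-agent mixing the all-deviating cells receive only about $N\sigma^n$ samples, so the provable worst-cell rate is $1/\sqrt{N\sigma^n}$.

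The missing ingredient is the modeling choice the paper's proof makes. It takes the mixture at the joint level, $P^\sigma(\mathbf{a}\mid s)=(1-\sigma)\pi_{exp}(\mathbf{a}\mid s)+\sigma\zeta(\mathbf{a})$ with $\zeta$ uniform over the joint space $\mathcal{A}$, so all agents randomize together with probability $\sigma$. This gives $\min_{s,\mathbf{a}}P^\sigma(\mathbf{a}\mid s)\ge\sigma/|\mathcal{A}|$ in one line. Combined with an ergodicity floor $d_{min}$, it yields $\mathbb{E}[N(s,\mathbf{a})]\ge Nd_{min}\sigma/|\mathcal{A}|$ and $J=\sqrt{\tfrac{|\mathcal{A}|}{2d_{min}}\log(2^{|\mathcal{S}|}|\mathcal{S}||\mathcal{A}|/\delta)}$. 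Your observation does expose a tension with the main text's independent per-agent $\zeta$, but the lemma as proved rests on the joint mixture.

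If you adopt that assumption, the rest of your plan works:
\begin{itemize}
\item Steps 1 and 3 match the paper's triangle-inequality split, with $\epsilon_{env}=\text{Dist}_{TV}(\mathcal{M}_\theta\parallel\hat{P})$, and its Weissman-plus-union-bound step. Your displayed expression is the $L_1$ deviation; TV is half of it, which $J$ absorbs.
\item Your Chernoff conversion is more careful than the paper, which plugs $\mathbb{E}[N(s,\mathbf{a})]$ directly into a bound that holds for the realized count. With Markov-dependent transitions, though, you need a mixing-based concentration bound rather than the i.i.d.\ one.
\item Step 2 is unnecessary. The paper never invokes MAS-SBC here, because conditioning on the full $(s,\mathbf{a})$ already targets $f_s$ under the Markov SCM.
\end{itemize}
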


\begin{proof}
The learning of $\mathcal{M}_\theta$ relies on the finite observational dataset $\mathcal{D}_\sigma$ generated by the mixture policy $P^\sigma(\mathbf{a} \mid s) = (1 - \sigma)\pi_{exp}(\mathbf{a} \mid s) + \sigma \zeta(\mathbf{a})$. We assume the stochastic process $\zeta$ establishes a uniform distribution over the joint discrete action space $\mathcal{A}$. 

We first prove that the minimum probability of observing any specific joint action $\mathbf{a} \in \mathcal{A}$ in any visited state is strictly bounded from below. Since probability distributions are non-negative, the expert policy's contribution is $(1 - \sigma)\pi_{exp}(\mathbf{a} \mid s) \ge 0$. Because $\zeta$ is a uniform distribution over the joint action space, $\zeta(\mathbf{a}) = \frac{1}{|\mathcal{A}|}$ for all valid actions. Substituting these limits into the mixture policy yields:
\begin{equation}
\min_{\mathbf{a}, s} P^\sigma(\mathbf{a} \mid s) \ge 0 + \sigma \left(\frac{1}{|\mathcal{A}|}\right) = \frac{\sigma}{|\mathcal{A}|}
\end{equation}

Let $N(s, \mathbf{a})$ represent the exact number of visits to a specific state and joint-action tuple. To ensure the dataset $\mathcal{D}_\sigma$ has a non-zero probability of covering physically reachable states, we require the underlying Markov Decision Process to be ergodic. An ergodic MDP guarantees that every state is visited infinitely often in the limit, establishing a strictly positive stationary state distribution $d(s)$. We define this lower bound as $d_{min} = \min_{s} d(s) > 0$. Under this ergodicity condition, the expected dataset support for any valid transition is bounded by \citep{kakade2003sample}:
\begin{equation} \label{eq::expectation}
\mathbb{E}[N(s, \mathbf{a})] \ge N d_{min} \left( \frac{\sigma}{|\mathcal{A}|} \right)
\end{equation}

To bound the divergence of the empirical distribution $\hat{P}$ from the true distribution $\mathcal{M}^*$, we apply Weissman's $L_1$ concentration inequality for discrete probability distributions \citep{Weissman2003InequalitiesFT}. For a probability distribution defined over a state space of size $|\mathcal{S}|$, the probability that the $L_1$ deviation exceeds a threshold $\epsilon$ is bounded by:
\begin{equation}
P(|| \hat{P} - \mathcal{M}^* ||_1 \ge \epsilon) \le (2^{|\mathcal{S}|} - 2) \exp\left(-\frac{N(s, \mathbf{a}) \epsilon^2}{2}\right)
\end{equation}

Converting the $L_1$ norm to Total Variation distance ($\text{Dist}_{TV} = \frac{1}{2} || \cdot ||_1$) and applying a union bound over all $|\mathcal{S}||\mathcal{A}|$ state-action pairs, the maximum $\text{Dist}_{TV}$ with probability at least $1 - \delta$ is:
\begin{equation}
\max_{s, \mathbf{a}} \text{Dist}_{TV}(\hat{P}(\cdot \mid s, \mathbf{a}) \parallel \mathcal{M}^*(\cdot \mid s, \mathbf{a})) \le \sqrt{\frac{1}{2 N(s, \mathbf{a})} \log\left(\frac{2^{|\mathcal{S}|} |\mathcal{S}| |\mathcal{A}|}{\delta}\right)}
\end{equation}

Substituting the expected support $\mathbb{E}[N(s, \mathbf{a})]$ (Equation \ref{eq::expectation}) into the concentration bound yields the purely statistical error. To account for optimization and capacity limitations of the neural network $\mathcal{M}_\theta$, we use the triangle inequality:

\begin{equation}
\text{Dist}_{TV}(\mathcal{M}_\theta \parallel \mathcal{M}^*) \le 
\underbrace{\text{Dist}_{TV}(\mathcal{M}_\theta \parallel \hat{P})}_{\substack{\text{Neural Architecture} \\ \text{\& Optimization Error}}} 
+ 
\underbrace{\text{Dist}_{TV}(\hat{P} \parallel \mathcal{M}^*)}_{\substack{\text{Empirical Dataset vs.} \\ \text{True Model Distance}}}
\end{equation}

Defining $\epsilon_{env} = \text{Dist}_{TV}(\mathcal{M}_\theta \parallel \hat{P})$ as the irreducible approximation error, and aggregating the dimensionality constraints and confidence bounds into a complexity constant $J = \sqrt{\frac{|\mathcal{A}|}{2 d_{min}} \log\left(\frac{2^{|\mathcal{S}|} |\mathcal{S}| |\mathcal{A}|}{\delta}\right)}$, we recover the final bound:
\begin{equation}
\max_{s, \mathbf{a}} \text{Dist}_{TV}(\mathcal{M}_\theta(\cdot \mid s, \mathbf{a}) \parallel \mathcal{M}^*(\cdot \mid s, \mathbf{a})) \le \frac{J}{\sqrt{N \sigma}} + \epsilon_{env}
\end{equation}
\end{proof}

\begin{theorem}[ICWM Generalization Bound]
\label{theorem:generalization}
Given a reward function bounded by $R_{max}$ and discount factor $\gamma \in [0, 1)$, the error in evaluating any arbitrary target policy $\pi_{test}$ using the learned ICWM $\mathcal{M}_\theta$ compared to the true environment $\mathcal{M}^*$ is bounded by:
\begin{equation}
\left| V_{\pi_{test}}^{\mathcal{M}_\theta} - V_{\pi_{test}}^{\mathcal{M}^*} \right| \le \frac{\gamma R_{max}}{(1-\gamma)^2} \left( \frac{J}{\sqrt{N \sigma}} + \epsilon_{env} \right)
\end{equation}
\end{theorem}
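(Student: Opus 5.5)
The plan is to combine the standard Simulation Lemma with the uniform transition-error bound of Lemma~\ref{lemma:transition_error}. Fix an arbitrary test policy $\pi_{test}$ and write $V^{\mathcal{M}}$ for its value in model $\mathcal{M}$. Both models share the reward $r(s,\mathbf{a})$ with $|r|\le R_{max}$ and differ only in the transition kernel. The telescoping (Bellman) decomposition gives
\begin{equation*}
V^{\mathcal{M}_\theta}(s)-V^{\mathcal{M}^*}(s)=\gamma\,\mathbb{E}_{\mathbf{a}\sim\pi_{test}}\Big[\big(\mathcal{M}_\theta-\mathcal{M}^*\big)(\cdot\mid s,\mathbf{a})^{\top}V^{\mathcal{M}^*}\Big]+\gamma\,\mathbb{E}_{\mathbf{a}\sim\pi_{test}}\Big[\mathcal{M}_\theta(\cdot\mid s,\mathbf{a})^{\top}\big(V^{\mathcal{M}_\theta}-V^{\mathcal{M}^*}\big)\Big].
\end{equation*}
It follows from writing each value as reward plus discounted expected next value and then adding and subtracting $\mathcal{M}_\theta^{\top}V^{\mathcal{M}^*}$.

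Next I bound each term. For the first term, I will use the centering trick: since both kernels are probability measures, I may subtract any constant from $V^{\mathcal{M}^*}$, whose range lies in an interval of length at most $R_{max}/(1-\gamma)$ (take $r\in[0,R_{max}]$ or recentre). This gives
\begin{equation*}
\big|(\mathcal{M}_\theta-\mathcal{M}^*)^{\top}V^{\mathcal{M}^*}\big|\le \frac{R_{max}}{1-\gamma}\,\mathrm{Dist}_{TV}\big(\mathcal{M}_\theta(\cdot\mid s,\mathbf{a})\parallel\mathcal{M}^*(\cdot\mid s,\mathbf{a})\big).
\end{equation*}
The second term is at most $\gamma\|V^{\mathcal{M}_\theta}-V^{\mathcal{M}^*}\|_\infty$. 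Taking the supremum over $s$ and solving the resulting recursive inequality yields $\|V^{\mathcal{M}_\theta}-V^{\mathcal{M}^*}\|_\infty\le \frac{\gamma R_{max}}{(1-\gamma)^2}\max_{s,\mathbf{a}}\mathrm{Dist}_{TV}$, which is the stated prefactor. If the centering is done loosely, a factor of $2$ appears, and I will absorb it into $J$ and $\epsilon_{env}$.

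Finally, I substitute Lemma~\ref{lemma:transition_error}. Its bound is a \emph{worst-case} bound over all $(s,\mathbf{a})$, not a bound under the data distribution. This is what makes the result hold for an arbitrary $\pi_{test}$: no concentrability coefficient between $\pi_{test}$ and $P^\sigma$ is needed, because the positivity $\min P^\sigma(\mathbf{a}\mid s)\ge\sigma/|\mathcal{A}|$ has already been paid for inside $J/\sqrt{N\sigma}$. The estimate therefore holds with probability at least $1-\delta$, inherited from the lemma.

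The main obstacle is justifying this uniformity. The lemma's proof substitutes the expectation $\mathbb{E}[N(s,\mathbf{a})]$ for the random count $N(s,\mathbf{a})$. A careful version would first apply a multiplicative Chernoff bound so that $N(s,\mathbf{a})\ge \tfrac12 N d_{min}\sigma/|\mathcal{A}|$ simultaneously for all pairs with high probability, and then fold the extra constants and the log term into $J$. Ergodicity, which gives $d_{min}>0$ on the state space actually reachable under $\pi_{test}$, is also needed. Outside that support the model error is uncontrolled, and I would state this restriction explicitly rather than claim the bound for genuinely unvisited states.
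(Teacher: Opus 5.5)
Your proposal is correct and follows the paper's proof. Both obtain $\bigl|V^{\mathcal{M}_\theta}_{\pi_{test}}-V^{\mathcal{M}^*}_{\pi_{test}}\bigr|\le\frac{\gamma R_{max}}{(1-\gamma)^2}\max_{s,\mathbf{a}}\mathrm{Dist}_{TV}$ from the Simulation Lemma and then substitute Lemma~\ref{lemma:transition_error}; your one-step Bellman recursion is just the recursive form of the paper's occupancy-measure (telescoping) identity. Your centering step is in fact the more careful one: the paper's inequality $\sum_{s'}|P-Q|\,V\le 2\,\mathrm{Dist}_{TV}\max V$ actually yields $\frac{2\gamma R_{max}}{(1-\gamma)^2}$ and silently drops the factor $2$, so the stated constant is exact only for $r\in[0,R_{max}]$. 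You should state $r\in[0,R_{max}]$ as an assumption rather than absorb the $2$ into $J$ and $\epsilon_{env}$, since both are fixed by the lemma.
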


\begin{proof}
We define the value function $V_{\pi}^{\mathcal{M}}(s)$ as the expected discounted cumulative reward starting from state $s$ and following policy $\pi$ under transition dynamics $\mathcal{M}$:
\begin{equation}
V_{\pi}^{\mathcal{M}}(s) = \mathbb{E}_{\pi, \mathcal{M}} \left[ \sum_{t=0}^\infty \gamma^t r_t \mid s_0 = s \right]
\end{equation}

We bound the difference in the expected discounted return of an arbitrary target policy $\pi_{test}$ evaluated in the true environment $\mathcal{M}^*$ versus the learned causal world model $\mathcal{M}_\theta$. We invoke the Simulation Lemma \citep{Kearns2002NearOptimalRL}, which systematically bounds multi-step value function discrepancies using the maximum single-step transition divergence.

By applying the Performance Difference Lemma \citep{10.5555/645531.656005,kakade2003sample}, the difference in value functions can be expanded as a telescoping sum over the state-action distribution induced by the policy:
\begin{equation}
V_{\pi_{test}}^{\mathcal{M}_\theta}(s) - V_{\pi_{test}}^{\mathcal{M}^*}(s) = \frac{1}{1-\gamma} \mathbb{E}_{(s, \mathbf{a}) \sim d^{\mathcal{M}_\theta}_{\pi_{test}}} \left[ \gamma \sum_{s'} \left( \mathcal{M}_\theta(s' \mid s, \mathbf{a}) - \mathcal{M}^*(s' \mid s, \mathbf{a}) \right) V_{\pi_{test}}^{\mathcal{M}^*}(s') \right]
\end{equation}

Because the maximum single-step reward is bounded by $R_{max}$, the maximum possible accumulated value of any state over an infinite horizon is the geometric sum $R_{max} \sum_{t=0}^{\infty} \gamma^t$. For $\gamma \in [0, 1)$, this sum converges exactly to $\frac{R_{max}}{1-\gamma}$. We bound the inner summation using this maximum value constraint and the Total Variation distance (where $\sum_{s'} |P(s') - Q(s')| \cdot V(s') \le 2 \text{Dist}_{TV}(P \parallel Q) \max V$):
\begin{equation}
\left| V_{\pi_{test}}^{\mathcal{M}_\theta} - V_{\pi_{test}}^{\mathcal{M}^*} \right| \le \frac{\gamma R_{max}}{(1-\gamma)^2} \max_{s, \mathbf{a}} \text{Dist}_{TV}(\mathcal{M}_\theta(\cdot \mid s, \mathbf{a}) \parallel \mathcal{M}^*(\cdot \mid s, \mathbf{a}))
\end{equation}

Substituting the bounded transition error established in Lemma \ref{lemma:transition_error} directly into this inequality yields:
\begin{equation}
\left| V_{\pi_{test}}^{\mathcal{M}_\theta} - V_{\pi_{test}}^{\mathcal{M}^*} \right| \le \frac{\gamma R_{max}}{(1-\gamma)^2} \left( \frac{J}{\sqrt{N \sigma}} + \epsilon_{env} \right)
\end{equation}
This concludes the proof.
\end{proof}

\subsection{Discussion of Proven Implications}
The mathematical derivations in Theorem \ref{theorem:generalization} establish three core theoretical implications regarding the identifiability of causal world models:

\begin{enumerate}[leftmargin=*,noitemsep,topsep=0pt]
    \item \textbf{Unidentifiability in Deterministic Regimes:} The presence of $\sigma$ in the denominator of the bound dictates that as $\sigma \to 0$ (a purely deterministic expert), the estimation error mathematically approaches infinity. In this zero-entropy regime, expert actions are perfectly collinear with the unobserved latent intent $U$, rendering the true physical transition function $f_s$ structurally unidentifiable due to the lack of counterfactual support.
    \item \textbf{Bounding the Planning Snowball Effect:} Multi-step planning algorithms compound single-step transition errors over time, represented by the $\frac{\gamma}{(1-\gamma)^2}$ multiplier. By ensuring $\sigma > 0$, the soft intervention guarantees sufficient causal positivity across the joint state-action space, successfully constraining the maximum single-step deviation $\text{Dist}_{TV}$ and capping the compounding multi-step error.
    \item \textbf{Diminishing Returns on Stochasticity:} The bound shrinks proportionally to the square root of the interventional strength ($\frac{1}{\sqrt{\sigma}}$) but is ultimately floored by $\epsilon_{env}$. This indicates that while small amounts of initial policy variance provide massive reductions in generalization error by breaking causal confounding, adding excessive noise eventually yields diminishing returns against the irreducible approximation limits of the neural architecture.
\end{enumerate}

These theoretical implications are directly validated by our empirical results in Section \ref{sec::experimentation}. Specifically, the observed system error rapidly decreases as $\sigma$ is introduced but plateaus or degrades at extreme values (e.g., $\sigma \ge 0.75$), confirming the mathematical dynamics established in this bound.

\section{Theoretical Guarantees and Algorithmic Selection for Causal Discovery}
\label{sec:appendix_causal_discovery}

In this section \ref{sec::csd}, we provide the theoretical justifications for our causal structure discovery methodology. First, we provide a formal proof demonstrating that the elimination of contemporaneous edges ($E_{contemp}$) is a principled operation that does not violate the completeness guarantees of the true physical dynamics. Second, we rigorously justify the selection of PCMCI+ over latent causal discovery algorithms such as LPCMCI \citep{reiser2022causaldiscoverytimeseries}, grounding this decision in the architectural constraints of our Implicit Causal World Model (ICWM).

\subsection{Completeness of the Lagged Causal Skeleton in PCMCI+}
\label{sec:pcmci_completeness}

In Section \ref{sec::csd}, we define the unconfounded causal skeleton $\mathcal{G}_{skeleton}$ by strictly discarding the contemporaneous edge set $E_{contemp}$ generated by the PCMCI+ algorithm. A critical theoretical concern is whether the exclusion of $E_{contemp}$ inadvertently drops true physical dependencies, particularly if the algorithm's minimality principle prioritizes contemporaneous correlations over lagged physical mechanisms. We demonstrate that under the structural assumptions of our multi-agent framework, the lagged structural edges ($E_{lag}$) are preserved and independently satisfy the completeness guarantee for the invariant transition function $f_s$.

\subsubsection{Theoretical Setup and the MCI Test}
Let $\mathbf{V}$ be the set of observable endogenous variables evolving as a Vector Autoregressive (VAR) process. PCMCI+ identifies the causal graph through the Momentary Conditional Independence (MCI) test. For a putative lagged edge $V^i_{t-\tau} \to V^j_t$ with $\tau \ge 1$, the MCI test evaluates:
\begin{equation} \label{eq:mci_test}
V^i_{t-\tau} \perp \!\!\! \perp V^j_t \mid \widehat{Pa}(V^j_t) \setminus \{V^i_{t-\tau}\} \cup \widehat{Pa}(V^i_{t-\tau})
\end{equation}
where $\widehat{Pa}(\cdot)$ denotes the set of preliminary parents identified in the initial PC phase.

In our MAS formulation, the true Data Generating Process or SCM $\mathcal{M}$ is governed by two distinct mechanisms:
\begin{enumerate}[leftmargin=*,noitemsep,topsep=0pt]
    \item \textbf{Physics ($f_s$):} Governs invariant, lagged structural edges ($\tau \ge 1$).
    \item \textbf{Strategic Intent ($u_{intent}$):} Acts as an unobserved contemporaneous confounder ($\tau = 0$), inducing spurious correlations between $V^i_t$ and $V^j_t$.
\end{enumerate}

\subsubsection{Proof of Lagged Skeleton Completeness}

\textbf{Claim:} If a true physical mechanism $V^i_{t-\tau} \to V^j_t$ exists, the presence of the unobserved contemporaneous confounder $u_{intent}$ will not cause PCMCI+ to d-separate and drop the lagged edge, ensuring $E_{lag}$ is structurally complete for $f_s$.

\begin{proof}
Suppose a true invariant physical edge $V^i_{t-\tau} \to V^j_t$ exists. For PCMCI+ to incorrectly drop this edge, it must find a separating set $\mathbf{S} \subset \mathbf{V}$ such that $V^i_{t-\tau} \perp \!\!\! \perp V^j_t \mid \mathbf{S}$.

Under the minimality principle of causal discovery, if a contemporaneous variable $V^k_t$ perfectly mediated the relationship (i.e., $V^i_{t-\tau} \to V^k_t \to V^j_t$), conditioning on $V^k_t$ would correctly d-separate the lagged cause from the effect. However, we must evaluate the structural paths created by the unobserved intent.

By Assumption \ref{assum:contemp_confounding}, $u_{intent}$ coordinates actions strictly at time $t$. This creates a spurious contemporaneous path: $V^i_t \leftarrow u_{intent} \rightarrow V^j_t$. Because $u_{intent}$ is unobserved, PCMCI+ detects this correlation and instantiates a contemporaneous edge in $E_{contemp}$ (e.g., $V^i_t \leftrightarrow V^j_t$ or $V^i_t \to V^j_t$). We must determine if conditioning on this contemporaneous relationship d-separates the lagged physical edge. It does not, due to two structural axioms:
\begin{enumerate}[leftmargin=*,noitemsep,topsep=0pt]
    \item \textbf{Independence of Physics and Intent:} The physical transition $V^i_{t-\tau} \to V^j_t$ is an autonomous mechanism governed by $f_s$. The strategic intent $u_{intent}$ at time $t$ does not alter the physical laws defined at time $t-\tau$. Therefore, the conditional mutual information $I(V^i_{t-\tau} ; V^j_t \mid \mathbf{S})$ remains strictly positive for any subset $\mathbf{S}$ containing contemporaneous variables.
    \item \textbf{Collider Bias Avoidance:} If $V^i_{t-\tau}$ and $u_{intent}$ both causally influence $V^j_t$, then $V^j_t$ acts as a collider. Conditioning on $V^j_t$ (or its contemporaneous siblings) does not block the path from $V^i_{t-\tau}$; rather, it opens spurious paths. The MCI test (Equation \ref{eq:mci_test}) is mathematically designed to condition on the parents of both the cause and effect, isolating the specific structural link and preventing collider bias.
\end{enumerate}

Because no subset of contemporaneous variables generated by $u_{intent}$ can d-separate the true lagged physical mechanism, the MCI test will return a significant dependency. The true physical edge is strictly preserved in $E_{lag}$.
\end{proof}

\subsubsection{A Principled Heuristic Backed by Causal Decision Graphs}
\begin{figure}[!htb]
  \centering
  \includegraphics[trim={0pt 400pt 600pt 0pt}, clip, width=0.9\linewidth]{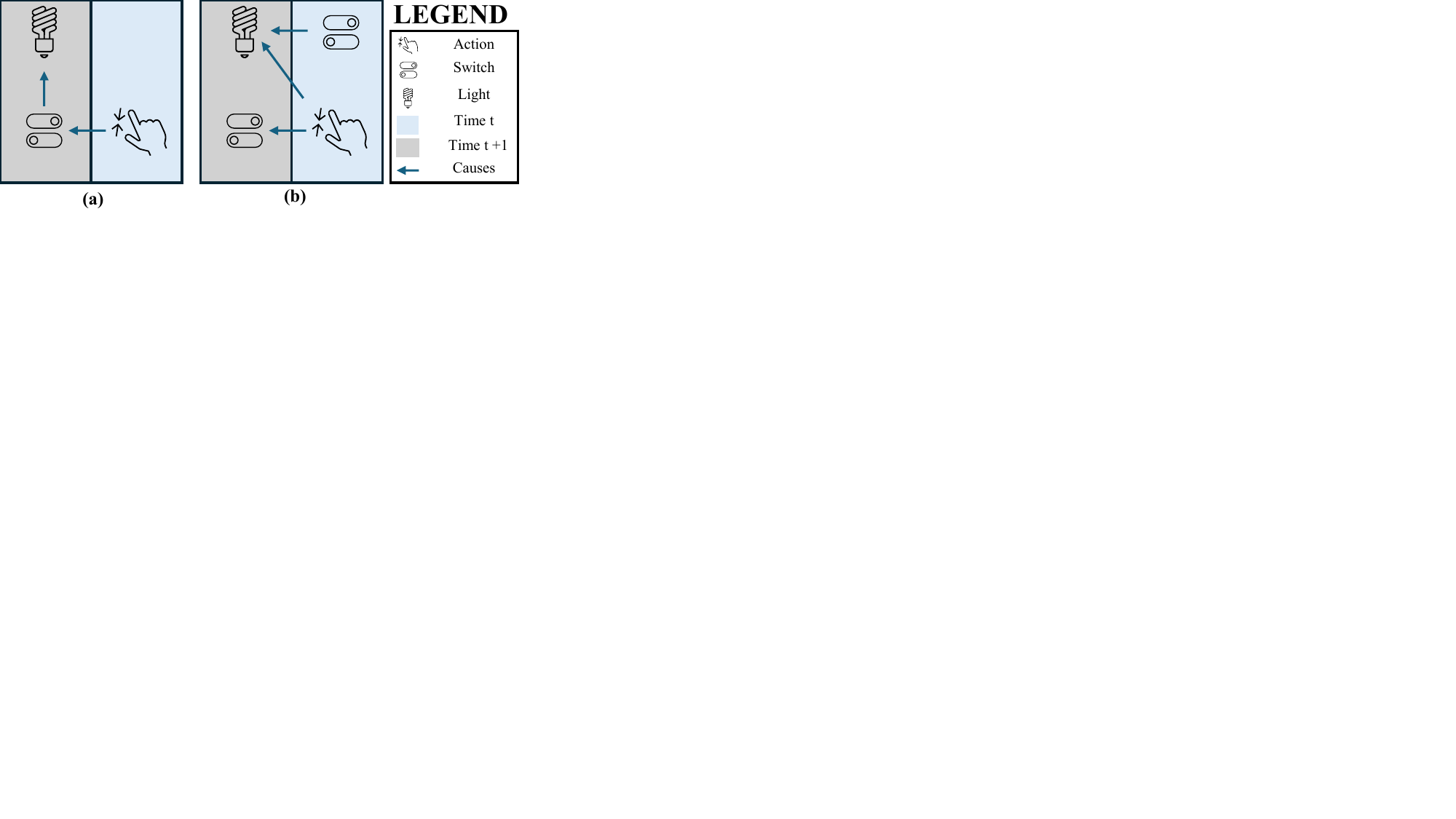}
  \caption{Resolving spurious contemporaneous correlation into temporally extended structural mechanisms. (a) A naive causal representation exhibiting a contemporaneous dependency ($E_{contemp}$). The switch appears to instantaneously cause the light to illuminate within the same time step $t$, a structural artifact typical of confounded multi-agent coordination. (b) The correct structural formulation derived via our CDG-backed heuristic. The invariant physical mechanism ($f_s$) operates with a strict temporal lag ($\tau \ge 1$), demonstrating that the state of the light at $t+1$ is fully determined by the parent state (the switch) at time $t$. This physical grounding illustrates why true environmental dynamics are strictly captured within the lagged edge set ($E_{lag}$), justifying the elimination of contemporaneous edges to recover the unconfounded causal skeleton.}\label{fig:augmented_cause}
\end{figure}
Because the true physical edges are mathematically guaranteed to remain in $E_{lag}$ (Figure \ref{fig:augmented_cause}), the contemporaneous edges $E_{contemp}$ identified by PCMCI+ contain strictly spurious correlations driven by $u_{intent}$ (as well as unresolvable Markov equivalence structures caused by the unobserved confounding). 

We formalize the removal of $E_{contemp}$ not as arbitrary graph surgery, but as a principled heuristic backed by Causal Decision Graphs (CDGs). In the CDG framework applied to multi-agent systems, environmental mechanisms ($f_s$) are defined by temporally extended structural equations, whereas joint policy execution (driven by $u_{intent}$) induces contemporaneous dependencies exclusively between decision nodes at time $t$. 

Therefore, defining the final causal skeleton as $E_{skeleton} = \{ (V_{t-k} \to V_{t}) \in E_{lag} \mid 1 \le k \le \tau \}$ is a domain-informed heuristic derived directly from the CDG of the multi-agent environment. This heuristic safely eliminates the unblocked backdoor paths associated with joint intent without violating the completeness guarantee of the VAR process underlying the physical environment.

\subsection{Algorithmic Selection: PCMCI+ vs. Latent Discovery (LPCMCI)}
\label{sec:pcmci_vs_lpcmci}

While the standard PCMCI+ algorithm operates under the assumption of causal sufficiency, the multi-agent system explicitly contains an unobserved confounder. In standard causal discovery literature, the Latent PCMCI (LPCMCI) algorithm is typically deployed to handle such hidden variables. However, we actively choose PCMCI+ combined with strict structural assumptions over LPCMCI. This decision is driven by the strict architectural requirements of our neural world model and the necessity of forward reasoning using known physical priors.

\subsubsection{Architectural Infeasibility of Partial Ancestral Graphs (PAGs)}
The Sequential Backdoor Condition (SBC) requires a deterministic, fully resolved directed graph to execute causal adjustments. Our Implicit Causal World Model (ICWM) instantiates this graph as a causal tensor mask $\mathbf{C}$ to restrict information flow in the neural network. 

LPCMCI outputs a Partial Ancestral Graph (PAG), which explicitly models latent confounding using bidirected edges ($\leftrightarrow$) and marks Markov equivalence ambiguities with circle edges (e.g., $o \rightarrow$). Translating an ambiguous PAG \citep{10.1214/aos/1031689015} into a deterministic neural architecture is mathematically infeasible without relying on arbitrary, heuristic tie-breaking for unresolved edges. By utilizing PCMCI+ and filtering out $E_{contemp}$, we obtain a strictly directed graph ($E_{lag}$) that maps flawlessly into our causal tensor mask.

\subsubsection{Continuous Edge Integration via Fuzzy Masking}
To operationalize the resolved directed graph ($E_{lag}$) within the neural architecture, we adjust the retrieved edges using a fuzzy mask. Rather than enforcing a rigid binary constraint, we instantiate the causal tensor $\mathbf{C}$ with continuous values $\in [0, 1]$. This fuzzy masking allows the ICWM to softly condition on the discovered causal parents. It preserves the full differentiability of the neural network during training while natively accommodating the statistical uncertainty of the dependencies extracted by PCMCI+.

\section{Contextual Bandit Formulation of Decision-Making}
\label{sec::bandit}
We frame decision-making as a contextual bandit \citep{kumar2008mortal} task, where the environment state constitutes the context and the available legal actions are the arms of the bandit. Unlike traditional reinforcement learning, which optimizes for cumulative reward, our agent selects actions to maximize information gain regarding the environment’s transition dynamics. By treating each move as a local intervention, the agent probes the causal structure of the environment. The SBC (Appendix \ref{sec::sbc}) or MAS-SBC (Appendix \ref{sec::mas_sbc_proof}) then acts as a regulatory constraint, such that the effect of the agent's actions can be identified with trajectory data . This prevents selection bias, allowing the world model to learn the true environmental transition function rather than a policy-dependent approximation.

To transition from uniform random exploration to an information-maximizing framework within our mixture policy $\pi_{\text{behavior}} = \sigma \pi_{\text{int}} + (1-\sigma) \pi_{\text{expert}}$ equation \ref{eq:mixture_appendix}, we redefine $\pi_{\text{int}}$ as an active learning agent \citep{bellemare2016unifying,pathak2017curiosity}.

\subsection{The Intractability of Expected Information Gain}
Directly maximizing the Expected Information Gain (EIG) objective,
\begin{equation} \label{eq::active_agent_objective}
    \pi_{\text{int}}(S_t) = \arg\max_{A_t \in \mathcal{A}} \mathbb{E}_{S_{t+1} \sim P(\cdot|S_t, A_t)} \left[ \text{KL} \left( P(\theta_{world\_model} | S_t, A_t, S_{t+1}) \| P(\theta_{world\_model}) \right) \right],
\end{equation} 
where $\theta_{world\_model}$ are world model parameter.
Equation \ref{eq::active_agent_objective} is computationally intractable for complex world models due to two primary factors:
\begin{itemize}
    \item \textbf{Posterior Inference:} Bayesian updating over high-dimensional neural parameters is analytically impossible \citep{tong2001active} and prohibitively expensive via MCMC \citep{andrieu2003introduction} in real-time training.
    \item \textbf{Forward Simulation:} Estimating the expectation requires accurate future rollouts, creating a circular dependency where a reliable model is required to learn the model.
\end{itemize}

\subsection{Surrogate Objectives}
We can (as future work) approximate EIG using tractable epistemic uncertainty metrics:

\begin{itemize}
    \item \textbf{Predictive Variance (Uncertainty Sampling):} Utilizing our Spine and Cloud flows, we define 
    \begin{equation}
        \pi_{\text{int}}(S_t) = \arg\max_{A_t} \text{Var}_{\text{Cloud}}(S_t, A_t).
    \end{equation} 
    High residual variance indicates model confusion; prioritizing these actions forces data collection in regions where the model’s internal representation is weakest.
    
    \item \textbf{Ensemble Disagreement (Query-by-Committee):} Maintaining an ensemble of $N$ models, we define 
    \begin{equation}
        \pi_{\text{int}}(S_t) = \arg\max_{A_t} \frac{1}{N} \sum_{i=1}^N \| \mu_i(S_t, A_t) - \bar{\mu}(S_t, A_t) \|^2.
    \end{equation} 
    High disagreement across ensemble \citep{freund1992information} predictions highlights frontier states where the environment's causal structure is not yet resolved.
\end{itemize}


\section{MAS Environment Visualization}
\label{app:environment}
The Following figure \ref{fig:env} represents the three important domains chosen for evaluation. 

\begin{figure}[h]
  \centering
  \includegraphics[trim={0pt 220pt 350pt 0pt}, clip, width=\linewidth]{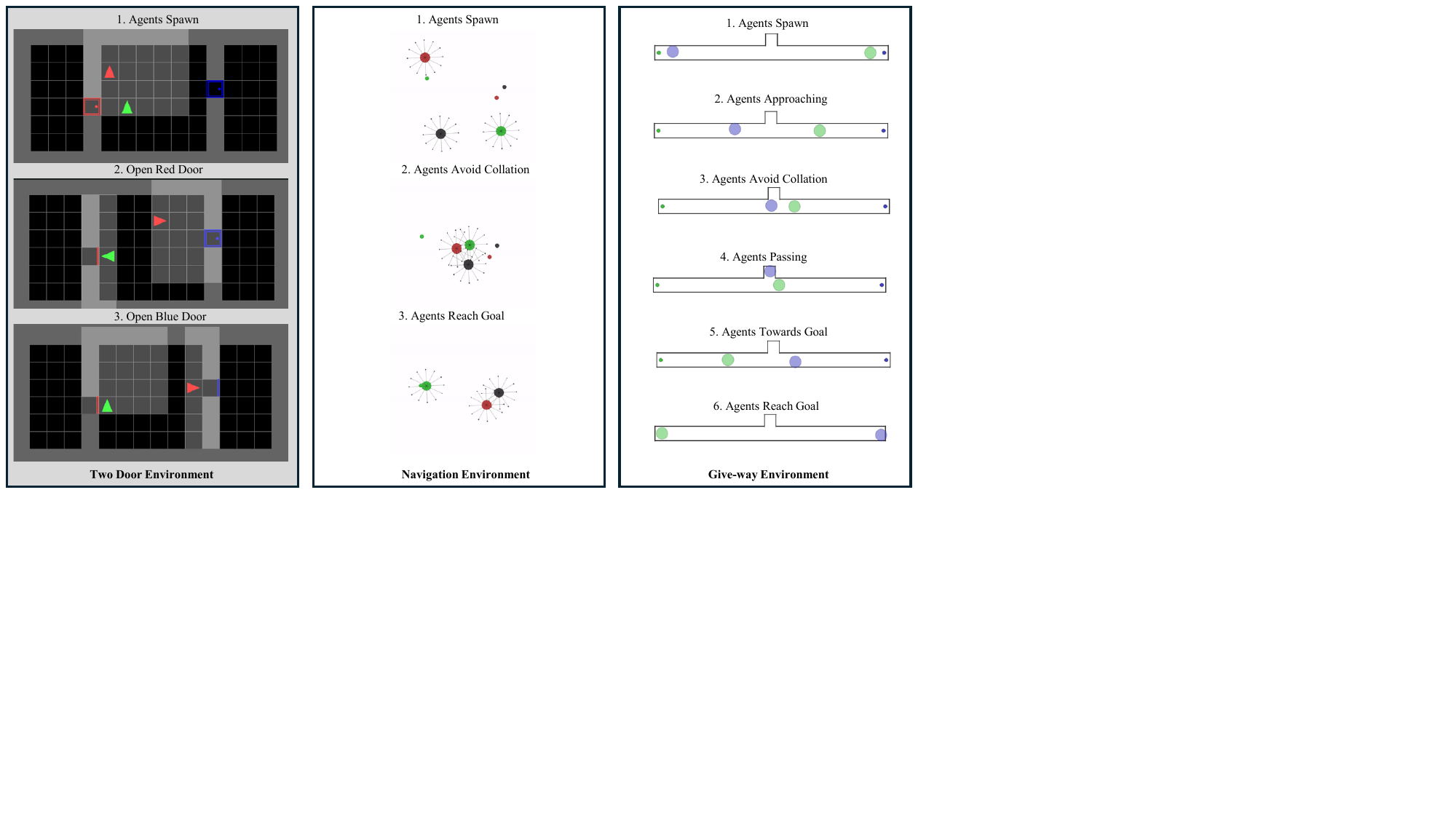}
  \caption{Environments used for empirical experiments.}\label{fig:env}
\end{figure}

\section{State Representation of Different environments}
\label{app:state_rep}
\subsection{Two-Door Coordination}
\begin{itemize}[leftmargin=*,noitemsep,topsep=0pt]
    \item \textbf{Variables}: Let $P_i = [x_i, y_i, d_i]$ be the state of agent $i \in \{R, G\}$ and $D_j = [s_j, x_j, y_j]$ be the state of door $j \in \{Red, Blue\}$, where $s_j \in \{0, 1\}$ denotes door status.
    
    \item \textbf{Full Observability (MDP)}: The state tensor $V_{\text{full}}$ is a direct concatenation of all global variables, providing the model with the complete physical configuration:
    $V_{\text{full}} = [P_R, D_{Red}, D_{Blue}, P_G]$

    \item \textbf{Visibility Logic}: Information is gated by a visibility indicator $\mathbb{1}_{i \to j}$ based on a $3 \times 3$ directional Field of View (FOV):
    $$\mathbb{1}_{i \to j} = \begin{cases} 1 & \text{if } (x_j, y_j) \in \text{FOV}(P_i, d_i) \\ 0 & \text{otherwise} \end{cases}$$

    \item \textbf{Partial Observability (POMDP)}: The observation tensor for the Red agent $V_{\text{part}, R}$ masks the peer agent's features when outside the FOV:
    $V_{\text{part}, R} = [P_R, D_{Red}, D_{Blue}, P_G \cdot \mathbb{1}_{R \to G}]$
    
\end{itemize}

\subsection{Giveway Corridor}
\begin{itemize}[leftmargin=*,noitemsep,topsep=0pt]\item \textbf{Variables}: Let $P_i = [x_i, y_i]$ and $V_i = [\dot{x}_i, \dot{y}_i]$ represent the absolute position and velocity of agent $i \in \{0, 1\}$. Let $C= [ |x_1 -x_2|,|y_1 - y_2|  ]$ define relative separation between agents.\item \textbf{Full Observability (MDP)}: The state tensor $V_{\text{full}}$ is a direct concatenation of kinematic data for all agents in the corridor:
$$V_{\text{full}} = [P_0, V_0, P_1, V_1,C]$$

\item \textbf{Logic for Noise}: Observational uncertainty is modeled as an additive exogenous variable $\epsilon$, where noise is sampled from a uniform distribution $\mathcal{U}$ scaled by the noise magnitude $n=0.03$:
$$\epsilon \sim \mathcal{U}(-n, n)$$

\item \textbf{Noisy Observability}: The state tensor $V_{\text{noise}}$ incorporates these perturbations across all continuous observation channels to test model robustness:
$$V_{\text{noise}} = [P_0 + \epsilon_p, V_0 + \epsilon_v, P_1 + \epsilon_p, V_1 + \epsilon_v]$$
\end{itemize}
\subsection{Multi-Agent Navigation}
\begin{itemize}[leftmargin=*,noitemsep,topsep=0pt]\item \textbf{Variables}: Let $P_i = [x_i, y_i]$ and $V_i = [\dot{x}_i, \dot{y}_i]$ be the absolute position and velocity of agent $i$. Let $G_i = [x_{gi}, y_{gi}]$ be the static goal coordinates. Let $L_i \in \mathbb{R}^{12}$ be the 12-ray Lidar distance measurements representing proximity to other collidable entities.

\item \textbf{Full Observability (MDP)}: The state tensor $V_{\text{full}}$ provides the global fleet configuration, including all agent kinematics, goals, and the full Lidar array for obstacle/agent detection:
$$V_{\text{full}} = [\{P_i, V_i, G_i, L_i\}_{i=1}^n]$$

\item \textbf{Logic for Partial Observability}: This setup implements a **Goal-Masking Information Bottleneck**. While the global state $s_t$ contains all coordinates, the observation $o_{i,t}$ for agent $i$ is restricted. Specifically, the model sees the physical kinematics ($P, V$) and spatial constraints ($L$) of all agents, but the strategic targets (goals) of peer agents $j \neq i$ are obscured.

\item \textbf{Partial Observability (POMDP)}: The observation tensor $V_{\text{part}, i}$ is constructed to isolate agent $i$'s intent while treating peer agents as moving obstacles with hidden objectives. The peer goals $G_j$ are replaced by a null or masked value:
$$V_{\text{part}, i} = [P_i, V_i, G_i, \{P_j, V_j\}_{j \neq i}, L_i]$$

\end{itemize}

\section{Action Space Representation}
\label{app:action_space}
The scenarios evaluated in this study utilize two distinct action space paradigms: the continuous force-based control used in VMAS (Navigation and Give-Away) and the discrete, symbolic control used in MiniGrid (2-Door Navigation).

\subsection{Continuous Force-Based Action Space (VMAS)}
In VMAS, the action space enables precise, physics-based control. For each agent $i$ at time $t$, the action vector $\mathbf{u}_{i,t} \in \mathbb{R}^2$ is defined as:

\begin{equation}
    \mathbf{u}_{i,t} = \begin{bmatrix} u_{x} \\ u_{y} \end{bmatrix}
\end{equation}

where $u_x$ and $u_y$ represent the continuous force components applied along the global Cartesian axes. These vectors allow the policy to modulate both the direction and magnitude of movement in the Euclidean plane.

The control output is bounded by a norm constraint $\| \mathbf{u}_{i,t} \|_2 \leq u_{\text{range}}$. If the policy outputs a vector exceeding this threshold, the environment applies a radial clipping operator:

\begin{equation}
    \hat{\mathbf{u}}_{i,t} = \begin{cases} 
    \mathbf{u}_{i,t} & \text{if } \| \mathbf{u}_{i,t} \|_2 \leq u_{\text{range}} \\ 
    u_{\text{range}} \frac{\mathbf{u}_{i,t}}{\| \mathbf{u}_{i,t} \|_2} & \text{if } \| \mathbf{u}_{i,t} \|_2 > u_{\text{range}} 
    \end{cases}
\end{equation}

\subsection{Discrete Categorical Action Space (MiniGrid)}
In the MiniGrid 2-Door navigation environment, agents operate in a discrete, symbolic action space. These actions correspond to specific, predefined behavioral primitives. The action space $\mathcal{A}_{\text{MiniGrid}}$ is a categorical set, where the agent selects a single integer $a \in \{0, \dots, N-1\}$. The required dimension is effectively 1D, representing the chosen categorical index:

\begin{table}[h]
\centering
\begin{tabular}{lll}
\hline
\textbf{Action Index} & \textbf{Label} & \textbf{Description} \\ \hline
0 & turn\_left & Rotate the agent 90$^\circ$ counter-clockwise. \\
1 & turn\_right & Rotate the agent 90$^\circ$ clockwise. \\
2 & forward & Move the agent one cell forward. \\
3 & pickup & Pick up an object in front. \\
4 & drop & Drop a held object. \\
5 & toggle & Open doors or activate switches. \\
6 & done & Signal task completion. \\ \hline
\end{tabular}
\end{table}

\subsection{Comparative Summary}
\begin{itemize}
    \item \textbf{Dimensionality:} VMAS requires an $\mathbb{R}^2$ continuous vector (2 dimensions), whereas MiniGrid requires a categorical index $\mathbb{Z}_7$ (effectively 1 dimension with 7 discrete values).
    \item \textbf{Control Philosophy:} VMAS actions represent \textit{force-modulated control}, where the agent optimizes energy expenditure $E \propto \| \mathbf{u} \|_2^2$. MiniGrid actions represent \textit{intent-based navigation}, where state transitions are deterministic and independent of force magnitude. This difference fundamentally changes the learning objective from navigating a continuous energy manifold to optimizing a sequence of symbolic decision-making steps.
\end{itemize}

\section{Shifted environments}
\label{app:shifted-envs}
\begin{figure}[htbp]
  \centering
  \includegraphics[trim={0pt 50pt 30pt 0pt}, clip, width=\textwidth]{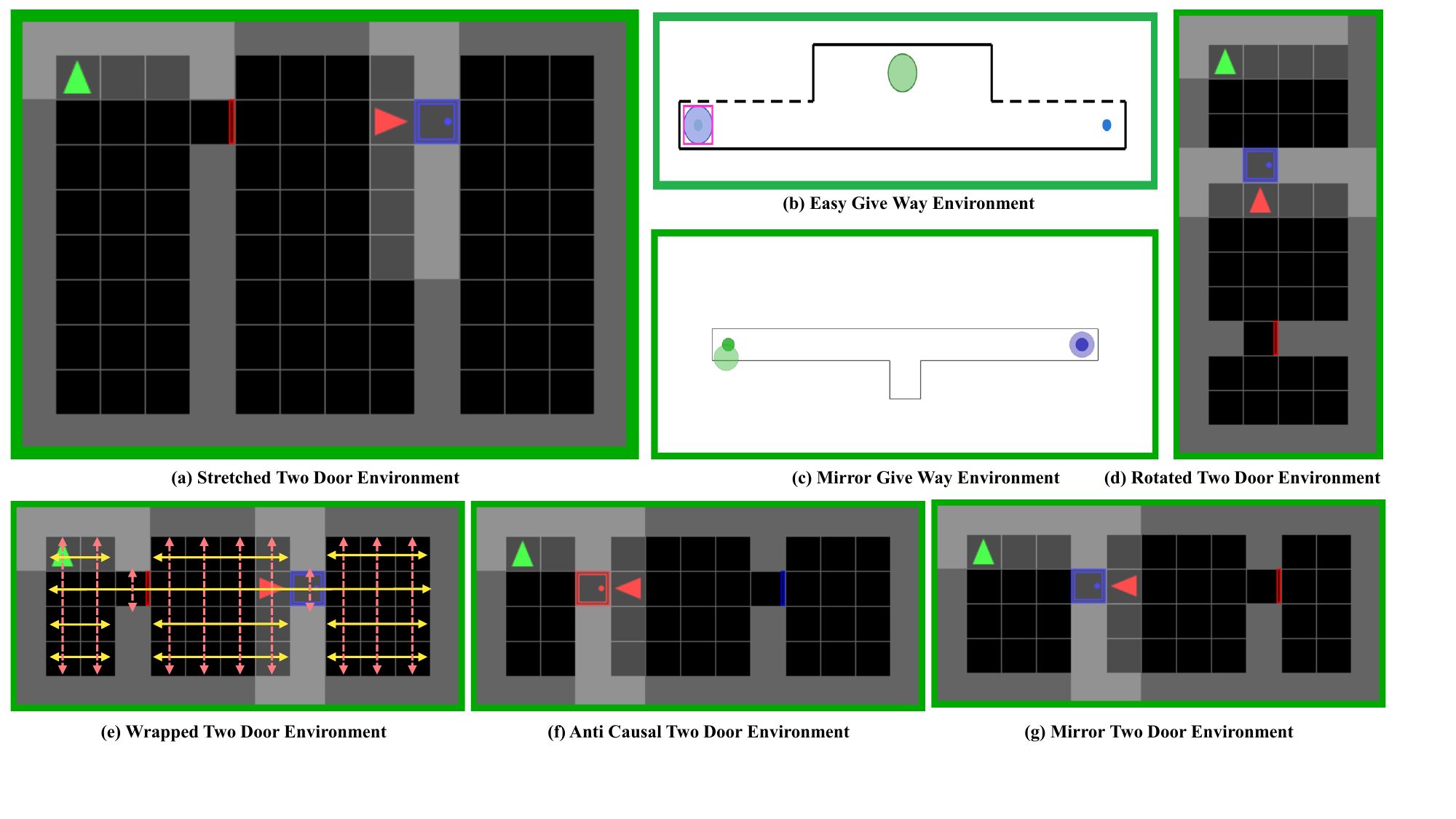}
  \caption{Example states from the seven shifted environments. Each keeps the dynamics of its default environment but changes the layout (or, for the anti-causal variant, the cue--dynamics mapping), so all seven are out of distribution for a model trained only on the defaults. Panels: (a) Stretched Two Door, (b) Easy Give Way, (c) Mirror Give Way, (d) Rotated Two Door, (e) Wrapped Two Door, (f) Anti-Causal Two Door, (g) Mirror Two Door.}
  \label{fig:shifted-envs}
\end{figure}
To test whether a world model has learned the true dynamics rather than the specific
layout it was trained on, we build a set of \emph{shifted} environments. Each one keeps
the same agents, the same physics, and the same task, but changes the geometry of the
arena. A model that understands how the agents move should transfer to these layouts,
while a model that has memorised the training layout should break. We train only on the
default environments and evaluate on the shifted ones, so every shifted layout is
strictly out of distribution.

Figure~\ref{fig:shifted-envs} shows one example state from each shifted environment. The
green and red triangles are the two agents in the grid (two-door) environments, the green
and blue circles are the two agents in the give-way environments, the blue box marks the
goal door, and the red bar marks the closed door.

\paragraph{How each layout differs from its default.}
The differences are summarised below and can be checked against
Fig.~\ref{fig:shifted-envs}. Panels are listed in the order they appear in the figure.

\begin{itemize}
  \item \textbf{Stretched Two Door} (Fig.~\ref{fig:shifted-envs}(a)). The two-door room
  is scaled wider, so the two corridors and the central room are longer than in the
  default. The doors, the walls, and the agents are unchanged; only the distances the
  agents must travel grow. This tests whether the model handles the same behaviour played
  out over a larger space.

  \item \textbf{Easy Give Way} (Fig.~\ref{fig:shifted-envs}(b)). The give-way corridor is
  widened and the passing bay is enlarged, so the two agents have more room to resolve the
  give-way manoeuvre than in the default. The task and the physics are unchanged; only the
  spatial margins grow. This tests whether the model preserves the give-way dynamics when
  the coordination is made geometrically easier.

  \item \textbf{Mirror Give Way} (Fig.~\ref{fig:shifted-envs}(c)). The give-way notch is
  flipped from above the corridor to below it. The corridor, the two goals, the agent
  radius, and the action range are all unchanged, so the only difference is that an agent
  now steps down into the bay instead of up. This tests whether the model has tied the
  give-way behaviour to a specific side of the corridor.

  \item \textbf{Rotated Two Door} (Fig.~\ref{fig:shifted-envs}(d)). The two-door room is
  rotated counter-clockwise by $90^\circ$. The wide, short default becomes tall and narrow, and the corridors
  that ran left and right now run top and bottom. Positions and motions that were horizontal
  in the default are vertical here, so the model must apply the same dynamics along a
  different axis.

  \item \textbf{Wrapped Two Door} (Fig.~\ref{fig:shifted-envs}(e)). The two-door arena is
  made toroidal: an agent that exits one edge re-enters from the opposite edge (indicated
  by the wrap-around arrows), so the room's boundaries connect rather than block. The room,
  doors, and agents are otherwise those of the default. This tests whether the model has
  learned local motion rules that still hold when the global connectivity of the arena
  changes.

  \item \textbf{Anti-Causal Two Door} (Fig.~\ref{fig:shifted-envs}(f)). The layout matches
  the default two-door room, but the cue-to-dynamics mapping is inverted: the relationship
  between the observed signal and the resulting transition is reversed relative to training.
  This is a dynamics shift rather than a geometric one, and it tests whether the model has
  learned the true causal mechanism rather than a cue that only correlates with it in the
  default distribution.

  \item \textbf{Mirror Two Door} (Fig.~\ref{fig:shifted-envs}(g)). The whole layout is
  reflected left to right. The room, the two doors, and the corridors move to the opposite
  side of the arena, so a fixed agent that sat on the right in the default now sits on the
  left. The geometry is identical to the default up to a mirror, which isolates whether
  the model has learned a direction-specific rule rather than the true symmetric dynamics.
\end{itemize}

\section{Ground Truth Causal Adjacency Matrices}
\label{app:gt_adjacency}
Figures \ref{fig:adj1}, \ref{fig:adj2}, \ref{fig:adj3} represent the ground truth SCM or system governing dynamics for our experimental setup. This can be sourced from their environment's code-base and transition dynamics. We redirect the reader to figure \ref{fig:cdg}, \ref{fig::mas_sbc_surgery}, \ref{fig:augmented_cause} for parallel comparison. Please note that agent actions have no incoming edges as they are external stimuli from the perspective of the world model. 
\begin{figure}[htbp]
  \centering
  \includegraphics[trim={0pt 00pt 0pt 20pt}, clip, width=\linewidth]{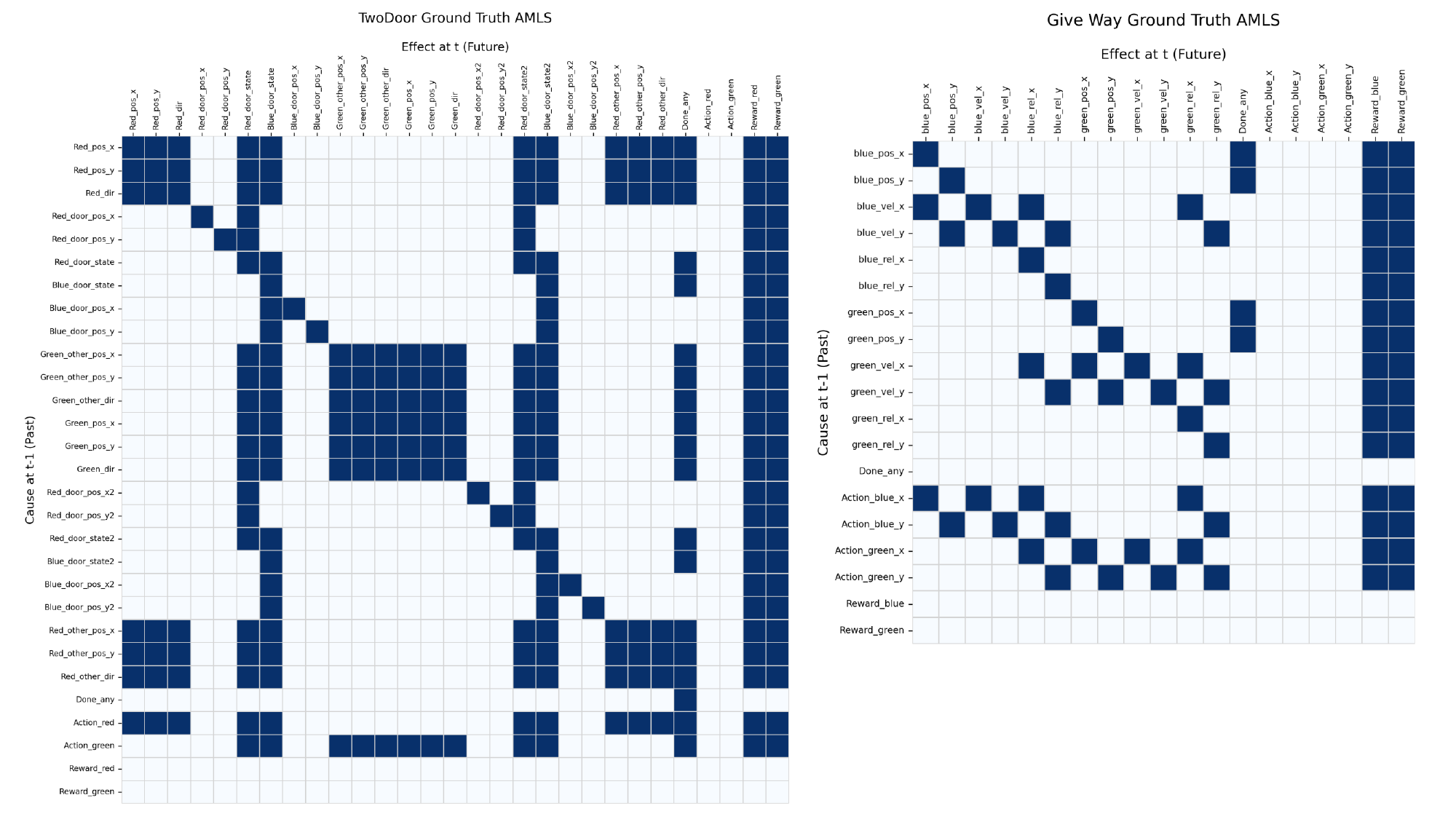}
  \caption{Ground truth causal adjacency matrices describing cause-effect relationship across time lag for environments (Left) Two-Door and (Right) Give-way under both full and partial observability. The blue square signifies a presence of  relationship. }\label{fig:adj1}
\end{figure}

\begin{figure}[htbp]
  \centering
  \includegraphics[trim={0pt 00pt 400pt 10pt}, clip, width=\linewidth]{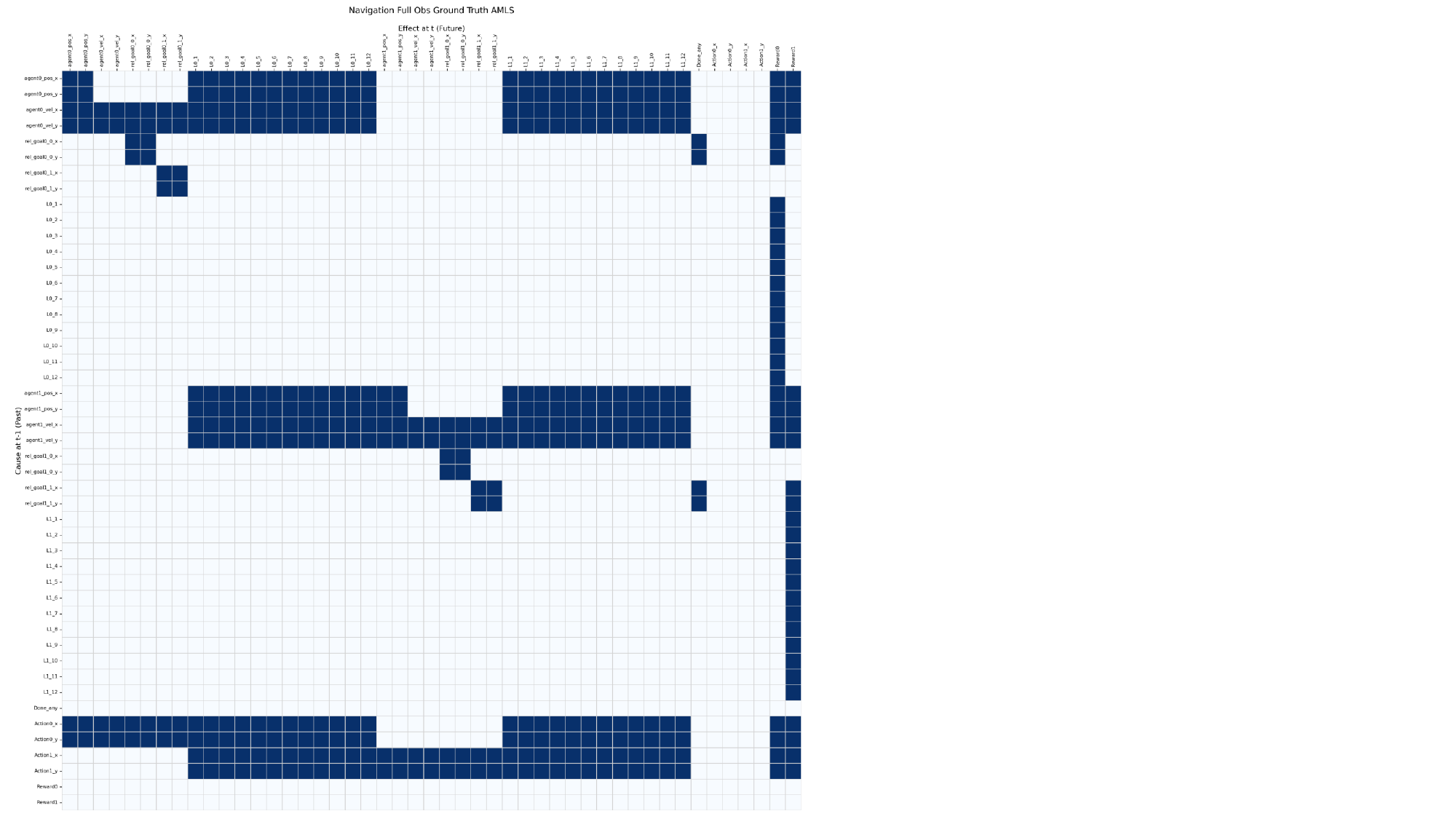}
  \caption{Ground truth causal adjacency matrices describing cause-effect relationship across time lag for environments Navigation under full observability. The blue square signifies a presence of  relationship. }\label{fig:adj2}
\end{figure}

\begin{figure}[htbp]
  \centering
  \includegraphics[trim={0pt 00pt 400pt 10pt}, clip, width=\linewidth]{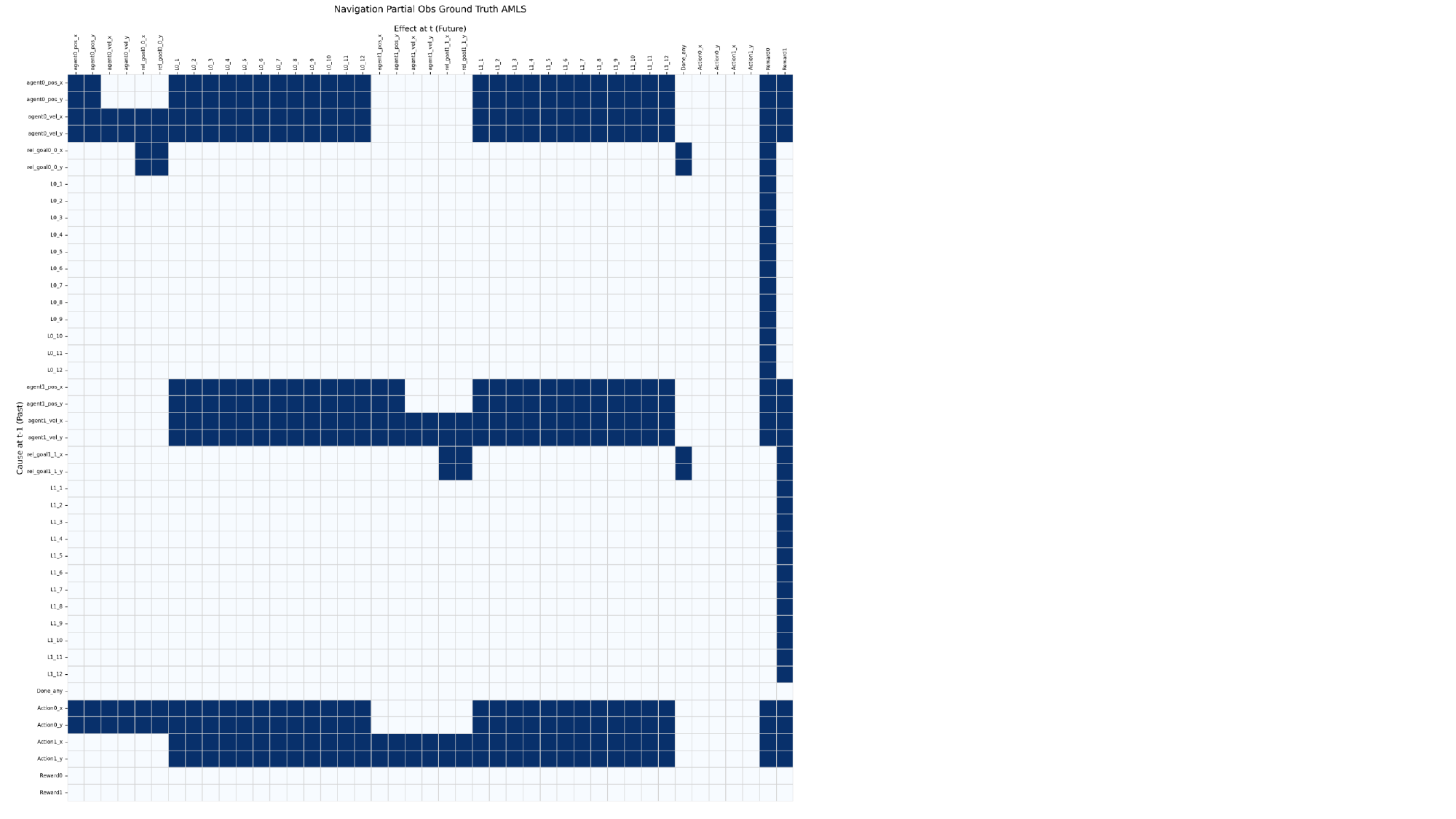}
  \caption{Ground truth causal adjacency matrices describing cause-effect relationship across time lag for environments Navigation under partial observability. The blue square signifies a presence of  relationship. }\label{fig:adj3}
\end{figure}

\section{Neural Architecture Design and Baselines}
\label{sec::baselines_appendix}

The following sections detail the architectural formalisms and baseline configurations utilized in this study. To ensure a rigorous evaluation of causal robustness, we distinguish between dense associative models and structural models constrained by causal discovery.

\subsection{Summary of Model Properties}

Table \ref{tab:model_properties} categorizes the world models by their discovery mechanism, interventional strength ($\sigma$), and masking strategy.
\definecolor{colorPCMCI}{RGB}{220, 230, 255}    
\definecolor{colorZero}{RGB}{255, 230, 230}      
\definecolor{colorSigma}{RGB}{230, 255, 230}     
\definecolor{colorDense}{RGB}{255, 240, 200}     
\definecolor{colorHadamard}{RGB}{240, 220, 255}  
\definecolor{colorLatent}{RGB}{220, 255, 255}    

\begin{table}[htbp]
\centering
\caption{Baseline configurations for causal transition modeling.}
\label{tab:model_properties}
\begin{tabular}{llll}
\toprule
\textbf{Abbreviation} & \textbf{Discovery Method} & \textbf{Data Entropy ($\sigma$)} & \textbf{Masking Operator} \\ \midrule
$\text{AWM}_{\sigma}$ & \cellcolor{gray!20} N/A & \cellcolor{colorSigma} $\sigma$ & \cellcolor{colorDense} Dense (None) \\
$\text{FOCUS}^{\odot}_{\sigma=0}$ & \cellcolor{gray!20} PC Algorithm & \cellcolor{colorZero} $0.0$ & \cellcolor{colorHadamard} Hadamard ($\odot$) \\
$\text{BC-WM}^{\odot}_{\sigma=0}$ & \cellcolor{colorPCMCI} PCMCI+ & \cellcolor{colorZero} $0.0$ & \cellcolor{colorHadamard} Hadamard ($\odot$) \\
$\text{ICWM}^{\odot}_{\sigma}$ & \cellcolor{colorPCMCI} PCMCI+ & \cellcolor{colorSigma} $\sigma$ & \cellcolor{colorHadamard} Hadamard ($\odot$) \\
$\text{R-ICWM}^{\odot}_{\sigma}$ & \cellcolor{colorPCMCI} PCMCI+ & \cellcolor{colorSigma} $\sigma$ & \cellcolor{colorHadamard} Hadamard ($\odot$) \\
$\text{C-VAE}^{\text{inj}}_{\sigma}$ & \cellcolor{colorPCMCI} PCMCI+ & \cellcolor{colorSigma} $\sigma$ & \cellcolor{colorLatent} Latent Injection \\ \bottomrule
\end{tabular}
\end{table}

\subsection{Causal Structural Masking and AMLS}
\label{sec::soft_masking}
The fundamental structural constraint in our architecture is the \textit{Adjacency Matrix with Lagged Structure} (AMLS) tensor, denoted as $\mathcal{C}$. For a system with history $\tau=1$, $\mathcal{C}$ represents the dependencies between variables at time $t$ and state transitions at $t+1$. 

To bridge discrete discovery with continuous neural optimization in our proposed framework, we implement \textit{Statistical Soft Masking} via a differentiable gating mechanism \citep{jang2017categoricalreparameterizationgumbelsoftmax,maddison2017concretedistributioncontinuousrelaxation,zheng2018dagstearscontinuousoptimization}. This modulates input features through a Log-Significance Transform ($\psi_{ij}$) relative to a significance threshold $\alpha$:
\begin{equation}
    \psi_{ij} = \log\left(\frac{\alpha}{p_{ij} + \epsilon}\right)
\end{equation}

The functional mask weight $\Gamma_{ij}$ is then computed using a sigmoid transformation scaled by a unique temperature parameter $\kappa$:
\begin{equation}
    \Gamma_{ij} = \sigma\left(\frac{\psi_{ij}}{\kappa}\right) = \frac{1}{1 + \exp(-\psi_{ij}/\kappa)}
\end{equation}

Specifically, $\psi_{ij} > 0$ (where $p_{ij} < \alpha$) indicates significant causal edges, driving $\Gamma_{ij} \to 1$ to permit information flow. Conversely, $\psi_{ij} < 0$ indicates non-causal edges, driving $\Gamma_{ij} \to 0$ to enforce \textit{Causal Shielding}. The temperature $\kappa$ controls the ``hardness'' of the gate; as $\kappa \to 0$, the mask converges to a discrete binary shutter.

\subsection{Baseline Architectures}

All MLP-based transition functions share a consistent depth of 3 hidden layers and a width of 32 neurons per agent in the framework.

\begin{enumerate}
    \item \textbf{Associative World Model ($\text{AWM}_{\sigma}$):} A standard 3-layer MLP with dense connectivity. Every input feature in the state-action vector can influence every output state prediction.
    \begin{equation}
        \hat{s}_{t+1} = \text{MLP}_3([s_t, \mathbf{a}_t]; \theta)
    \end{equation}

    \item \textbf{The FOCUS Baseline ($\text{FOCUS}^{\odot}_{\sigma=0}$):} Utilizes \citep{zhu2022offlinereinforcementlearningcausal} a binary causal tensor $\mathcal{C}$ derived from the PC algorithm. Masking is applied strictly at the input stage via the Hadamard product.
    \begin{equation}
        \hat{s}_{t+1} = \text{MLP}_3([s_t, \mathbf{a}_t] \odot \mathcal{C}; \theta)
    \end{equation}

    \item \textbf{Behavioral Cloning World Model ($\text{BC-WM}^{\odot}_{\sigma=0}$):} Employs the PCMCI+ algorithm to generate a binary tensor $\mathcal{C}$ but is trained on non-interventional expert demonstrations ($\sigma=0$).
    \begin{equation}
        \hat{s}_{t+1} = \text{MLP}_3(\mathbf{v}_t \odot \mathcal{C}; \theta)
    \end{equation}

    \item \textbf{Implicit Causal World Model ($\text{ICWM}^{\odot}_{\sigma}$):} The proposed framework. Integrates PCMCI+ structural masking via the soft gate $\Gamma$ with interventional data entropy $\sigma$ to recover invariant environmental physics.
    \begin{equation}
        \hat{s}_{t+1} = \text{MLP}_3(\mathbf{v}_t \odot \Gamma; \theta)
    \end{equation}

    \item \textbf{Recurrent Implicit Causal World Model ($\text{R-ICWM}_{\sigma}$):} Extends the proposed framework to natively handle temporal dependencies ($\tau \ge 1$) and historical conditioning ($H_k$) required by the MAS-SBC. By utilizing a Long Short-Term Memory (LSTM) architecture, the model unrolls the temporal sequence. Crucially, the statistical soft mask $\Gamma$ is applied strictly to the raw historical inputs $\mathbf{v}_k$, ensuring the hidden memory states ($h_k, c_k$) remain unmasked to fully preserve recurrent dynamics.
    \begin{align}
        \text{Masked Input:} \quad & \tilde{\mathbf{v}}_k = \mathbf{v}_k \odot \Gamma \quad \text{for } k \in \{t-\tau, \dots, t\} \\
        \text{Recurrent Update:} \quad & h_{k+1}, c_{k+1} = \text{LSTMCell}_{32}(\tilde{\mathbf{v}}_k, h_k, c_k; \theta_{\text{lstm}}) \\
        \text{State Prediction:} \quad & \hat{s}_{t+1} = \text{MLP}_{\text{2}}(h_{t+1}; \theta_{\text{head}})
    \end{align}

    \item \textbf{Injected Causal VAE ($\text{C-VAE}^{\text{inj}}_{\sigma}$):} Evaluates whether latent structural disentanglement can replace explicit input-level masking. To maintain equivalent representational capacity with the baseline architectures \citep{yang2023causalvaestructuredcausaldisentanglement}, both the inference (encoder) and generative (decoder) networks are parameterized as 3-layer Multi-Layer Perceptrons (MLPs). The discovered causal tensor $\mathcal{C}$ is injected strictly to constrain the forward transition dynamics in the latent space.
    \begin{align}
        \text{Encoder (3-layer):} \quad & h_t^{(e)} = \text{MLP}_{\phi}^{(3)}(\mathbf{v}_t) \nonumber \\
        & z_t \sim q_\phi(z \mid \mathbf{v}_t) = \mathcal{N}\big(\mu_\phi(h_t^{(e)}), \Sigma_\phi(h_t^{(e)})\big) \\
        \text{Causal Transition:} \quad & z_{t+1} = \text{ReLU}\big(\mathbf{W}_z (z_t \odot \mathcal{C}) + \mathbf{W}_a \mathbf{a}_t + b_z\big) + \epsilon, \quad \epsilon \sim \mathcal{N}(0, \mathbf{I}) \\
        \text{Decoder (3-layer):} \quad & h_{t+1}^{(d)} = \text{MLP}_{\theta}^{(3)}(z_{t+1}) \nonumber \\
        & \hat{s}_{t+1} \sim p_\theta(s \mid h_{t+1}^{(d)})
    \end{align}
\end{enumerate}

\subsection{Optimization and Loss Functions}

Training configurations are partitioned into deterministic models (MLP and LSTM variants) and generative latent models (C-VAE). All baselines use the Adam optimizer with a standardized learning rate of $1 \times 10^{-3}$. Deterministic architectures minimize Mean Squared Error (MSE) for direct state prediction, whereas the C-VAE maximizes the Evidence Lower Bound (ELBO), combining an MSE reconstruction term with a Kullback-Leibler (KL) divergence penalty.

\begin{table}[htbp]
    \centering
    \setlength{\tabcolsep}{4pt}
    \begin{minipage}[t]{0.48\textwidth}
        \centering
        \caption{Optimization for MLP \& LSTM Models}
        \label{tab:opt_mlp}
        \begin{tabular}{@{}l>{\raggedright\arraybackslash}p{0.55\textwidth}@{}}
        \toprule
        \textbf{Hyperparameter} & \textbf{Value} \\ \midrule
        Target Models & AWM, FOCUS, BC-WM, ICWM, R-ICWM \\
        Loss Function & MSE \\
        Optimizer & Adam \\
        Learning Rate ($\eta$) & $1 \times 10^{-3}$ \\
        Epochs & $50$ \\ \bottomrule
        \end{tabular}
    \end{minipage}\hfill
    \begin{minipage}[t]{0.48\textwidth}
        \centering
        \caption{Optimization for Causal VAE}
        \label{tab:opt_vae}
        \begin{tabular}{@{}l>{\raggedright\arraybackslash}p{0.55\textwidth}@{}}
        \toprule
        \textbf{Hyperparameter} & \textbf{Value} \\ \midrule
        Target Models & $\text{C-VAE}^{\text{inj}}_{\sigma}$ \\
        Loss Function & ELBO (MSE + KL Div.) \\
        Optimizer & Adam \\
        Learning Rate ($\eta$) & $1 \times 10^{-3}$ \\
        Epochs & $100$ \\ \bottomrule
        \end{tabular}
    \end{minipage}
\end{table}

\section{Evaluation Metrics}
\label{sec::metricWM}

\begin{itemize}[leftmargin=*,noitemsep,topsep=0pt]
    
    \item \textbf{Mean Squared Error (MSE):} 
    Intuitively, MSE measures the absolute physical accuracy of the internal simulator of the world model by heavily penalizing large deviations from the true future state. Formally, it quantifies predictive precision over an evaluation dataset $\mathcal{D}$ consisting of $N$ transitions. Given a state vector with dimensionality $D_{dim}$, MSE is the average squared $L_2$ norm between the predicted successor states $\hat{s}$ and the ground truth successor states $s$:
    \begin{equation}
        \text{MSE} = \frac{1}{N} \sum_{k=1}^N \sum_{d=1}^{D_{dim}} \left( \hat{s}_{k, t+1}^{(d)} - s_{k, t+1}^{(d)} \right)^2
    \end{equation}

    \item \textbf{Relative System Error (RSE):} 
    Intuitively, RSE quantifies the exact value of causal constraints by measuring how much our proposed model reduces prediction errors compared to a specific baseline. Formally, it isolates the predictive gain attributable to structural masking by normalizing the MSE of the candidate model against the baseline model trained with an interventional strength of $\sigma = 0.5$. A negative RSE denotes strict predictive improvement:
    \begin{equation}
        \text{RSE} = \frac{\text{MSE}_{\text{model}} - \text{MSE}_{\sigma=0.5}}{\text{MSE}_{\sigma=0.5}}
    \end{equation}

    \item \textbf{Continuous Structural Hamming Distance (SHD):} 
    Intuitively, SHD acts as a wiring diagram check, counting the exact number of structural mistakes such as hallucinated or missed connections the algorithm made when reverse engineering the true physical laws of the environment. Because the ICWM integrates structural constraints via a continuous soft mask the AMLS tensor $\mathcal{C}$ rather than a strict binary adjacency matrix, we extend the discrete SHD to measure continuous structural divergence. Formally, it calculates the $L_1$ Manhattan distance between the predicted continuous causal tensor $\mathcal{C}_{pred}$ and the ground truth graph $\mathcal{C}_{true}$ across all vertices $V$:
    \begin{equation}
        \text{SHD}(\mathcal{C}_{pred}, \mathcal{C}_{true}) = \sum_{i,j \in V} \left| \mathcal{C}_{pred}^{(i,j)} - \mathcal{C}_{true}^{(i,j)} \right|
    \end{equation}
\item \textbf{Spectral Distance ($\Delta_{spec}$)}
Intuitively, while SHD checks local connectivity, the Spectral Distance measures global topological equivalence. In Multi Agent Systems, agent roles are often interchangeable or equivariant under permutation. Spectral Distance captures this equivariance by comparing the fundamental vibrational modes of the causal graph, represented by the eigenvalues of the adjacency matrix. This metric proves if the model correctly identifies the underlying system dynamics regardless of specific agent permutations. Formally, given the sorted eigenvalues $\lambda$ of the predicted causal tensor $\mathcal{C}_{pred}$ and the ground truth $\mathcal{C}_{true}$, it calculates the $L_2$ distance:
\begin{equation}
    \Delta_{spec}(\mathcal{C}_{pred}, \mathcal{C}_{true}) = \sqrt{\sum_{i=1}^{|V|} \left( \lambda_i(\mathcal{C}_{pred}) - \lambda_i(\mathcal{C}_{true}) \right)^2}
\end{equation}
where $\lambda_i(\cdot)$ denotes the $i$-th eigenvalue of the adjacency representation. A low spectral distance ensures that the learned world model reproduces the long range propagation characteristics and role symmetry of the true environment dynamics.
    \item \textbf{Policy Invariance Gap ($\Delta_{inv}$):} 
    Intuitively, this gap is the ultimate test of causal understanding, proving whether the model actually learned invariant physics or simply memorized the specific habits of the expert. Formally, it verifies structural identifiability by measuring the divergence in prediction error when the model is evaluated on the expert training distribution $\mathcal{D}_{exp}$ versus an out of distribution policy regime $\mathcal{D}_{OOD}$. A gap approaching zero mathematically confirms the successful decoupling of invariant environmental physics from confounded strategic agent intent:
    \begin{equation}
        \Delta_{inv} = \text{MSE}_{\text{OOD}} - \text{MSE}_{\text{exp}}
    \end{equation}
    \item \textbf{Variance-Normalized Relative Error ($\Delta_{\text{variance}}$):}
    Intuitively, $\Delta_{\text{variance}}$ measures how much predictive information the world model has extracted about the successor state, relative to the trivial baseline of always predicting the mean successor state. Formally, it normalizes the MSE of the candidate model by the target-variance floor $\sigma^2_{\mathcal{D}}$, the error of the input-agnostic predictor $\bar{s} = \frac{1}{N}\sum_{k} s_{k,t+1}$:
    \begin{equation}
        \Delta_{\text{variance}} = \frac{\text{MSE}}{\sigma^2_{\mathcal{D}}}, \qquad \sigma^2_{\mathcal{D}} = \frac{1}{N} \sum_{k=1}^N \sum_{d=1}^{D_{dim}} \left( s_{k,t+1}^{(d)} - \bar{s}^{(d)} \right)^2
    \end{equation}
    Equivalently, $\Delta_{\text{variance}} = 1 - R^2$, where $R^2$ is the coefficient of determination \citep{pearl2009causality}; values near $0$ indicate near-perfect prediction, $\Delta_{\text{variance}} = 1$ indicates the model has learned no usable dynamics, and $\Delta_{\text{variance}} > 1$ indicates the model is worse than the mean-prediction baseline, signaling overfitting or distributional shift.

    \item \textbf{Kernel Two-Sample Test (RBF-MMD):} 
    Intuitively, MMD acts as a statistical diagnostic tool to determine if two sets of next-state samples $X=\{x_i\}_{i=1}^m$ and $Y=\{y_j\}_{j=1}^n$ originate from the same underlying distribution. Formally, we quantify this discrepancy using the squared Maximum Mean Discrepancy with an RBF kernel $k(u,v)=\exp(-\gamma\lVert u-v\rVert^2)$ \citep{gretton2012kernel}, estimated via an unbiased U-statistic:
    \begin{equation}
        \widehat{\mathrm{MMD}}^2(X,Y) = \frac{1}{m(m-1)}\sum_{i \neq i'} k(x_i, x_{i'}) + \frac{1}{n(n-1)}\sum_{j \neq j'} k(y_j, y_{j'}) - \frac{2}{mn}\sum_{i,j} k(x_i, y_j)
    \end{equation}
    The bandwidth $\gamma^{-1}$ is set using the median heuristic \citep{gretton2012kernel, scholkopf2002learning}. Significance is assessed via a permutation test; a location is flagged as faulty when the resulting $p$-value falls below $\alpha=0.05$.

    \item \textbf{Kernel Density Estimation (KDE) Difference Test:} 
    Intuitively, in continuous settings, this metric summarizes the discrepancy between full probability density functions rather than relying on scalar statistics. Formally, for a sample $\{z_i\}_{i=1}^n$, we estimate the density at query point $z$ using an isotropic Gaussian kernel \citep{rosenblatt1956, parzen1962, silverman1986density}:
    \begin{equation}
        \hat{p}(z) = \frac{1}{n}\sum_{i=1}^n \frac{1}{2\pi h^2}\exp\left(-\frac{\lVert z - z_i\rVert^2}{2h^2}\right)
    \end{equation}
    with bandwidth $h$ fixed for comparability. The global discrepancy between normal density $\hat{p}_{\text{norm}}$ and adversarial density $\hat{p}_{\text{adv}}$ is quantified by the Total Variation (TV) distance evaluated over a query grid $\mathcal{G}$:
    \begin{equation}
        \mathrm{TV}(\hat{p}_{\text{norm}}, \hat{p}_{\text{adv}}) = \frac{1}{2}\sum_{z \in \mathcal{G}} \big|\hat{p}_{\text{norm}}(z) - \hat{p}_{\text{adv}}(z)\big|\, \Delta z
    \end{equation}
    This metric identifies localized shape changes in the next-state distribution that variance-based error metrics often overlook.

\subsection{Toy Study: Validating $\Delta_{\text{variance}}$}
\label{sec:toy-delta-variance}

\subsubsection{Design}
We test $\Delta_{\text{variance}}$ on two synthetic environments where we know the true dynamics
exactly. The first is a simple linear system, $s_{t+1} = As_t + \epsilon$, where $A$ is a fixed
transition matrix and $\epsilon$ is Gaussian noise. Because we control $A$ and the noise directly, we
can cleanly test how $\Delta_{\text{variance}}$ behaves under scale changes, a degrading model, and
distribution shift, without any confounds from actual training. The second is a small two-agent
navigation task: agent 1 moves toward a goal, and agent 2's move is its own goal-directed step
\emph{plus a term that copies agent 1's move}, $\Delta p_2 = \text{step}\cdot(g_2-p_2) +
c\cdot\Delta p_1$. This coupling term is a direct causal link from agent 1 to agent 2, similar to the
inter-agent dependencies our full pipeline is meant to recover. We then build an adversarial version of
this task by changing both agents' goals, changing the step size, and \emph{flipping the sign of the
coupling} ($c \rightarrow -c$). A model that learned the original coupling direction is now simply
wrong on this adversarial data, not just noisier -- mirroring the \texttt{adversarial} evaluation set
used in our main experiments.

\subsubsection{Experiments}
All experiments use a single fixed seed; we do not average over seeds because every result here is
either an exact mathematical identity or a direct consequence of how we constructed the data, not
something that varies with random sampling. We run four experiments:
\begin{enumerate}
    \item \textbf{Scale invariance:} the same model, at the same quality, is evaluated on state
    dimensions at their normal scale and again after multiplying those dimensions by $100$.
    \item \textbf{Three regimes:} we blend between three predictors using a single knob $q$ from
    $0$ to $2$ -- the true dynamics at $q=0$, the mean predictor at $q=1$, and a deliberately
    wrong (sign-flipped) predictor at $q=2$.
    \item \textbf{Distribution shift (linear system):} a model is fit on one data distribution and
    then evaluated both on that same distribution and on a shifted one (rotated, stretched, and with
    slightly different dynamics).
    \item \textbf{Distribution shift (navigation):} a model is fit on the normal two-agent task and
    evaluated both on that task and on its adversarial, flipped-coupling version.
\end{enumerate}

\begin{figure}[htbp]
    \centering
    \includegraphics[trim={0pt 230pt 50pt 0pt}, clip, width=\linewidth]{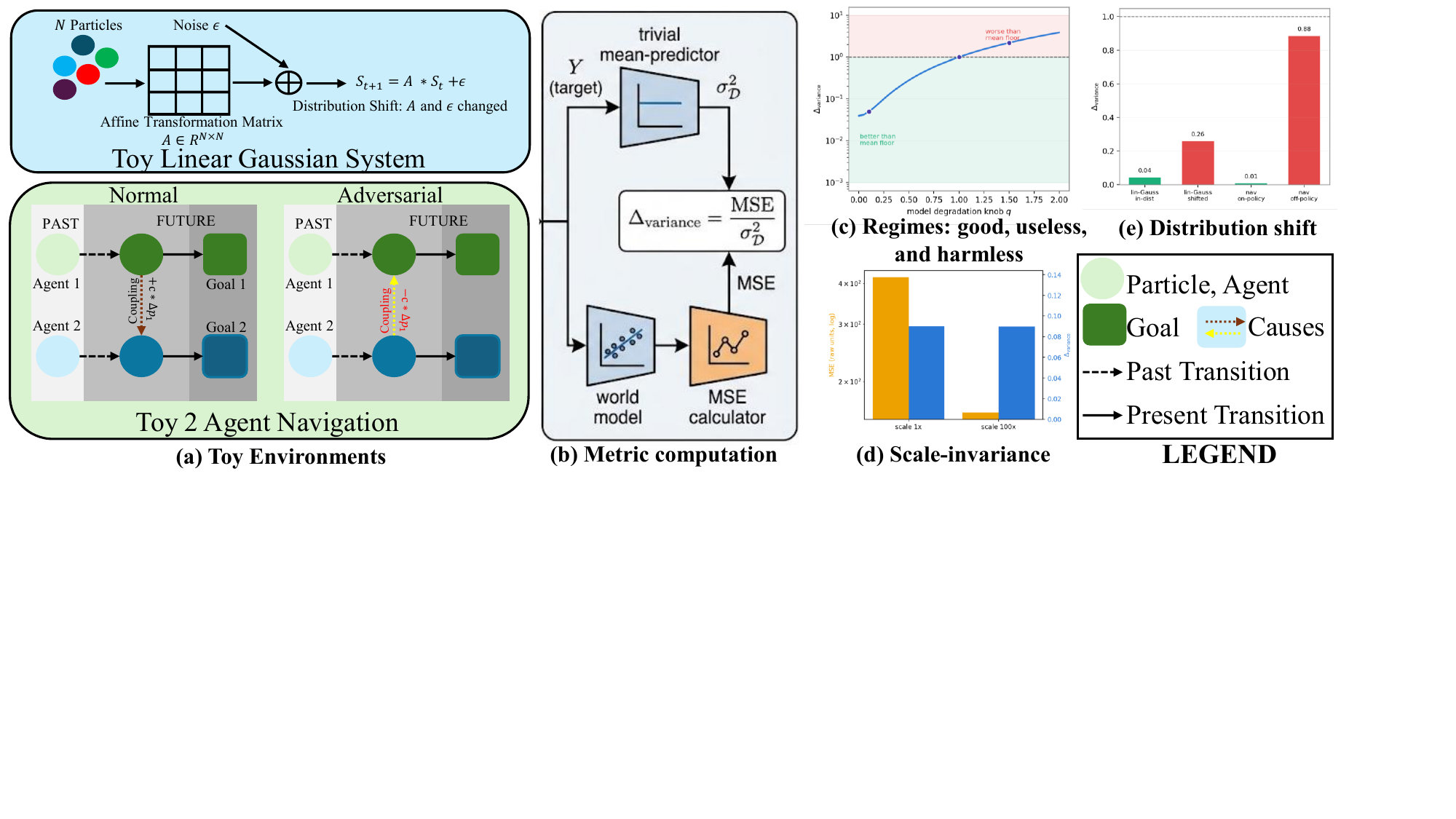}
    \caption{Overview of the $\Delta_{\text{variance}}$ testing framework. 
    \textbf{(a) Toy Environments}: Controlled linear-Gaussian and two-agent navigation systems used to isolate dynamics. 
    \textbf{(b) Metric Computation}: Pipeline normalizing model MSE by the variance floor $\sigma_{\mathcal{D}}^2$ to derive $\Delta_{\text{variance}}$. 
    \textbf{(c) Regimes}: Performance analysis tracking $\Delta_{\text{variance}}$ as models degrade from useful to harmful. 
    \textbf{(d) Scale-Invariance}: Demonstration of metric stability under target unit scaling. 
    \textbf{(e) Distribution Shift}: Quantitative evaluation of model robustness under adversarial or out-of-distribution conditions.}
    \label{fig:delta-variance-toy}
\end{figure}

\subsubsection{Reading the Figure}
Figure~\ref{fig:delta-variance-toy} has five panels. 

\textbf{Panel (a)} introduces the toy datasets for evaluation.

\textbf{Panel (b)} features the design of the metric.

\textbf{Panel (c)} shows $\Delta_{\text{variance}}$ as the model quality knob $q$ increases. The curve
starts low, in the green region ($\Delta_{\text{variance}} < 1$, model better than guessing the mean),
crosses the dashed line at exactly $\Delta_{\text{variance}} = 1$ when the model becomes the mean
predictor, and keeps rising into the red region ($\Delta_{\text{variance}} > 1$) once the model is
actively wrong. This shows that $1.0$ is not just a normalization choice -- it is a real threshold that
separates "learned something useful" from "learned nothing" or "learned something harmful."

\textbf{Panel (d)} shows the scale-invariance result. The orange bars (raw MSE, log scale) are wildly
different between the normal-scale and $100\times$-scale versions of the same problem, because MSE
depends on the units of the target. The blue bars ($\Delta_{\text{variance}}$) are essentially
identical for both, because the metric divides out the scale. This is the main practical benefit of
$\Delta_{\text{variance}}$: you can compare it across settings where raw MSE numbers would otherwise be
meaningless side by side.

\textbf{Panel (e)} shows both distribution-shift experiments together. The green bars
(in-distribution / on-policy) stay comfortably below the $\Delta_{\text{variance}} = 1$ line in both
the linear system and the navigation task. The red bars (shifted / adversarial) rise to or above that
line, showing exactly how much worse the model becomes once the data no longer matches what it was
trained on. This is the same effect we rely on when using $\Delta_{\text{variance}}$ to evaluate world
models on the \texttt{adversarial} split in our main results.

\subsection{Toy evaluation: density metrics vs.\ the variance metric}
\label{app:error-density-toy}
\begin{figure}[htbp]
    \centering
    \includegraphics[trim={0pt 200pt 50pt 0pt}, clip, width=\linewidth]{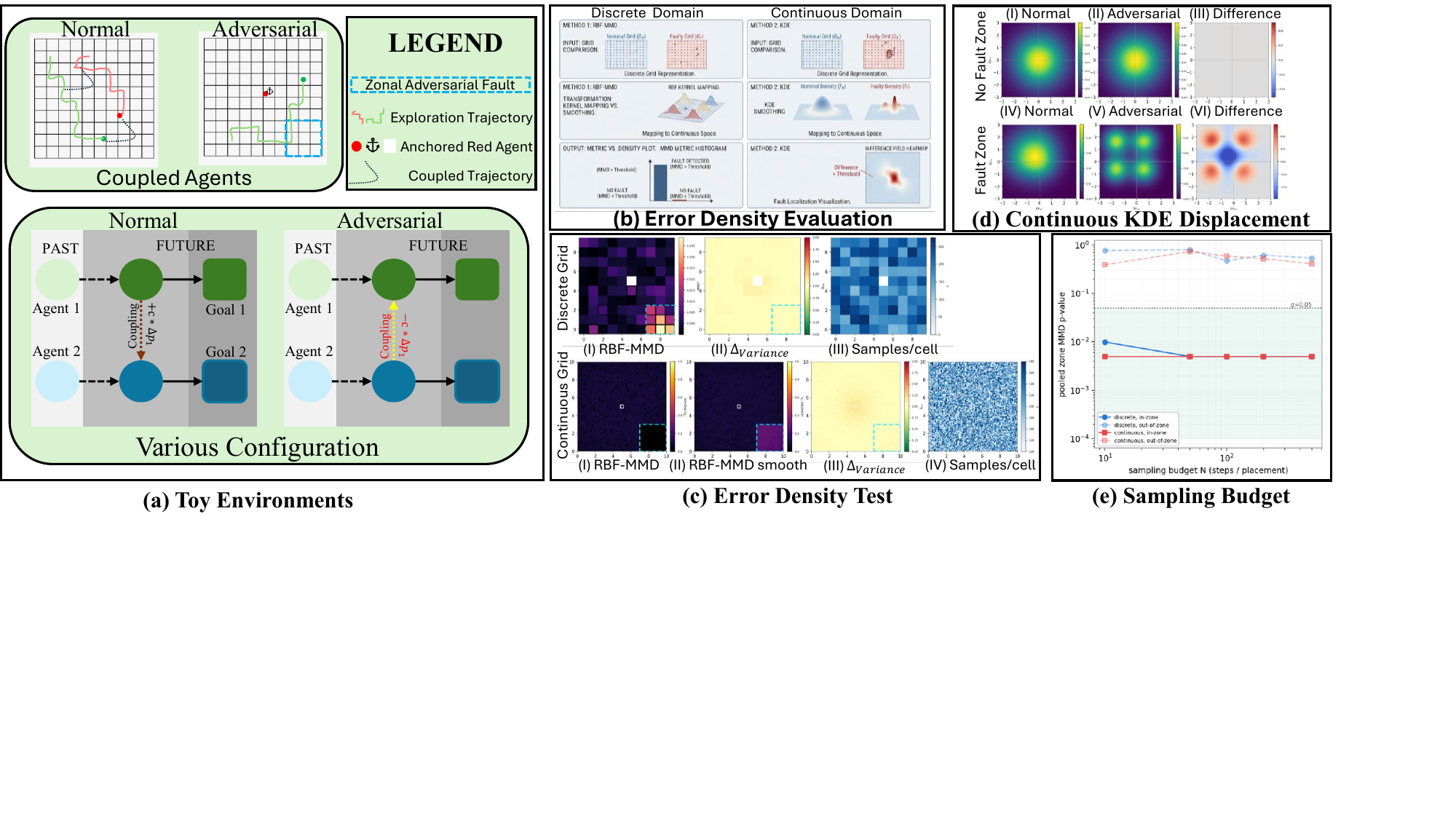}
    \caption{Toy evaluation of density-based fault detection vs.\ the variance
    metric. (a) Two-agent coupled toy world, normal vs.\ adversarial dynamics,
    with a zonal fault confined to one region. (b) Evaluation pipeline for the
    discrete and continuous grids. (c) RBF-MMD, $\Delta_{\text{variance}}$, and
    sample-coverage maps for both grids; only the density metric localizes the
    fault zone (cyan box). (d) KDE of secondary displacement, in- vs.\
    out-of-zone: the fault reshapes the distribution without changing its mean
    or variance. (e) Pooled in-zone MMD $p$-value vs.\ sampling budget for both
    grids, against the $\alpha=0.05$ significance line and the flat
    out-of-zone control.}
    \label{fig:error-density-toy}
\end{figure}

We use a small toy world to check a specific failure mode: a fault that
changes the \emph{shape} of an agent's next-step distribution while leaving
its mean and variance unchanged. Such a fault is invisible to any
variance-based metric by construction, and this toy study confirms that a
distributional metric catches it instead. Figure~\ref{fig:error-density-toy}
summarizes the setup and results.

\paragraph{Toy world (Fig.~\ref{fig:error-density-toy}a).}
Two agents share a bounded 2D area: a \emph{primary} agent, fixed at the
center, and a \emph{secondary} agent that is causally coupled to it. In the
\emph{normal} world the secondary is pulled toward the primary at every step.
In the \emph{adversarial} world this coupling is unchanged everywhere except
inside one small \emph{zonal fault} region, where the secondary's per-step
noise switches from a unimodal (Gaussian) law to a bimodal, two-point law with
the \emph{same mean and same variance}. Data is collected by anchoring the
secondary at every reachable cell in turn and sampling short random-action
trajectories, so every location is covered independently of the coupling
dynamics.

\paragraph{Evaluation protocol (Fig.~\ref{fig:error-density-toy}b).}
We instantiate the same fault in a \emph{discrete} grid and a
\emph{continuous} area. Per grid cell (discrete) or spatial bin (continuous),
we compare the normal and adversarial next-step samples with two families of
metric: (i) a distributional two-sample test — RBF-kernel MMD in the discrete
case, kernel density estimation (KDE) in the continuous case — and (ii) the
variance-normalized error metric $\Delta_{\text{variance}}$ used elsewhere in
this paper. A location is flagged as faulty when its metric exceeds a
significance threshold.

\paragraph{Spatial localization (Fig.~\ref{fig:error-density-toy}c).}
For both the discrete and continuous grids, the RBF-MMD / KDE map lights up
only inside the true fault zone (cyan dashed box), while $\Delta_{\text{variance}}$
stays flat and near its baseline value everywhere, including inside the fault
zone. The samples-per-cell/bin panels confirm coverage is uniform, so the
difference is not a sampling artifact.

\paragraph{Why $\Delta_{\text{variance}}$ is blind (Fig.~\ref{fig:error-density-toy}d).}
We isolate one bin outside the fault zone and one bin inside it and plot the
KDE of the secondary's displacement. Outside the zone, normal and adversarial
densities coincide and their difference is flat. Inside the zone, the
adversarial density splits into four lobes while the normal density stays
unimodal — the mean and spread are unchanged, only the shape differs. This is
exactly the change $\Delta_{\text{variance}}$ cannot see, and exactly what MMD/KDE
detect.

\paragraph{Detection power vs.\ sampling budget (Fig.~\ref{fig:error-density-toy}e).}
We repeat the evaluation across sampling budgets (steps per location). The
pooled in-zone MMD $p$-value falls below $\alpha=0.05$ once enough samples are
collected, for both the discrete and continuous setups, while the
out-of-zone $p$-value (negative control) stays flat near $0.5$ throughout.
This shows the detection is a genuine, budget-dependent statistical effect
rather than a fixed threshold artifact.

\paragraph{Takeaway.}
A localized, variance-preserving shape change in agent dynamics is
undetectable by $\Delta_{\text{variance}}$ at any sample size, but is reliably
caught by a distributional metric (MMD/KDE) given sufficient samples. This
motivates including a distributional check alongside variance-based error
metrics when auditing a learned world model for localized dynamics faults.

\subsection{Causal Metric Stress Tests}To validate the robustness of the proposed metrics, we conducted an empirical stress test across three graph scales ($NODE \in \{4, 10, 50\}$). We introduced independent perturbations: soft edges (confidence $0.2$) and random structural flips, then measured the metrics against a ground truth baseline $A_1$ from figure \ref{fig::demo_graph_metric} . As shown in Figure \ref{fig::demo_graph_metric}, the metrics demonstrate distinct behaviors under the triangle inequality $\Delta(A_1, A^{big=small_1 +small_2}_2) \le \Delta(A_1, A^{small_1}_2) + \Delta(A_1, A^{small_2}_2)$, where $A_2$ stands for perturbed causal structure.   \newline

\begin{figure}[htbp]
  \centering
  \includegraphics[trim={0pt 0pt 00pt 0pt}, clip, width=\linewidth]{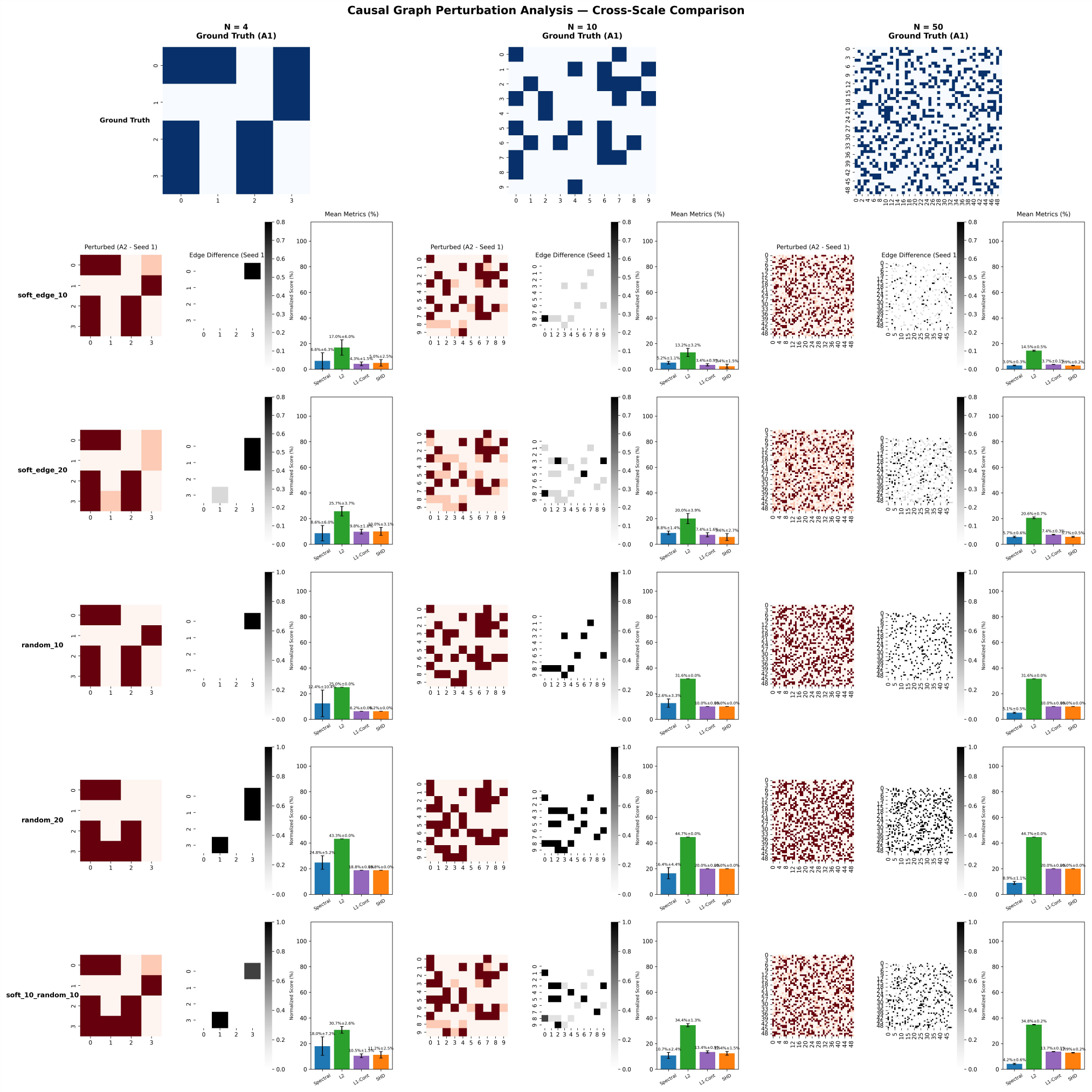}
  \caption{ Causal Graph Perturbation Analysis. We evaluate the robustness of causal metrics against systematic graph stress tests across multiple scales ($N \in \{4, 10, 50\}$). \textbf{TOP}: The Ground Truth baseline ($A_1$). \textbf{LEFT} (Scenarios $A_2$): We introduce controlled perturbations: Soft $Edge_{10}$ (randomly attenuation 10 percentage of edges), $Random_{20}$ (randomly flip 20 percentage of edges), and their Composite (a blend of both). \textbf{MIDDLE}: highlights the edge difference, \textbf{RIGHT}: impact on metrics. These scenarios test metric sensitivity to both local weight fluctuations and global structural shifts over 5 seeds.}\label{fig::demo_graph_metric}
\end{figure}

\begin{table}[htbp]
    \centering
    \caption{Triangle Inequality Verification for Causal Graph Metrics.}
    \label{tab:triangle_inequality}
    \small
    \begin{tabular}{llcccc}
        \toprule
        \textbf{NODE} & \textbf{Metric} & \textbf{$\Delta_1(A_1, A^{small_1}_2)$} & \textbf{$\Delta_2(A_1, A^{small_2}_2)$} & \textbf{$\Delta(A_1, A^{big=small_1+small_2}_2)$} & \textbf{Sum ($\Delta_1 + \Delta_2$)} \\
        \midrule
        10 
        & \cellcolor{blue!10} Spectral   & \cellcolor{blue!10} 0.263  & \cellcolor{blue!10} 0.496  & \cellcolor{blue!10} 0.720  & \cellcolor{blue!10} 0.759 \\
        & \cellcolor{green!10} $L_2$ Norm & \cellcolor{green!10} 0.680  & \cellcolor{green!10} 1.000  & \cellcolor{green!10} 1.229  & \cellcolor{green!10} 1.680 \\
        & \cellcolor{orange!10} $L_1$-Cont & \cellcolor{orange!10} 0.680  & \cellcolor{orange!10} 1.000  & \cellcolor{orange!10} 1.680  & \cellcolor{orange!10} 1.680 \\
        & \cellcolor{purple!10} SHD        & \cellcolor{purple!10} 0.800  & \cellcolor{purple!10} 1.000  & \cellcolor{purple!10} 1.800  & \cellcolor{purple!10} 1.800 \\
        \midrule
        100 
        & \cellcolor{blue!10} Spectral   & \cellcolor{blue!10} 0.517  & \cellcolor{blue!10} 1.259  & \cellcolor{blue!10} 1.072  & \cellcolor{blue!10} 1.776 \\
        & \cellcolor{green!10} $L_2$ Norm & \cellcolor{green!10} 1.318  & \cellcolor{green!10} 3.162  & \cellcolor{green!10} 3.439  & \cellcolor{green!10} 4.481 \\
        & \cellcolor{orange!10} $L_1$-Cont & \cellcolor{orange!10} 3.440  & \cellcolor{orange!10} 10.000 & \cellcolor{orange!10} 13.440 & \cellcolor{orange!10} 13.440 \\
        & \cellcolor{purple!10} SHD        & \cellcolor{purple!10} 2.400  & \cellcolor{purple!10} 10.000 & \cellcolor{purple!10} 12.400 & \cellcolor{purple!10} 12.400 \\
        \midrule
        500 
        & \cellcolor{blue!10} Spectral   & \cellcolor{blue!10} 1.483  & \cellcolor{blue!10} 2.544  & \cellcolor{blue!10} 2.092  & \cellcolor{blue!10} 4.027 \\
        & \cellcolor{green!10} $L_2$ Norm & \cellcolor{green!10} 7.274  & \cellcolor{green!10} 15.811 & \cellcolor{green!10} 17.406 & \cellcolor{green!10} 23.085 \\
        & \cellcolor{orange!10} $L_1$-Cont & \cellcolor{orange!10} 92.960 & \cellcolor{orange!10} 250.000 & \cellcolor{orange!10} 342.960 & \cellcolor{orange!10} 342.960 \\
        & \cellcolor{purple!10} SHD        & \cellcolor{purple!10} 71.600 & \cellcolor{purple!10} 250.000 & \cellcolor{purple!10} 321.600 & \cellcolor{purple!10} 321.600 \\
        \bottomrule
    \end{tabular}
\end{table}

\paragraph{Analysis of Metric Properties}
Our stress tests reveal (Table \ref{tab:triangle_inequality}) distinct mathematical behaviors regarding the triangle inequality  $\Delta(A_1, A^{big=small_1 +small_2}_2) \le \Delta(A_1, A^{small_1}_2) + \Delta(A_1, A^{small_2}_2)$. 
\begin{itemize}
    \item \textbf{Exact Equality ($L_1$ and SHD):} These metrics honor the triangle inequality as an exact equality. Because our perturbations ($A$ and $B$) are generated using disjoint supports, the cumulative structural damage is perfectly additive.
    \item \textbf{Strict Inequality ($L_2$ Norm):} The Frobenius norm exhibits strict inequality ($\Delta(A_1, A^{big=small_1 +small_2}_2) < \Delta(A_1, A^{small_1}_2) + \Delta(A_1, A^{small_2}_2)$). This is expected behavior for Euclidean distances in orthogonal perturbation spaces.
    \item \textbf{Non-Metric Behavior (Spectral Distance):} While mean values honor the triangle inequality, individual samples occasionally violate it. This confirms that the spectral spectrum is sensitive to global topological regime shifts. Small local perturbations can sometimes cancel within the eigenvalue spectrum due to symmetry, while composite perturbations break that symmetry entirely, demonstrating that the spectral representation of causal environments is non-linear.
\end{itemize}

\end{itemize}

\section{MAPPO Training and Evaluation Performance of Expert Policy}
\label{sec::expert_training}
The following Figures \ref{fig:expert}, log the mean training and testing rewards for the MAPPO experts across all experimental domains. These metrics verify that the experts achieved sufficient convergence to provide high-quality, goal-directed demonstrations for causal discovery. We select the expert policy marked with an X in the reward plots and provide a reward comparison about the policy's strength.  

\begin{figure}[htbp]
  \centering
  \includegraphics[trim={0pt 0pt 0pt 0pt}, clip, width=\linewidth]{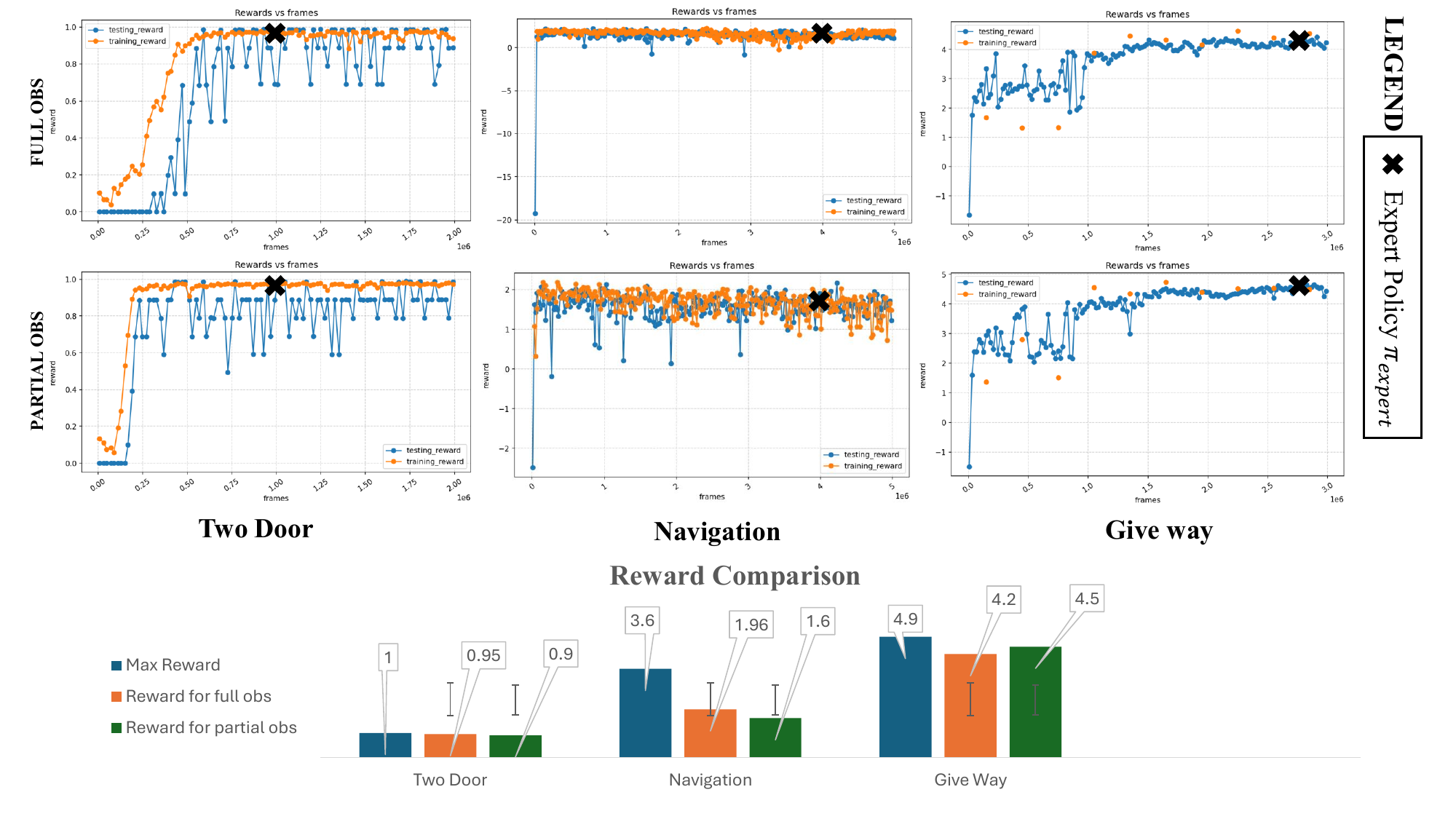}
  \caption{Quantitative evaluation of the expert policy in various domains}\label{fig:expert}
\end{figure}

\definecolor{lightgray}{gray}{0.8}
\definecolor{groupcolor}{RGB}{240, 248, 255} 

\begin{table}[htbp]
\centering
\renewcommand{\arraystretch}{1.2}
\small
\begin{tabular}{l|cc|cc|cc}
\toprule
\rowcolor{lightgray}
\textbf{Parameter} & \multicolumn{2}{c|}{\textbf{Two-Door}} & \multicolumn{2}{c|}{\textbf{Give-way}} & \multicolumn{2}{c}{\textbf{Navigation}} \\
\rowcolor{lightgray}
 & \textbf{Full} & \textbf{Part} & \textbf{Full} & \textbf{Noisy} & \textbf{Full} & \textbf{Part} \\
\midrule

\rowcolor{lightgray} \multicolumn{7}{c}{\textbf{[Env Specs]}} \\
\rowcolor{groupcolor} max\_steps & 500 & 500 & 500 & 500 & 500 & 500 \\
\rowcolor{groupcolor} fully\_obs & True & False & True & False & True & False \\
\rowcolor{groupcolor} obs\_noise & N/A & N/A & 0 & 0.03 & N/A & N/A \\
\rowcolor{groupcolor} n\_agents & 2 & 2 & 2 & 2 & 2 & 2 \\
\rowcolor{groupcolor} action\_support & 7 & 7 & $[-1,+1]$ & $[-1,+1]$ & $[-1,+1]$ & $[-1,+1]$ \\
\rowcolor{groupcolor} action\_dimension & 1 & 1 & 2 & 2 & 2 & 2 \\
\rowcolor{groupcolor} observation\_space & Discrete & Discrete & Continuous & Continuous & Continuous & Continuous \\
\rowcolor{groupcolor} action\_space & Discrete & Discrete & Continuous & Continuous & Continuous & Continuous \\

\rowcolor{lightgray} \multicolumn{7}{c}{\textbf{[Model: Actor] - Architecture Overview}} \\
\rowcolor{groupcolor} \multicolumn{7}{p{13.5cm}|}{\raggedright \textbf{Design:} Decentralized policy network. Maps local agent observations to a probability distribution over actions.} \\
\rowcolor{groupcolor} input\_channel & 8 & 8 & 6 & 6 & 20 & 18 \\
\rowcolor{groupcolor} mlp\_cells & 64x3 & 64x3 & 16x2 & 16x2 & 64x2 & 64x2 \\
\rowcolor{groupcolor} centralized & False & False & False & False & False & False \\
\rowcolor{groupcolor} share\_params & False & False & False & False & False & False \\
\rowcolor{groupcolor} activation & ReLU & ReLU & ReLU & ReLU & ReLU & ReLU \\

\rowcolor{lightgray} \multicolumn{7}{c}{\textbf{[Model: Critic] - Architecture Overview}} \\
\rowcolor{groupcolor} \multicolumn{7}{p{13.5cm}|}{\raggedright \textbf{Design:} Centralized state-value estimator. Concatenates all agent observations to map global state features to a scalar value.} \\
\rowcolor{groupcolor} input\_channel & 8 & 8 & 6 & 6 & 20 & 18 \\
\rowcolor{groupcolor} mlp\_cells & 64x3 & 64x3 & 16x2 & 16x2 & 64x2 & 64x2 \\
\rowcolor{groupcolor} centralized & True & True & True & True & True & True \\
\rowcolor{groupcolor} share\_params & False & False & False & False & False & False \\
\rowcolor{groupcolor} activation & ReLU & ReLU & ReLU & ReLU & ReLU & ReLU \\

\rowcolor{lightgray} \multicolumn{7}{c}{\textbf{[Training]}} \\
\rowcolor{groupcolor} learning\_rate & 0.0005 & 0.0005 & 0.0005 & 0.0005 & 0.0005 & 0.0005 \\
\rowcolor{groupcolor} max\_grad\_norm & 0.5 & 0.5 & 1.0 & 1.0 & 1.0 & 1.0 \\
\rowcolor{groupcolor} value\_loss\_coeff & 0.5 & 0.5 & 0.5 & 0.5 & 0.5 & 0.5 \\

\rowcolor{lightgray} \multicolumn{7}{c}{\textbf{[PPO]}} \\
\rowcolor{groupcolor} clip\_epsilon & 0.2 & 0.2 & 0.2 & 0.2 & 0.2 & 0.2 \\
\rowcolor{groupcolor} gamma & 0.9 & 0.9 & 0.99 & 0.99 & 0.99 & 0.99 \\
\rowcolor{groupcolor} lmbda & 0.8 & 0.8 & 0.9 & 0.9 & 0.9 & 0.9 \\
\rowcolor{groupcolor} entropy\_coeff & 0.001 & 0.001 & 0.01 & 0.01 & 0.01 & 0.01 \\

\rowcolor{lightgray} \multicolumn{7}{c}{\textbf{[Runtime]}} \\
\rowcolor{groupcolor} device & cuda & cuda & cuda & cuda & cuda & cuda \\
\rowcolor{groupcolor} checkpoint\_int & 20k & 20k & 20k & 20k & 20k & 20k \\

\bottomrule
\end{tabular}
\caption{Comprehensive PPO Parameter Configuration}
\label{tab:ppo_params}
\end{table}

\section{Dataset Cardinality and Evaluation }
\label{sec:appendix_cardinality}
\subsection{Scaling Dynamics and Cardinality Bounds}
The transition tensor dimension $dim$ comprises flattened endogenous state variables ($\mathbf{V}_{S}$) and the joint action vector ($\mathbf{A}_{joint}$). Covariance parameters $\Sigma_{\Theta}$ equal $\frac{dim(dim+1)}{2}$ for continuous spaces. For discrete Two-Door Coordination, $\Sigma_{\Theta}$ is scaled by 50 to account for the $2^{dim}$ complexity penalty. 

To bound the estimation error of causal dependencies within the structural causal model $\mathcal{M}$ at a 95\% confidence level ($Z \approx 1.96$), we calculate the required sample complexity. Given a target tolerance threshold ($\epsilon$ in std-div), the cardinality is determined by $|\mathcal{D}| = \lceil (1.96 / \epsilon)^2 \rceil \times \Sigma_{\Theta}$. These datasets were generated using either a pre-trained TorchRL MAPPO expert \citep{bou2023torchrldatadrivendecisionmakinglibrary, yu2022surprisingeffectivenessppocooperative} or hard-coded greedy heuristics. Dataset requirements are summarized in Table \ref{tab:sample_complexity_refined}.

\definecolor{lightgray}{RGB}{230, 230, 230}
\definecolor{lightbrown}{RGB}{245, 222, 179}
\definecolor{lightlavender}{RGB}{224, 176, 255}

\begin{table}[H]
\centering
\caption{Sample Complexity ($|\mathcal{D}|$) across Tolerance Thresholds}
\label{tab:sample_complexity_refined}
\resizebox{\textwidth}{!}{
\begin{tabular}{lcccccc}
\toprule
\textbf{Environment} & \textbf{$dim$} & \textbf{$\Sigma_{\Theta}$} & \textbf{$|\mathcal{D}|$ ($0.5$ std-div)} & \textbf{$|\mathcal{D}|$ ($0.3$ std-div)} & \textbf{$|\mathcal{D}|$ ($0.1$ std-div)} & \textbf{$|\mathcal{D}|$ ($0.05$ std-div)} \\
\midrule
Giveway Corridor   & 12 & 78        & 1,248    & 3,354   & \cellcolor{lightgray} 30,030$^*$   & \cellcolor{lightbrown} 119,886$^{**}$ \\
Two-Door (Disc.)   & 14 & $105 \times 50$ &  \cellcolor{lightgray} 84,000$^*$ & 225,750 & \cellcolor{lightbrown} 2,021,250$^{**}$ & 8,069,250 \\
Two Agent Navigation (POMDP) & 38 & 741       & 11,856   & 31,863  &  \cellcolor{lightgray} 285,285$^*$  & 1,138,917 \\
Two Agent Navigation (MDP)   & 40 & 820       & \cellcolor{lightlavender}13,120$^{***}$    & 35,260  &  \cellcolor{lightgray} 315,700$^*$  & 1,260,340 \\
Three Agent Navigation (MDP) & 66 & 2,211 & \cellcolor{lightlavender}35,376$^{***}$  & 95,074 & 851,032 & 3,397,312 \\
Four Agent Navigation (MDP) & 96 & 4,656 & \cellcolor{lightlavender}74,496$^{***}$  & 200,213 & 1,792,307 & 7,153,536 \\
\bottomrule
\end{tabular}
}
\end{table}

\begin{tcolorbox}[colback=gray!12, boxrule=0pt, sharp corners, left=3mm, right=3mm]
\small
\textbf{Selection Rationale:} Starred entries ($^*$) represent selected cardinalities. The Two-Door (Discrete) environment utilized a 0.5 std-div threshold for feasibility, while all other environments (Giveway Corridor, Navigation POMDP/MDP) utilized a 0.1 std-div threshold to ensure high-fidelity resolution of environmental causal physics. Results for these cardinalities are reported for causal discovery in Appendix~\ref{sec::discovery} (Figure~\ref{fig:discovery_metric_result}), for world-model error in Appendix~\ref{app:normal3-wm-sweep}, and for neural densities in Appendix~\ref{app:neural-density}.
\end{tcolorbox}

\begin{tcolorbox}[colback=lightbrown, boxrule=0pt, sharp corners, left=3mm, right=3mm]
\small
\textbf{High Sample Complexity Results:} Starred entries ($^{**}$) represent selected cardinalities for higher sample complexities and results are provided in the appendix: causal discovery in Figures~\ref{fig:discovery_metric_result_high_complexicity} and~\ref{fig:discovery_result_high_complexicity}, world-model error in Appendix~\ref{app:highsample-wm-sweep}, and neural densities in Appendix~\ref{app:neural-density}.
\end{tcolorbox}

\begin{tcolorbox}[colback=lightlavender, boxrule=0pt, sharp corners, left=3mm, right=3mm]
\small
\textbf{Multi-Agent Scaling:} We evaluated scalability by increasing agent count from the following choice $ \in \{2,3,4\}$. Selected configurations are marked with triple stars ($^{***}$). Full experimental results are available in Appendix~\ref{app:multiagent-lidar-wm} for world-model error and Appendix~\ref{app:nd-nolidar} for neural densities.
\end{tcolorbox}

\subsection{Evaluation Dataset Configuration}

\begin{figure}[htbp]
  \centering
  \includegraphics[trim={0pt 400pt 350pt 0pt}, clip, width=\linewidth]{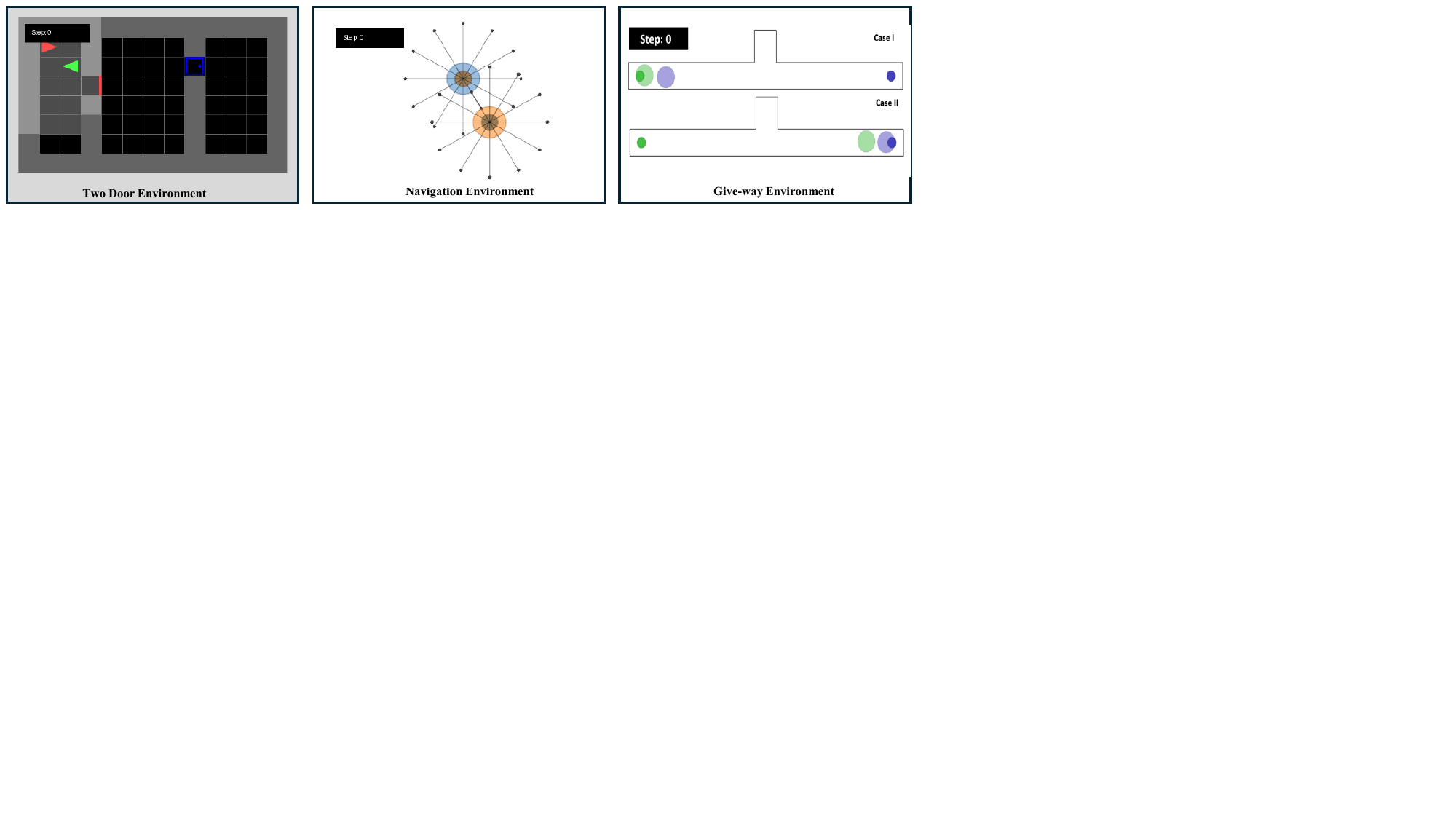}
  \caption{Starting State of environments used to capture an adversarial agents intent.}\label{fig:adv_env}
\end{figure}
Figure \ref{fig:adv_env} is shown to be starting states of the respective enviorments that facilitate the capture of the adversarial dataset ($\mathcal{D^{\ddagger\ddagger}}$). The rational has been explained below:
\begin{enumerate}
    \item \textbf{Two-Door Sequential Environment}
    \begin{itemize}
        \item \textit{Adversarial Starting Configuration:} Both agents are initialized at random, non-overlapping coordinates and orientations inside the leftmost room. The red door (which controls access from the left to the middle room) is initialized in an already open state, while the blue door (controlling access from the middle to the right room) is initialized as closed.
        \item \textit{Conceptual Objective:} This configuration evaluates whether agents can recognize the environmental shortcut (an already open red door) and immediately navigate through the middle room to unlock the blue door, rather than redundantly attempting to trigger the red door's mechanism.
    \end{itemize}

    \item \textbf{Multi-Agent Navigation}
    \begin{itemize}
        \item \textit{Adversarial Starting Configuration:} The starting position of each agent is set to be exactly coincident with the target destination of the opposing agent. That is, Agent 0 begins at Goal 1, and Agent 1 begins at Goal 0.
        \item \textit{Conceptual Objective:} This configuration represents a complete positional swap. It tests the agents' ability to coordinate path selection and resolve conflicts from the very first timestep, as their direct, straight-line trajectories to their goals are in immediate, head-on opposition.
    \end{itemize}

    \item \textbf{Giveaway Environment}
    \begin{itemize}
        \item \textit{Adversarial Starting Configuration:} Both agents are initialized at the same extreme side of the corridor (either both on the far left or both on the far right). We enforce a relative spatial constraint where the Green agent starts strictly to the left of the Blue agent.
        \item \textit{Conceptual Objective:} Placing both agents on the same side of the bottleneck with the Green agent positioned to the left of the Blue agent prevents direct, independent path execution. Because the agents' starting coordinates and respective destinations create an asymmetric blockage before they even reach the central passage, they are forced to negotiate the bottleneck from a highly constrained, asymmetric initial state.
    \end{itemize}
\end{enumerate}

Testing utilizes three OOD 50-episode datasets (Table \ref{tab:dataset-overview}) to isolate causal grounding:

\begin{table}[h]
\centering
\caption{Overview of dataset configurations used for stress-testing the Causal World Model dynamics.}
\label{tab:dataset-overview}
\begin{tabular}{@{}ll@{}}
\toprule
\textbf{Dataset} & \textbf{Purpose} \\ 
\midrule
\rowcolor{green!20} $\mathcal{D}^\ddagger_{\sigma=0.5}$ (Biased-Optimal) & Measures predictive precision under moderate distributional drift. \\
\rowcolor{orange!30} $\mathcal{D}^\ddagger_{\sigma=0.9}$ (Random) & Forces reliance on invariant physics by removing strategic intent. \\
\rowcolor{red!20} $\mathcal{D}^{\ddagger\ddagger}$ (Adversarial) & Stress test the modeling dynamics under malicious agent's intent. \\
\bottomrule
\end{tabular}
\end{table}

\subsection{Quality of the Collected datasets ($\mathcal{D}^N_\sigma$)}
Please refer to table \ref{tab:demo_metrics} for the quality of demonstrations. We emphasis when similar datasets will be collected in different user cases of economic data, robotics system data each agent will have some cost associated with it and 

\definecolor{EnvHeader}{RGB}{230, 230, 230}
\definecolor{GiveWayRow}{RGB}{245, 245, 245}  
\definecolor{TwoDoorRow}{RGB}{230, 240, 255}   
\definecolor{NavRow}{RGB}{230, 255, 230}       

\begin{table}[htbp]
    \centering

    \renewcommand{\arraystretch}{1.2}
    \small
    \begin{tabular}{l | cc | cc | cc}
    \toprule
     & \multicolumn{2}{c|}{\textbf{Reward (Avg $\pm$ Std)}} & \multicolumn{2}{c|}{\textbf{Ep. Length (Avg $\pm$ Std)}} & \multicolumn{2}{c}{\textbf{Episodes (Avg $\pm$ Std)}} \\
    \textbf{$\sigma$} & \textbf{Full} & \textbf{Part} & \textbf{Full} & \textbf{Part} & \textbf{Full} & \textbf{Part} \\
    \midrule
    
    \rowcolor{EnvHeader} \multicolumn{7}{l}{\textbf{Give Way}} \\
    \rowcolor{GiveWayRow} 0.99 & -0.0011±0.0001 & -0.0003±0.0001 & 500.0±0.0 & 500.0±0.0 & 61.0±0.0 & 61.0±0.0 \\
    \rowcolor{GiveWayRow} 0.75 & -0.0018±0.0003 & -0.0014±0.0002 & 500.0±0.0 & 500.0±0.0 & 61.0±0.0 & 61.0±0.0 \\
    \rowcolor{GiveWayRow} 0.50 & 0.0025±0.0004 & 0.0013±0.0006 & 500.0±0.0 & 500.0±0.0 & 61.0±0.0 & 61.0±0.0 \\
    \rowcolor{GiveWayRow} 0.25 & -0.0020±0.0004 & -0.0019±0.0002 & 500.0±0.0 & 500.0±0.0 & 61.0±0.0 & 61.0±0.0 \\
    \rowcolor{GiveWayRow} 0.01 & -0.0051±0.0004 & -0.0049±0.0001 & 500.0±0.0 & 500.0±0.0 & 61.0±0.0 & 61.0±0.0 \\
    
    \rowcolor{EnvHeader} \multicolumn{7}{l}{\textbf{Two-Door}} \\
    \rowcolor{TwoDoorRow} 0.99 & 0.900±0.006 & 0.852±0.013 & 50.5±2.9 & 74.5±7.2 & 1683.0±103.5 & 1128.8±93.6 \\
    \rowcolor{TwoDoorRow} 0.75 & 0.327±0.334 & 0.655±0.285 & 247.9±53.6 & 158.6±93.3 & 734.6±715.1 & 870.2±511.0 \\
    \rowcolor{TwoDoorRow} 0.50 & 0.270±0.054 & 0.428±0.348 & 314.1±34.2 & 250.7±108.9 & 270.8±32.8 & 540.8±404.9 \\
    \rowcolor{TwoDoorRow} 0.25 & 0.170±0.076 & 0.205±0.114 & 249.2±5.6 & 290.5±47.9 & 327.8±38.3 & 286.0±45.4 \\
    \rowcolor{TwoDoorRow} 0.01 & 0.111±0.017 & 0.109±0.008 & 243.6±10.6 & 239.3±8.8 & 347.8±11.2 & 353.0±13.9 \\
    
    \rowcolor{EnvHeader} \multicolumn{7}{l}{\textbf{Navigation}} \\
    \rowcolor{NavRow} 0.99 & 0.0038±0.0001 & 0.0026±0.0001 & 181.3±2.8 & 234.7±6.1 & 1756.2±28.7 & 1264.4±34.6 \\
    \rowcolor{NavRow} 0.75 & 0.0062±0.0001 & 0.0058±0.0001 & 129.9±0.7 & 135.7±1.7 & 2434.6±14.3 & 2107.0±26.5 \\
    \rowcolor{NavRow} 0.50 & 0.0080±0.0002 & 0.0077±0.0001 & 159.2±1.2 & 174.0±1.8 & 1986.6±15.6 & 1642.4±17.4 \\
    \rowcolor{NavRow} 0.25 & 0.0087±0.0004 & 0.0083±0.0002 & 300.1±5.3 & 319.9±2.5 & 1054.4±19.2 & 892.4±6.8 \\
    \rowcolor{NavRow} 0.01 & 0.0001±0.0011 & -0.0028±0.0022 & 499.4±0.4 & 499.6±0.3 & 632.2±0.4 & 571.2±0.2 \\
    \bottomrule
    \end{tabular}
    \caption{Performance metrics organized by environment and $\sigma$ configuration.}
    \label{tab:demo_metrics}
\end{table}

\subsection{Evaluation Dataset Configuration: two, three, four agent navigation}

\begin{figure}[htbp]
  \centering
  \includegraphics[trim={0pt 175pt 320pt 00pt}, clip, width=0.6\linewidth]{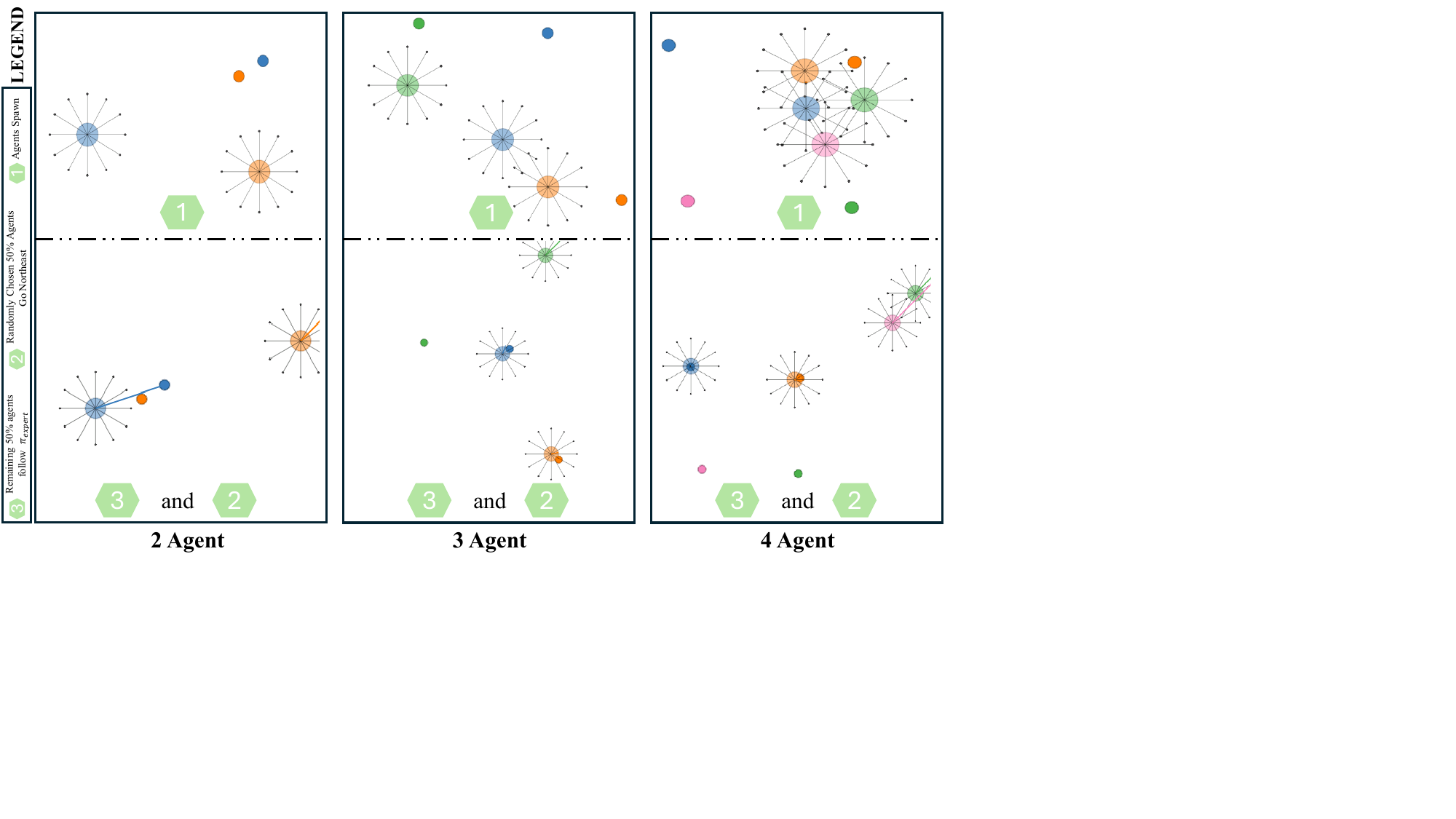}
  \caption{Evaluation dataset configuration for multi-agent navigation scenarios involving two, three, and four agents. The visualization illustrates the operational protocol: (1) agent spawn, (2) random trajectory assignment where 50\% of agents execute a Northeast move, and (3) adherence to the expert policy ($\pi_{\text{expert}}$) for the remaining 50\% of agents.}\label{fig:adv_env_multiple}
\end{figure}
To evaluate the environment configurations illustrated in Figure \ref{fig:adv_env_multiple}, we collected the dataset $\mathcal{D}^{\ddagger\ddagger}$, comprising 50 episodes with a 500-step rollout length. This configuration is calibrated to model robust transition dynamics; effectively capturing the dependencies inherent in coordinated across-agents choices is essential for the world model to perform better on this dataset.
\section{Causal Discovery Setup and Results}

\subsection{Causal Discovery Experimental Parameters}
Table~\ref{tab:causal_discovery_params} summarizes the configuration parameters used for the PC and PCMCI causal discovery algorithms across all experimental environments and observation modes.

\definecolor{lightgray}{gray}{0.8}
\definecolor{groupcolor}{RGB}{240, 248, 255} 

\begin{table}[htbp]
\centering
\renewcommand{\arraystretch}{1.2}
\small
\begin{tabular}{l|cc|cc|cc}
\toprule
\rowcolor{lightgray}
\textbf{Parameter} & \multicolumn{2}{c|}{\textbf{Two-Door}} & \multicolumn{2}{c|}{\textbf{Give-way}} & \multicolumn{2}{c}{\textbf{Navigation}} \\
\rowcolor{lightgray}
 & \textbf{Full} & \textbf{Part} & \textbf{Full} & \textbf{Noisy} & \textbf{Full} & \textbf{Part} \\
\midrule
\rowcolor{lightgray} \multicolumn{7}{c}{\textbf{[Causal Graph Parameters]}} \\
\rowcolor{groupcolor} discovery\_algorithm & PC / PCMCI & PC / PCMCI & PC / PCMCI & PC / PCMCI & PC / PCMCI & PC / PCMCI \\
\rowcolor{groupcolor} CI\_test\_name & Gsquared & Gsquared & robust\_parcorr & robust\_parcorr & robust\_parcorr & robust\_parcorr \\
\rowcolor{groupcolor} Ci\_test\_significance & analytic & analytic & analytic & analytic & analytic & analytic \\
\rowcolor{groupcolor} alpha\_level & 0.005 & 0.005 & 0.005 & 0.005 & 0.005 & 0.005 \\
\rowcolor{groupcolor} pc\_alpha & 0.1 & 0.1 & 0.1 & 0.1 & 0.1 & 0.1 \\
\rowcolor{groupcolor} tau\_min & 1 & 1 & 1 & 1 & 1 & 1 \\
\rowcolor{groupcolor} tau\_max & 1 & 1 & 1 & 1 & 1 & 1 \\
\rowcolor{groupcolor} max\_cond\_parent\_count & 3 & 3 & 3 & 3 & 3 & 3 \\
\rowcolor{groupcolor} missing\_value & -99 & -99 & -99 & -99 & -99 & -99 \\
\rowcolor{groupcolor} missing\_padding\_length & 5 & 5 & 5 & 5 & 5 & 5 \\
\rowcolor{groupcolor} discreate\_reward & true & true & false & false & false & false \\
\bottomrule
\end{tabular}
\caption{Causal Discovery Configuration Parameters for PC and PCMCI Algorithms}
\label{tab:causal_discovery_params}
\end{table}

\subsection{Evaluation Results: on three primary enviorments}
Figure~\ref{fig:discovery_metric_result} presents the evaluation results for the recovered causal graphs, illustrating how discovery algorithm performance varies across different interventional strengths. A crucial misinterpretation would be to view Figure~\ref{fig:discovery_metric_result} as evidence that the PC algorithm is superior to the PCMCI algorithm; this appearance is an artifact of the experimental design, where the number of transitions is fixed and normalized to isolate the effects of information gain relative to interventional strength.

\begin{figure}[htbp]
  \centering
  \includegraphics[trim={0pt 0pt 0pt 30pt}, clip, width=\linewidth]{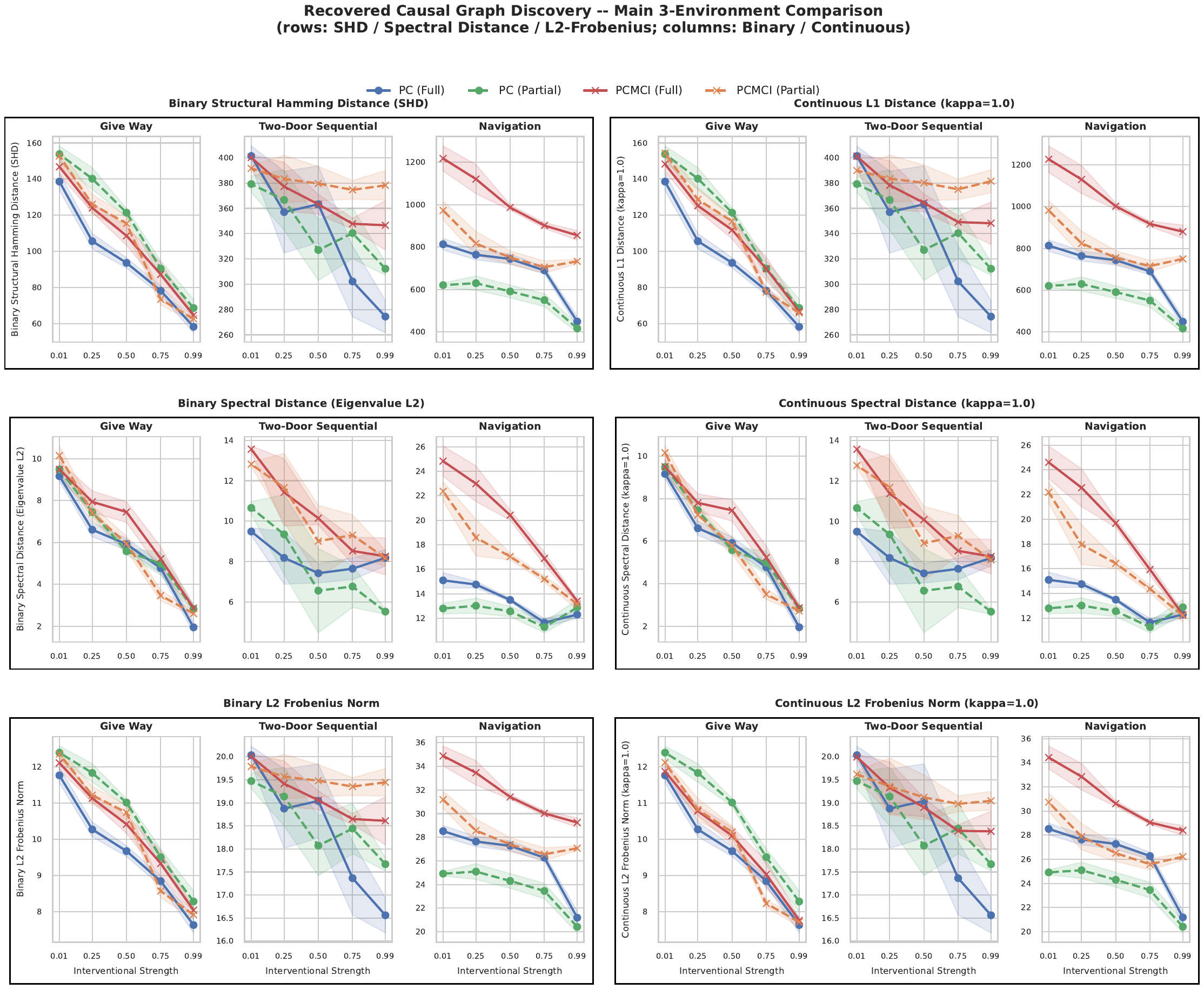}
  \caption{\textbf{Recovered causal-graph error falls as interventional strength grows, across every environment, metric, and observation mode.}
  Each of the six bordered panels reports one graph-error metric against the interventional strength $\sigma$ of the training data, for the three coordination domains (give-way, two-door, navigation).
  Rows: Structural Hamming Distance (SHD), spectral distance (eigenvalue $L_2$), and $L_2$ Frobenius norm; columns: the binary adjacency and the continuous soft-mask ($\kappa\!=\!1.0$) formulations of each.
  Curves compare the PC and PCMCI discovery algorithms under full and partial/noisy observation; shaded bands are $\pm1$ standard deviation over five seeds.
  Graph error decreases monotonically with $\sigma$ in nearly every cell: stronger soft interventions turn observational demonstrations into an identifiable causal structure, and the effect is consistent whether error is measured locally (SHD, Frobenius) or spectrally, and whether the graph is read as a binary skeleton or a continuous mask.}\label{fig:discovery_metric_result}
\end{figure}

\subsection{Evaluation Results: High sample complexity experiments}

\begin{figure}[htbp]
  \centering
  \includegraphics[trim={0pt 0pt 0pt 30pt}, clip, width=\linewidth]{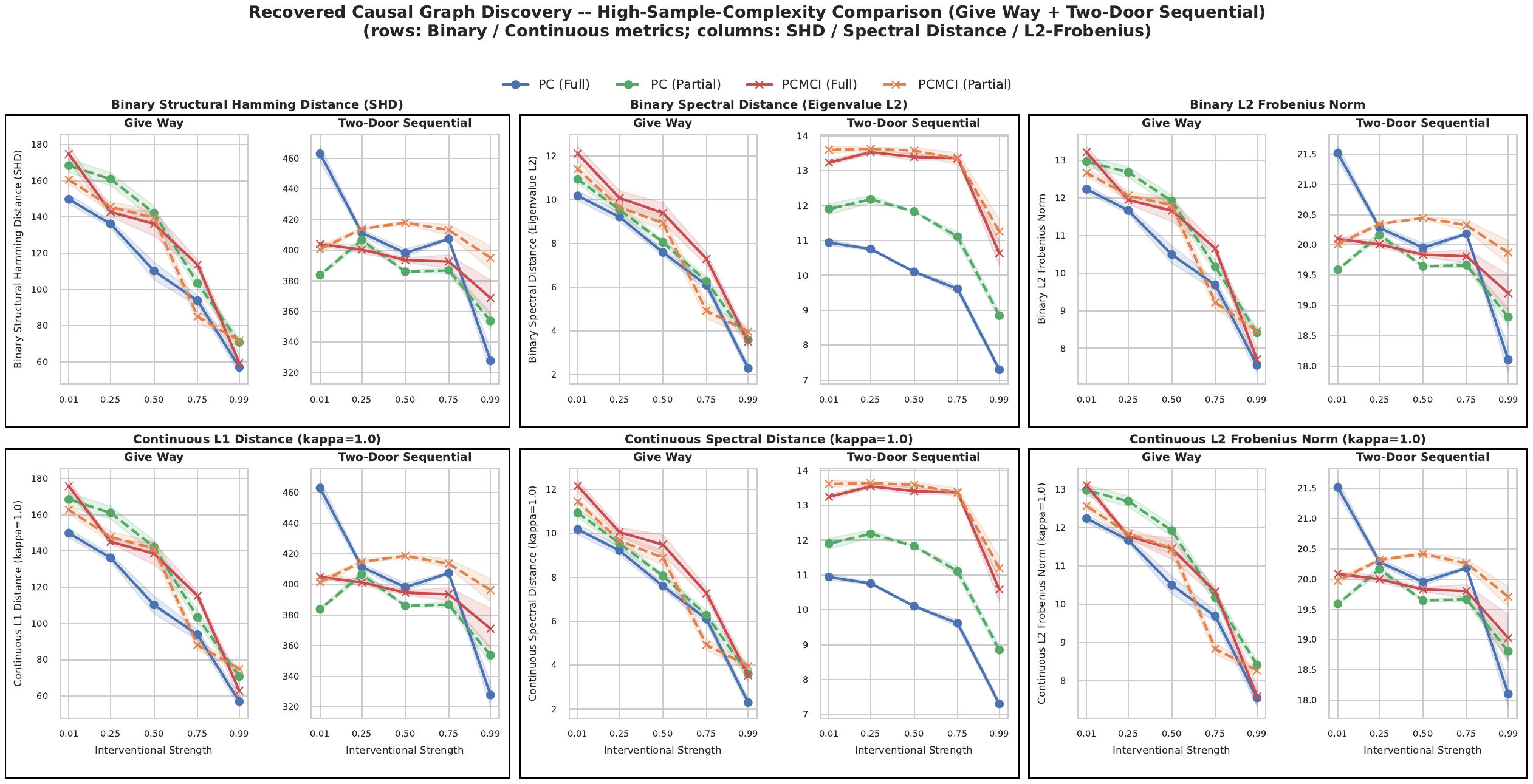}
  \caption{\textbf{At high sample complexity, the intervention--recovery trend holds and the discovered graphs sharpen further.}
  Recovered-graph error versus interventional strength $\sigma$ for the give-way and two-door domains trained on the high-sample datasets (give-way $\approx$30k--120k transitions, two-door $\approx$84k--2M).
  The six bordered panels span two metric families (top row: binary adjacency; bottom row: continuous soft-mask, $\kappa\!=\!1.0$) across three metrics per row (Structural Hamming Distance, spectral distance, and $L_2$ Frobenius norm).
  Curves compare the PC and PCMCI discovery algorithms under full and partial/noisy observation; shaded bands are $\pm1$ standard deviation over five seeds, and are visibly tighter than in the standard-sample regime.
  The downward trend with $\sigma$ persists in every cell, and partial-observation recovery closely tracks full observation because both domains share the same state-space dimension---so the parent--child dependencies the algorithm must recover are invariant across the two modes.}\label{fig:discovery_metric_result_high_complexicity}
\end{figure}

The results in Figure \ref{fig:discovery_metric_result_high_complexicity} show that increasing sample complexity significantly improves recovery stability. The thin uncertainty bands in Figure confirm that the algorithms produce consistent results across different seeds.

Under partial observation settings in Figure, the models successfully recover the causal structure. This is possible because both environments share the same state space dimension. According to causal discovery theory, this recovery is effective as long as the parent-child relationships remain invariant across environment steps. By relying on these fixed dependencies, the framework generalizes the causal skeleton across different tasks despite incomplete state observability.

\begin{figure}[htbp]
  \centering
  \includegraphics[trim={0pt 0pt 0pt 30pt}, clip, width=\linewidth]{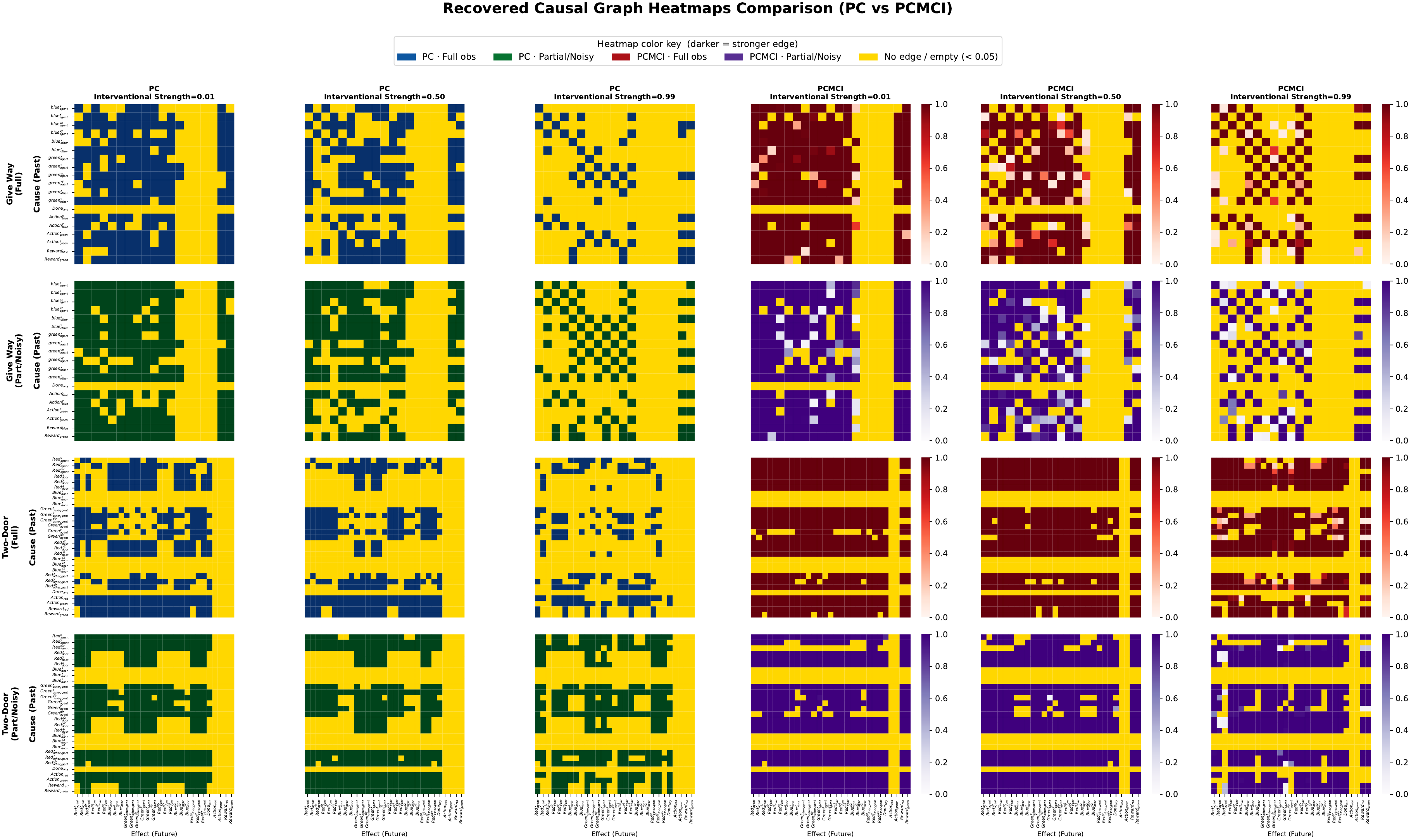}
  \caption{This figure displays recovered causal graph structures for Give-Way and Two-Door environments under full and partial/noisy observation settings. The heatmaps demonstrate algorithm performance across varying intervention strengths (0.01, 0.50, 0.99), with the color key indicating edge strength and yellow denoting empty or non-significant entries (< 0.05).}\label{fig:discovery_result_high_complexicity}
\end{figure}
The recovered causal graphs in Figure \ref{fig:discovery_result_high_complexicity} demonstrate that stronger interventional strength consistently leads to a graph structure that aligns more closely with the underlying environment physics. By systematically perturbing the system, we reveal causal mechanisms that remain obscured under purely observational data.

We adopt distinct interpretations for the two discovery approaches. The PC algorithm outputs are best suited for testing discrete edge hypotheses, providing a clear, binary structure of causal dependencies.  In contrast, PCMCI coupled with soft masking offers a fuzzy perspective of the causal graph. This probabilistic representation is advantageous, as it allows the subsequent parameter learning phase to absorb and mitigate minor masking errors, resulting in a more robust and flexible world model.  By assigning continuous weights to potential edges, the model avoids the brittleness of hard-thresholding; if a specific edge is misidentified, the model retains the ability to learn away this inaccuracy during the training of the dynamics model rather than being forced to rely on a fundamentally incorrect structural assumption.

\subsection{Evaluation Results: two, three, four agent navigation}
\begin{figure}[htbp]
    \centering
    \begin{subfigure}[b]{0.48\textwidth}
        \centering
        \fbox{\includegraphics[trim={0pt 0pt 0pt 30pt}, clip, width=\textwidth]{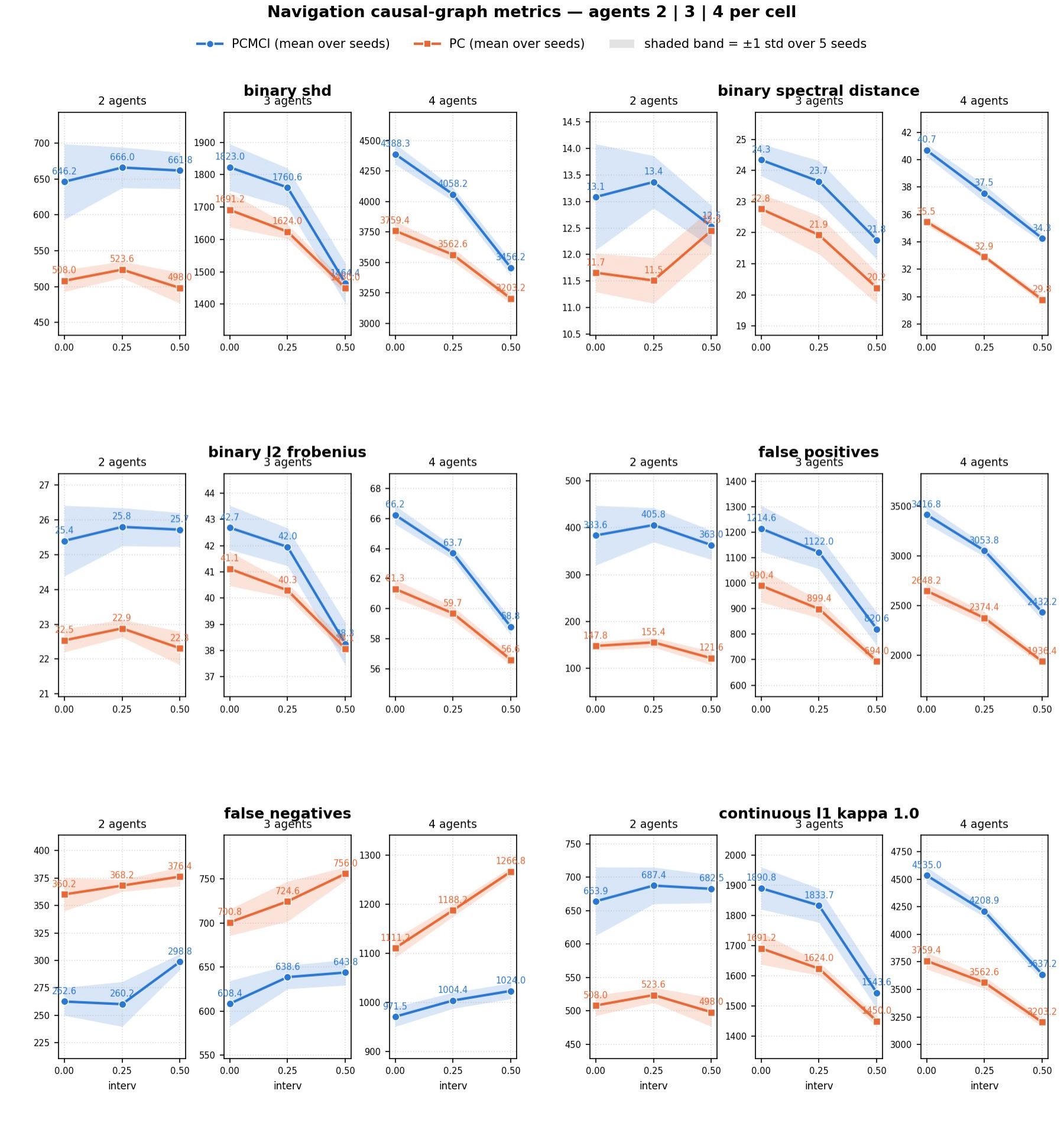}}
        \caption{Has LiDAR dimensions}
    \end{subfigure}
    \hfill
    \begin{subfigure}[b]{0.48\textwidth}
        \centering
        \fbox{\includegraphics[trim={0pt 0pt 0pt 30pt}, clip, width=\textwidth]{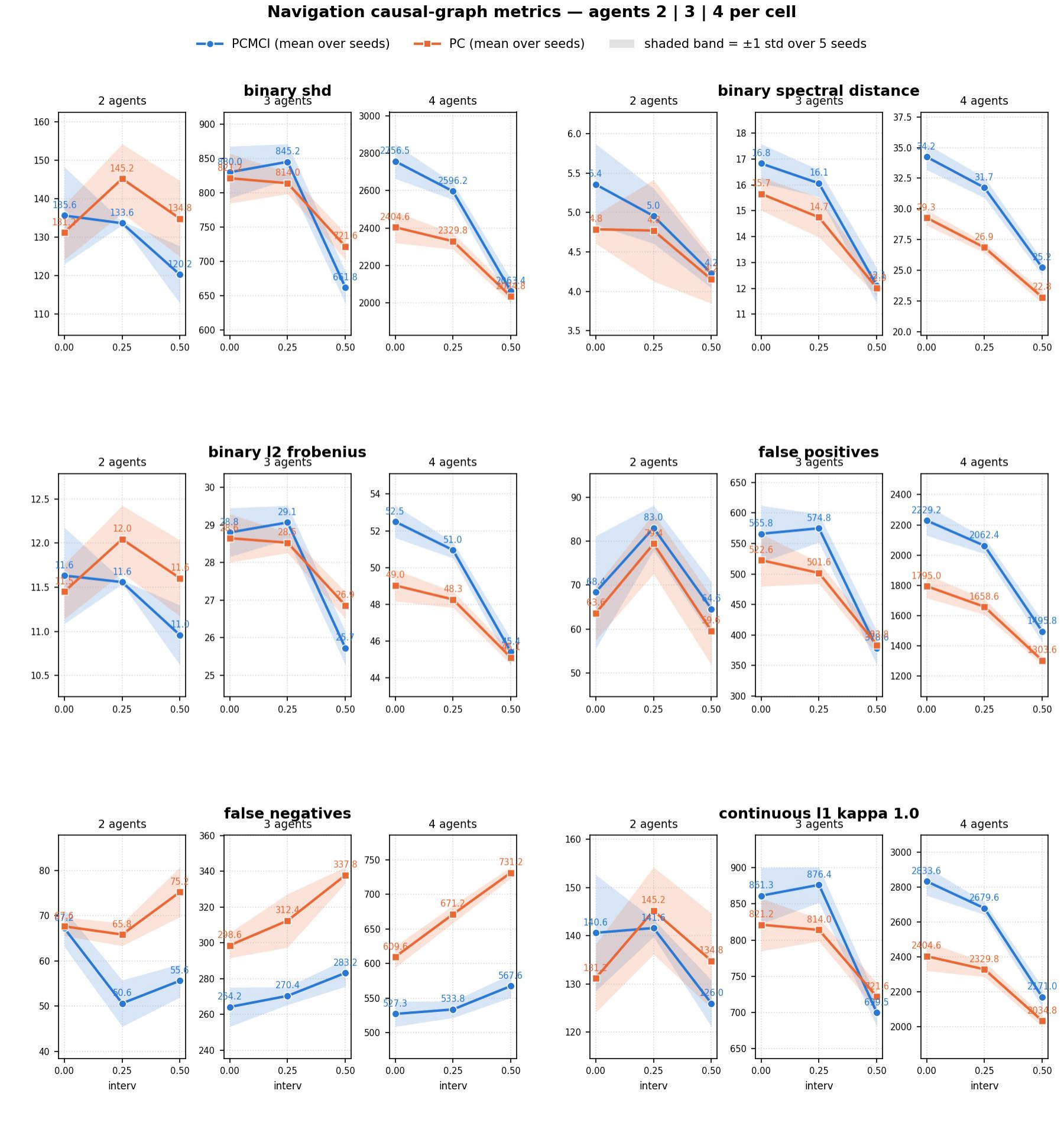}}
        \caption{No LiDAR dimensions}
    \end{subfigure}
   \caption{Performance metrics for PCMCI and PC algorithm's recovered graph in multi-agent navigation. Metrics include Binary SHD, Spectral Distance, Frobenius Norm, and error rates testing for intervention strength.}
    \label{fig:graph_metric_navigation_2_3_4}
\end{figure}

From figure \ref{fig:graph_metric_navigation_2_3_4}(a) the graph recovery metric error does not drop to zero because the environment is deterministic. Using 12 LiDAR scans per agent violates the faithfulness assumption, which requires independent noise. The tests mistake physical constraints for causal dependencies. A simpler way to describe the issue would be a near proximity agent will likely showup accross multiple LIDAR scan angles for the ego agent we are interested in. Interventions cannot fix these errors from finite data because the underlying physics remains deterministic (One LiDAR scan dimension can be represented as a linear combination of other LiDAR scan dimensions and system variables).

Despite these challenges, our modeling performs well on the remaining features. This success calls us to represent the state space of this task in a more rank-sufficient way in the future. By compressing redundant LiDAR information into a lower-dimensional representation, we can mitigate the impact of deterministic correlations on the causal discovery process (figure \ref{fig:graph_metric_navigation_2_3_4}(b)).

We formalize this rank-sufficient state space $s_t$ at time step $t$ as:
\begin{equation}
    s_t = \{p_t, \phi_t, \tilde{\mathbf{L}}_t, \mathbf{A}_t\}
\end{equation}
where:
\begin{itemize}
    \item $p_t = (x_t, y_t)$: Agent spatial coordinates.
    \item $\phi_t$: Agent orientation.
    \item $\tilde{\mathbf{L}}_t$: The compressed, rank-sufficient representation of the raw LiDAR vector.
    \item $\mathbf{A}_t$: Relative state information of other agents.
\end{itemize}

\section{World Model Setup and Results}
\label{app:wm_results}
\subsection{Evaluation Results: on three primary environments}

\label{app:normal3-wm-sweep}

\begin{figure}[htbp]
    \centering
    \includegraphics[trim={0pt 0pt 0pt 33pt}, clip,width=0.95\textwidth]{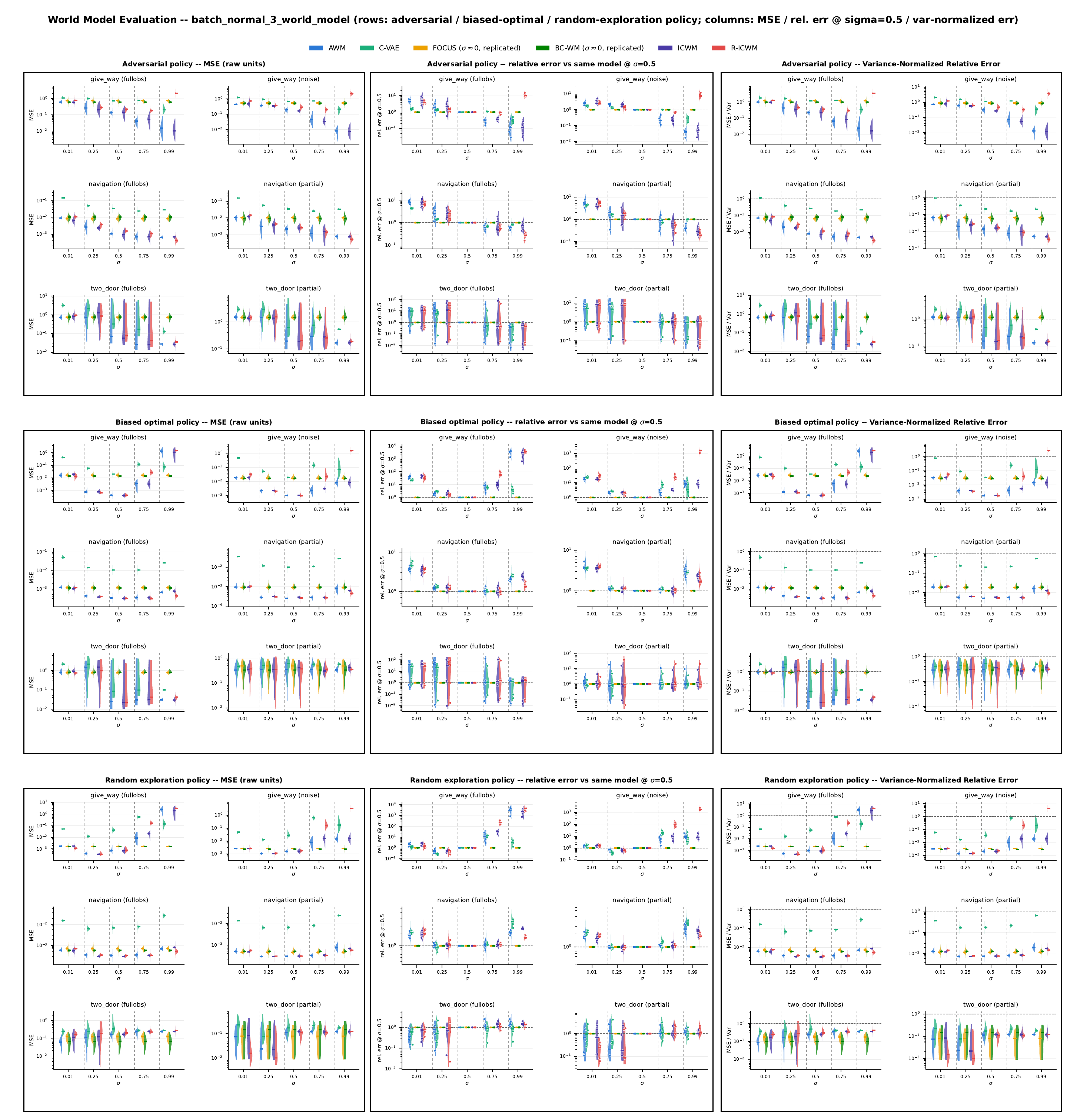}
    \caption{%
\textbf{World-model one-step prediction error across the canonical three-environment
sweep.} Six architectures (AWM, C-VAE, FOCUS, BC-WM, ICWM, R-ICWM) trained on
\texttt{give\_way}, \texttt{navigation}, and \texttt{two\_door} under full and partial
observability, at intervention strengths $\sigma \in \{0.01,0.25,0.5,0.75,0.99\}$
($\sigma = 1-\alpha$). Rows: evaluation policy (adversarial, biased-optimal, random).
Columns: metric (raw MSE, relative to $\sigma=0.5$, variance-normalized). Each panel is a
$3\times2$ grid of half-violins over environment and observability. FOCUS and BC-WM are
trained only at $\sigma \approx 0$ and shown as a flat reference. Error drops one to two
orders of magnitude with increasing $\sigma$ for \texttt{give\_way} and
\texttt{navigation}, where ICWM and R-ICWM attain the lowest error; \texttt{two\_door}
shows no such trend, consistent with its sparsity residing in the input to output
transition matrix rather than input marginals.}
\label{fig:app-normal3-wm-sweep}
\end{figure}
Figure~\ref{fig:app-normal3-wm-sweep} reports one-step prediction error for six world-model architectures: AWM, C-VAE, FOCUS, BC-WM, ICWM, and R-ICWM. Each model is trained on three environments (\texttt{give\_way}, \texttt{navigation}, \texttt{two\_door}) under full and partial observability, at five training intervention strengths $\sigma \in \{0.01, 0.25, 0.5, 0.75, 0.99\}$, where $\sigma $ is the probability that an agent takes a random (intervention in our setting) rather than optimal action. Each panel shows a $3\times2$ grid of half-violin distributions, one for each environment and observability combination.

The three rows of panels correspond to the policy used to generate evaluation trajectories: adversarial $\mathcal{D}^{\ddagger\ddagger}$ (top), biased-optimal $\mathcal{D}^\ddagger_{\sigma=0.5}$ matched to each model's training $\sigma$ (middle), and random exploration $\mathcal{D}^\ddagger_{\sigma=0.9}$ (bottom). The three columns correspond to the error metric: raw MSE (left), MSE relative to the same model's error at $\sigma=0.5$ (center), and variance-normalized relative error (right).

\textbf{Reading the trends.} Error falls by one to two orders of magnitude as $\sigma$ increases toward $0.99$ for \texttt{give\_way} and \texttt{navigation}. This shows that training on more strongly intervened trajectories produces better one-step dynamics models in these two environments. \texttt{two\_door} does not follow this pattern. Its error stays within roughly one order of magnitude across the full $\sigma$ range, and the ranking of architectures is less stable. We attribute this to \texttt{two\_door}'s causal sparsity living in the input by output transition matrix rather than in individual input variables, so the soft input gate used by ICWM and R-ICWM captures little of the useful structure in this environment.

ICWM and R-ICWM achieve the lowest error at high $\sigma$ under the adversarial and biased-optimal evaluation policies. This is clearest in \texttt{give\_way} and \texttt{navigation}, where they fall roughly one order of magnitude below AWM by $\sigma=0.99$. FOCUS and BC-WM are trained only near $\sigma \approx 0$, so they behave as flat, $\sigma$-invariant reference models. They are competitive only at low $\sigma$, and their relative error grows by one to three orders of magnitude as the evaluation policy moves further from the deterministic-optimal regime. C-VAE has the highest error and highest variance across nearly every panel, particularly in \texttt{navigation}. This suggests its stochastic latent variable does not improve one-step prediction accuracy in this setting.

The center and right columns confirm the same pattern under two different normalizations. Relative error against each model's own $\sigma=0.5$ checkpoint isolates how error scales with distributional shift. Variance normalization further controls for the fact that next-state variance itself grows with $\sigma$. Both normalizations preserve the same architecture ranking and the same \texttt{two\_door} exception, so the effect is not an artifact of the normalization choice. Under random exploration (bottom row), differences between architectures shrink at low to moderate $\sigma$ for \texttt{give\_way} and \texttt{navigation}, since random rollouts are less sensitive to the fine-grained transition structure that graph-informed gating exploits. The ICWM and R-ICWM advantage reappears at $\sigma \geq 0.75$, as random exploration increasingly resembles the high-intervention training regime.

\begin{figure}[htbp]
    \centering
    \includegraphics[trim={0pt 0pt 0pt 33pt}, clip,width=0.95\textwidth]{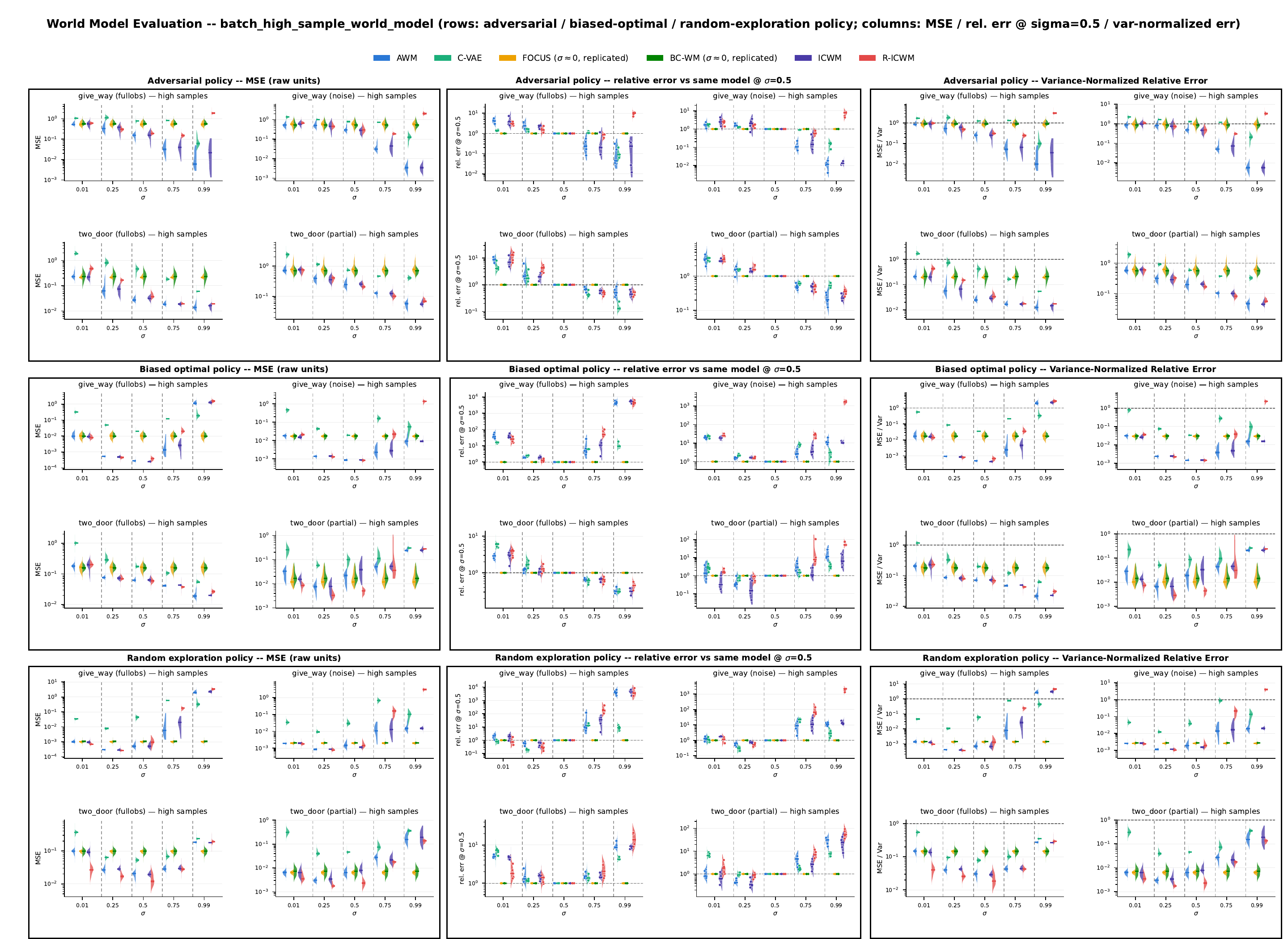}
    \caption{%
    \textbf{World-model one-step prediction error in the high-sample regime.}
    Six architectures trained on enlarged datasets for \texttt{give\_way}
    (roughly 120k transitions) and \texttt{two\_door} (roughly 2M transitions) under
    full and partial observability, at intervention strengths
    $\sigma \in \{0.01,0.25,0.5,0.75,0.99\}$ ($\sigma = 1-\alpha$). Layout matches
    Figure~\ref{fig:app-normal3-wm-sweep}. Rows: evaluation policy (adversarial,
    biased optimal, random). Columns: metric (raw MSE, relative to $\sigma=0.5$,
    variance-normalized). FOCUS and BC-WM are trained only at $\sigma \approx 0$ and
    shown as a flat reference. With more data, \texttt{two\_door} gains the monotone
    improvement with $\sigma$ that it lacked at normal sample sizes and its
    distributions tighten sharply, while \texttt{give\_way} reproduces the earlier
    trends, with ICWM lowest at high $\sigma$ under adversarial evaluation.}
    \label{fig:app-highsample-wm-sweep}
\end{figure}

\subsection{Evaluation Results: High sample complexity experiments}
\label{app:highsample-wm-sweep}

Figure~\ref{fig:app-highsample-wm-sweep} repeats the evaluation of
Appendix~\ref{app:normal3-wm-sweep} on high-sample training datasets. The
\texttt{give\_way} datasets grow from roughly 30k to roughly 120k transitions, and the
\texttt{two\_door} datasets grow from roughly 84k to roughly 2M transitions.
\texttt{navigation} is not included in this sweep. All six architectures (AWM, C-VAE,
FOCUS, BC-WM, ICWM, R-ICWM) are retrained from scratch at the same five intervention
strengths $\sigma \in \{0.01, 0.25, 0.5, 0.75, 0.99\}$, under full and partial
observability. The figure layout matches Appendix~\ref{app:normal3-wm-sweep}. Rows show
the evaluation rollout policy: adversarial (top), biased optimal (middle), and random
exploration (bottom). Columns show the error metric: raw MSE (left), MSE relative to the
same model at $\sigma=0.5$ (center), and variance-normalized relative error (right).
Each panel now contains a $2\times2$ grid of half-violins over the four
(environment, observability) groups.

\textbf{The main purpose of this sweep is to test whether the \texttt{two\_door}
exception observed in Appendix~\ref{app:normal3-wm-sweep} is a sample-complexity
artifact. It largely is.} With roughly 24 times more training data, \texttt{two\_door}
acquires the monotone trend it previously lacked. Under the adversarial evaluation,
its MSE falls by about one order of magnitude from $\sigma=0.01$ to $\sigma=0.99$ in
both observability settings, the violins tighten sharply, and the architecture ranking
stabilizes, with ICWM, R-ICWM, and AWM forming the low-error group at high $\sigma$.
The wide, unstable distributions seen at normal sample sizes are gone. This indicates
that the earlier flat profile reflected undertrained models rather than a property of
the environment, although the gap between graph-informed models and AWM remains small
in \texttt{two\_door}, consistent with its causal sparsity residing in the input to
output transition matrix rather than in the input marginals that the soft gate can
express.

The \texttt{give\_way} results reproduce the normal-sample trends. Under the
adversarial evaluation, error falls by two to three orders of magnitude as $\sigma$
increases, and ICWM reaches the lowest error of any model at $\sigma=0.99$, roughly
one order of magnitude below AWM. FOCUS and BC-WM remain flat $\sigma$-invariant
references, competitive only at low $\sigma$, and their relative error grows by up to
three orders of magnitude at high $\sigma$. C-VAE remains the weakest and most
variable model overall. More data therefore does not close the gap created by
training-policy coverage: what the trajectories contain matters more than how many
there are.

The middle and bottom rows show a complementary policy-mismatch effect in
\texttt{give\_way}. When evaluation trajectories are generated by near-optimal
policies, error is lowest for models trained at moderate intervention strength
($\sigma$ between 0.25 and 0.5) and rises steeply at $\sigma=0.99$, where the training
data contains almost no optimal behavior. This mirror image of the adversarial trend
confirms that each model is most accurate on the region of state space its training
policy visited, and that no single training $\sigma$ dominates across all evaluation
policies. The relative-error and variance-normalized columns preserve every one of
these patterns, so none of them is an artifact of the error normalization.

\subsection{Evaluation Results: two, three, four agent navigation}
\label{app:multiagent-lidar-wm}

\begin{figure}[htbp]
    \centering
    \includegraphics[trim={0pt 0pt 0pt 30pt}, clip,width=0.95\textwidth]{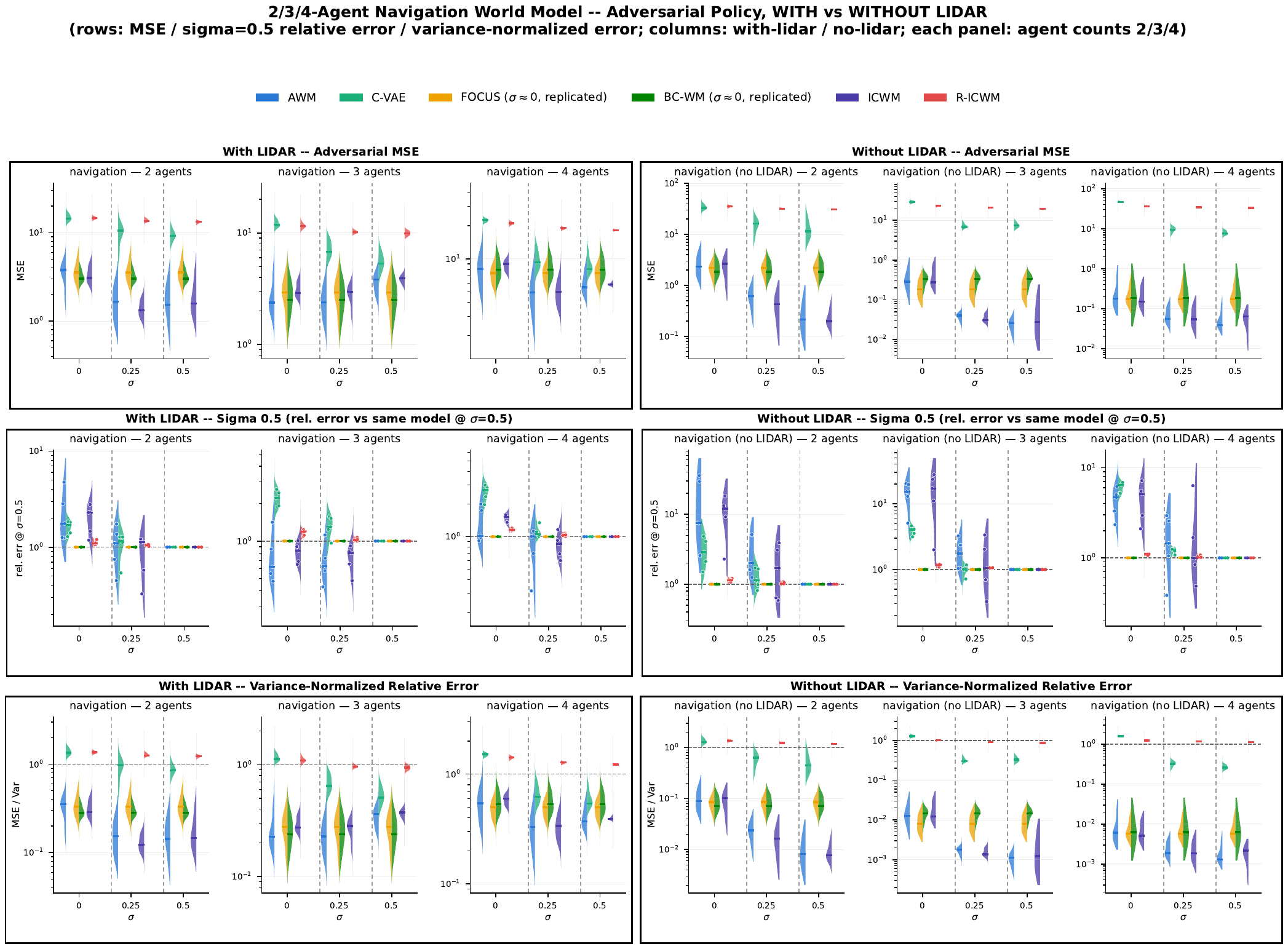}
    \caption{%
    \textbf{Multi-agent world-model error with and without LIDAR.}
    Six architectures trained on \texttt{navigation} with 2, 3, and 4 agents at
    intervention strengths $\sigma \in \{0, 0.25, 0.5\}$, evaluated on adversarial
    rollouts. Each metric (raw MSE, error relative to the same model at $\sigma=0.5$,
    variance-normalized error) is shown for the full observation including LIDAR and
    for a no-LIDAR variant with those dimensions removed from the state and the causal
    graph. FOCUS and BC-WM are trained only at $\sigma \approx 0$ and shown as a flat
    reference. With LIDAR, variance-normalized error stays near 0.1 to 1 and
    architectures are hard to separate. Without LIDAR it falls to $10^{-2}$ to
    $10^{-3}$, the gain from interventional data grows to roughly tenfold, and ICWM
    matches or beats AWM at every agent count.}
    \label{fig:app-multiagent-lidar-wm}
\end{figure}
Figure~\ref{fig:app-multiagent-lidar-wm} evaluates the six world-model architectures on
the multi-agent \texttt{navigation} environment with 2, 3, and 4 agents, at training
intervention strengths $\sigma \in \{0, 0.25, 0.5\}$. Hidden width scales with agent
count. All models are evaluated on adversarial rollouts only. Each condition is shown
twice: once for models trained on the full observation, which includes each agent's
LIDAR returns, and once for a no-LIDAR variant in which the LIDAR dimensions are removed
from both the state space and the causal graph before training. The no-LIDAR variant is
motivated by the discovery stage: LIDAR readings produce dense, low-quality causal
masks, and they are intrinsically hard to predict. The figure reports three metrics for
both variants: raw MSE, error relative to the same model at $\sigma=0.5$, and
variance-normalized relative error.

\textbf{With LIDAR, prediction error is dominated by the LIDAR dimensions themselves.}
Variance-normalized error sits between 0.1 and 1 for every architecture and every agent
count, meaning the models recover only a modest fraction of the predictable variance.
Differences between architectures are compressed, and increasing $\sigma$ yields only a
small improvement. This is consistent with the LIDAR returns acting as high-variance,
largely unpredictable targets that swamp the error signal from the agent dynamics.

\textbf{Dropping LIDAR reveals the structure that the full-state results obscure.}
Variance-normalized error falls to between $10^{-2}$ and $10^{-3}$ for AWM and ICWM,
one to two orders of magnitude below the with-LIDAR setting. The benefit of
interventional training data also becomes much larger: models trained at $\sigma=0$
are up to ten times worse than the same architecture trained at $\sigma=0.5$, whereas
the corresponding ratio with LIDAR is closer to two or three. ICWM matches or slightly
outperforms AWM across agent counts in this setting, most visibly at 3 and 4 agents.
The trends are stable as the number of agents grows from 2 to 4, indicating that the
approach scales with agent count once the uninformative sensor dimensions are removed.

Two architectures underperform throughout this sweep. C-VAE and R-ICWM sit roughly an
order of magnitude above the other models in raw MSE in both variants and remain near
a variance-normalized error of 1 even without LIDAR. For R-ICWM this contrasts with
its competitive performance in the single-agent sweeps of
Appendix~\ref{app:normal3-wm-sweep}, suggesting that its recurrent history mechanism
does not transfer well to the larger multi-agent state spaces. FOCUS and BC-WM,
trained only at $\sigma \approx 0$, again act as flat references and are outperformed
by the $\sigma$-matched models as intervention strength grows.

\section{Causal-Mask Ablation Study}
\label{app:mask-ablation}
\begin{figure}[htbp]
    \centering
    \includegraphics[width=0.9\linewidth]{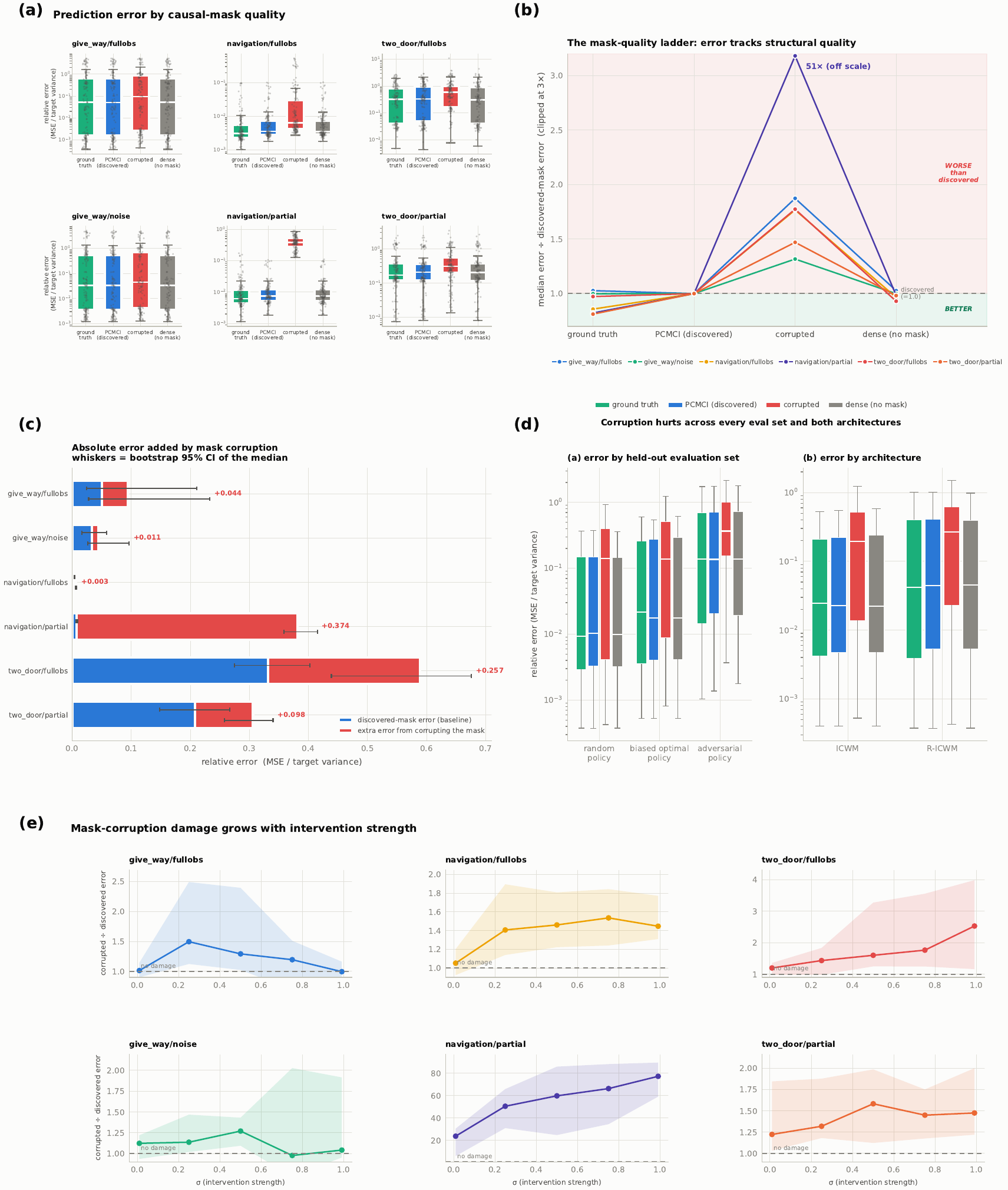}
    \caption{%
    \textbf{Causal-mask ablation study} (1{,}200 runs: 6 groups $\times$ 4 masks
    $\times$ 2 architectures $\times$ 5 intervention strengths $\times$ 5 seeds).
    (a) Relative error by mask for all six groups; row 1 full observability, row 2
    partial or noisy observability. (b) Median error normalized to each group's
    discovered-mask error; navigation/partial reaches 51$\times$ under corruption.
    (c) Absolute error added by corruption; whiskers are bootstrap 95\% CIs of the
    medians. (d) Breakdown by held-out evaluation policy and by architecture.
    (e) Corruption damage versus training intervention strength $\sigma$; band is the
    IQR over seeds, architectures, and evaluation sets. Error follows structural
    quality in every group, a corrupted mask is worse than no mask, and the damage
    grows with intervention strength.}
    \label{fig:app-mask-ablation}
\end{figure}

This study measures how sensitive a causal world model's accuracy is to the quality of
the causal mask it is given. For every combination of environment, observation mode,
architecture, intervention strength, and seed, we train the same model under four
masks: \emph{ground truth} (the environment's true causal adjacency), \emph{discovered}
(the mask an actual PCMCI causal-discovery run produces, the realistic default),
\emph{corrupted} (the discovered mask plus structural noise, with 30\% of gates softened
and 30\% flipped, simulating discovery errors), and \emph{dense} (an all-ones gate,
equivalent to a standard MLP or GRU with no structural prior). The sweep covers 6
(environment, observation) groups, 2 architectures (ICWM, R-ICWM), 5 intervention
strengths, and 5 seeds, for 1{,}200 runs in total. Every run is evaluated on three
held-out sets generated by a random policy, a biased optimal policy, and an adversarial
policy. The metric throughout is relative error, defined as MSE divided by the variance
of the prediction target.

\textbf{Mask quality orders prediction error.} Across all six groups the error ordering
follows the structural-quality ordering: ground truth $\leq$ discovered $\leq$
corrupted (Figure~\ref{fig:app-mask-ablation}a,b). Corrupting the mask inflates error
in every group and significantly in five (paired Wilcoxon, $p \leq 3\times10^{-4}$;
give\_way under full observability is marginal at $p=0.10$). The median corruption cost
is 1.14$\times$ to 1.48$\times$ in five groups. In navigation under partial
observability it reaches 51$\times$, because the discovered mask there gives a
near-perfect baseline (relative error 0.008) that corruption shatters to 0.38.

\textbf{A corrupted prior is worse than no prior.} In every group the corrupted mask is
worse than the dense baseline (Figure~\ref{fig:app-mask-ablation}b). A wrong structural
prior actively misleads the model, whereas the absence of one merely forgoes a benefit.

\textbf{Ground truth helps exactly where discovery is imperfect.} The ground-truth mask
significantly lowers error over the discovered mask in navigation (0.75$\times$ full
observability, 0.90$\times$ partial) and in two\_door full observability
(0.96$\times$), all $p \leq 6\times10^{-4}$; this is the ground-truth rung of the
ladder in Figure~\ref{fig:app-mask-ablation}b. In give\_way, where PCMCI already
recovers essentially the whole graph, ground truth matches discovered, as expected.

\textbf{Ratio and absolute magnitude disagree, so we report both.} By ratio the extreme
case is navigation/partial (51$\times$). By absolute error added, the extreme is
two\_door under full observability ($+0.257$), because it is the hardest environment to
begin with (Figure~\ref{fig:app-mask-ablation}c; whiskers are bootstrap 95\%
confidence intervals of the medians).

\textbf{The effect is consistent across evaluation policies and architectures.}
Corruption hurts on all three held-out evaluation sets, most strongly under the
adversarial policy, and in both ICWM and R-ICWM
(Figure~\ref{fig:app-mask-ablation}d).

\textbf{The damage grows with intervention strength.} The corrupted-to-discovered error
ratio rises with the intervention strength $\sigma$ of the training data in five of six
groups (Figure~\ref{fig:app-mask-ablation}e): in two\_door from about 1.2$\times$ near
$\sigma=0$ to about 2.5$\times$ at $\sigma \approx 1$, and in navigation/partial from
24$\times$ to 77$\times$. A wrong structural prior is most harmful precisely in the
interventional regime that causal world models are meant to win in.

\textbf{Mechanism.} The size of the corruption effect tracks how much the discovered
mask actually gates. In give\_way, PCMCI keeps 94\% of inputs open, so corrupting it is
a small perturbation and the effect is modest. In navigation, PCMCI is near-dense (98\%
open) but the true structure is sparse (39\% to 44\% open), so corruption flips gates
the model relies on and ground truth removes real distractors. In two\_door, PCMCI
closes about 27\% of inputs, the most active gating, and the corruption effect is large
and highly significant.

\textbf{Two caveats.} First, the ground-truth arm is degenerate for two\_door: reducing
the true adjacency to an input-vector gate collapses to all ones there, because
two\_door's sparsity lives in the input by output transition matrix, which a vector
gate cannot express. The ground-truth benefit result therefore rests on navigation, and
two\_door's ground-truth column should be read as equivalent to dense. Second, the
marginal give\_way full-observability result is expected from the mechanism above,
since its discovered mask is nearly dense and leaves little structure to corrupt.

\section{Uncertainty and Neural density measuring Setup and Results}
\label{app:neural-density}

The spine-and-cloud construction is defined in the motivation experiments
(Figure ~\ref{fig:motivation_env5}b: the \emph{spine} is the frozen world model's mean
next-state prediction, and the \emph{cloud} is a conditional normalizing flow trained
on the spine's residuals, so that sampling spine plus cloud yields a full predictive
distribution per state dimension (C-VAE uses its own latent posterior instead). For one
in-distribution and one out-of-distribution transition per environment we draw 1{,}000
samples and summarize the prediction density along every named state axis. The value
of this construction is explainability: the model's uncertainty is visualized,
isolated, and quantified per interpretable state variable, so a practitioner can read
off not only how uncertain the model is out of distribution but which physical
quantities that uncertainty attaches to. All figures use peak-height normalization
(every density rescaled to a maximum of one), which makes the width of each cloud the
single carrier of uncertainty, both visually and in the metrics of
Section~\ref{app:nd-metrics}. The sweep covers six experiments and 182 spine-cloud
fits over the six architectures at three training intervention strengths.

\subsection{Grid World: Two-Door}
\label{app:nd-twodoor}
\begin{figure}[htbp]
    \centering
    \includegraphics[width=0.85\textwidth]{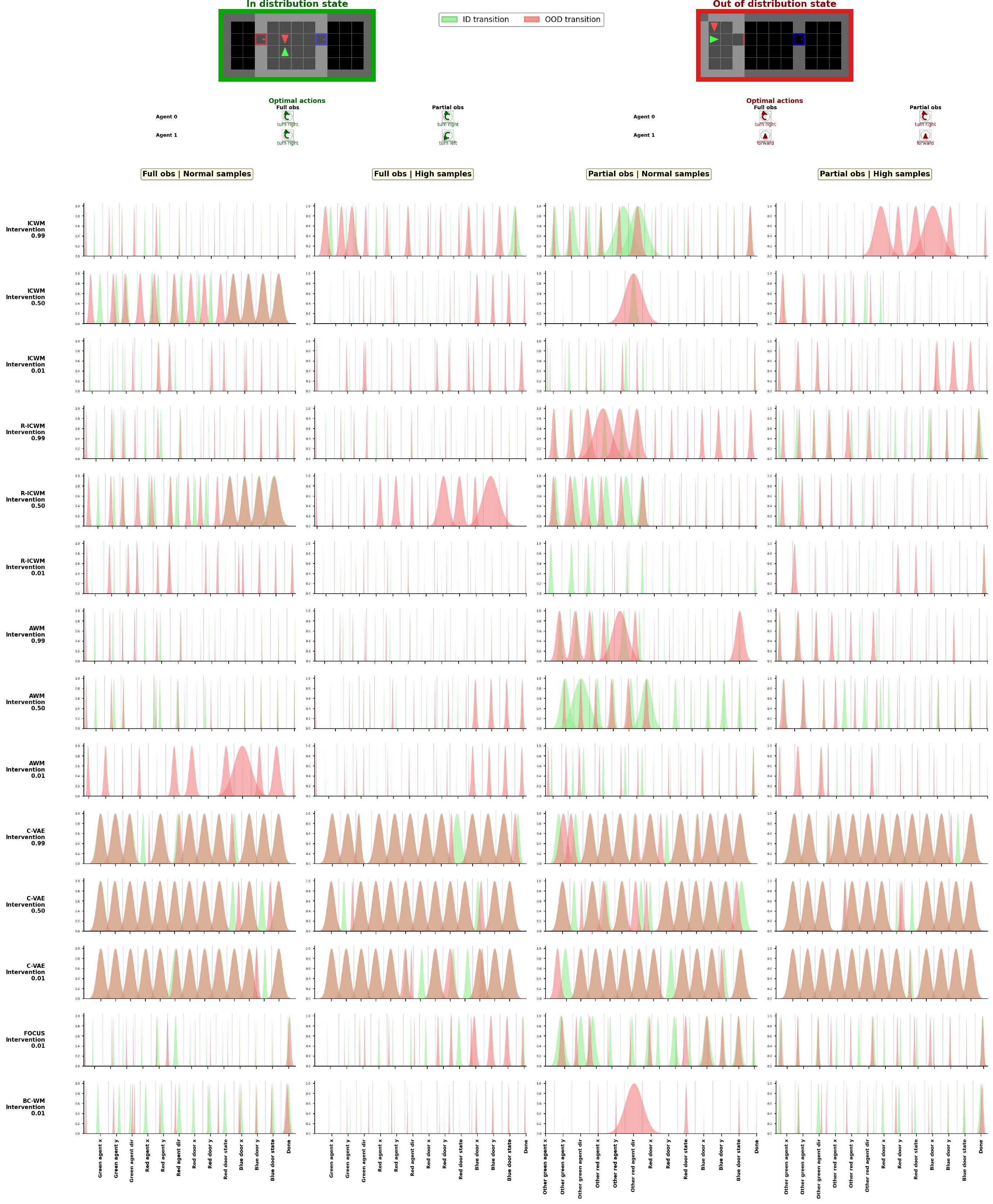}
    \caption{\textbf{Spine-and-cloud densities, two\_door} (peak-height
    normalization: every density is rescaled to a maximum of one, so the width of
    each curve is the model's uncertainty on that axis). \emph{Header:} the green and
    red framed snapshots show the evaluated in-distribution and out-of-distribution
    states; below each snapshot, the optimal-actions table lists one row per agent
    and one column per observability mode, each cell drawing the agent's optimal
    discrete action as a glyph (curved arrow for a turn, straight arrow for forward,
    hand for an interaction) with the action name underneath. \emph{Body:} rows are
    (model, training intervention strength $\sigma$) pairs, with ICWM, R-ICWM, AWM,
    and C-VAE at $\sigma \in \{0.99, 0.5, 0.01\}$ and FOCUS and BC-WM only at
    $\sigma \approx 0$ by construction; columns are observability (full, partial)
    crossed with training sample regime (normal, high). Each cell overlays, for
    every named state dimension on the x axis (agent positions and directions, door
    coordinates and door state, episode-done flag), the Gaussian summary of 1{,}000
    spine-plus-cloud samples: green for the in-distribution transition, red for the
    out-of-distribution one; brown regions are their overlap. Narrow green with
    selectively widened red indicates calibrated, localized out-of-distribution
    uncertainty; C-VAE's uniformly wide clouds in every cell indicate saturation.}
    \label{fig:app-nd-twodoor}
\end{figure}
Figure~\ref{fig:app-nd-twodoor} shows the composite for \texttt{two\_door} under full
and partial observability, at normal and high training sample counts. Two patterns
stand out. First, C-VAE's densities are saturated: near-uniform clouds spanning most
of each axis, in and out of distribution alike, so its uncertainty carries no
diagnostic information. Second, the regressor spines produce narrow in-distribution
clouds whose out-of-distribution widening concentrates on the axes the shift actually
touches, most visibly the door-state and agent-pose dimensions. The high-sample
columns show visibly tighter in-distribution clouds than the normal-sample columns,
which is what later makes their out-of-distribution inflation quantitatively decisive.

\subsection{Continuous Control: Give-Way}
\label{app:nd-giveway}

\begin{figure}[htbp]
    \centering
    \includegraphics[width=\textwidth]{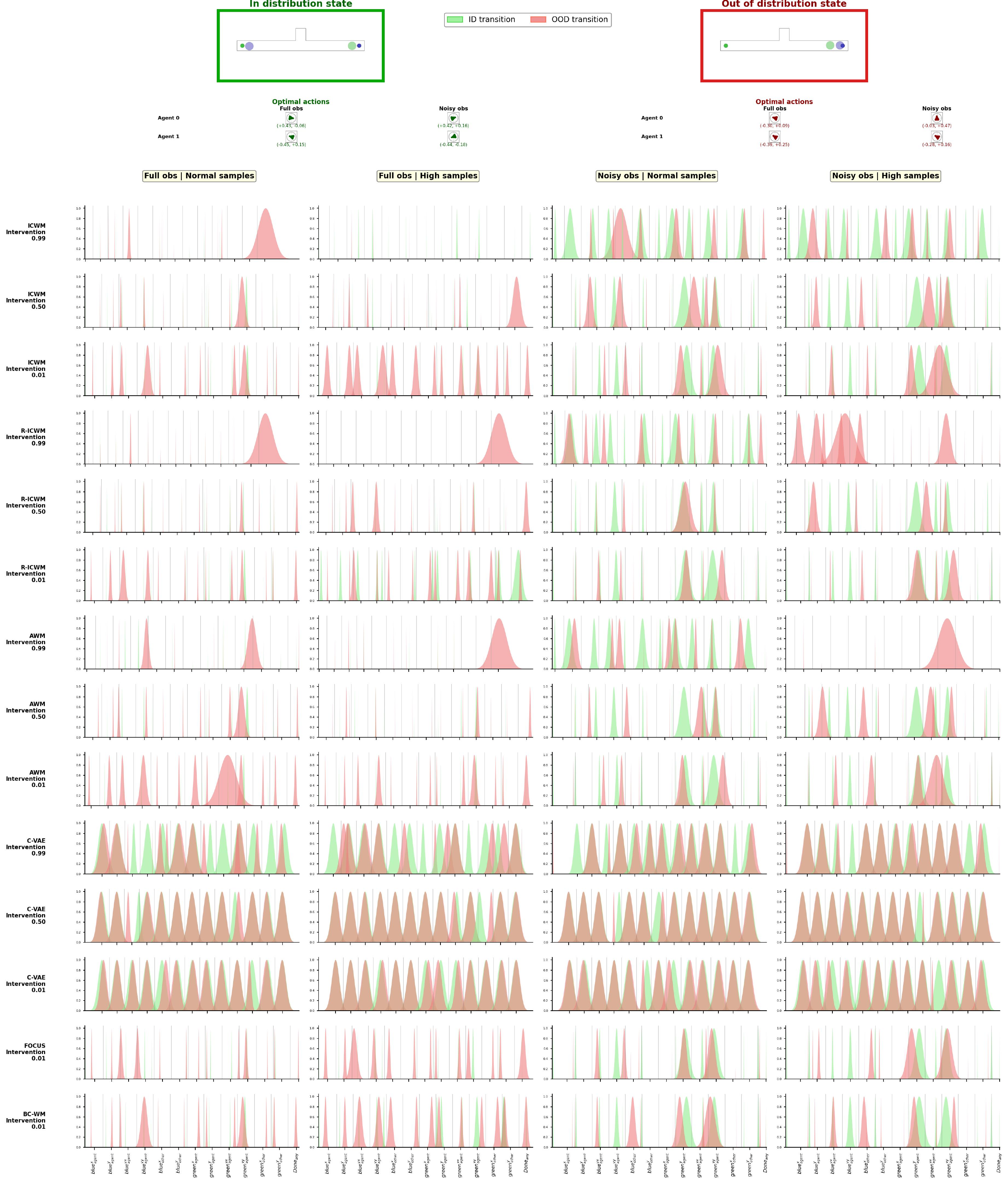}
    \caption{\textbf{Spine-and-cloud densities, give\_way} (peak-height
    normalization). Layout as in Figure~\ref{fig:app-nd-twodoor}, with two
    differences. Columns cross observability (full, noisy) with sample regime
    (normal, high), and actions are continuous: each optimal-action cell draws the
    agent's 2-D velocity command as an arrow on the unit circle with its numeric
    value underneath. State dimensions cover both agents' positions, velocities, and
    goal-relative quantities. The full-observability columns show narrow green
    in-distribution clouds that widen sharply on the red out-of-distribution
    transition, while in the noisy-observability columns the green clouds are already
    wide and the red ones add little: training noise absorbs the shift.}
    \label{fig:app-nd-giveway}
\end{figure}

Figure~\ref{fig:app-nd-giveway} shows the same layout for \texttt{give\_way}, with a
noisy-observation variant in place of partial observability. Under full observability
every regressor spine reacts sharply to the out-of-distribution transition, while
under noisy observations the in-distribution clouds are already wide and the
out-of-distribution transition adds nothing: training noise absorbs the shift. This
contrast is the clearest example of the calibration signal being a property of the
training data rather than of the architecture.

\subsection{Two-Agent Navigation}
\label{app:nd-nav}

\begin{figure}[htbp]
    \centering
    \includegraphics[width=0.6\textwidth]{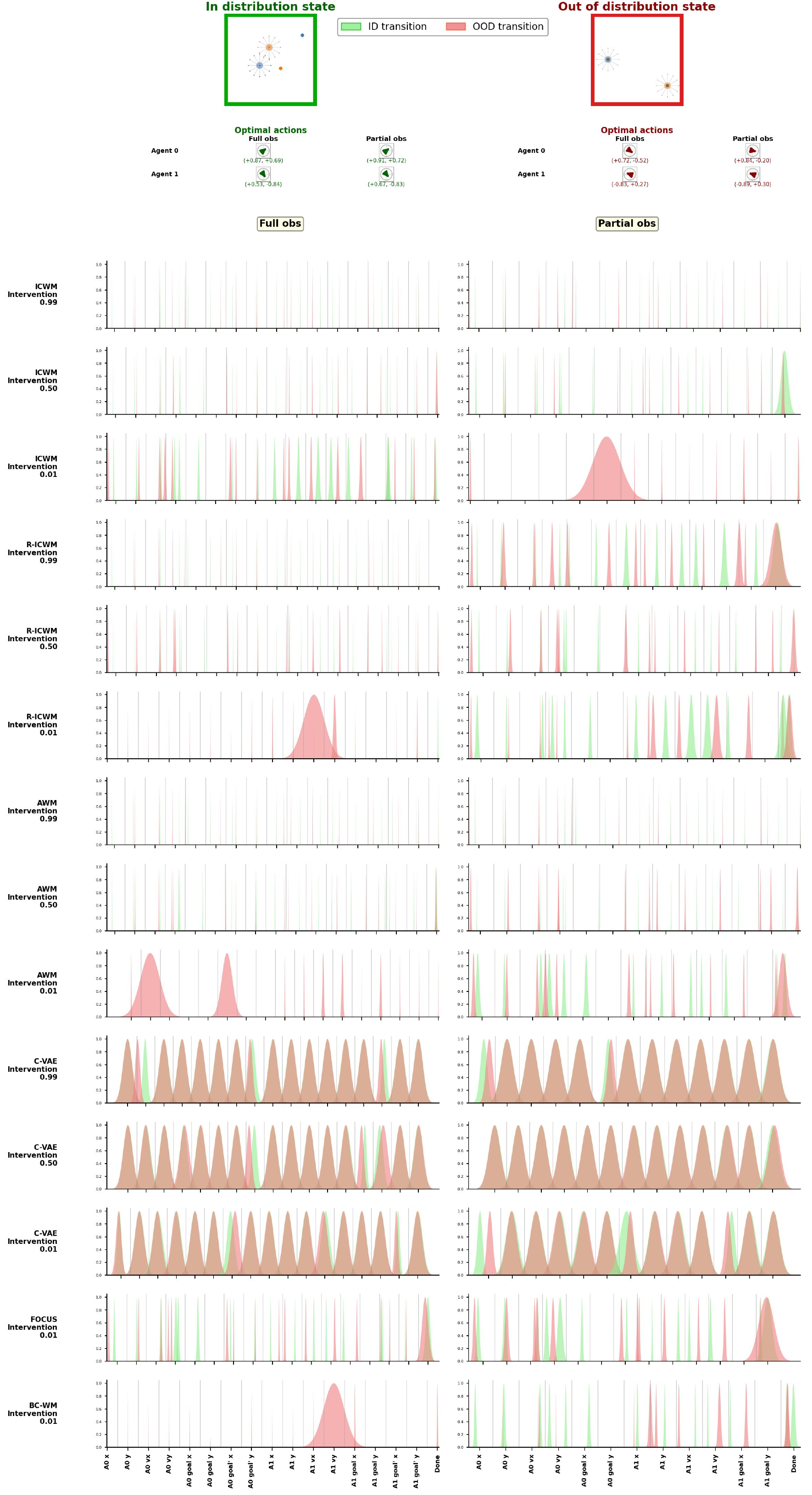}
    \caption{\textbf{Spine-and-cloud densities, navigation with 2 agents}
    (peak-height normalization). Layout as in Figure~\ref{fig:app-nd-twodoor}, with
    columns for full and partial observability at normal sample counts only, and
    continuous 2-D velocity actions drawn on the unit circle in the header. The
    state also contains many LIDAR range readings, omitted from the plotted axes
    for readability; they are intrinsically hard to predict, which keeps the green
    in-distribution clouds comparatively wide and mutes the contrast with the red
    out-of-distribution clouds. BC-WM, trained only on near-optimal data, produces
    extreme widths once the transition leaves that regime.}
    \label{fig:app-nd-navigation}
\end{figure}
Figure~\ref{fig:app-nd-navigation} shows the two-agent \texttt{navigation}
environment, where actions are continuous 2-D vectors drawn on the unit circle in the
header. Inflation here is mild for the main models; the state contains many LIDAR
dimensions that are intrinsically hard to predict, which keeps in-distribution clouds
wide and mutes the out-of-distribution contrast. BC-WM is the outlier: trained only on
near-optimal data, its residual flow assigns extreme widths once the evaluated
transition leaves that regime. LIDAR dimensions are not shown in the figure for a clean diagram. 

\subsection{Multi-Agent Navigation without LIDAR}
\label{app:nd-nolidar}
\begin{figure}[htbp]
    \centering
    \includegraphics[width=0.7\textwidth]{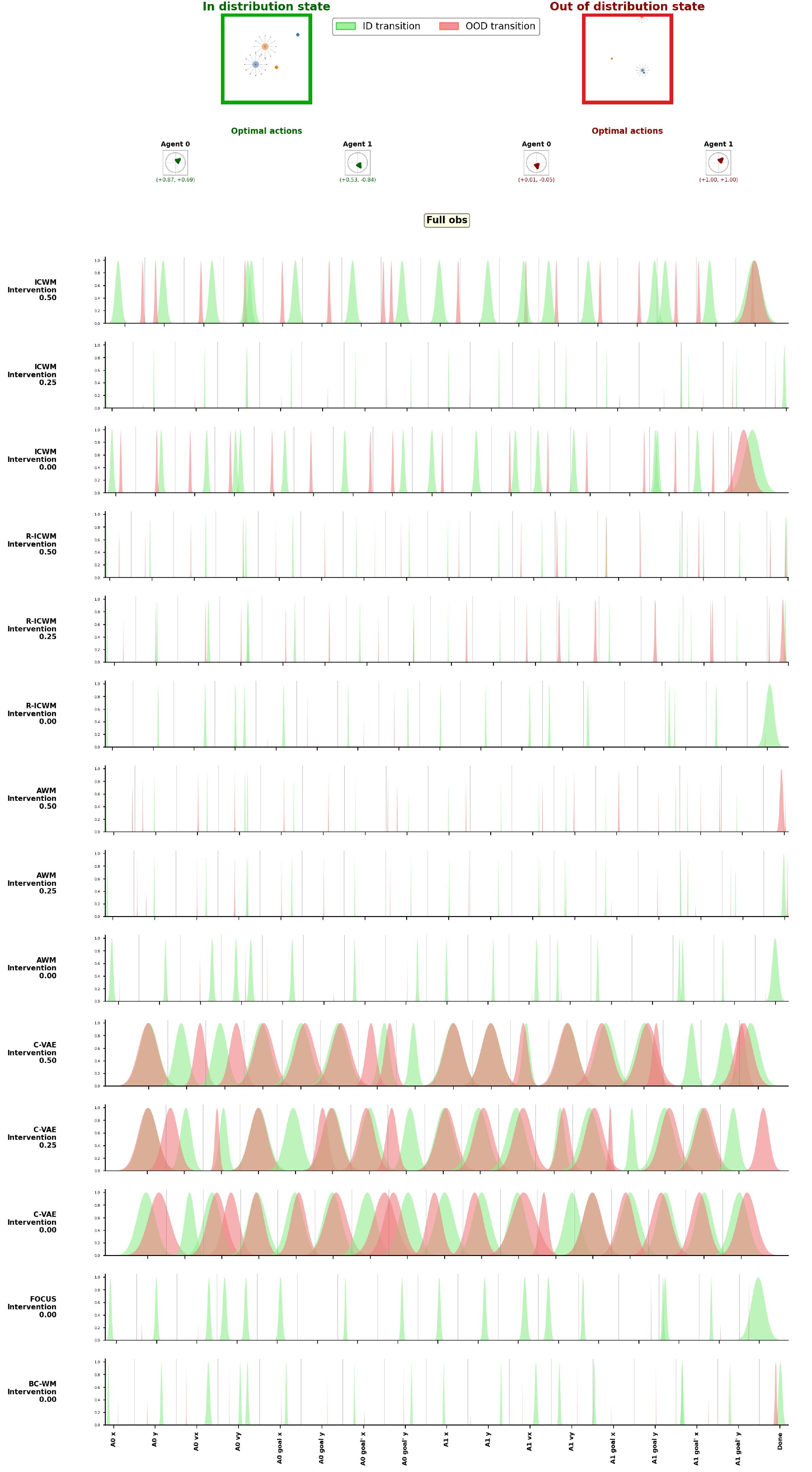}
    \caption{\textbf{Spine-and-cloud densities, navigation with 2 agents, no LIDAR}
    (peak-height normalization). Same construction as
    Figure~\ref{fig:app-nd-navigation}, but the LIDAR dimensions are removed from
    the state and the causal graph before training, leaving 17 interpretable pose
    and goal axes. There is a single observability mode, so the optimal-actions
    table lays agents out as columns. The red out-of-distribution clouds are as
    narrow as the green in-distribution ones or narrower, showing that the inflation
    seen with the full state was carried largely by the LIDAR axes.}
    \label{fig:app-nd-nav2a}
\end{figure}

\begin{figure}[htbp]
    \centering
    \includegraphics[width=0.7\textwidth]{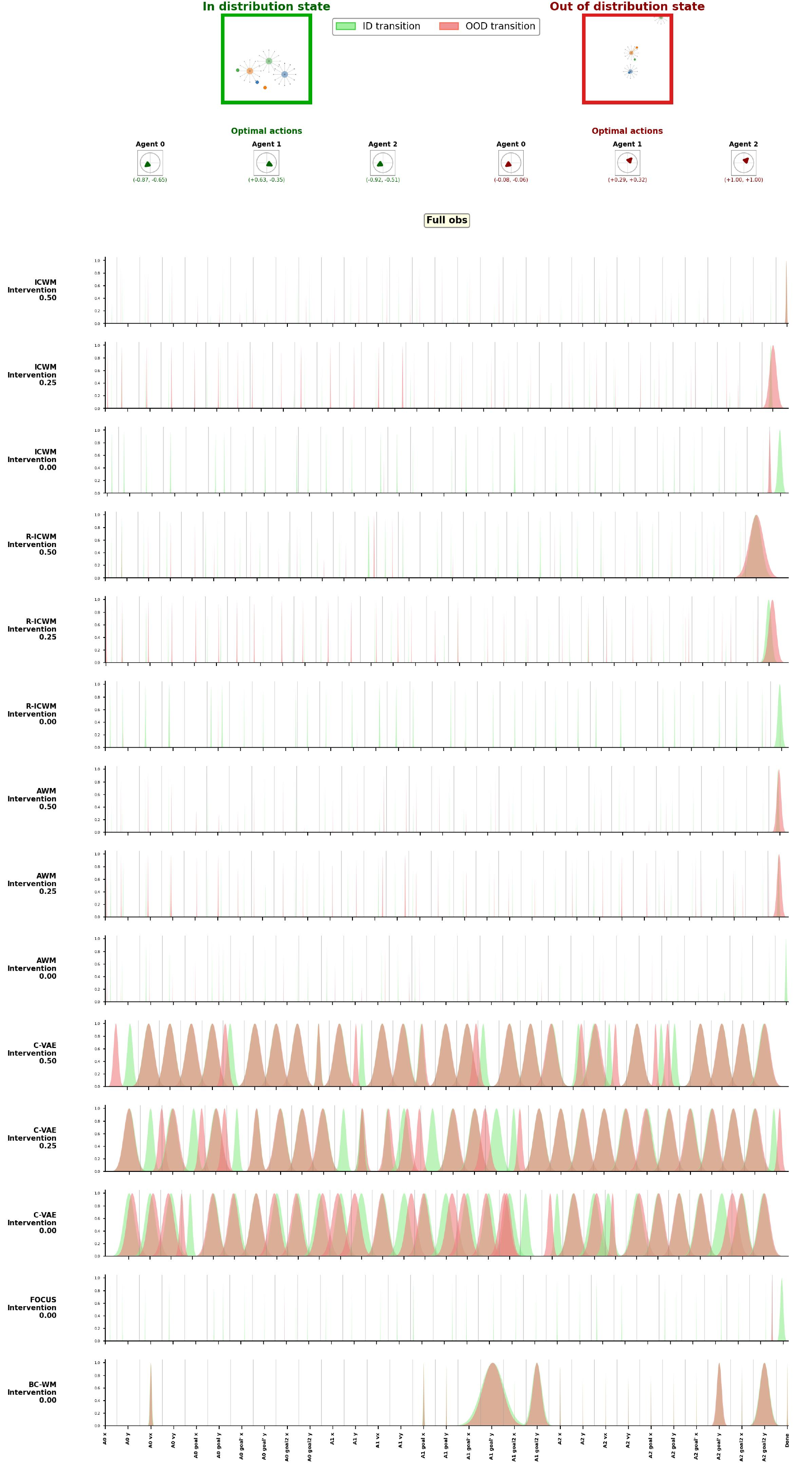}
    \caption{\textbf{Spine-and-cloud densities, navigation with 3 agents, no LIDAR}
    (peak-height normalization). Layout as in Figure~\ref{fig:app-nd-nav2a}, with the
    state and action tables extended to three agents. The absence of
    out-of-distribution widening on the pose and goal axes persists at this agent
    count.}
    \label{fig:app-nd-nav3a}
\end{figure}

\begin{figure}[htbp]
    \centering
    \includegraphics[width=0.7\textwidth]{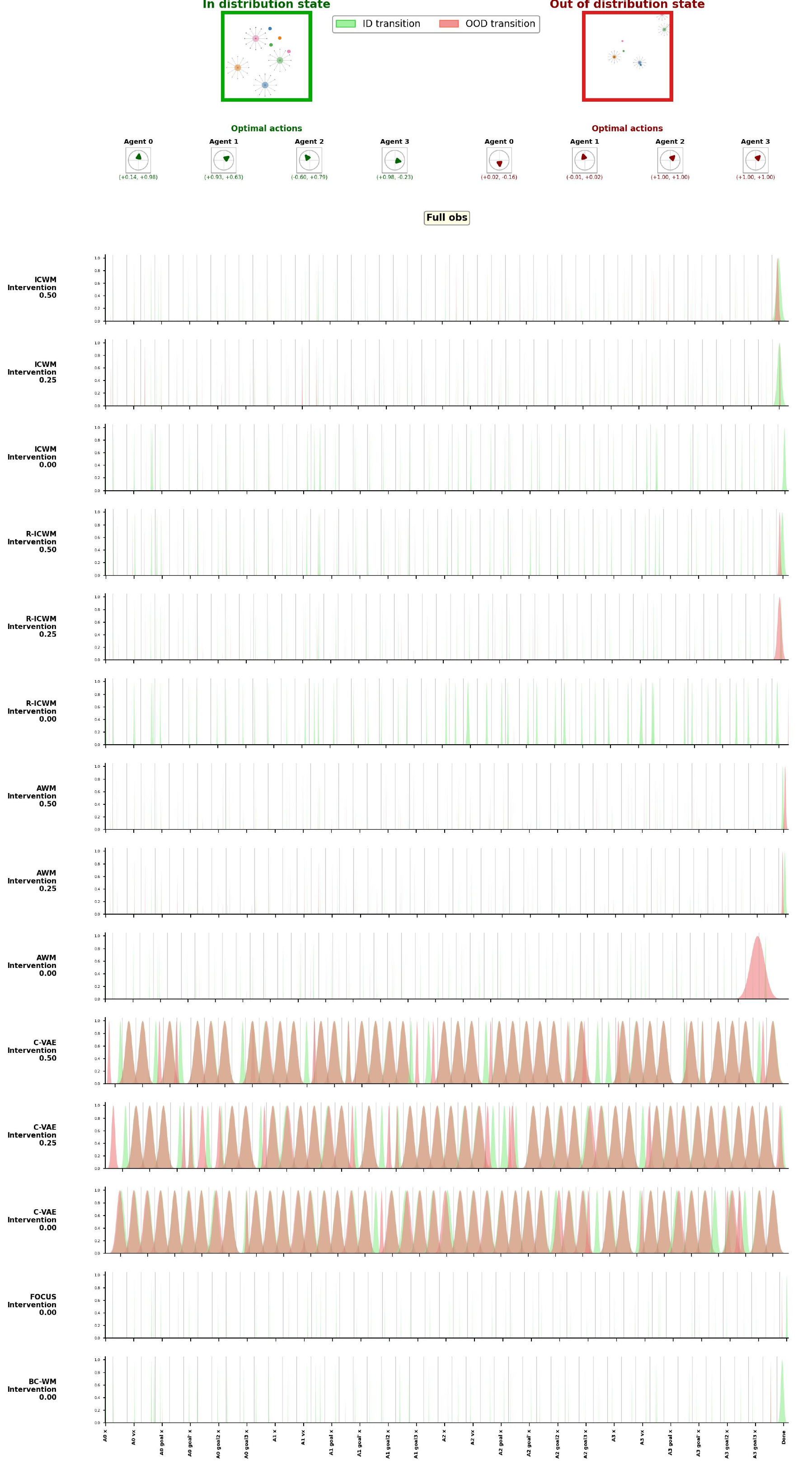}
    \caption{\textbf{Spine-and-cloud densities, navigation with 4 agents, no LIDAR}
    (peak-height normalization). Layout as in Figure~\ref{fig:app-nd-nav2a}, extended
    to four agents. Together with Figures~\ref{fig:app-nd-nav2a}
    and~\ref{fig:app-nd-nav3a}, this shows the no-LIDAR behavior is stable from 2 to
    4 agents, so it is a property of dropping the sensor dimensions rather than of
    scale.}
    \label{fig:app-nd-nav4a}
\end{figure}

Figures~\ref{fig:app-nd-nav2a}, \ref{fig:app-nd-nav3a}, and \ref{fig:app-nd-nav4a}
show the no-LIDAR variants with 2, 3, and 4 agents, where the LIDAR dimensions are
removed from the state and the causal graph before training. What remains are only
the interpretable pose and goal axes. On these, the out-of-distribution clouds are as
narrow as the in-distribution ones or narrower, showing that the inflation observed
with the full state was carried largely by the LIDAR axes. The pattern holds at every
agent count, so it is a property of dropping the sensor dimensions, not of scale.

\subsection{Quantitative Uncertainty Metrics}
\label{app:nd-metrics}

Because peak normalization leaves the width of each cloud as the only signal, all
quantitative analysis uses one metric family: the cloud width $\sigma$ per dimension
and its out-of-distribution inflation, the OOD width divided by the ID width of the
same model. The inflation ratio is dimensionless and self-baselined, so it is the only
quantity pooled across environments; raw widths are compared only within an
environment. We summarize with medians and IQRs because BC-WM's off-distribution
widths (up to $10^{5}$ normalized units) would make any mean meaningless.

Figure~\ref{fig:app-nd-metrics} pools all 182 fits. Panel (a): C-VAE's mean cloud
width is about 0.9 normalized units both in and out of distribution (median inflation
0.98), confirming saturation, while the regressor spines sit at 0.04 to 0.05 in
distribution and 0.10 to 0.11 out of distribution. Panel (b): ICWM's inflation falls
monotonically with training intervention strength, from 3.1$\times$ at $\sigma=0.01$
to 2.0$\times$ at $\sigma=0.5$ and 1.0$\times$ at $\sigma=0.99$; with enough
interventional data its out-of-distribution uncertainty matches its in-distribution
uncertainty. AWM is non-monotone and R-ICWM moves the opposite way. Panel (c): the
inflation is localized, not diffuse; the median share of dimensions inflating by more
than 1.5$\times$ is 14\% for ICWM and 20\% for AWM, and which dimensions these are can
be read directly off the composites above.

Figure~\ref{fig:app-nd-metrics-cond} breaks the same three metrics down per
environment, overlaying observation mode, sample regime, and agent count as
conditions, with C-VAE drawn as a flat dashed reference at ratio one, the visual
signature of an uninformative uncertainty estimate. It localizes the pooled findings:
in \texttt{two\_door} the inflation grows with sample count (ICWM 1.25$\times$ at
normal samples, 3.34$\times$ full and 4.68$\times$ partial observability at high
samples), because a better-trained spine has tight in-distribution clouds against
which the widening stands out; in \texttt{give\_way} the full-observability conditions
inflate four to seven fold while the noisy conditions stay at one; in
\texttt{navigation} the effect is mild throughout; and in the no-LIDAR family the
ratios sit at or below one at every agent count. Together the two figures give the
headline of this appendix: whether a world model's uncertainty is trustworthy is
decided by the interventional richness and coverage of its training data, and when it
is, the uncertainty is calibrated and localized to a few interpretable axes.

\begin{figure}[htbp]
    \centering
    \includegraphics[width=\textwidth]{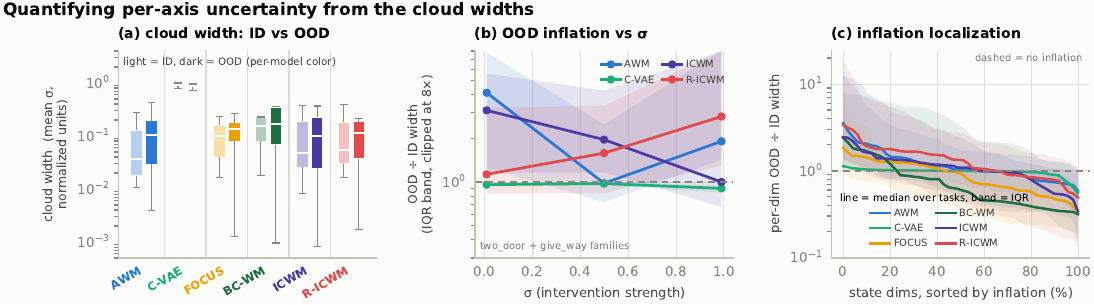}
    \caption{\textbf{Quantifying per-axis uncertainty from the cloud widths}
    (all 182 spine-cloud fits pooled; medians and IQRs throughout, since BC-WM's
    off-distribution widths reach $10^{5}$ normalized units and would dominate any
    mean). (a) Mean per-dimension cloud width per model on a fixed log range:
    in-distribution boxes in a light shade of each model's color, out-of-distribution
    boxes in the full color, with model zones separated by vertical rules and the
    model names colored to match; a box whose median exceeds the range is replaced by
    an arrow with its value. (b) Width inflation (OOD $\div$ ID) versus training
    intervention strength $\sigma$, one line per model with an IQR band clipped at
    8$\times$; restricted to the two\_door and give\_way families, which share the
    $\sigma \in \{0.01, 0.5, 0.99\}$ grid (FOCUS and BC-WM train at a single $\sigma$
    and therefore cannot appear as trends). ICWM converges to the dashed no-inflation
    line as $\sigma$ grows. (c) Per-dimension inflation profile: for each fit the
    per-axis ratios are sorted in decreasing order and mapped to a common rank scale,
    then summarized per model by the median (line) and IQR (band) over fits. The
    profiles show the inflation is carried by a minority of axes while most
    dimensions sit at or below one.}
    \label{fig:app-nd-metrics}
\end{figure}

\begin{figure}[htbp]
    \centering
    \includegraphics[trim={0pt 0pt 0pt 10pt}, clip,width=\textwidth]{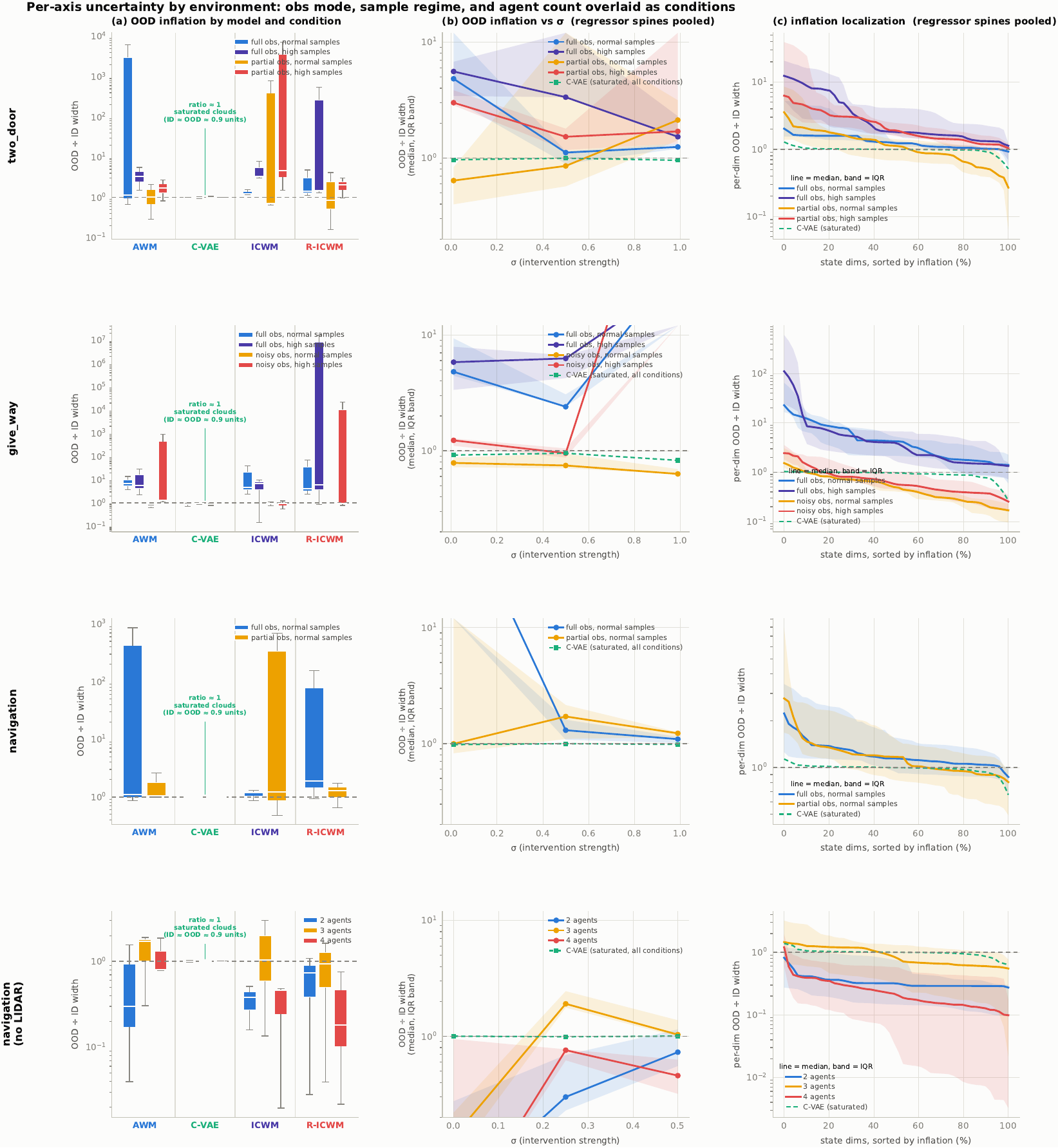}
    \caption{\textbf{Per-axis uncertainty by environment with conditions overlaid.}
    Rows: two\_door, give\_way, navigation, and navigation without LIDAR; note the
    per-row y scales and the different $\sigma$ grids ($\{0.01, 0.5, 0.99\}$ for the
    first two rows, $\{0, 0.25, 0.5\}$ for the multi-agent rows). Within each panel,
    colors distinguish the conditions and are explained in that panel's legend:
    observation mode crossed with sample regime for the main environments (cool
    colors for full observability, warm for partial or noisy), agent count for the
    no-LIDAR family. (a) OOD $\div$ ID width inflation by model and condition, one
    box per combination, with the dashed line marking no inflation; C-VAE's boxes
    collapse onto that line and are annotated in place, since its clouds are
    saturated at roughly 0.9 normalized units in and out of distribution and its
    ratio therefore carries no signal. (b) Inflation versus training intervention
    strength and (c) sorted per-dimension inflation profile, both pooled over the
    regressor spines (AWM, ICWM, R-ICWM) with medians as lines and IQRs as bands;
    C-VAE appears as the flat dashed reference at one, the visual signature of an
    uninformative uncertainty estimate. Sample count sharpens the two\_door signal,
    observation noise erases the give\_way signal, and the no-LIDAR conditions sit at
    or below one at every agent count.}
    \label{fig:app-nd-metrics-cond}
\end{figure}

\section{World-Model Spatial Error Density}
\label{app:error-density}

\subsection{Setup}

The case for looking at \emph{where} a world model makes mistakes, rather than only how
big they are on average, is made in Appendix~\ref{app:error-density-toy}. Here we run the
same idea on the three real environments. We build each map by pinning one agent on its
goal and sweeping the other agent over the space it can reach, with both agents taking
random actions. This gives two datasets per environment, one for each choice of which
agent is pinned, and we average them. Every valid cell receives the same number of
transitions ($200$ for the discrete \texttt{two\_door}, $100$ for the two continuous
environments), so the coverage is even everywhere.

The error in each cell is the mean squared residual on the dynamic (position and
velocity) dimensions $\mathcal{D}$. We normalise each dimension by its own standard
deviation so environments of different physical scale stay comparable,
\begin{equation}
\label{eq:err-dyn}
\mathrm{err} \;=\; \frac{1}{|\mathcal{D}|}\sum_{d\in\mathcal{D}}
\left(\frac{\hat{s}'_d - s'_d}{\sigma_d + \varepsilon}\right)^{\!2},
\qquad \varepsilon = 10^{-6},
\end{equation}
where $\sigma_d$ is the marginal standard deviation of dimension $d$. A value of
$\mathrm{err}=1$ means the model is off by about one standard deviation on that
dimension. For each environment we fix one \emph{a-priori} zone, the region where the two
agents interact, and test whether its mean error is higher than a size-matched pool of
cells outside the zone. We run the significance battery of Appendix~\ref{app:error-density-toy} to mimic how a practitioner would audit a model during real development, and we report
the in/out ratio $r_z$. A ratio above $1$ means the model is worse inside the zone.

The figures are laid out with model and intervention level down the rows, and
observation mode and sample regime across the columns. The intervention level is how
often the data-collection policy is forced off its goal-seeking action and made to
explore instead. A high intervention level means the data is more exploratory and covers
more of the space; a low one means the data mostly follows near-optimal, goal-directed
behaviour. The models are ICWM, R-ICWM, AWM, C-VAE, and the FOCUS and BC-WM baselines.

\subsection{Figures}

Figures~\ref{fig:ed-two-door} to~\ref{fig:ed-navigation} show the composite error maps
for the three environments. In each figure the rows are model and intervention level, and
the columns are observation mode and sample regime, so one figure covers every model
under every data condition. The colour scale is shared across all panels of a figure, so
brightness can be compared directly between them. The header row of each figure shows the
two fixed-agent layouts that the panels below it average over.

\begin{figure}[htbp]
  \centering
  \includegraphics[width=0.65\linewidth]{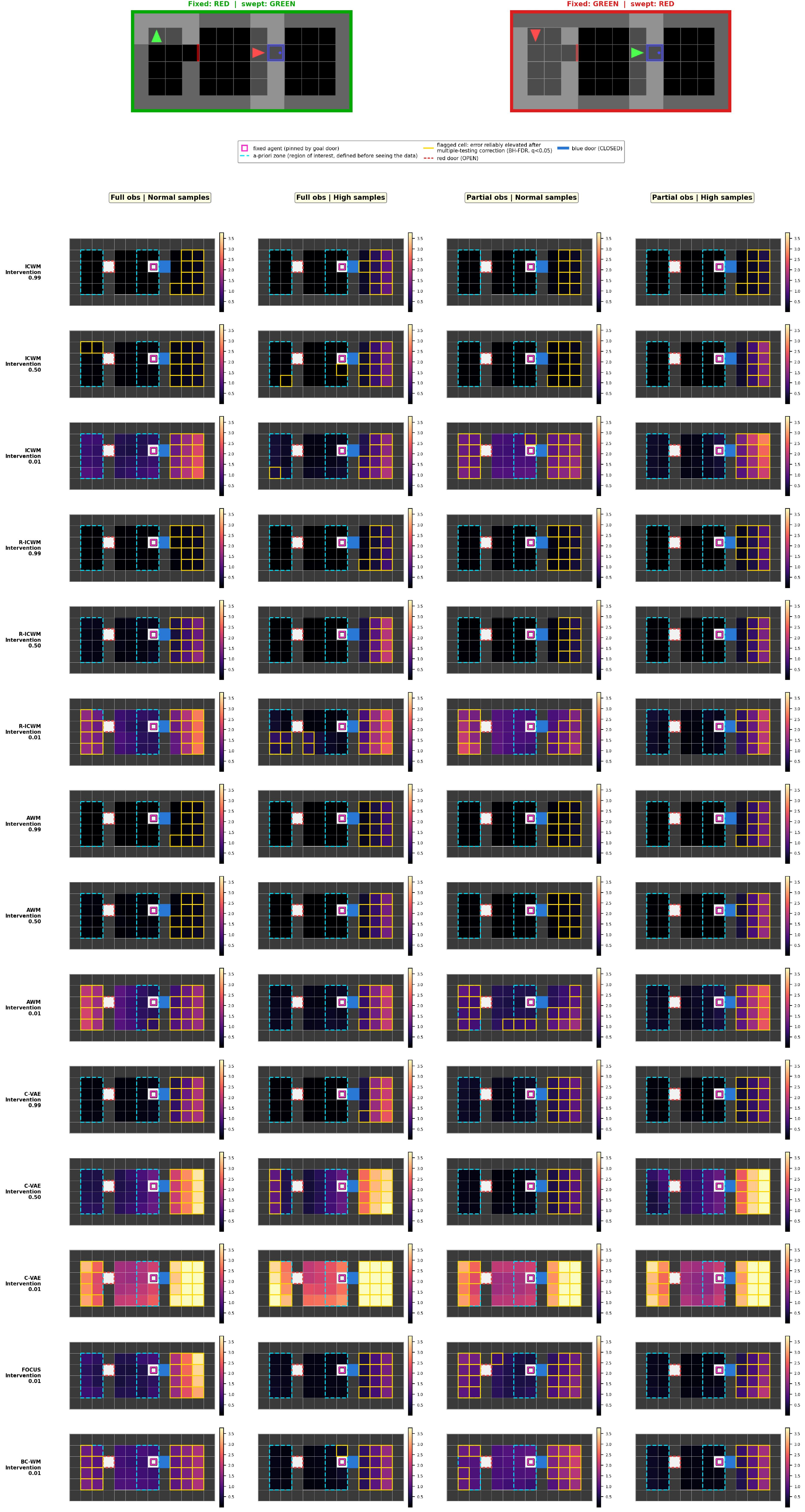}
  \caption{Spatial error density for \texttt{two\_door}, a discrete grid with one agent
  fixed next to the goal (blue) door. Each panel is a $13\times6$ error heatmap drawn over
  the room. Darker means lower error and brighter means higher error, on a colour scale
  shared across all panels. Magenta squares mark the fixed agent. The dashed cyan outline
  is the a-priori interaction zone (the corridor by the closed door). Gold cell borders
  mark cells whose error is significantly high after BH-FDR correction ($q<0.05$). The
  rows sweep the six models at intervention levels $\{0.99, 0.50, 0.01\}$, and the columns
  sweep full against partial observation and a normal against a high sample budget. Grey
  cells are walls and are never sampled.}
  \label{fig:ed-two-door}
\end{figure}

\begin{figure}[htbp]
  \centering
  \includegraphics[width=0.8\linewidth]{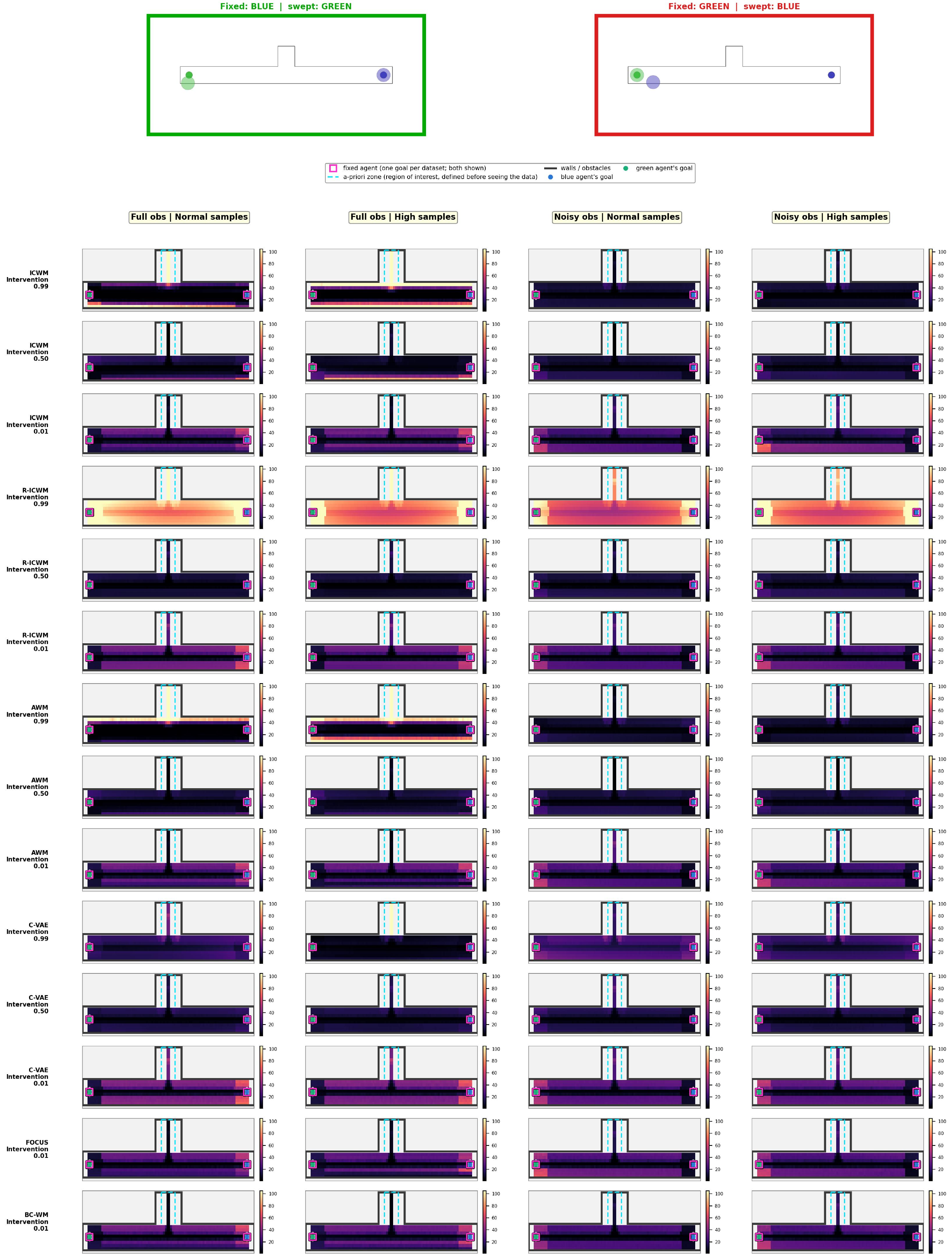}
  \caption{Spatial error density for \texttt{give\_way}, a continuous environment. It is a
  horizontal corridor with a give-way bay, the vertical notch, where one agent can pull
  aside and let the other pass. The fixed agent's goal sits inside the corridor. The
  corridor and the bay are sampled separately, because an agent cannot physically sit in
  the thin strip at the mouth of the notch. That strip is filled in for the drawing so the
  notch looks connected to the corridor, but every statistic uses only the cells that were
  actually sampled. The dashed cyan outline marks the bay, which is the interaction zone.
  At a high intervention level the causal models are almost perfect along the busy corridor
  but visibly bright inside the bay, which is the local blind spot measured in
  Table~\ref{tab:ed-all}. The row and column layout matches Fig.~\ref{fig:ed-two-door}.}
  \label{fig:ed-give-way}
\end{figure}

\begin{figure}[htbp]
  \centering
  \includegraphics[width=\linewidth]{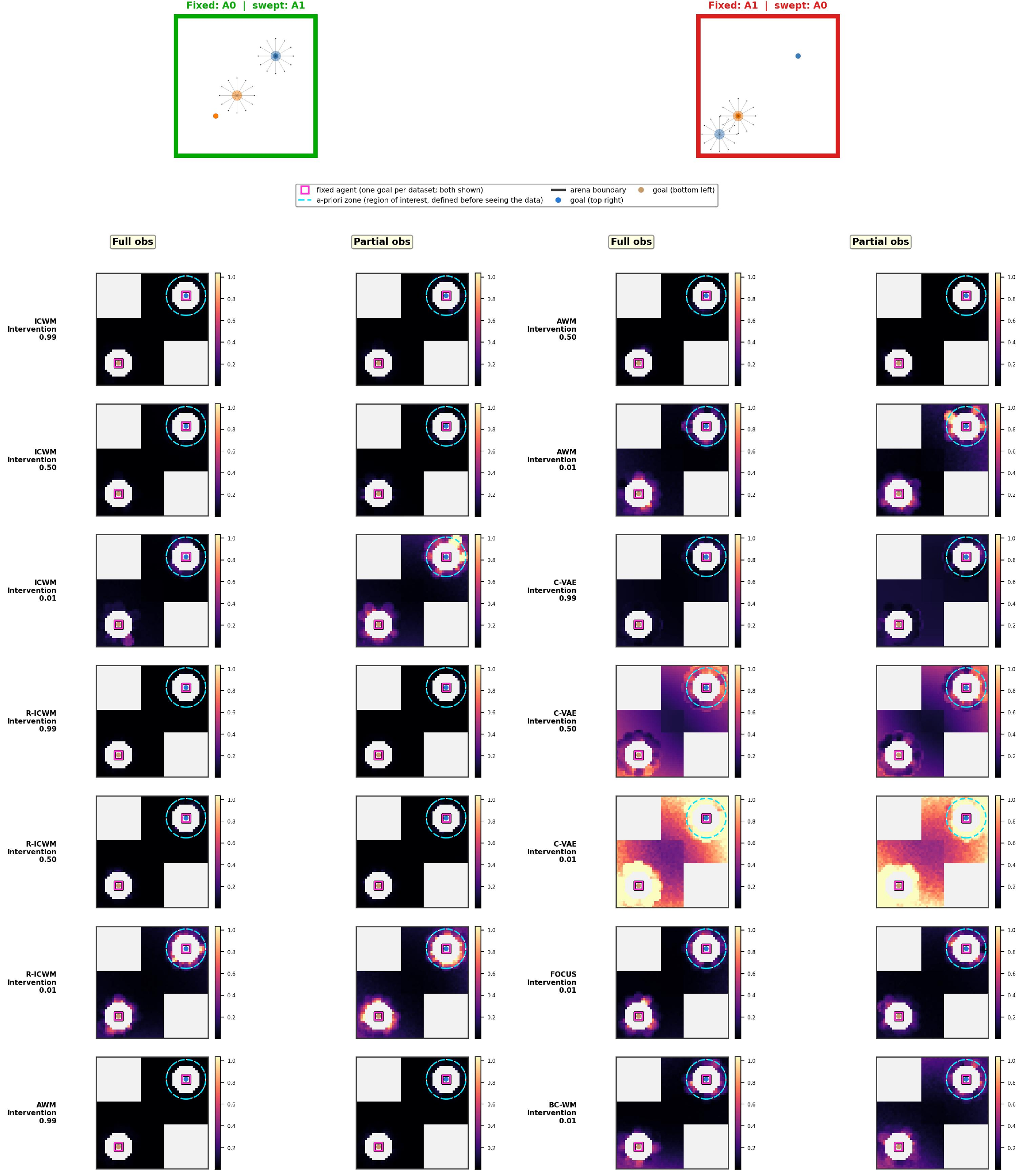}
  \caption{Spatial error density for \texttt{navigation}, a continuous $[-1,1]^2$ arena.
  The fixed agent sits on its goal and the moving agent is swept over a square region
  around it. The dashed white circle is the fixed agent's LIDAR range, the only region
  where the two agents actually sense each other. The error is close to the numerical
  floor across the open arena and gathers almost entirely inside the LIDAR circle, so each
  panel looks nearly black except for a bright ring around the fixed agent. This is the
  expected signature of a model that has learned to focus on the interaction, not a
  defect. The row and column layout matches Fig.~\ref{fig:ed-two-door}.}
  \label{fig:ed-navigation}
\end{figure}

\subsection{Quantitative summary}

Table~\ref{tab:ed-all} reports, for each model and intervention level, the global mean
error $\overline{\mathrm{err}}$ from Eq.~\eqref{eq:err-dyn} and the interaction-zone
in/out ratio $r_z$ in each environment (full observation, normal samples, both datasets
averaged). The zone is the closed-door corridor for \texttt{two\_door}, the give-way bay
for \texttt{give\_way}, and the fixed agent's LIDAR disc for \texttt{navigation}. Ratios
above $1$, where the model is worse inside the interaction zone, are shown in bold.

\begin{table}[htbp]
  \centering
  \scriptsize
  \setlength{\tabcolsep}{3.2pt}
  \caption{Spatial error density across all three environments and every data condition.
  For each condition we report the global mean error $\overline{\mathrm{err}}$ (``e'') and
  the interaction-zone in/out ratio $r_z$ (``$r$''), both averaged over the two
  fixed-agent datasets. Columns are grouped by environment, then by observation mode
  (full vs.\ degraded), then by sample budget (normal ``N'' vs.\ high ``H''). Higher
  intervention level means more exploratory data. Ratios above $1$ are in bold.}
  \label{tab:ed-all}
  \begin{tabular}{ll *{4}{cc} *{4}{cc} cc}
    \toprule
    & & \multicolumn{8}{c}{\texttt{two\_door}} & \multicolumn{8}{c}{\texttt{give\_way}}
      & \multicolumn{2}{c}{\texttt{navigation}} \\
    \cmidrule(lr){3-10}\cmidrule(lr){11-18}\cmidrule(lr){19-20}
    & & \multicolumn{4}{c}{full obs} & \multicolumn{4}{c}{partial obs}
      & \multicolumn{4}{c}{full obs} & \multicolumn{4}{c}{noisy obs}
      & \multicolumn{2}{c}{full obs} \\
    \cmidrule(lr){3-6}\cmidrule(lr){7-10}\cmidrule(lr){11-14}\cmidrule(lr){15-18}\cmidrule(lr){19-20}
    & & \multicolumn{2}{c}{N} & \multicolumn{2}{c}{H} & \multicolumn{2}{c}{N}
      & \multicolumn{2}{c}{H} & \multicolumn{2}{c}{N} & \multicolumn{2}{c}{H}
      & \multicolumn{2}{c}{N} & \multicolumn{2}{c}{H} & \multicolumn{2}{c}{N} \\
    \cmidrule(lr){3-4}\cmidrule(lr){5-6}\cmidrule(lr){7-8}\cmidrule(lr){9-10}%
    \cmidrule(lr){11-12}\cmidrule(lr){13-14}\cmidrule(lr){15-16}\cmidrule(lr){17-18}%
    \cmidrule(lr){19-20}
    Model & Int. & e & $r$ & e & $r$ & e & $r$ & e & $r$ & e & $r$ & e & $r$
      & e & $r$ & e & $r$ & e & $r$ \\
    \midrule
    \rowcolor{blue!5}  ICWM   & 0.99 & 0.050 & 0.20 & 0.221 & 0.01 & 0.072 & 0.26 & 0.100 & 0.10 & 28.37 & \textbf{48.96} & 60.66 & \textbf{29.16} & 9.28 & 0.87 & 7.67 & \textbf{1.74} & 0.006 & \textbf{11.03} \\
    \rowcolor{blue!5}  ICWM   & 0.50 & 0.116 & 0.56 & 0.281 & 0.34 & 0.055 & 0.32 & 0.297 & 0.06 & 13.55 & 0.06 & 21.40 & 0.36 & 10.50 & 0.07 & 11.03 & 0.34 & 0.012 & \textbf{5.07} \\
    \rowcolor{blue!5}  ICWM   & 0.01 & 1.04 & 0.59 & 0.537 & 0.41 & 1.09 & 0.78 & 0.761 & 0.35 & 19.93 & 0.29 & 22.20 & 0.24 & 19.59 & 0.67 & 22.61 & 0.74 & 0.045 & \textbf{5.53} \\
    \addlinespace
    \rowcolor{green!5}  R-ICWM & 0.99 & 0.078 & 0.48 & 0.155 & 0.09 & 0.114 & 0.40 & 0.196 & 0.06 & 93.88 & \textbf{1.57} & 79.88 & \textbf{1.56} & 66.37 & \textbf{1.31} & 77.58 & \textbf{1.21} & 0.005 & \textbf{10.75} \\
    \rowcolor{green!5}  R-ICWM & 0.50 & 0.408 & 0.39 & 0.406 & 0.05 & 0.128 & 0.21 & 0.292 & 0.09 & 8.88 & \textbf{1.79} & 8.08 & \textbf{2.21} & 10.41 & \textbf{1.03} & 11.56 & 0.89 & 0.010 & \textbf{11.48} \\
    \rowcolor{green!5}  R-ICWM & 0.01 & 1.25 & 0.46 & 0.756 & 0.19 & 1.17 & 0.70 & 0.538 & 0.31 & 26.56 & 0.79 & 24.12 & 0.80 & 20.12 & 0.80 & 20.90 & 0.75 & 0.094 & \textbf{6.18} \\
    \addlinespace
    \rowcolor{orange!5} AWM    & 0.99 & 0.048 & 0.20 & 0.170 & 0.02 & 0.082 & 0.22 & 0.253 & 0.01 & 32.77 & \textbf{81.97} & 56.98 & \textbf{22.21} & 9.09 & 0.77 & 7.86 & \textbf{2.17} & 0.005 & \textbf{11.26} \\
    \rowcolor{orange!5} AWM    & 0.50 & 0.114 & 0.67 & 0.271 & 0.13 & 0.060 & 0.38 & 0.334 & 0.05 & 10.06 & 0.18 & 11.30 & 0.66 & 10.89 & 0.10 & 10.99 & 0.20 & 0.009 & \textbf{13.99} \\
    \rowcolor{orange!5} AWM    & 0.01 & 1.17 & 0.57 & 0.593 & 0.34 & 0.774 & 0.74 & 0.772 & 0.43 & 21.09 & 0.19 & 18.77 & 0.16 & 20.34 & 0.97 & 19.95 & 0.65 & 0.085 & \textbf{4.77} \\
    \addlinespace
    \rowcolor{purple!5} C-VAE  & 0.99 & 0.495 & 0.26 & 0.507 & 0.07 & 0.417 & 0.49 & 0.295 & 0.23 & 17.16 & \textbf{2.46} & 7.89 & \textbf{24.19} & 24.13 & 0.88 & 17.89 & \textbf{1.27} & 0.050 & \textbf{1.86} \\
    \rowcolor{purple!5} C-VAE  & 0.50 & 1.41 & 0.64 & 1.57 & 0.55 & 0.341 & 0.37 & 1.64 & 0.57 & 11.73 & \textbf{1.09} & 11.12 & \textbf{1.47} & 11.75 & \textbf{1.71} & 11.49 & \textbf{1.74} & 0.335 & \textbf{1.57} \\
    \rowcolor{purple!5} C-VAE  & 0.01 & 2.99 & 0.64 & 3.63 & 0.65 & 2.71 & 0.66 & 2.83 & 0.58 & 29.24 & 0.91 & 28.61 & 0.98 & 22.41 & 0.88 & 22.92 & 0.88 & 0.823 & \textbf{2.03} \\
    \bottomrule
  \end{tabular}
\end{table}

\subsection{Discussion}
\label{app:ed-discussion}

Three points stand out in Table~\ref{tab:ed-all}. First, the error gathers on the
interaction states in every environment, and this is expected rather than a fault. In
\texttt{navigation} the ratio $r_z$ is above $1$ for every model, and above $10$ for the
causal models at high intervention. This is because away from the other agent the
dynamics are simple and the error drops to the floor (about $5\times10^{-3}$), so all of
the difficulty is packed into the LIDAR disc. A flat map would be the surprising outcome.
The right way to read $r_z$ is therefore as a comparison between models at the same global
error, not as an absolute score.

Second, a low average can hide a blind spot. In \texttt{give\_way} the causal and
autoregressive models look fine on the global error ($28.4$ and $32.8$) but carry very
large bay ratios ($48.9$ and $82.0$). They predict the busy corridor well and fail on the
rare give-way move. C-VAE behaves in the opposite way, with a much flatter map (bay ratio
$2.5$) at a similar global error. So two models that are within a factor of two on the
single number can spread their error in completely different ways, and only one of them
leaves a gap right where the agents coordinate.

Third, more exploration flattens the map but raises the whole floor. Lowering the
intervention level from $0.99$ to $0.01$ raises the global error by roughly a factor of
$20$ while the zone ratios even out, and the give-way bay ratio for the causal models
falls from about $50$ to about $0.2$. In other words, exploratory data is what teaches the
model the interaction region, but the price is lower overall sharpness.

R-ICWM stands out in \texttt{give\_way}, where its map is bright almost everywhere and its
global error is the highest in the table ($93.88$ at high intervention). The reason is the
way we build the samples. To cover the space we teleport the moving agent to each cell and
set both agents to zero velocity, so every sampled state is static. R-ICWM is a recurrent
model trained on real demonstrations, in which an agent is almost always in motion, so
these frozen, zero-velocity states are far from anything it saw during training and it
does not generalise to them. The other models are less sensitive to this because they do
not carry a recurrent state that expects motion to continue.

Two limits are worth stating. The ratios describe the pattern but do not explain it,
because the intervention level is tied to both how hard the data is and how well it covers
the space. And the global error is normalised per dataset, so it can be compared within an
environment but not across environments.

\section{Generalization under layout shift}
\label{app:shifted-generalization}

\subsection{Setup}

The shifted environments of Appendix~\ref{app:shifted-envs} let us ask a question the
in-distribution error maps cannot: does a world model trained on one layout still predict
correctly once the geometry changes, or has it only memorised the training room? For each
shift we take the same frozen, never-retrained world models from Appendix~\ref{app:error-density},
run them on the shifted layout using the same grid-sampling protocol (one agent fixed on
its goal, the other swept over the reachable space, both taking random actions), and
compare the error against the model's own error on the original, unshifted layout under
the matched observation mode and sample regime. We report two error readings per
condition: the dynamics error $\overline{\mathrm{err}}_{\mathrm{dyn}}$ of
Eq.~\eqref{eq:err-dyn}, computed over all dynamic state dimensions, and the position-only
error $\overline{\mathrm{err}}_{\mathrm{pos}}$, the same statistic restricted to the
moving agent's position dimensions. The two rarely move together: a model can localise the
agent correctly while still getting its velocity wrong, or vice versa, so reporting only
one would hide half the failure mode. The generalization ratio
$\rho = \overline{\mathrm{err}}_{\mathrm{shifted}} / \overline{\mathrm{err}}_{\mathrm{source}}$
is the headline statistic: $\rho \approx 1$ means the model transfers to the new layout as
well as it performs on the layout it was trained on, and $\rho \gg 1$ means the shift
exposes real extrapolation failure.

We group the seven shifts into two families that probe different things. The first four
\emph{rearrange space}: they rotate, mirror, or stretch the room, so a model is fed
coordinates and configurations it has partly (rotation, mirror) or never (stretch) seen.
These confound two failure modes a heatmap alone cannot separate: a model can fail on a
rotated grid either because it never learned the dynamics abstractly, or simply because it
is being handed coordinates outside its training support. The last three
(Appendix~\ref{app:shifted-envs}) are designed to break that confound: each holds the
geometry and the visited coordinates \emph{fixed} and perturbs exactly one thing: the
transition rule (\texttt{Wrap Two Door}), the joint state distribution
(\texttt{Anti-Causal Two Door}), or one feature's extent (\texttt{Easy Give Way}). Read
together, the two families let us attribute a transfer failure to a specific cause rather
than to unseen coordinates in general.

Figures~\ref{fig:shifted-rotated}--\ref{fig:shifted-easy-gw} show the full composite
error maps; Table~\ref{tab:shift-all} gives the matching numbers for all seven shifts,
broken out by observation class (full vs.\ degraded) and sample complexity
(normal vs.\ high).

\subsection{Figures: space-rearranging shifts}

\begin{figure}[htbp]
  \centering
  \includegraphics[trim={0pt 0pt 0pt 50pt}, clip,width=0.9\linewidth]{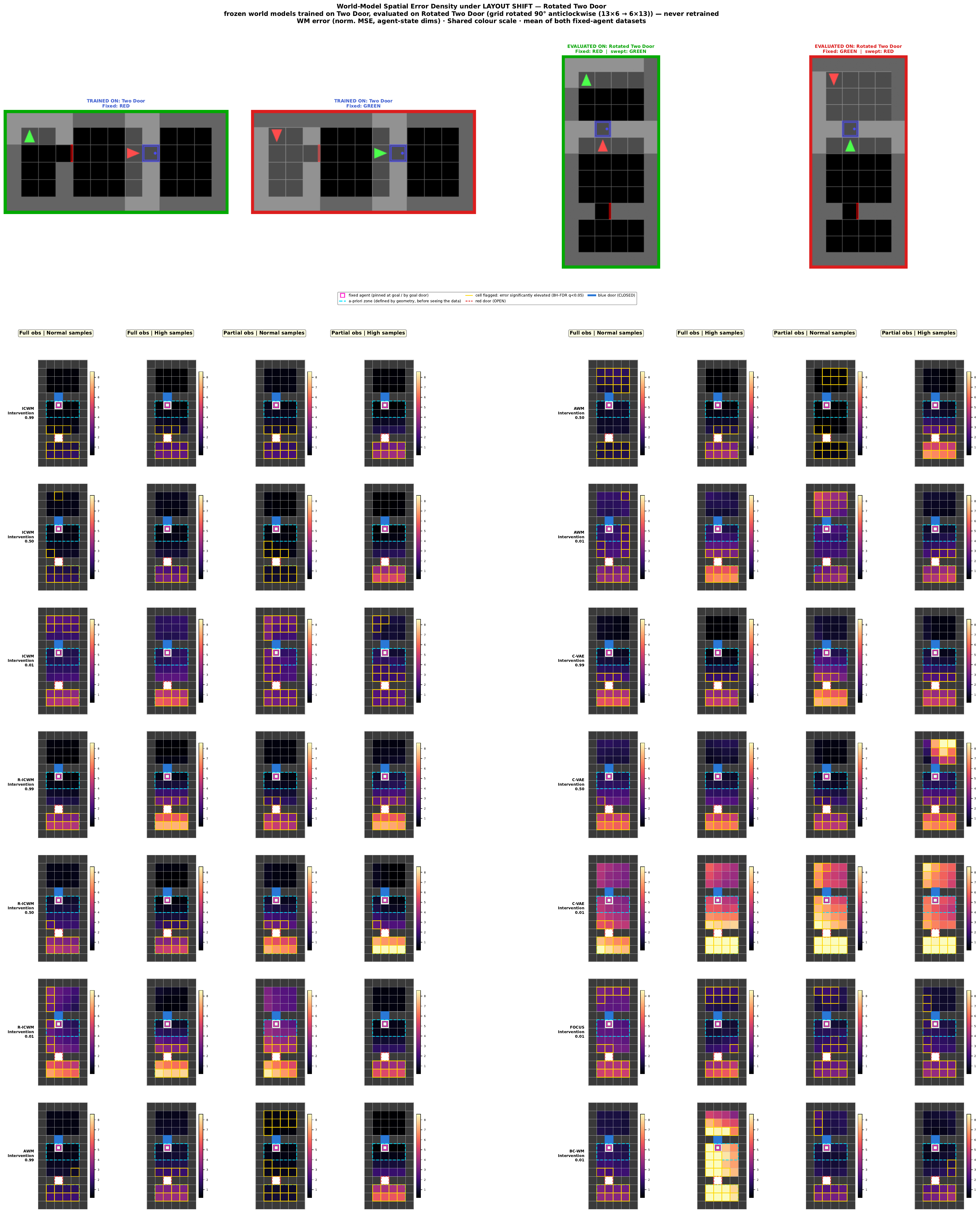}
  \caption{Dynamics error density on \texttt{Rotated Two Door}. The header shows the two
  layouts side by side: the training room (left two panels, blue title border) and the same
  room turned $90^\circ$ anticlockwise, so its left-right corridors now run top-bottom
  (right two panels, green/red title borders, one per fixed-agent choice). Each body panel
  is a spatial heatmap of the frozen world model's one-step prediction error, bucketed by
  the swept agent's position and drawn over the shifted room; darker is lower error,
  brighter is higher, on a colour scale shared across every panel so brightness is directly
  comparable. Rows are model $\times$ intervention level, split into two side-by-side blocks
  (models 1--7 left, 8--14 right) so all fourteen fit on one page without a tall vertical
  stack; columns are observation mode $\times$ sample regime. Overlays: magenta squares mark
  the fixed agent (pinned at its goal), the dashed cyan box is the a-priori interaction zone
  fixed before seeing the data, and gold cell borders flag cells whose error is
  significantly elevated after BH-FDR correction ($q<0.05$). This is a discrete grid, so no
  smoothing is applied and every cell is the raw per-cell statistic.}
  \label{fig:shifted-rotated}
\end{figure}

\begin{figure}[htbp]
  \centering
  \includegraphics[trim={0pt 0pt 0pt 50pt}, clip,width=\linewidth]{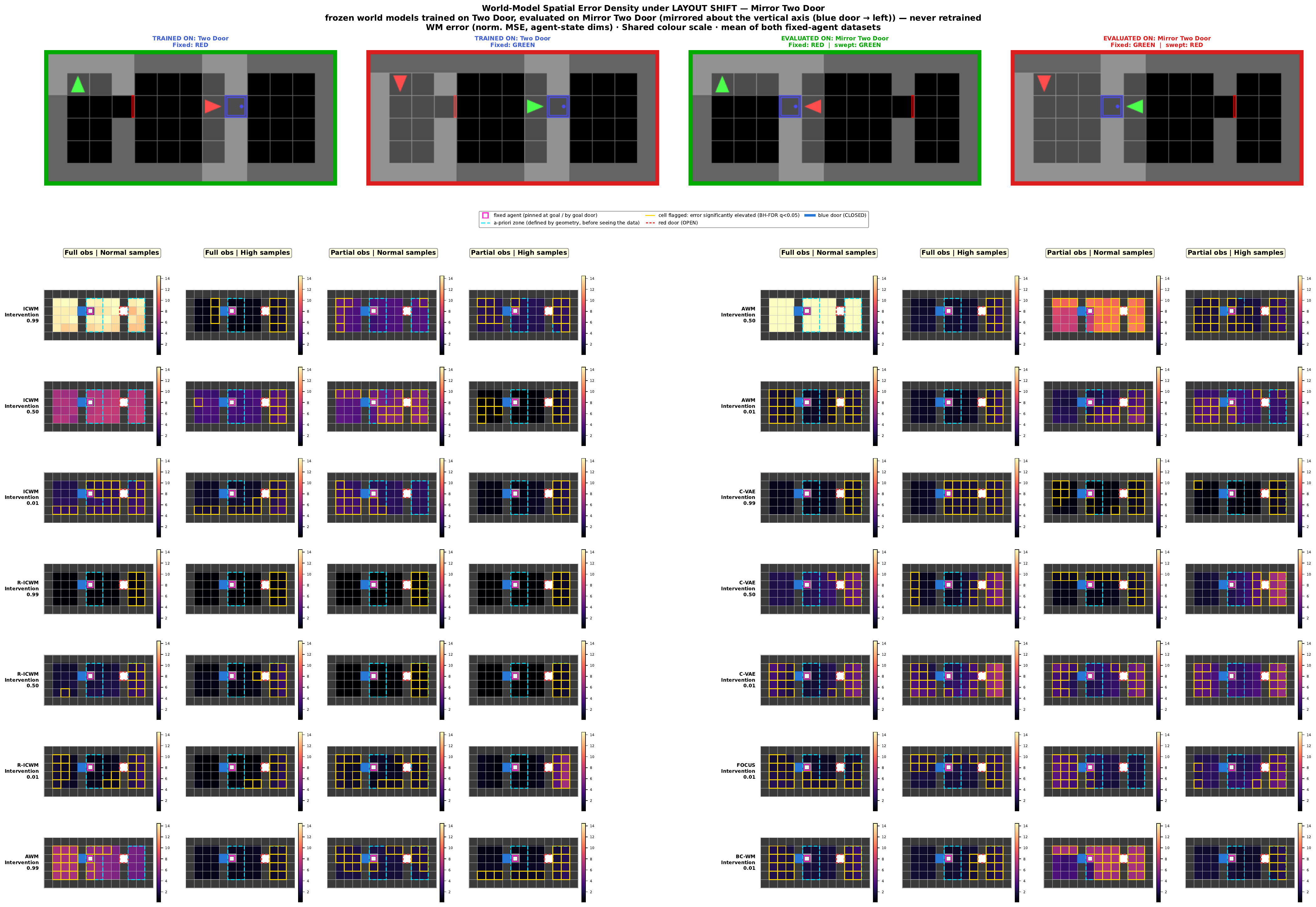}
  \caption{Dynamics error density on \texttt{Mirror Two Door}: the room reflected about the
  vertical axis, so the closed goal (blue) door moves from the right side of the corridor
  to the left and the open (red) door moves the opposite way. Same row/column layout,
  overlays (magenta fixed agent, dashed cyan zone, gold FDR flags), and shared colour scale
  as Fig.~\ref{fig:shifted-rotated}; discrete grid, so no smoothing. The mirror flips
  left-right without changing the axes' meaning, which makes it the sharpest test of
  whether a model has learned a direction-specific shortcut (its error spikes on the
  swapped side) instead of the true, symmetric corridor dynamics (its error stays low).}
  \label{fig:shifted-mirror-td}
\end{figure}

\begin{figure}[htbp]
  \centering
  \includegraphics[trim={0pt 0pt 0pt 47pt}, clip,width=\linewidth]{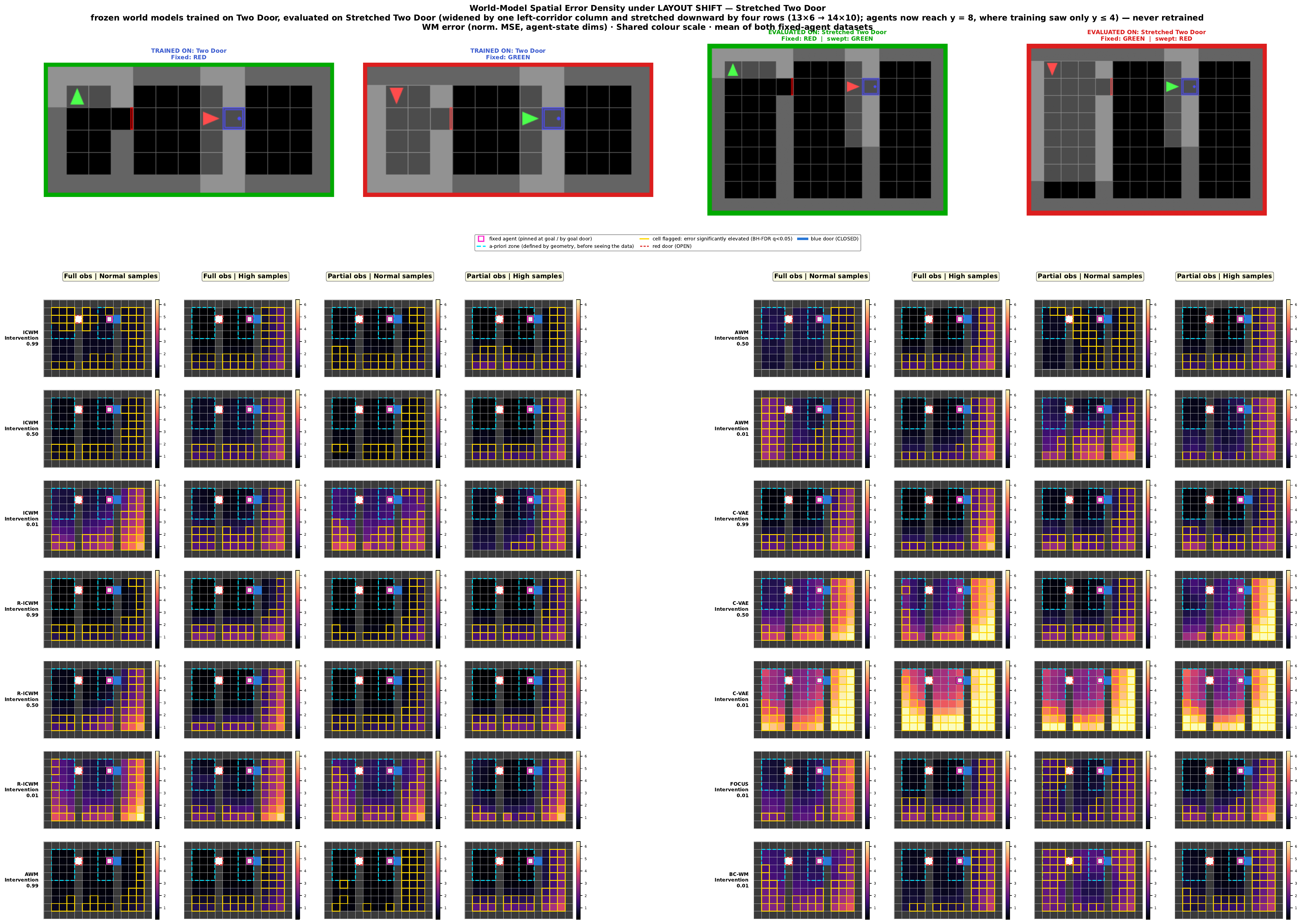}
  \caption{Dynamics error density on \texttt{Stretched Two Door}: the room widened by one
  corridor column and stretched by four extra rows, so the agents now reach
  $y$-coordinates the training layout never visited. Unlike the rotation and the mirror,
  this shift adds genuinely unseen space rather than rearranging seen space, so the lower
  rows of each panel test true extrapolation, and the error there is the clearest signal
  of whether a model has learned transferable dynamics or only a map of the training room.
  Same row/column layout, overlays (magenta fixed agent, dashed cyan zone, gold FDR
  flags), and shared colour scale as Fig.~\ref{fig:shifted-rotated}; discrete grid, so no
  smoothing.}
  \label{fig:shifted-stretched}
\end{figure}

\begin{figure}[htbp]
  \centering
  \includegraphics[trim={0pt 0pt 0pt 50pt}, clip,width=\linewidth]{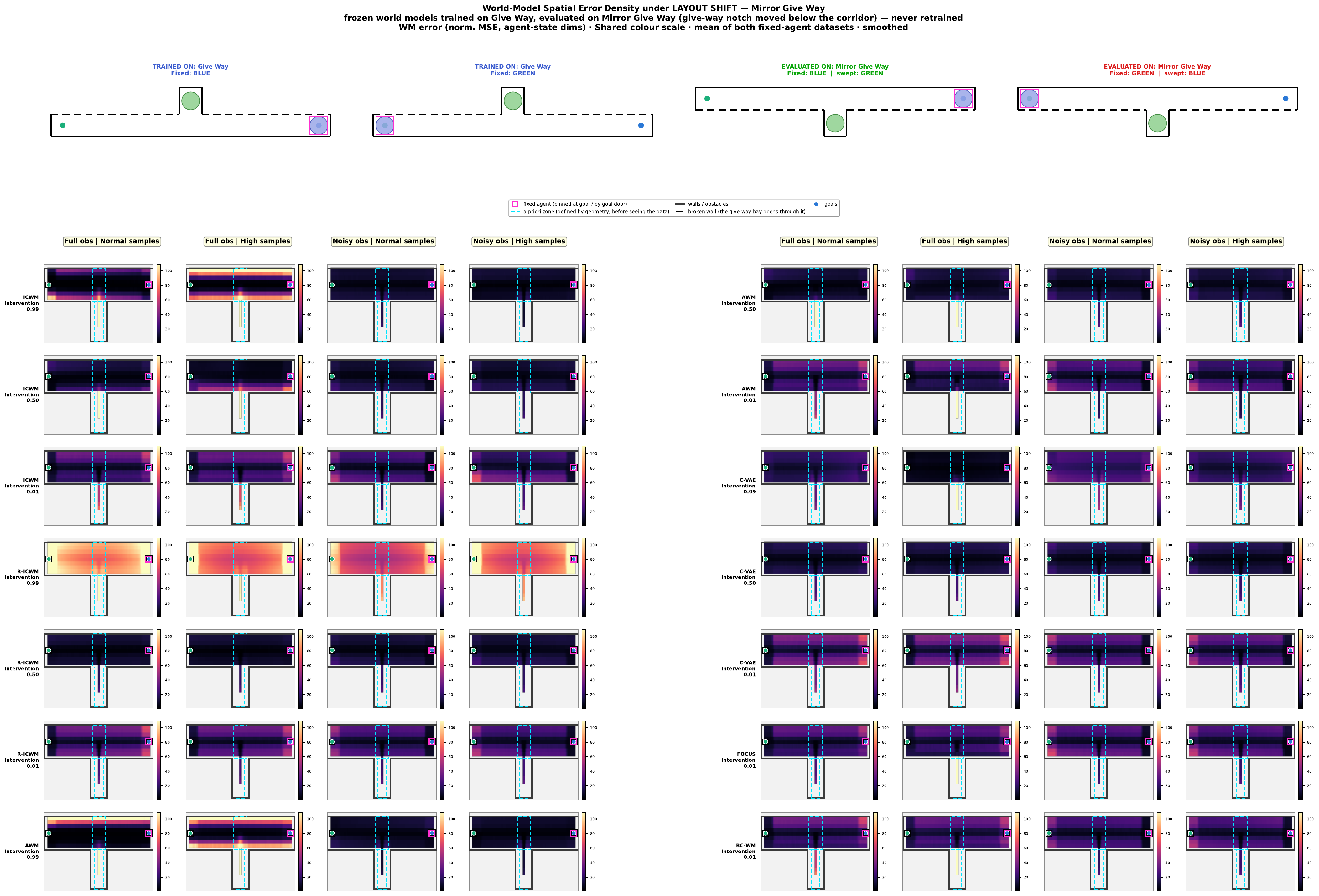}
  \caption{Dynamics error density on \texttt{Mirror Give Way}: the give-way bay (the notch
  off the corridor where one agent waits for the other to pass) flipped from above the
  corridor to below it, while the corridor itself, the goals, the agent radius, and the
  action range are unchanged. Same row/column layout and overlays (magenta fixed agent,
  dashed cyan bay zone) as Fig.~\ref{fig:shifted-rotated}, with the colour scale shared
  across every panel. Because so little of the sampled geometry actually moves, the error
  maps here look essentially identical to the un-shifted Give Way maps, the visual
  counterpart of the $\rho \approx 1.0$ ratios in Table~\ref{tab:shift-all}. This is a
  continuous environment, so (unlike the three discrete two-door shifts above) the
  sample-count-weighted Gaussian smoothing is applied, trading a little per-cell precision
  for a map that is not dominated by sampling noise in the sparsely-covered bay.}
  \label{fig:shifted-mirror-gw}
\end{figure}

\subsection{Figures: controlled single-factor shifts}

\begin{figure}[htbp]
  \centering
  \includegraphics[trim={0pt 0pt 0pt 50pt}, clip,width=\linewidth]{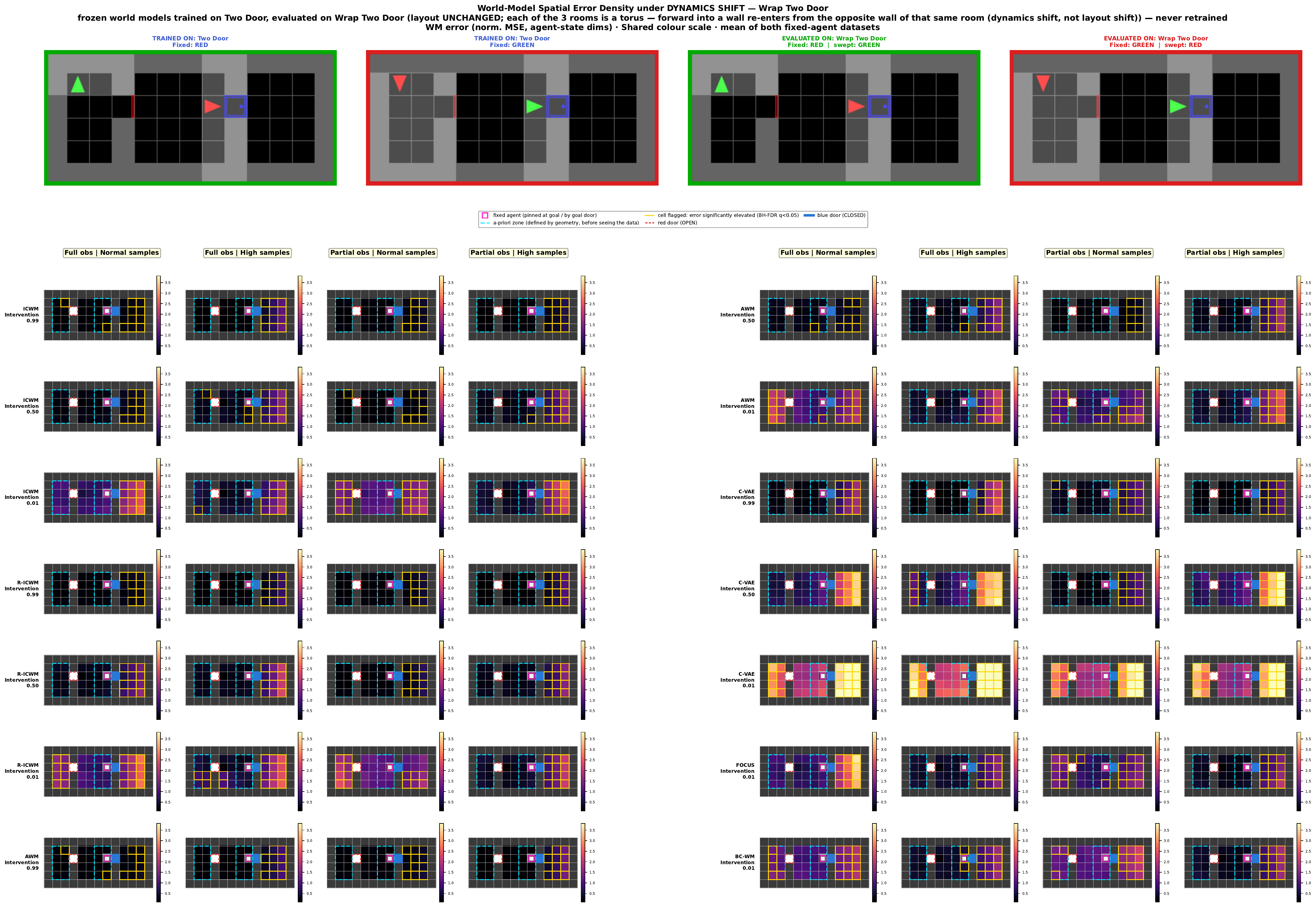}
  \caption{Dynamics error density on \texttt{Wrap Two Door}: the two-door layout is
  byte-identical to the training room (same walls, doors, legal cells, and grid size),
  but the \emph{transition function} is changed so that each of the three rooms is a torus,
  and a \texttt{forward} step into a wall re-enters from the opposite wall of that same
  room. The model is therefore fed only coordinates it has seen; the sole novelty is
  \emph{what happens next} at the room edges. This isolates the transition rule from the
  input distribution: a model that transfers here at $\rho \approx 1$ (as all do,
  Table~\ref{tab:shift-all}) is one whose error is driven by unfamiliar coordinates
  rather than by the dynamics themselves. Same row/column layout, overlays, and shared
  colour scale as Fig.~\ref{fig:shifted-rotated}; discrete grid, so no smoothing.}
  \label{fig:shifted-wrap}
\end{figure}

\begin{figure}[htbp]
  \centering
  \includegraphics[trim={0pt 0pt 0pt 50pt}, clip,width=\linewidth]{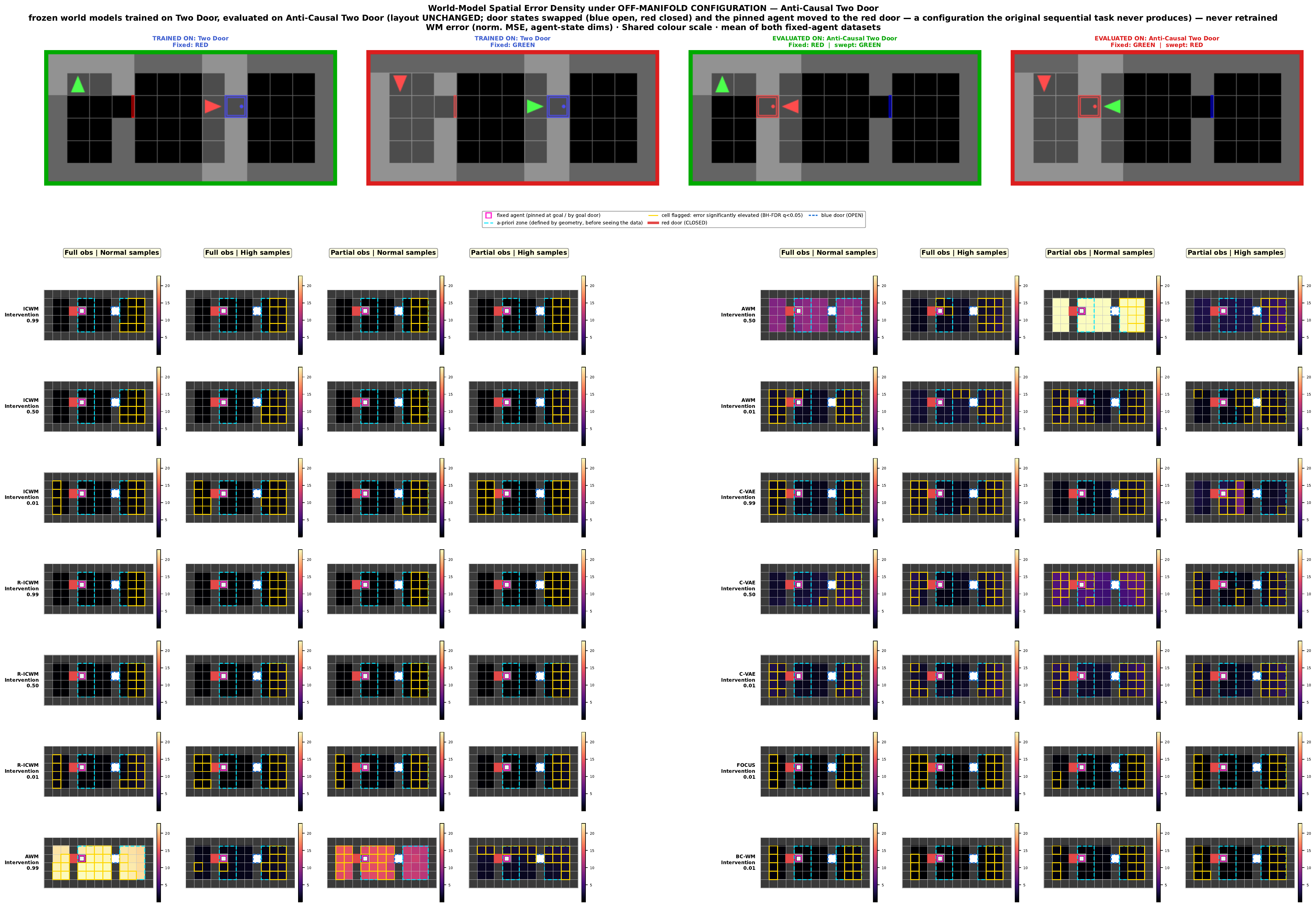}
  \caption{Dynamics error density on \texttt{Anti-Causal Two Door}: the geometry is
  unchanged, but the \emph{joint state} is placed off-manifold: the door states are
  swapped (blue open, red closed) and the pinned agent is moved to the red door, a
  sequentially impossible configuration since the task requires opening red before blue.
  Every coordinate and door column has been seen individually; only their joint setting is
  novel. The diagnostic is the split between the two error readings
  (Table~\ref{tab:shift-all}): a model that factored agent dynamics from door state
  should keep its \emph{position} error near the source ($\rho_{\mathrm{pos}} \approx 1$)
  while a model that memorised the joint configuration should see error rise
  \emph{everywhere at once}. The maps show the latter: error lifts across the whole room
  rather than concentrating at a door. Same row/column layout, overlays, and shared colour
  scale as Fig.~\ref{fig:shifted-rotated}; discrete grid, so no smoothing.}
  \label{fig:shifted-anti-causal}
\end{figure}

\begin{figure}[htbp]
  \centering
  \includegraphics[trim={0pt 0pt 0pt 50pt}, clip,width=\linewidth]{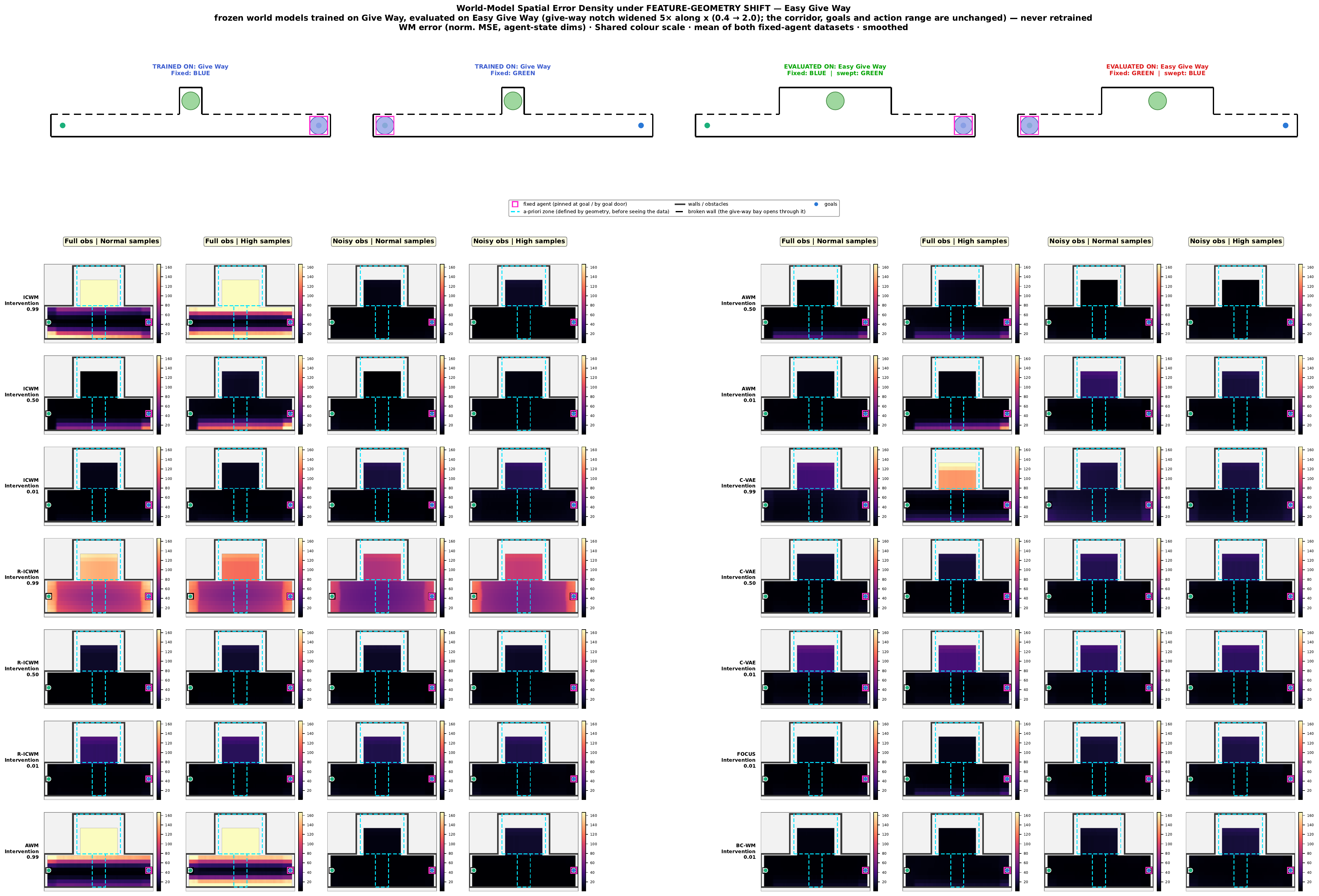}
  \caption{Dynamics error density on \texttt{Easy Give Way}: exactly one feature's geometry
  moves (the give-way notch is widened five-fold along $x$, passage $0.4 \to 2.0$),
  while the corridor, the goals, the agent radius, and the action range are all held fixed.
  The widened bay places the swept agent at $x$-extremes that were physically unreachable in
  the training layout, so this is a targeted probe of \emph{positional} extrapolation
  against a backdrop of unchanged dynamics. The result is the mirror image of
  Anti-Causal: dynamics error stays at or below the source ($\rho_{\mathrm{dyn}} \le 1$),
  while position error rises two- to five-fold in the new bay region
  (Table~\ref{tab:shift-all}). Continuous environment, so sample-count-weighted
  Gaussian smoothing is applied as in Fig.~\ref{fig:shifted-mirror-gw}. Same row/column
  layout and overlays as Fig.~\ref{fig:shifted-rotated}.}
  \label{fig:shifted-easy-gw}
\end{figure}

\subsection{Quantitative summary}

For each shifted environment we report the error at the lowest intervention level
($\mathrm{interv.}=0.01$), the condition where every model's data most closely resembles
near-optimal, goal-directed behaviour. This is also, as the discussion below shows, the
condition under which generalization is best, so it is the fair place to look for evidence
of transfer. $\rho_{\mathrm{dyn}}$ and $\rho_{\mathrm{pos}}$ are the generalization ratios
for the two error readings; a bold entry marks $\rho \le 1.5$, i.e.\ a shifted-layout error
within 50\% of the model's own source-layout error under the matched condition.

\begin{table*}[htbp]
  \centering
  \footnotesize
  \setlength{\tabcolsep}{3.5pt}
  \caption{Generalization ratios for all seven shifted layouts at $\mathrm{interv.}=0.01$,
  grouped by source environment. For each model, observation mode, and sample budget we
  report $\rho_{\mathrm{dyn}}$ and $\rho_{\mathrm{pos}}$, the dynamics- and position-error
  ratios against the model's own error on its un-shifted source layout under the matched
  condition. A bold entry marks $\rho \le 1.5$. The five two-door shifts are the three
  \emph{space-rearranging} ones (Rotated, Mirror, Stretched) plus the two
  \emph{controlled single-factor} ones (Wrap changes only the transition rule; Anti-Causal
  changes only the joint state); the two give-way shifts are the space-rearranging Mirror
  and the controlled Easy (one feature's extent widened). The controlled shifts isolate
  distinct signatures: Wrap transfers cleanly, Anti-Causal lifts \emph{dynamics} error while
  leaving position untouched, and Easy does the opposite. ``Degraded obs'' is partial
  observation for the two-door shifts and observation noise for the two give-way layouts.}
  \label{tab:shift-all}
  \resizebox{\textwidth}{!}{%
  \begin{tabular}{ll l cc cc cc cc cc cc cc}
    \toprule
    & & & \multicolumn{10}{c}{Two-Door} & \multicolumn{4}{c}{Give-Way} \\
    \cmidrule(lr){4-13}\cmidrule(lr){14-17}
    & & & \multicolumn{2}{c}{Rotated} & \multicolumn{2}{c}{Mirror}
      & \multicolumn{2}{c}{Stretched} & \multicolumn{2}{c}{Wrap}
      & \multicolumn{2}{c}{Anti-Causal}
      & \multicolumn{2}{c}{Mirror} & \multicolumn{2}{c}{Easy} \\
    \cmidrule(lr){4-5}\cmidrule(lr){6-7}\cmidrule(lr){8-9}\cmidrule(lr){10-11}%
    \cmidrule(lr){12-13}\cmidrule(lr){14-15}\cmidrule(lr){16-17}
    Obs & Samples & Model
      & $\rho_{\mathrm{dyn}}$ & $\rho_{\mathrm{pos}}$
      & $\rho_{\mathrm{dyn}}$ & $\rho_{\mathrm{pos}}$
      & $\rho_{\mathrm{dyn}}$ & $\rho_{\mathrm{pos}}$
      & $\rho_{\mathrm{dyn}}$ & $\rho_{\mathrm{pos}}$
      & $\rho_{\mathrm{dyn}}$ & $\rho_{\mathrm{pos}}$
      & $\rho_{\mathrm{dyn}}$ & $\rho_{\mathrm{pos}}$
      & $\rho_{\mathrm{dyn}}$ & $\rho_{\mathrm{pos}}$ \\
    \midrule
    \rowcolor{blue!5} Full & Normal & ICWM   & 2.2x & 3.2x & 2.5x & 2.4x & \textbf{1.9x} & 2.9x & \textbf{1.0x} & \textbf{1.1x} & 2.7x & \textbf{0.3x} & \textbf{1.1x} & \textbf{1.0x} & \textbf{0.2x} & 3.1x \\
    \rowcolor{blue!5} Full & Normal & R-ICWM & 2.5x & 3.2x & \textbf{1.1x} & \textbf{1.2x} & \textbf{1.9x} & 2.5x & \textbf{1.0x} & \textbf{1.1x} & 3.1x & \textbf{0.7x} & \textbf{1.0x} & \textbf{1.0x} & \textbf{0.3x} & 4.6x \\
    \rowcolor{blue!5} Full & Normal & AWM    & 1.9x & 2.0x & \textbf{1.2x} & 1.8x & \textbf{1.3x} & \textbf{1.7x} & \textbf{1.0x} & \textbf{1.1x} & 2.4x & \textbf{1.4x} & \textbf{1.1x} & \textbf{1.0x} & \textbf{0.2x} & 2.4x \\
    \rowcolor{blue!5} Full & Normal & C-VAE  & \textbf{1.6x} & \textbf{1.4x} & \textbf{0.9x} & \textbf{1.0x} & \textbf{1.5x} & \textbf{1.7x} & \textbf{1.0x} & \textbf{1.0x} & 2.8x & \textbf{0.7x} & \textbf{1.0x} & \textbf{1.0x} & \textbf{0.4x} & 2.7x \\
    \addlinespace
    \rowcolor{green!5} Full & High   & ICWM   & 2.3x & 3.3x & \textbf{1.5x} & 1.8x & \textbf{1.2x} & 2.1x & \textbf{1.1x} & \textbf{1.1x} & 5.0x & \textbf{0.7x} & \textbf{1.2x} & \textbf{0.9x} & \textbf{0.2x} & 3.4x \\
    \rowcolor{green!5} Full & High   & R-ICWM & 2.1x & 4.1x & \textbf{0.6x} & \textbf{1.1x} & \textbf{1.3x} & 2.0x & \textbf{1.1x} & \textbf{1.1x} & 3.7x & \textbf{0.7x} & \textbf{0.9x} & \textbf{0.8x} & \textbf{0.3x} & 4.9x \\
    \rowcolor{green!5} Full & High   & AWM    & 2.4x & 3.4x & \textbf{1.1x} & \textbf{1.0x} & \textbf{0.9x} & 2.1x & \textbf{1.0x} & \textbf{1.1x} & 3.8x & \textbf{1.6x} & \textbf{1.3x} & \textbf{1.2x} & \textbf{0.7x} & 2.1x \\
    \rowcolor{green!5} Full & High   & C-VAE  & 2.4x & 2.6x & \textbf{1.2x} & 1.5x & \textbf{2.0x} & \textbf{2.0x} & \textbf{1.0x} & \textbf{1.0x} & 2.8x & \textbf{0.8x} & \textbf{1.0x} & \textbf{1.0x} & \textbf{0.4x} & 2.9x \\
    \addlinespace
    \rowcolor{orange!5} Degraded & Normal & ICWM   & 2.2x & 4.0x & 2.4x & \textbf{1.2x} & \textbf{1.7x} & \textbf{2.0x} & \textbf{1.0x} & \textbf{1.1x} & 3.2x & \textbf{0.4x} & \textbf{1.0x} & \textbf{1.0x} & \textbf{0.3x} & 2.9x \\
    \rowcolor{orange!5} Degraded & Normal & R-ICWM & 3.4x & 5.6x & \textbf{1.1x} & \textbf{1.5x} & \textbf{1.6x} & 2.9x & \textbf{1.0x} & \textbf{1.1x} & 4.0x & \textbf{0.7x} & \textbf{1.0x} & \textbf{1.0x} & \textbf{0.3x} & 2.9x \\
    \rowcolor{orange!5} Degraded & Normal & AWM    & 3.6x & 3.5x & 3.6x & 3.0x & 2.4x & 3.1x & \textbf{1.0x} & \textbf{1.2x} & 4.0x & 1.7x & \textbf{1.0x} & \textbf{1.0x} & \textbf{0.3x} & 2.9x \\
    \rowcolor{orange!5} Degraded & Normal & C-VAE  & 2.6x & 2.5x & \textbf{1.2x} & \textbf{1.2x} & \textbf{1.6x} & \textbf{1.9x} & \textbf{1.0x} & \textbf{1.0x} & 2.9x & \textbf{1.0x} & \textbf{1.0x} & \textbf{1.0x} & \textbf{0.4x} & 2.6x \\
    \addlinespace
    \rowcolor{purple!5} Degraded & High   & ICWM   & \textbf{1.5x} & 2.5x & \textbf{0.9x} & \textbf{1.1x} & \textbf{1.2x} & 2.1x & \textbf{1.0x} & \textbf{1.1x} & 3.8x & \textbf{0.5x} & \textbf{1.1x} & \textbf{1.3x} & \textbf{0.4x} & 2.7x \\
    \rowcolor{purple!5} Degraded & High   & R-ICWM & \textbf{1.4x} & 4.0x & \textbf{1.4x} & 2.6x & \textbf{1.1x} & 2.9x & \textbf{1.0x} & \textbf{1.1x} & 6.4x & \textbf{0.9x} & \textbf{1.0x} & \textbf{1.0x} & \textbf{0.3x} & 2.9x \\
    \rowcolor{purple!5} Degraded & High   & AWM    & 2.4x & 4.9x & 4.2x & \textbf{1.1x} & \textbf{1.7x} & 2.8x & \textbf{1.0x} & \textbf{1.1x} & 4.5x & \textbf{0.8x} & \textbf{1.0x} & \textbf{0.7x} & \textbf{0.3x} & 2.9x \\
    \rowcolor{purple!5} Degraded & High   & C-VAE  & 2.5x & 2.5x & \textbf{1.4x} & \textbf{1.4x} & \textbf{1.7x} & 2.0x & \textbf{1.0x} & \textbf{1.0x} & 4.2x & \textbf{0.7x} & \textbf{1.0x} & \textbf{1.0x} & \textbf{0.4x} & 2.6x \\
    \bottomrule
  \end{tabular}%
  }
\end{table*}

\subsection{Discussion}

\textbf{Generalization is intervention-dependent, and the effect is large.} Averaging the
dynamics-error ratio $\rho_{\mathrm{dyn}}$ over the three two-door shifts and both
observation modes, at $\mathrm{interv.}=0.99$ (near-optimal source data replaced by heavily
exploratory data) the causal and autoregressive models transfer badly: ICWM averages
$61\times$ worse on the shifted layout than on its own training layout, AWM $30\times$
worse, R-ICWM $7\times$ worse. Lowering the intervention level to $0.01$ collapses every
one of these ratios into a tight band of \textbf{1.1x--3.6x} (Table~\ref{tab:shift-all}),
with several cells at or below $1.5\times$ and R-ICWM occasionally transferring \emph{better}
than its own source-layout error ($\rho < 1$). The pattern is consistent enough to state
plainly: \textbf{the world models generalize to a shifted layout only in the regime where
their training data was itself close to near-optimal, goal-directed behaviour.} Exploratory
training data widens the states a model has seen, but it does so within the training
room's own geometry; that coverage does not transfer when the geometry itself changes, and
the model ends up more, not less, tied to the specific room it was fitted on.

\textbf{Mirror Give Way is the cleanest positive control among the space-rearranging
shifts.} Every model, at every observation mode, sample regime, and intervention level we
checked, transfers to the mirrored give-way layout at $\rho \approx 1.0$
(Table~\ref{tab:shift-all}), including at the highest intervention level where the two-door
shifts fail hardest. This is not a stronger claim about causal generalization than the
geometry supports: flipping the bay to the other side of the corridor changes almost
nothing about what the swept agent's position actually looks like relative to the fixed
agent and the corridor walls, so a model that has learned the corridor dynamics at all
should transfer near-perfectly regardless of intervention level. Mirror Give Way is
therefore best read as validating the measurement pipeline (a genuinely easy transfer reads
as easy) rather than as evidence that these models generalize broadly under layout shift.

\textbf{The controlled shifts separate ``unseen coordinates'' from ``wrong mechanism.''}
The space-rearranging family cannot say \emph{why} a model fails, because rotating or
stretching a room both moves the dynamics and hands the model coordinates outside its
support. The three single-factor shifts in Table~\ref{tab:shift-all} take those
causes apart, and each produces a distinct, readable signature.
\emph{(i)~Wrap Two Door} feeds the model only seen coordinates and changes only what happens
next at the room edges; \emph{every} model transfers at $\rho_{\mathrm{dyn}},
\rho_{\mathrm{pos}} \approx 1.0$ across all sixteen conditions. This is the decisive
negative control: it shows the large ratios in Table~\ref{tab:shift-all} are driven by
unfamiliar \emph{coordinates}, not by the presence of a shift as such: when the input
stays on-manifold, transfer is essentially perfect even though the transition rule has
changed underneath it.
\emph{(ii)~Anti-Causal Two Door} holds every coordinate fixed but places the joint
door/agent state off-manifold. Here dynamics error rises two- to six-fold
($\rho_{\mathrm{dyn}} = 2.4\text{--}6.4$) while position error is essentially unchanged and
frequently \emph{below} the source ($\rho_{\mathrm{pos}} \approx 0.3\text{--}1.0$). The
error lifts across the whole room rather than concentrating at a door: the models predict
where the agent goes just fine, but misread the door dimensions of the next state, exactly
the failure expected of a model that has entangled agent dynamics with the joint
configuration rather than factoring them.
\emph{(iii)~Easy Give Way} is the mirror image: widening one feature's extent leaves the
dynamics untouched ($\rho_{\mathrm{dyn}} \le 0.7$, often $\approx 0.2$) but drives
\emph{position} error up two- to five-fold ($\rho_{\mathrm{pos}} = 2.1\text{--}4.9$) in the
newly reachable bay region. The velocity model is correct; it is only the mapping to the
unseen $x$-extremes that breaks. Taken together, the two positive shifts (Anti-Causal and
Easy) demonstrate that the $\rho_{\mathrm{dyn}}$/$\rho_{\mathrm{pos}}$ split is not a
reporting artefact but a genuine, dissociable pair of failure modes: one shows up in
dynamics alone, the other in position alone, on demand.

\textbf{Position error is consistently harder to transfer than dynamics error, except
when the mechanism itself is what breaks.} Within Table~\ref{tab:shift-all}, for the three
two-door \emph{space-rearranging} shifts $\rho_{\mathrm{pos}}$ is larger than
$\rho_{\mathrm{dyn}}$ in most rows, sometimes by a wide margin (R-ICWM, rotated, full obs,
high samples: $2.1\times$ on dynamics but $4.1\times$ on position alone), because moving the
room's coordinates is primarily a positional challenge. Anti-Causal Two Door is the sole
condition that inverts this ordering ($\rho_{\mathrm{dyn}} \gg \rho_{\mathrm{pos}}$),
precisely because it is the one shift that leaves coordinates alone and corrupts the
mechanism instead. Reporting both readings is what makes these two regimes distinguishable;
a dynamics-only or position-only metric would collapse them.

\subsection{Where the shift actually bites: subtracting the source error surface}
\label{app:notch-distortion}

The give-way error maps above are hard to read by eye. The corridor is a thin
$5.0\times0.95$ sliver, and its one-step error is dominated by a flat level that does not
depend much on position, so the change the shift causes is a small signal sitting on top of
a large one, and the heatmap shows a smooth wash. The ratios in Table~\ref{tab:shift-all}
have the opposite problem: they turn the whole corridor into one number, so a real change
that only affects a few cells is averaged away to $\rho\approx1$.

One operation fixes both. For each frozen model we subtract, cell by cell, that same model's
error on its \emph{own} training layout, leaving the change the shift caused,
\begin{equation}
  \Delta(x,y) \;=\; \overline{\mathrm{err}}_{\text{shifted}}(x,y)
                 \;-\; \overline{\mathrm{err}}_{\text{source}}(x,y).
  \label{eq:notch-residual}
\end{equation}
The subtraction is exact rather than approximate: \texttt{Easy Give-Way} uses the source
grid unchanged, and \texttt{Mirror Give-Way}'s grid is the source grid reflected in $y$, so
after the flip the two edge arrays agree to machine precision and all $691$ valid cells
match one to one. The bay is sparsely visited, so we report medians over the two
role-swapped datasets; no single cell can drive a number below.

Figure~\ref{fig:notch-aggregate} lays out every give-way condition in one grid. We split the
corridor into the three regions the sampler actually resolves, read off the reachable cells
rather than picked by hand: the \textbf{Corridor Wall} (the straight stretch away from the
bay), the \textbf{Corridor Connector} (the same band where it passes the bay's mouth), and
the \textbf{Notch} (the bay interior). Rows are the three layouts, columns are the regions
and then the models. Two things stand out.

\textbf{Only the region whose geometry moved changes.} The Wall and Connector groups look
the same in all three rows: same medians, same spread. Neither shift touches that geometry,
and the error agrees. The Notch is the only region that moves, and it moves differently for
the two shifts. Widening the notch five-fold in \texttt{Easy Give-Way} opens $96$ cells per
dataset in the throat $x\in[-0.83,0.83]$ that have \emph{no counterpart} in the training
layout, so they never enter Table~\ref{tab:shift-all} at all. Those cells are where the
damage is. At $\sigma=0.01$ the median error there is $75\times$ the shared-cell median for
AWM ($991$ vs.\ $13$) and $36\times$ for ICWM ($443$ vs.\ $12$), and every one of the $96$
cells is more than five times the shared-cell median, while the shared cells transfer
normally. \texttt{Mirror Give-Way} instead shows a mild, roughly even lift in the Notch,
matching the fact that its geometry moved but nothing new opened up.

\textbf{The damage is gated by intervention, and only visible under clean observation.}
Raising $\sigma$ from $0.01$ to $0.50$ collapses the AWM and ICWM novel/shared ratio from
$75\times$ and $36\times$ to about $1\times$; at $\sigma=0.99$ it is $2$--$3\times$. This is
the localized, spatial version of the headline result of this appendix: a model trained on
near-optimal data ties its predictions to the exact reach of the training room, so enlarging
that reach breaks it badly and \emph{only there}, while exploratory data removes the cliff.
R-ICWM and C-VAE do not show the same clean gating, since their error is set by other
bottlenecks. The effect also depends on observation quality: under observation noise the
same $\sigma=0.01$ fold-change is only $3.3\times$ (AWM) and $4.0\times$ (ICWM), because the
noise floor swamps it. Clean-observation maps are the sensitive probe here, and a study run
only on noisy rollouts would under-report this failure.

\begin{figure}[htbp]
  \centering
  \includegraphics[width=\linewidth]{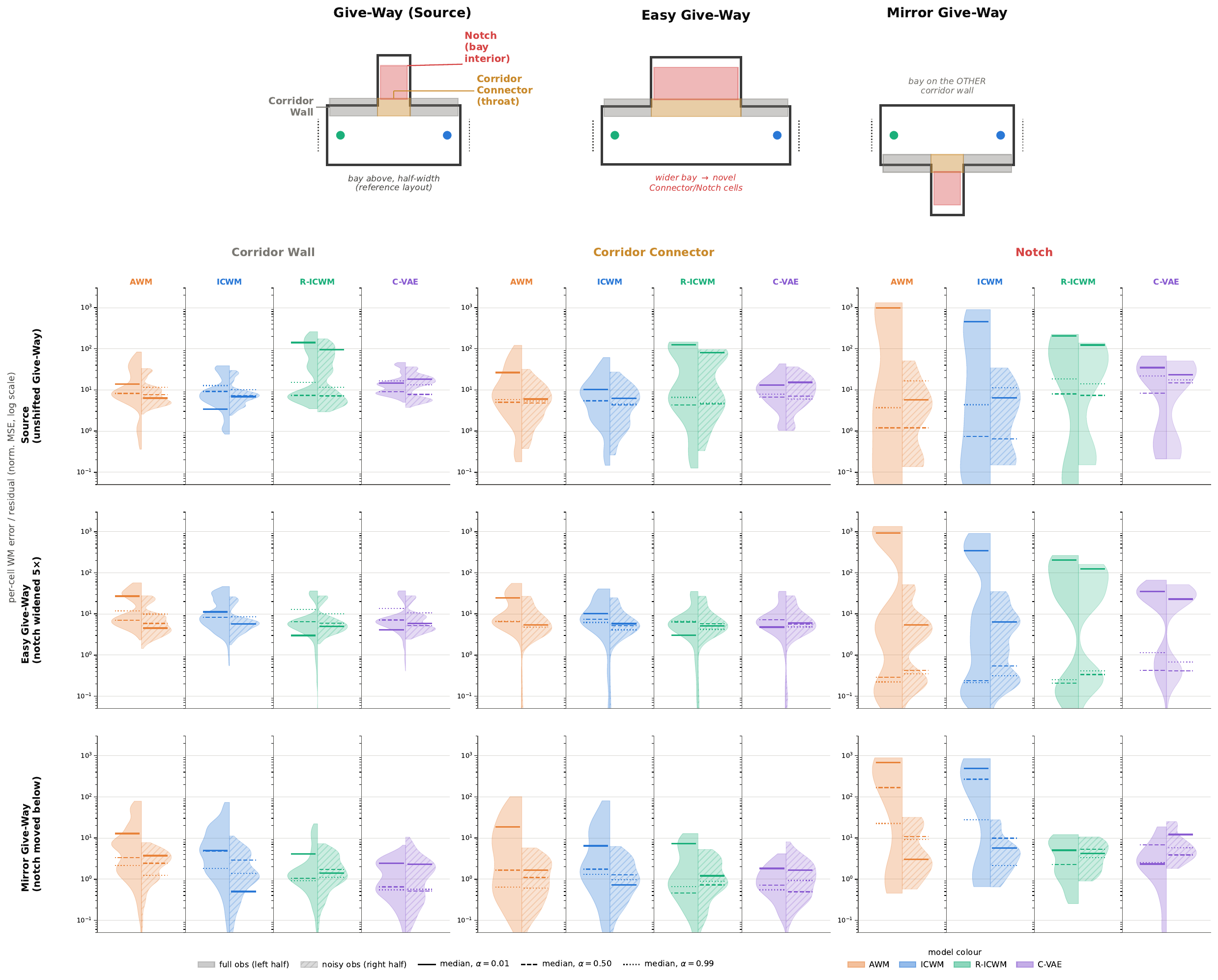}
  \caption{\textbf{Give-way distortion by region, layout, model and intervention.}
  \emph{Top:} the three regions marked on the real geometry -- Corridor Wall (grey),
  Corridor Connector (gold), Notch (red). \emph{Grid:} rows are the three layouts (Source =
  unshifted reference, Easy, Mirror); column groups are the regions, each split into the
  four models. Every violin pools all three intervention levels but keeps separate median
  ticks (solid $\sigma=0.01$, dashed $\sigma=0.5$, dotted $\sigma=0.99$; the panel labels
  them $\alpha$); the left half is
  full observation, the hatched right half is noisy observation; $y$ is log. For Easy and
  Mirror the plotted value is $|\Delta|$ of Eq.~\eqref{eq:notch-residual} on shared cells,
  together with the raw error on cells only the shifted layout reaches. Wall and Connector
  are stable everywhere; the Notch is the only region that moves, and only Easy shows the
  sharp split between $\sigma=0.01$ and the higher levels.}
  \label{fig:notch-aggregate}
\end{figure}

\subsection{Splitting the velocity channel: where recurrence pays off}
\label{app:velocity-companion}

The dynamics error $\overline{\mathrm{err}}_{\mathrm{dyn}}$ used above averages over all
agent-state dimensions, position and velocity together. That is the right summary for
transfer, but it can hide an effect that lives in one channel only. This subsection reports
one such case, and supports Figure~\ref{fig:ricwm-edge} of the main text.

The question is simple: R-ICWM is ICWM with a recurrent state, so where should memory
help? A single frame already tells you almost everything when an agent is moving fast and
straight. It tells you much less when the agent is nearly stationary, because a still frame
cannot say whether the agent is about to move, and in which direction. So we expect
recurrence to pay off where the training-time speed was \emph{low}. The give-way corridor
gives us a natural gradient of exactly that: Figure~\ref{fig:velocity-companion}B shows the
median training speed falls from the Corridor Wall, through the Connector, to the Notch,
which agents only enter to wait.

To test it we recompute the error restricted to the velocity dimensions alone
($\mathrm{err}_{\mathrm{velo}}$, the two agents' velocity components), re-running the frozen
models' forward pass on the same cached grid probes so the normalization path is identical
to the reports above. The result is a clean, monotonic trend
(Figure~\ref{fig:velocity-companion}D). In the fast Corridor Wall the two models are roughly
even ($\Delta\log_{10}$ between $+0.34$ and $-0.11$). Through the Connector, R-ICWM pulls
ahead ($-0.22$ at $\sigma=0.5$). In the slow Notch its advantage is largest, $-0.55$ at
$\sigma=0.5$ and $-1.26$ at $\sigma=0.99$, the two conditions with the lowest trained speeds
of all. This is exactly the predicted direction, and it holds in all three give-way layouts;
under \texttt{Mirror} the Notch gap reaches $-3.05$, so the layout shift does not disrupt
the mechanism and if anything sharpens it.

Two cautions. First, this trend does \emph{not} appear in the mixed
$\overline{\mathrm{err}}_{\mathrm{dyn}}$: there, R-ICWM carries a large, region-independent
handicap at $\sigma=0.01$ that comes from how its recurrent history is padded at grid-probe
time, which swamps the real velocity-channel signal and even reverses the ordering in the
Notch. Isolating the velocity dimensions removes that confound. Second, the Connector and
Notch have no $\sigma=0.01$ entry, because at near-optimal behaviour agents never visit them
at all. The practical lesson generalizes past this one result: when a hypothesis concerns a
\emph{subset} of the state dimensions, an aggregate error that mixes all of them can hide or
even invert the effect.

\begin{figure}[htbp]
  \centering
  \includegraphics[width=\linewidth]{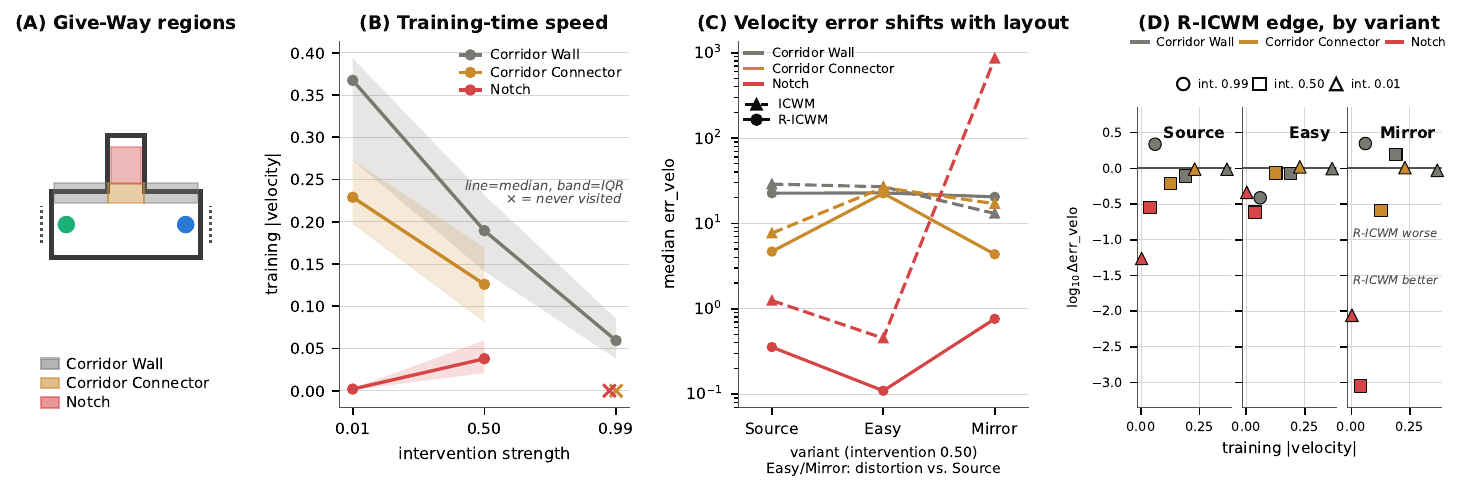}
  \caption{\textbf{Recurrence helps where the training speed was low.}
  \textbf{(A)} The three give-way regions.
  \textbf{(B)} Median training-time speed per region against intervention level (band: IQR;
  $\times$ marks a region never visited at that level). Wall is fastest, Notch slowest.
  \textbf{(C)} Median velocity-only error as the layout changes, at $\sigma=0.5$; for Easy
  and Mirror the value is the distortion against the model's own Source error.
  \textbf{(D)} R-ICWM minus ICWM in $\log_{10}$ velocity error, against training speed, one
  panel per layout. Below zero means R-ICWM is better. The edge is near zero in the fast
  Wall and grows as the region's trained speed drops, in every layout.}
  \label{fig:velocity-companion}
\end{figure}

\subsection{Overall takeaway}

Read together, the three views above say something more precise than the ratios alone. The
aggregate $\rho$ of Table~\ref{tab:shift-all} answers ``does the model still work after the
shift'', and its answer is that transfer holds only when the training data was close to
near-optimal, and only within the training geometry. Subtracting the source surface then
answers ``\emph{where} does it break'', and the answer is: on exactly the geometry that
moved, and nowhere else, with a severity set by the intervention level of the training data
and visible only under clean observation. Splitting the velocity channel answers ``\emph{which
part} of the prediction breaks'', and shows that a model choice invisible in the mixed error recurrence has a systematic, direction-predictable effect once the channel it acts on
is isolated. The common thread is that averaging is what hides the structure: over cells, over
state dimensions, or over observation modes. Each of the three fixes removes one such average,
and each reveals a different, consistent part of the same picture.

See Appendix~\ref{app:error-density} for the source-layout error densities these ratios
are measured against, and Appendix~\ref{app:shifted-envs} for the shifted layouts
themselves.


\section{Towards Pearl’s framework to evaluate counterfactual queries}
\label{app:structure-correlation}

\subsection{Setup}

The world models of Appendix~\ref{app:error-density} split into two groups: five models
that are handed a discovered causal graph (ICWM, R-ICWM, FOCUS, BC-WM, and C-VAE, which
injects the graph into its latent transition) and one model that is handed no graph at
all (AWM). If the graph is doing real work, the models that use it should predict better
out of distribution, and that advantage should grow as the graph gets more accurate. We
test this by pooling every world-model run across three environments, six architectures,
and the full intervention sweep, and joining each run to the structural error of the
graph it was trained with. This gives 1{,}240 world-model runs over 295 datasets, with
complete coverage between the two.

\subsection{The pooled correlation is confounded}

Figure~\ref{fig:struct-main} shows the headline result. If we pool all environments
together, structural error and out-of-distribution error are \emph{negatively} correlated.
Taken at face value this would say that worse graphs make world models look better. It is
an artefact: the environments differ by an order of magnitude in both variable count and
error scale, so pooling them mixes two very different scales (this is Simpson's paradox).
We therefore compute the correlation within each cell instead, where a cell is one
combination of environment, observation mode, intervention level, and architecture. Inside
a cell the correlation flips sign and grows with the distribution shift: it is mildly
negative in distribution, near zero under a mild shift, and $+0.59$ under the adversarial,
out-of-distribution evaluation. This is exactly the pattern we would expect if the
structure mattered most where the model is stressed hardest.

That pattern fails its own control test. AWM never receives a graph, so its accuracy cannot
depend on how good the graph is. Yet under the same adversarial evaluation AWM shows a
within-cell correlation of $+0.69$, slightly stronger than the graph-informed models. A
Mann-Whitney test on the per-cell correlations finds no difference between the two groups
($p = 0.29$). So whatever produces the $+0.59$ correlation, it is not the graph: a model
that never sees the graph shows the same effect.

\subsection{Intervention strength is the common cause}

Figure~\ref{fig:struct-alpha} identifies the real driver. Intervention strength $\sigma$
measures how far the training policy is pushed away from pure goal-seeking. It correlates
with two things at once: the discovery graph's structural error ($\rho = -0.75$) and the
world model's own out-of-distribution error ($\rho = -0.48$). In other words, more
exploratory data makes both the discovered graph and the world model better, and it does so
independently. When we hold $\sigma$ fixed and re-measure, the correlation between
structural error and out-of-distribution error drops from $+0.59$ to $-0.01$, which is
indistinguishable from zero. So the data do not support recovered-graph $\to$
OOD-error (panel C). Instead $\sigma$ is a common cause: it drives both the graph and the
world model's accuracy, and neither one passes its effect through the other. A matched-pair
comparison against the graph-free AWM baseline confirms this directly: receiving a graph
gives no significant improvement to ICWM, FOCUS, or BC-WM, and makes R-ICWM measurably
worse. The benefit does not grow with graph quality because, on this evidence, there is no
benefit to grow.

\subsection{The null replicates under layout shift}

Figure~\ref{fig:struct-shift} repeats the same test under a much sharper distribution
shift: frozen world models evaluated on all seven shifted layouts of
Appendix~\ref{app:shifted-generalization}, never retrained. If the graph encoded real
mechanism, its advantage should be most visible exactly here, where the model must
extrapolate rather than merely predict a familiar next state. It is not. Panels A--G
show every architecture growing worse at roughly the same rate as intervention strength
increases, panel H shows the ICWM/AWM error ratio straddling parity at every $\sigma$
with no trend, and panel J shows the spread across all six architectures collapsing to
within about 1\% of each other at the highest intervention level, the opposite of what a
graph-driven advantage would predict. Panel I supplies the mechanism for why: the
two-door discovery graph's input gate keeps 21--22 of 29 inputs open at every
intervention level tested, essentially a dense model in disguise, because two-door's
real sparsity lives in the input-by-output coupling matrix, which a purely input-marginal
gate cannot express.

\subsection{A localized, conditional exception}

Figure~\ref{fig:struct-stretch} is the one place the graph earns its keep. It narrows the
null result above rather than overturning it. Stretched Two-Door adds a region of the room
(roughly the bottom half) that the training layout never visited, so a frozen model
evaluated there has to extrapolate. We split the results by observation mode. Under partial
observation, in the region the training data did cover, ICWM beats AWM by $0.53\times$
(panel A; it wins 87\% of paired cells, $p = 2.8\times 10^{-15}$). Every graph-gated
regressor beats the graph-free baseline there, while C-VAE, which receives the same
soft-gated PCMCI graph, is $2.66\times$ worse (panel C). So the structure pays off only
through an architecture that can use it. Under full observation the same mask instead
\emph{hurts} (panels A, B): when the whole state is visible, gating inputs away just removes
signal the model could have used. Panel F reads this as Pearl's three-stage counterfactual
query. Partial observation is the abduction step, where the model must infer the state it
cannot see, and this is exactly where a structural prior can stand in for what is missing.
Intervention strength is the action step: the benefit peaks at $\sigma = 0.5$, where the
graph is best identified and the shift is real but not so extreme that no prior transfers.
The stretched layout's unseen region is the prediction step, pushing the model past its
training support. The graph helps at the one stage it is suited to, and nowhere else.

\subsection{Why stretch benefits and rotation does not}

Figure~\ref{fig:struct-align} answers the natural follow-up. The five two-door shifts
(stretched, mirrored, rotated, wrap, anti-causal) give the frozen mask a wide range of
coordinate novelty (33--75\%, panel C), yet the benefit does not track that novelty. The
discovered graph is byte-identical across all five, because the ground-truth schema does
not depend on geometry, so the difference cannot be the graph. Two things decide it: whether
the shift keeps the frozen mask \emph{aligned} with the room's coordinates, and whether it
is the kind of shift a state-variable mask can act on at all. Stretching adds unseen values
along axes that keep their meaning (invalidity $0.0$; the mask helps, $0.68$--$0.92\times$
across intervention levels, panel A). Mirroring swaps which door sits at which column but
leaves the axes intact (invalidity $0.5$; the mask helps on average but fails at high
intervention, where the model has learned to rely on which door is which). Rotation swaps
the $x$ and $y$ axes themselves (invalidity $1.0$; no benefit, panel B, $p = 0.35$). Wrap
keeps the room byte-identical and changes only the transition rule (invalidity $0.0$, the
lowest novelty of any variant; still no benefit, $1.03\times$, $p = 0.95$). Anti-Causal
swaps the door \emph{states}, a corrupted variable the mask firewalls, and gives the largest
benefit anywhere ($0.007\times$). Novelty alone does not decide the outcome: panel C pairs
stretch and rotation at the same $75\%$ novelty with opposite results, and pairs wrap and
anti-causal at the same $33\%$ novelty with 150-fold-different results. What decides the
outcome is what the shift changes: the
meaning of a coordinate, the transition rule, or the value of a variable. It is not how
many new coordinate values appear. The single-factor Wrap and Anti-Causal shifts are
developed in full in Appendix~\ref{app:struct-newenvs}.

\subsection{Figures}

\begin{figure}[htbp]
  \centering
  \includegraphics[trim={0pt 0pt 00pt 20pt}, clip,width=0.9\linewidth]{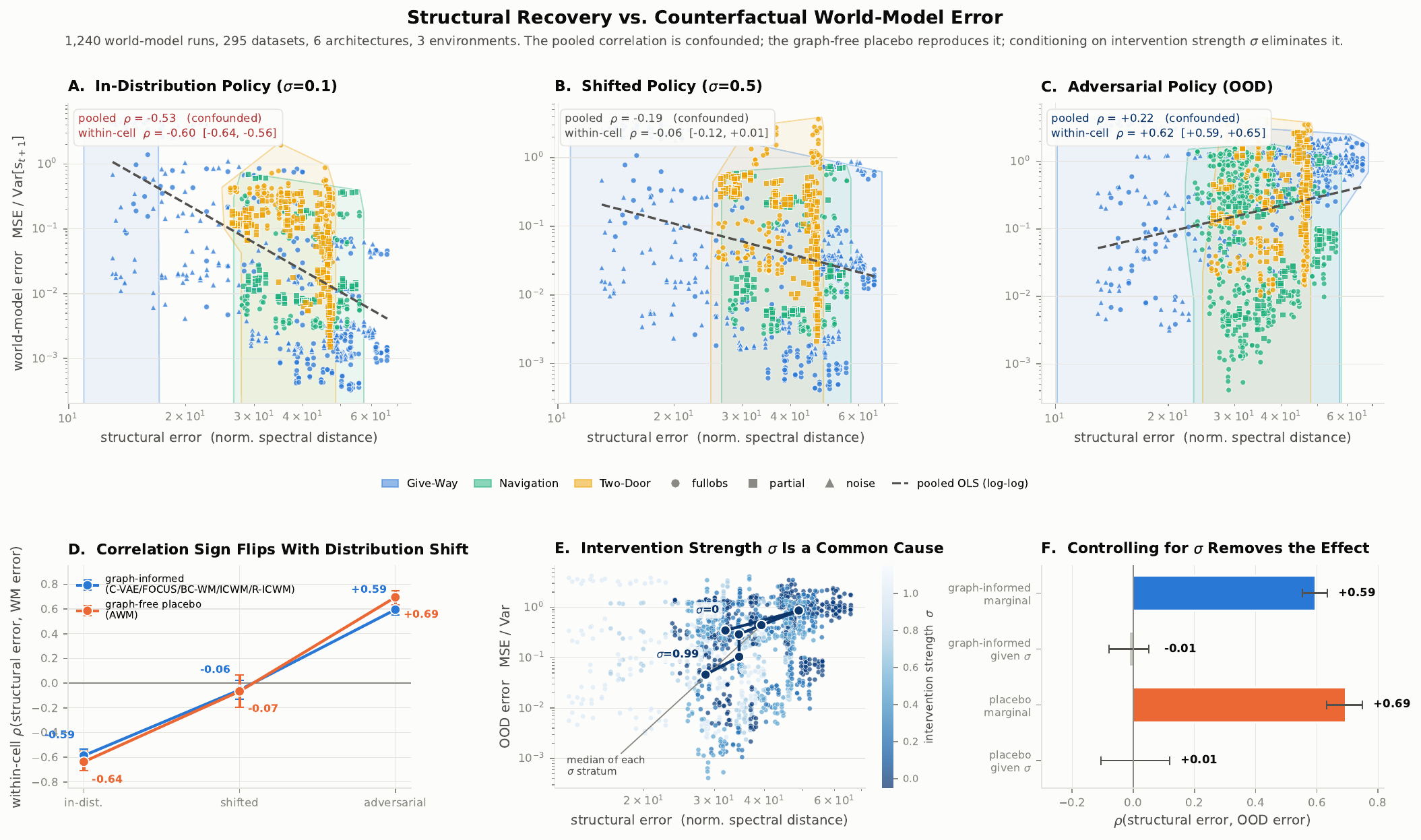}
  \caption{Structural recovery versus counterfactual world-model error, pooled over
  1{,}240 world-model runs and 295 datasets. The naive pooled correlation is confounded
  by environment; the within-cell correlation rotates from negative in distribution to
  $+0.59$ under adversarial, out-of-distribution evaluation (panel D); conditioning on
  intervention strength $\sigma$ removes the effect entirely (panel F).}
  \label{fig:struct-main}
\end{figure}

\begin{figure}[htbp]
  \centering
  \includegraphics[trim={0pt 0pt 00pt 30pt}, clip,width=0.9\linewidth]{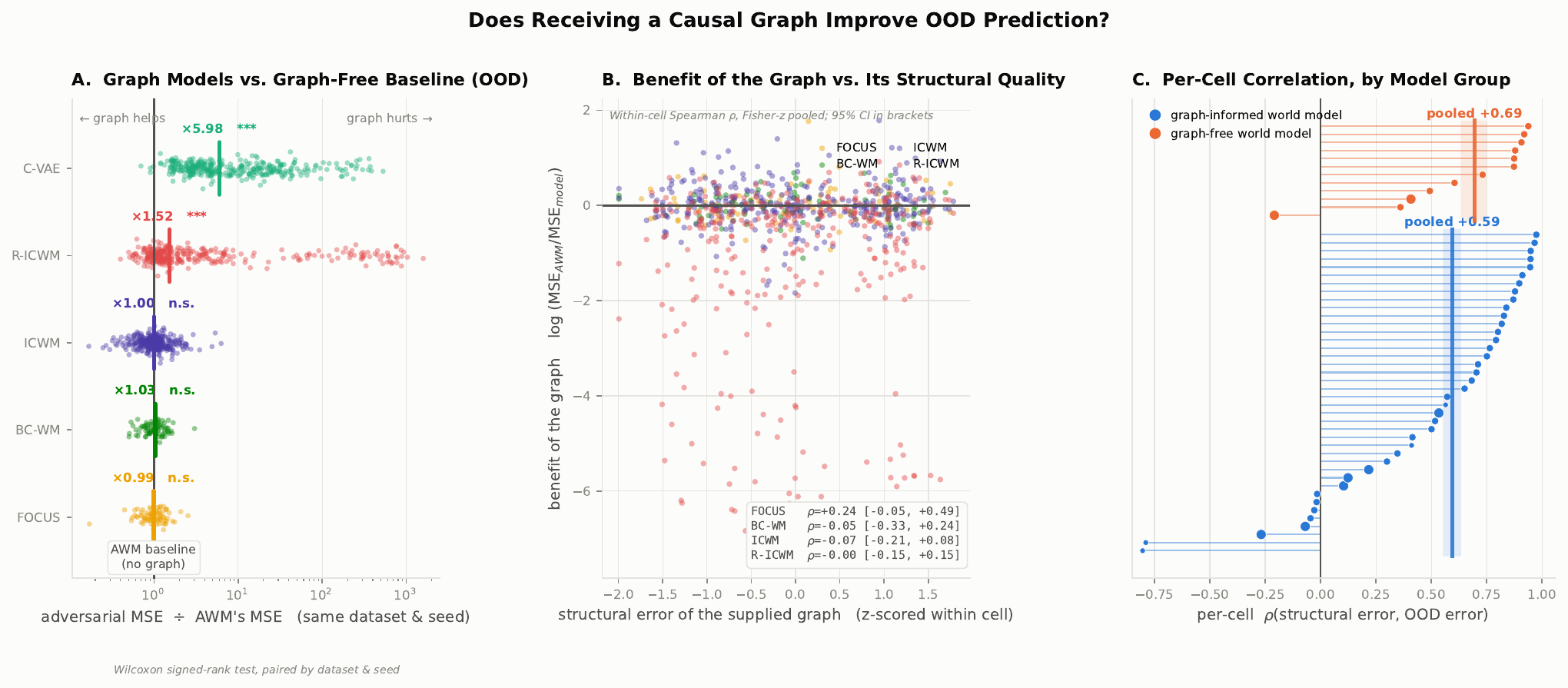}
  \caption{Does receiving a causal graph improve out-of-distribution prediction? Panel A
  compares graph-informed against graph-free models directly; panel B tests whether the
  benefit of the graph scales with the graph's own structural quality (it does not);
  panel C breaks the per-cell correlation down by model group, the placebo test that
  motivates Figure~\ref{fig:struct-main}'s common-cause reading.}
  \label{fig:struct-models}
\end{figure}

\begin{figure}[htbp]
  \centering
  \includegraphics[trim={0pt 0pt 00pt 30pt}, clip,width=0.9\linewidth]{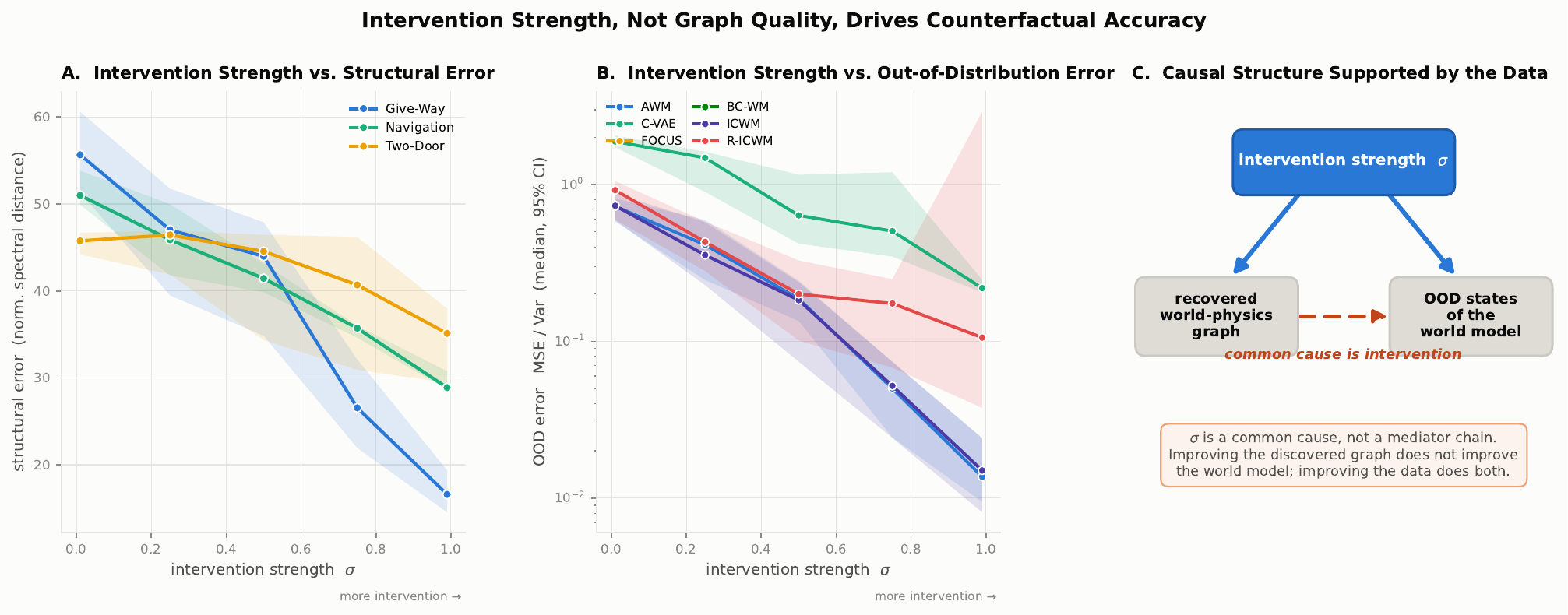}
  \caption{Intervention strength, not graph quality, drives counterfactual accuracy.
  Panel A: $\sigma$ against structural error. Panel B: $\sigma$ against
  out-of-distribution error, by architecture. Panel C: the causal structure the data
  actually support, $\sigma$ as a common cause of both the recovered graph and the
  world model's downstream error, with the direct graph-to-error edge tested and not
  found.}
  \label{fig:struct-alpha}
\end{figure}

\begin{figure}[htbp]
  \centering
  \includegraphics[trim={0pt 0pt 00pt 50pt}, clip,width=0.9\linewidth]{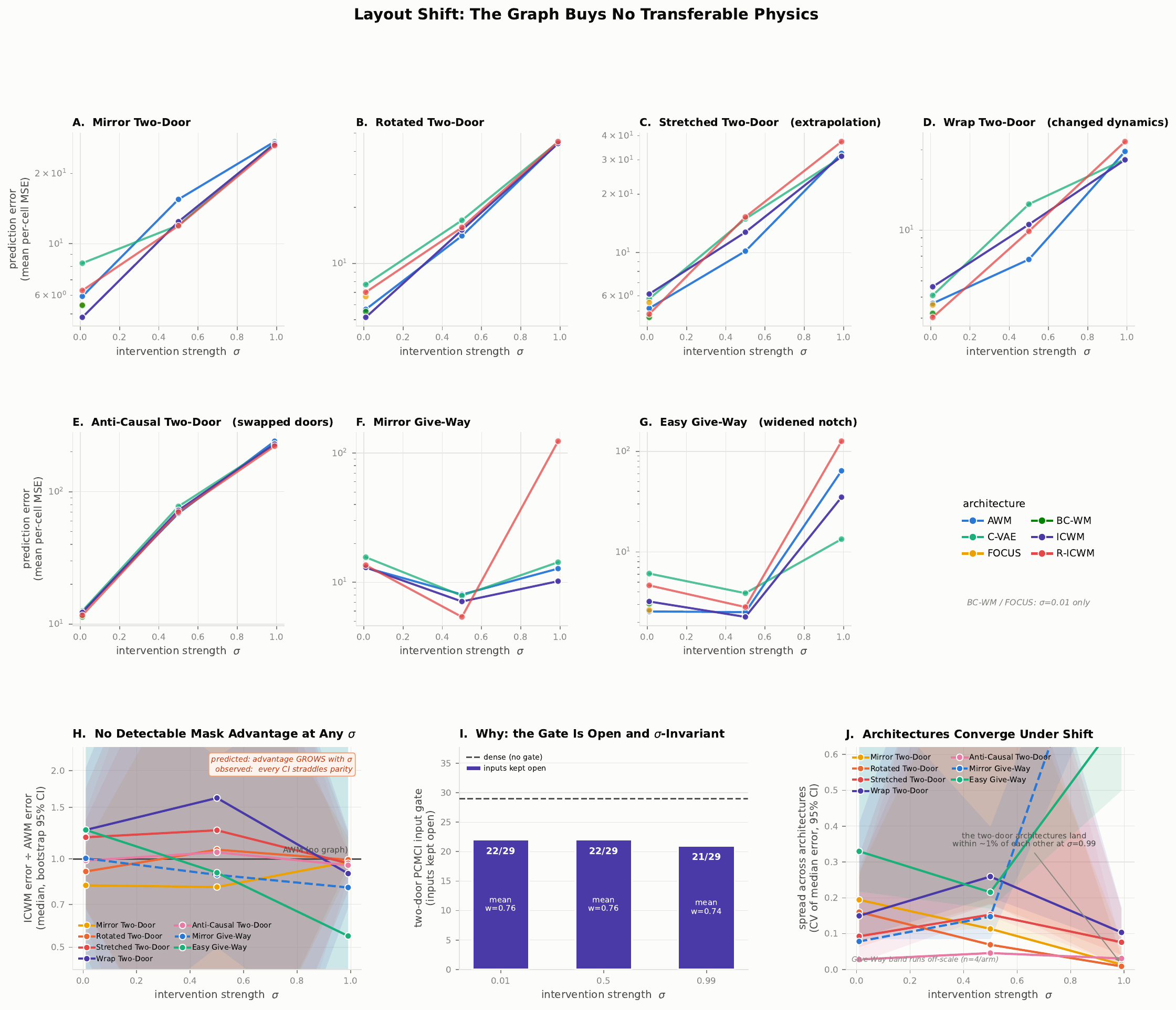}
  \caption{Layout shift: the recovered graph buys no transferable physics. Frozen world
  models, never retrained, evaluated on seven shifted layouts (five two-door variants and
  two give-way variants, panels A--G); the ICWM/AWM error ratio straddles parity at every
  intervention level with no trend (panel H); the two-door discovery graph's input gate stays
  open and $\sigma$-invariant, which is why (panel I); cross-architecture spread collapses
  under the strongest shift (panel J), the opposite of what a graph-driven advantage
  predicts.}
  \label{fig:struct-shift}
\end{figure}

\begin{figure}[htbp]
  \centering
  \includegraphics[trim={0pt 0pt 00pt 30pt}, clip,width=0.9\linewidth]{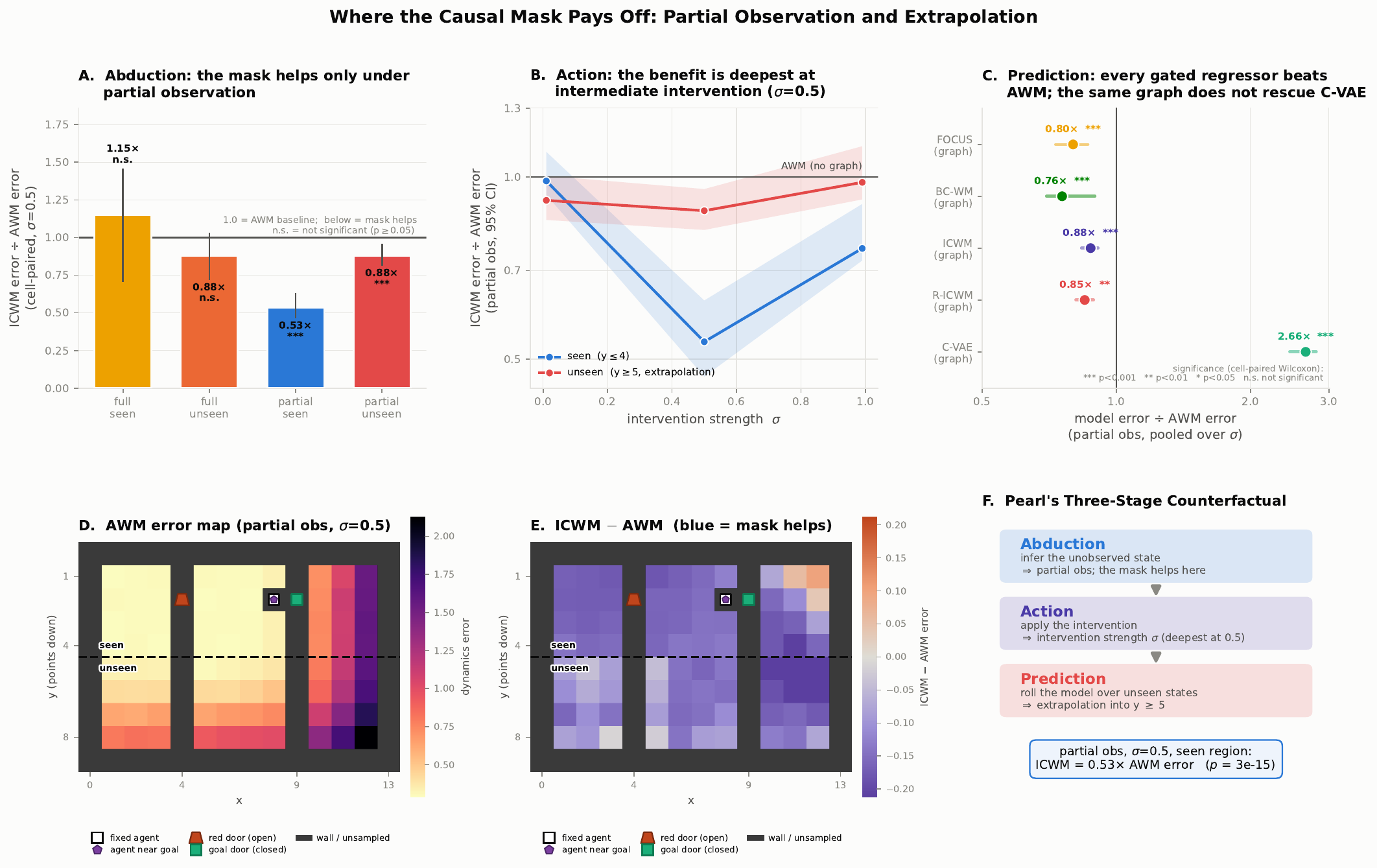}
  \caption{Where the causal mask pays off: partial observation and extrapolation on the
  Stretched Two-Door layout. Panel A: the mask helps only under partial observation.
  Panel B: the benefit is deepest at intermediate intervention strength
  ($\sigma = 0.5$). Panel C: every graph-gated regressor beats the graph-free AWM
  baseline under partial observation, while C-VAE, which receives the same soft-gated
  graph, is far worse. Panels D--E: the spatial error maps behind panel A's
  headline number. Panel F: the result read as Pearl's three-stage counterfactual query,
  abduction, action, prediction, with the mask assisting specifically the abduction
  stage.}
  \label{fig:struct-stretch}
\end{figure}

\begin{figure}[htbp]
  \centering
  \includegraphics[trim={0pt 0pt 00pt 30pt}, clip,width=\linewidth]{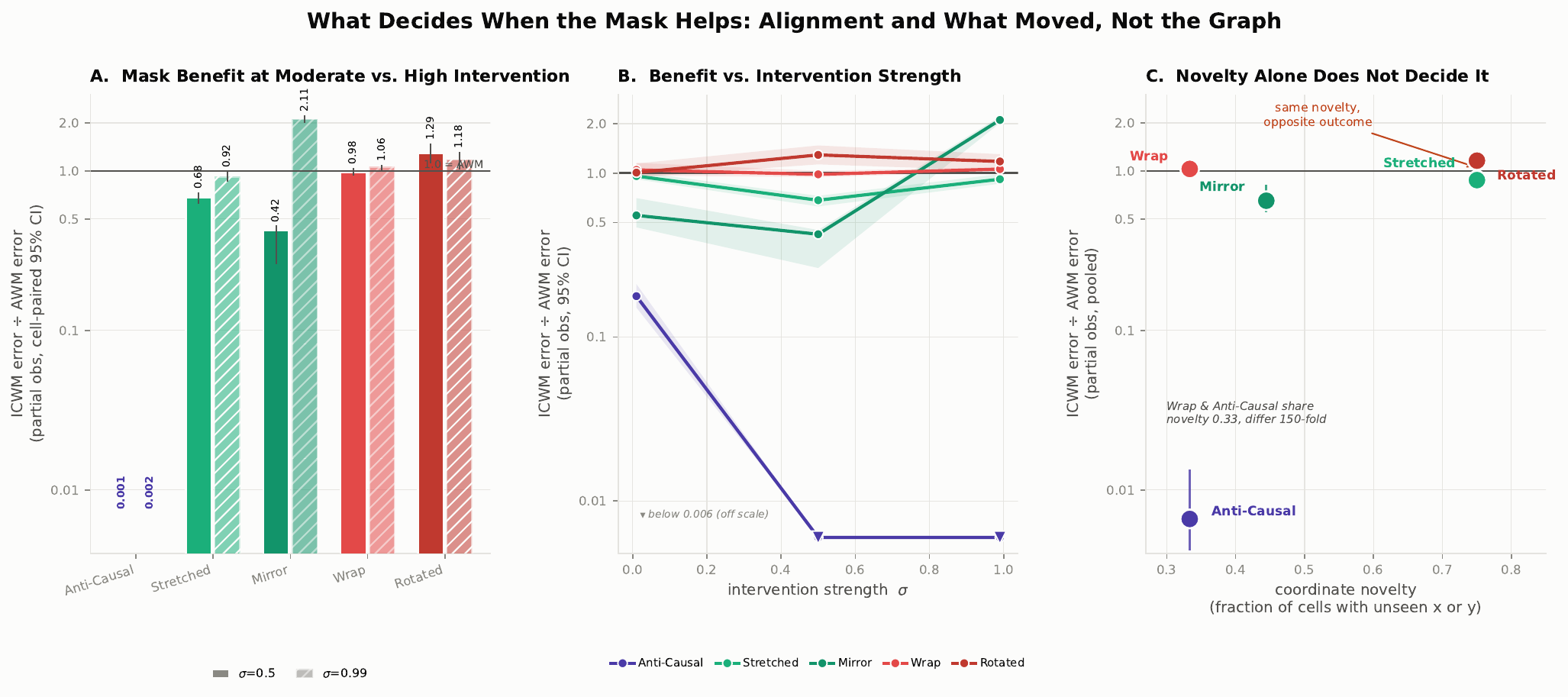}
  \caption{What decides when the mask helps: alignment and what moved, not the graph.
  The discovered graph is identical across all five two-door shifts; what differs is
  whether the shift keeps the frozen mask aligned with the room's coordinates and whether it
  is a shift the mask can act on at all. Panel A: benefit at moderate versus high
  intervention strength, by shift (Anti-Causal's sub-$0.01$ ratio is labelled at the axis
  floor). Panel B: benefit across the full intervention sweep. Panel C: Stretched and Rotated
  present the same $75\%$ coordinate novelty with opposite outcomes, and Wrap and Anti-Causal
  the same $33\%$ novelty with 150-fold-different outcomes, so novelty alone does not decide
  it.}
  \label{fig:struct-align}
\end{figure}

\subsection{Three controlled shifts that separate what earlier layouts confounded}
\label{app:struct-newenvs}

The alignment account of Fig.~\ref{fig:struct-align} rests on layouts that all
\emph{move the geometry}, so they confound three things a single heatmap cannot separate:
the room moved, the visited coordinates changed, and (implicitly) the state distribution
changed. Three further shifts hold all but one of those fixed, each perturbing exactly one
factor while keeping the discovered graph and the frozen checkpoints identical to the
runs above. \texttt{Wrap Two-Door} keeps the room byte-identical (same walls, doors, legal
cells, door states) and changes only the transition rule: a \texttt{forward} step into a
wall now re-enters from the opposite side. \texttt{Anti-Causal Two-Door} keeps the geometry
\emph{and} every door position, and swaps only the door \emph{states} (blue open, red
closed) with the pinned agent moved to match, an off-manifold joint configuration in a task
that requires opening red before blue. \texttt{Easy Give-Way} widens the give-way notch
five-fold in the continuous environment, where there is no grid mask to align at all.
Fig.~\ref{fig:struct-newenvs} reports all three; every ratio is the same cell-paired
median log-ratio, $4000$-replicate bootstrap CI, and Wilcoxon test used for
Fig.~\ref{fig:struct-align}, so the numbers are directly comparable.

\paragraph{Wrap bounds the alignment claim.} Wrap is the strongest possible case
\emph{for} the mask on Fig.~\ref{fig:struct-align}'s own axis: zero invalidity, the lowest
coordinate novelty of any variant ($33\%$), and doors in their trained positions and
states. Yet the mask does nothing, $\text{ICWM}/\text{AWM} = 1.03$ under partial
observation ($95\%$ CI $[1.00, 1.05]$, $p = 0.95$) and $0.98$ under full observation
($p = 0.51$), against $0.88$ for stretched at the same invalidity. Alignment is therefore
\emph{necessary but not sufficient}: a mask over state variables constrains which variables
inform a prediction, but says nothing about the transition rule that maps them forward.
When only that rule shifts, a correctly-aligned mask has no purchase. This does not
overturn the alignment result; it sharpens it from ``alignment explains the benefit'' to
``alignment is a precondition, and the shift must additionally be one the mask can act on.''

\paragraph{Anti-Causal reveals a distinct mechanism: containment, not accuracy.} The
swapped-door configuration produces the largest mask effect anywhere in the project:
$\text{ICWM}/\text{AWM} = 0.007$ on agent-state error, winning $100\%$ of $1{,}960$ cells
($p = 1.5\times10^{-70}$). Taken at face value this is a $140\times$ improvement, but
reporting it that way would mislead. On \emph{all} state dimensions (doors included), every
model, ICWM included, sits at a median error near $80$--$115$, because none of them predict
the swapped doors: the environment is off-manifold for all of them by design. The gap
appears only on the agent-state dimensions, where ICWM ($0.03$) is $90\times$ below AWM
($2.37$) on byte-identical geometry. So the mask is not making better door predictions. It
is stopping the corrupted door signal from leaking into the agent-position prediction. AWM
conditions on everything, so it lets a badly-wrong door variable contaminate a sub-problem
it could otherwise learn; the gate blocks that leak (Fig.~\ref{fig:struct-newenvs}B--C). No
earlier environment could show this, because layout shifts move every variable at once.
Anti-causal is the first to corrupt one group of variables while leaving another intact.

\paragraph{The firewall is learned from intervention, and differs by model family.} Two
further properties tie this mechanism back to the paper's central result and guard against
over-reading it (Fig.~\ref{fig:struct-newenvs-mech}). First, the containment itself depends
on $\sigma$. ICWM's agent-state advantage on anti-causal grows by about two orders of
magnitude as training intervention rises, from $0.18$ at $\sigma = 0.01$ to $0.001$ at
$\sigma = 0.5$ (partial observation), while wrap and the space-rearranging shifts stay flat
across $\sigma$. So the firewall is not a fixed property of the gate. The model
\emph{learns} it from interventional data, the same lever that drives every other result in
this appendix. Second, the two model families that seem to ``contain'' the door error do so
for different reasons. We tell them apart with prediction confidence $\Delta_{\mathrm{var}}$,
the ratio of the prediction error to the variance of a mean-predictor; a value well above
$1$ means the model is making a confident, dynamic prediction. ICWM and R-ICWM predict the
doors as confidently and as wrongly as AWM ($\Delta_{\mathrm{var}} \approx 270$,
all-dimension error $\approx 80$), but they keep that error out of the agent sub-problem.
FOCUS and BC-WM are the $\sigma = 0$ baselines and sit in a different regime
($\Delta_{\mathrm{var}} \approx 67$, all-dimension error $\approx 12$): they never learned a
dynamic door model at all, so their lower door error is a near-static guess, not active
containment. Telling the two apart is what stops the containment claim from reducing to
``the $\sigma=0$ models are just better here.'' They are not better; they are answering a
different question.

\paragraph{Easy Give-Way is a clean control.} In the continuous environment the mask stays
within $4\%$ of AWM at every intervention level ($\text{ICWM}/\text{AWM} = 0.96$ pooled
under observation noise), against the $0.65\times$--$2.1\times$ swings of the two-door
family. Widening a notch gives a grid-shaped mask nothing
to align to, exactly as the alignment account predicts for a shift with no grid geometry.
Several of these points are individually significant only because the continuous sweep has
$\sim\!44$k cells; this is a ``no layout-shift-scale effect'' result, not a strict null.

\begin{figure}[htbp]
  \centering
  \includegraphics[trim={0pt 0pt 00pt 22pt}, clip,width=\linewidth]{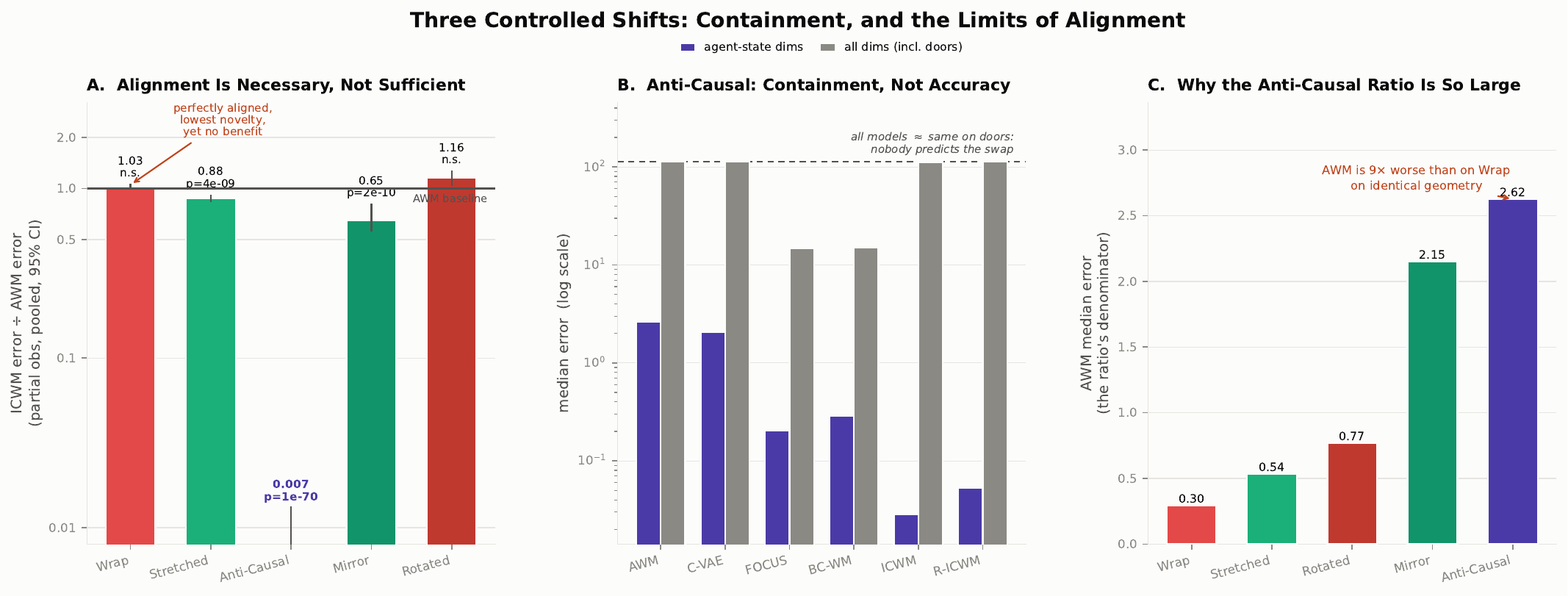}
  \caption{Three controlled shifts, each isolating one factor the earlier layouts moved
  together. \textbf{A}: mask benefit ($\text{ICWM}\div\text{AWM}$ agent-state error, pooled,
  partial observation) across all five two-door shifts; Wrap (perfectly aligned, lowest
  novelty) shows no benefit, bounding the alignment claim. \textbf{B}: on anti-causal, every
  model's error on \emph{all} dimensions is $\sim\!112$ (nobody predicts the swapped doors),
  but on agent-state dimensions ICWM/R-ICWM sit far below AWM, the signature of containment
  rather than accuracy. \textbf{C}: the large anti-causal ratio is mostly AWM blowing up
  ($8.9\times$ its own error on identical geometry), not ICWM improving, which is why
  absolute levels accompany every ratio.}
  \label{fig:struct-newenvs}
\end{figure}

\begin{figure}[htbp]
  \centering
  \includegraphics[trim={0pt 0pt 00pt 30pt}, clip,width=\linewidth]{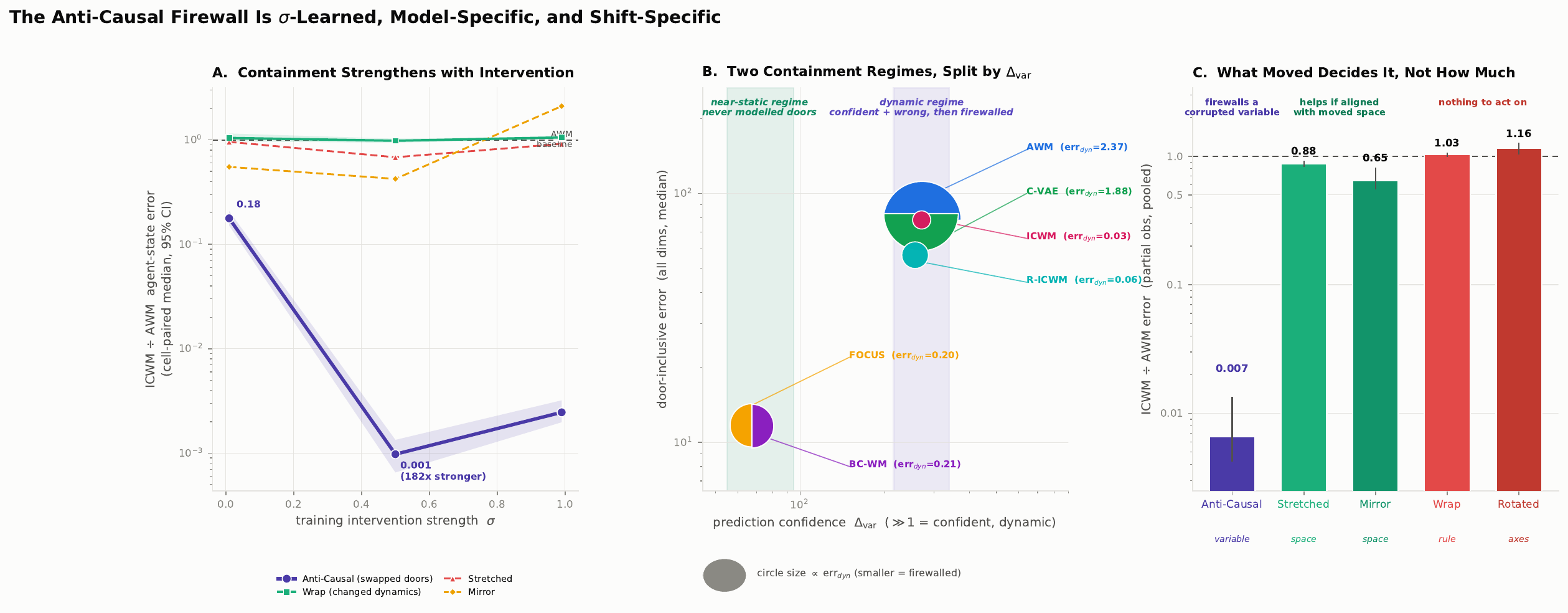}
  \caption{The anti-causal firewall is intervention-learned, model-specific, and
  shift-specific. \textbf{A}: ICWM's agent-state advantage over AWM on the swapped-door
  environment strengthens by $\sim\!180\times$ as training intervention rises from
  $\sigma=0.01$ to $\sigma=0.5$ (partial observation), while wrap and the space-rearranging
  shifts stay flat, folding the containment mechanism back into the paper's
  $\sigma$-is-the-lever result. \textbf{B}: two containment regimes separated by prediction
  confidence $\Delta_{\mathrm{var}}$, each model drawn as a filled circle at
  ($\Delta_{\mathrm{var}}$, all-dimension error) whose area scales with its agent-state error
  (err$_{\mathrm{dyn}}$), so a collapsed circle marks error that has been firewalled;
  coincident models are split into half-circles (AWM top / C-VAE bottom; FOCUS left / BC-WM
  right). ICWM/R-ICWM (and AWM/C-VAE) confidently mispredict the doors
  ($\Delta_{\mathrm{var}}\!\approx\!270$); the masked $\sigma$-trained models firewall that
  error out of the agent sub-problem (circles collapse to a dot) while AWM does not (large
  circle). FOCUS and BC-WM occupy a distinct near-static regime
  ($\Delta_{\mathrm{var}}\!\approx\!67$) whose lower door error reflects the absence of a
  dynamic door model, not active containment. \textbf{C}: across all five two-door shifts,
  coordinate novelty does not sort the mask's benefit; what \emph{moved} does, into three
  regimes: a corrupted variable is firewalled (large benefit), rearranged but
  meaning-preserving space helps if the mask stays aligned, and a changed transition rule or
  transposed axes leave the mask nothing to act on.}
  \label{fig:struct-newenvs-mech}
\end{figure}

\paragraph{Pulling the five two-door shifts together: what moved, not how much.} With all
five two-door shifts in hand, the rule for when the mask helps becomes clean
(Fig.~\ref{fig:struct-newenvs-mech}C). How much of the room is new does not sort the outcomes:
Wrap and Anti-Causal present the model with the same small fraction of unseen coordinates
($0.33$), yet the mask does nothing on one and helps 150-fold on the other. What sorts them
is the \emph{kind} of change. When a corrupted variable enters (Anti-Causal, swapped door
states), the mask firewalls it and the benefit is large. When space is rearranged along
axes that keep their meaning (Stretched, Mirror), the mask helps if it stays aligned with
the moved space. When the movement rule itself changes (Wrap) or the axes are transposed
(Rotated), the mask has nothing to act on and the benefit disappears. This is the same
conclusion the three-shift analysis of Fig.~\ref{fig:struct-align} reached, now stated as a
rule over the type of shift rather than resting on a two-point coincidence. We also note the
containment effect is not an artifact of degraded observation: it is essentially identical
under full and partial observation ($0.0069$ vs $0.0066$), so it is a property of the mask,
not of the observation bottleneck.

\subsection{Summary}

Across every test we ran, the quality of structural recovery does not predict
counterfactual world-model accuracy. This holds when we pool the data, when we work within
each cell after the control test, under layout shift, and when the graph is held
byte-identical across shifts. There is one narrow regime where a graph-informed model
reliably beats its graph-free counterpart: partial observation, intermediate intervention
strength, and a shift that keeps the meaning of each coordinate. Even there, the benefit
tracks how well the frozen structural prior stays aligned with the query's geometry, not
how accurate the graph is. This narrows the general claim rather than rescuing it. A
structural prior is not a free source of counterfactual robustness. It is a targeted tool
for one step of a counterfactual query, namely abduction under missing information, which is
the step it is actually suited to. The three controlled shifts of
Appendix~\ref{app:struct-newenvs} turn this into a mechanism. A correctly-aligned mask buys
nothing when the transition rule is what moved (Wrap). Where it helps most, on the
swapped-door configuration, it works by \emph{containment}: it keeps a corrupted group of
variables out of a sub-problem the model could otherwise learn, rather than predicting the
corrupted variables any better. That firewall is not a fixed property of the gate; it
strengthens with training intervention by two orders of magnitude, the same $\sigma$ lever
that drives every result above. This distinction, between structure as a general performance
lever and structure as an intervention-learned tool for one stage of a counterfactual
question, is what motivates the closing section.


\end{document}